\title{\LARGE\bfseries A Lyapunov Theory for Finite-Sample Guarantees of Asynchronous Q-Learning and TD-Learning Variants}
\author{Zaiwei Chen\textsuperscript{$1,*$}, Siva Theja Maguluri\textsuperscript{$1,\dagger$}, Sanjay Shakkottai\textsuperscript{$2$}, and Karthikeyan Shanmugam\textsuperscript{$3$}\\
{\small
\textsuperscript{$1$}\textit{Georgia Institute of Technology,}  \href{mailto:zchen458@caltech.edu}{\textit{\textsuperscript{$*$}zchen458@caltech.edu}}, \href{mailto:siva.theja@gatech.edu}{\textit{\textsuperscript{$\dagger$}siva.theja@gatech.edu}}
}\\
{\small
	\textsuperscript{$2$}\textit{The University of Texas at Austin,}  \href{mailto:sanjay.shakkottai@utexas.edu}{\textit{\textsuperscript{$2$}sanjay.shakkottai@utexas.edu}}
}\\
{\small
	\textsuperscript{$3$}\textit{IBM Research AI group,} \href{mailto:KarthikeyanShanmugam88@gmail.com}{\textit{\textsuperscript{$3$}KarthikeyanShanmugam88@gmail.com}}
}
}
\date{\vspace{-0.4 in}}
\begin{document}
\maketitle

\begin{abstract}
	This paper develops an unified framework to study finite-sample convergence guarantees of a large class of value-based asynchronous reinforcement learning (RL) algorithms. We do this by first reformulating the RL algorithms as \textit{Markovian Stochastic Approximation}  (SA) algorithms to solve fixed-point equations. We then develop a Lyapunov analysis and derive mean-square error bounds on the convergence of the Markovian SA. Based on this result, we establish finite-sample mean-square convergence bounds for asynchronous RL algorithms such as $Q$-learning, $n$-step TD, TD$(\lambda)$, and off-policy TD algorithms including  V-trace. As a by-product, by analyzing the convergence bounds of $n$-step TD and TD$(\lambda)$, we provide theoretical insights into the bias-variance trade-off, i.e., efficiency of bootstrapping in RL. This was first posed as an open problem in \citep{sutton1999open}.
\end{abstract}

\section{Introduction}\label{sec:intro}

Reinforcement learning (RL) is a promising approach to solve sequential decision making problems in complex and stochastic systems \citep{sutton2018reinforcement}. RL has seen remarkable successes in solving many practical problems, such as the game of Go \citep{silver2017mastering}, health care \citep{dann2019policy}, and robotics \citep{kober2013reinforcement}. Despite such empirical successes, the convergence properties of many RL algorithms are not well understood.

Most of the value-based RL algorithms can be viewed as stochastic approximation (SA) algorithms for solving suitable Bellman equations. Due to the nature of sampling in RL, many such algorithms inevitably perform the so-called \textit{asynchronous} update. That is, in each iteration, only a subset of the components of the vector-valued iterate is updated. Moreover, the components being updated are usually selected in a stochastic manner along a {\em single trajectory} based on an underlying Markov chain. Handling such asynchronous updates is one of the main challenges in analyzing the behavior of RL algorithms. In this paper, we study such asynchronous RL algorithms through the lens of Markovian SA algorithms, and develop a \textit{unified Lyapunov approach} to establish finite-sample bounds on the mean-square error. The results enable us to tackle the long-standing problem about the efficiency of bootstrapping in RL \citep{sutton1999open}.

\subsection{Main Contributions}\label{subsec:contribution}
We next summarize our main contributions in the following.

\paragraph{Finite-Sample Bounds for Markovian SA.} We establish finite-sample convergence guarantees (under various choices of stepsizes) of a stochastic approximation algorithm, which involves a contraction mapping, and is driven by both Markovian and martingale difference noise. Specifically, when using constant stepsize $\alpha$, the convergence rate is geometric, with asymptotic accuracy approximately $\mathcal{O}(\alpha\log(1/\alpha))$. When using diminishing stepsizes of the form $\alpha/(k+h)^\xi$ (where $\xi\in (0,1]$), the convergence rate is $\mathcal{O}(\log(k)/k^\xi)$, provided that $\alpha$ and $h$ are appropriately chosen.

\paragraph{Finite-Sample Bounds for $Q$-Learning.} We establish finite-sample convergence bounds of the asynchronous $Q$-learning algorithm. In the constant stepsize regime, our result implies a sample complexity of 
\begin{align*}
	\mathcal{O}\left(\frac{\log^2(1/\epsilon)}{\epsilon^2(1-\gamma)^5N_{\min}^3}\right),
\end{align*} where $N_{\min}$ is the minimal component of the stationary distribution on the state-action space induced by the behavior policy. Our result improves the state-of-the-art mean square bound of asynchronous $Q$-learning \citep{beck2013improved} by a factor of at least $|\mathcal{S}||\mathcal{A}|$. See Section \ref{subsubsec:Qliterature} for a detailed comparison with related literature.

\paragraph{Finite-Sample Bounds for V-trace.} We establish for the \textit{first} time finite-sample convergence bounds of the V-trace algorithm when performing \textit{asynchronous}  update \citep{espeholt2018impala}. The V-trace algorithm can be viewed as an off-policy variant of the $n$-step TD-learning algorithm, and uses two truncation levels $\bar{c}$ and $\bar{\rho}$ in the importance sampling ratios to control the bias and variance in the algorithm. It was discussed in \citep{espeholt2018impala} that qualitatively, $\bar{\rho}$ mainly determines the limit point of V-trace, while $\bar{c}$ mainly controls the variance in the estimate. Our finite-sample analysis quantitatively justifies this observation by showing a sample complexity bound proportional to $\bar{\rho}^2\left(\sum_{i=0}^{n}(\gamma\bar{c})^i\right)^2$. Based on this result, we see that $\bar{c}$ is the main reason for variance reduction, and we need to aggressively choose the truncation level $\bar{c}\leq 1/\gamma$ to avoid an exponential factor in the sample complexity.

\paragraph{Finite-Sample Bounds for $n$-Step TD.} We establish finite-sample convergence guarantees of the on-policy $n$-step TD-learning algorithm. In $n$-step TD, the parameter $n$ adjusts the degree of bootstrapping in the algorithm. In particular, $n=1$ corresponds to extreme bootstrapping (TD$(0)$), while $n=\infty$ corresponds to no bootstrapping (Monte Carlo method). Despite empirical observations \citep{sutton2018reinforcement}, the choice of $n$ that leads to the optimal performance of the algorithm is not theoretically understood. Based on our finite-sample analysis, we show that the parameter $n$ appears as $n/(1-\gamma^n)^2$ in the sample complexity result, therefore demonstrate an explicit trade-off between bootstrapping (small $n$) and Monte Carto method (large $n$). In addition, based on the sample complexity bound, we show that in order to achieve the optimal performance of the $n$-step TD-learning algorithm, the parameter $n$ should be chosen approximately as $\min(1,1/\log(1/\gamma))$.

\paragraph{Finite-Sample Bounds for TD$(\lambda)$.} We establish finite-sample convergence bounds of the TD$(\lambda)$ algorithm for any $\lambda$ in the interval $(0,1)$. The TD$(\lambda)$ update can be viewed as a convex combination of all $n$-step TD-learning update. Similar to $n$-step TD, the parameter $\lambda$ is used to adjust the degree of bootstrapping in TD$(\lambda)$, and there is a long-standing open problem about the efficiency of bootstrapping \citep{sutton1999open}. By deriving explicit finite-sample performance bounds of the TD$(\lambda)$ algorithm as a function of $\lambda$, we provide theoretical insight into the bias-variance trade-off in choosing $\lambda$. Specifically, in the constant-stepsize TD$(\lambda)$ algorithm, after the $k$-th iteration, the ``bias" is of the size $ (1-\Theta(1/(1-\beta\lambda)))^k$, which is in favor of large $\lambda$ (more Monte Carlo), while the ``variance" is of the size $ \Theta(1/[(1-\beta\lambda)\log(1/(\beta\lambda))])$, and is in favor of small $\lambda$ (more bootstrapping).

\subsection{Motivation and Technical Approach}\label{subsec:motivation}

In this Section, we illustrate our approach of dealing with asynchronous RL algorithms using the $Q$-learning algorithm as a motivating example. 

\subsubsection{Illustration via Q-Learning} 
The $Q$-learning algorithm is a recursive approach for finding the optimal policy corresponding to a Markov decision process (MDP) (see Section \ref{subsec:Qlearning} for details). At time step $k$, the algorithm updates a vector (of dimension state-space size $\times$ action-space size) $Q_k$, which is an estimate of the optimal $Q$-function, using noisy samples collected along a single trajectory (aka. sample-path). After a sufficient number of iterations, the vector $Q_k$ is a close approximation of the true $Q$-function, which (after some straightforward computations) delivers the optimal policy for the MDP. Concretely, let $\{(S_k,A_k)\}$ be a sample trajectory of state-action pairs collected by applying some behavior policy to the model. The $Q$-learning algorithm performs a scalar update to the (vector-valued) iterate $Q_k$ based on:
\begin{align}\label{Q:motivation1}
	Q_{k+1}(s,a)=Q_k(s,a)+\alpha_k\Gamma_1(Q_k,S_k,A_k,S_{k+1})
\end{align}
when $(s,a)=(S_k,A_k)$, and $Q_{k+1}(s,a)=Q_k(s,a)$ otherwise. Further, 
\begin{align*}
	\Gamma_1(Q_k,S_k,A_k,S_{k+1})= \mathcal{R}(S_k,A_k)+\gamma\max_{a'\in\mathcal{A}}Q_k(S_{k+1},a')- Q_k(S_k,A_k)
\end{align*}
is a function representing the \textit{temporal difference} in the $Q$-function iterate. 

At a high-level, this recursion approximates the fixed-point of the Bellman equation through samples along a single trajectory. There are, however, two sources of noise in this approximation: (1) {\em asynchronous update} where only one of the components in the vector $Q_k$ is updated (component corresponding to the state-action pair $(S_k, A_k)$ encountered at time $k$), and  other components in the vector $Q_k$ are left unchanged, and (2) {\em stochastic noise} due to the expectation in the Bellman operator being replaced by a single sample estimate $\Gamma_1(\cdot)$ at time step $k.$

\subsubsection{Reformulation through Markovian SA} To overcome the challenge of asynchronism (aka. scalar update of the vector $Q_k$), our first step is to reformulate asynchronous $Q$-learning as a {\em Markovian SA algorithm} \citep{borkar2009stochastic} by introducing an operator that captures asynchronous updates along a trajectory. A Markovian SA algorithm is an iterative approach to solve fixed-point equations (see Section \ref{sec:sa}), and leads to recursions of the form:
\begin{align}\label{eq:MSA}
	x_{k+1}=\;x_k + \alpha_k (F(x_k, Y_k) - x_k + w_k).
\end{align}
Here $x_k$ is the main iterate, $\alpha_k$ is the stepsize, $F(\cdot)$ is an operator that is (in an appropriate expected sense) contractive with respect to a suitable norm, $Y_k$ is noise derived from the evolution of a Markov chain, and $w_k$ is additive noise (see Section \ref{sec:sa} for details). To cast $Q$-learning as a Markovian SA, let $F:\mathbb{R}^{|\mathcal{S}||\mathcal{A}|}\times \mathcal{S}\times\mathcal{A}\times\mathcal{S}\mapsto\mathbb{R}^{|\mathcal{S}||\mathcal{A}|}$ be an operator defined by $[F(Q,s_0,a_0,s_1)](s,a)
=\mathbbm{1}_{\{(s_0,a_0)=(s,a)\}}\Gamma_1(Q,s_0,a_0,s_1)+Q(s,a)$
for all $(s,a)$. Then the $Q$-learning algorithm (\ref{Q:motivation1}) can be rewritten as:
\begin{align}\label{Q:motivation}
	Q_{k+1}=Q_k+\alpha_k \left(F(Q_k,S_k,A_k,S_{k+1})-Q_k\right),
\end{align}
which is of the form of \eqref{eq:MSA} with $x_k$ replaced by $Q_k$, $w_k=0$, and $Y_k=(S_k,A_k,S_{k+1})$. The key takeaway is that in (\ref{Q:motivation}), the various noise terms (both due to performing asynchronous update and due to samples replacing an expectation in the Bellman equation) are encoded through introducing the operator $F(\cdot)$ and the associated evolution of the Markovian noise $\{Y_k\}$. 

\subsubsection{Analyzing the Markovian SA} To study the Markovian SA algorithm (\ref{Q:motivation}), let  $\bar{F}(\cdot)$ be the expectation of $F(\cdot,S_k,A_k,S_{k+1})$, where the expectation is taken with respect to the stationary distribution of the Markov chain $\{(S_k,A_k,S_{k+1})\}$. Under mild conditions, we show that $\bar{F}(Q)=N\mathcal{H}(Q)+(I-N)Q$. Here $\mathcal{H}(\cdot)$ is the Bellman's optimality operator for the $Q$-function \citep{bertsekas1996neuro}. The matrix $N$ is a diagonal matrix with $\{p(s,a)\}_{(s,a)\in\mathcal{S}\times\mathcal{A}}$ sitting on its diagonal, where $p(s,a)$ is the stationary visitation probability of the state-action pair $(s,a)$. 

An important insight about the operator $\bar{F}(\cdot)$ is that it can be viewed as an asynchronous variant of the Bellman operator $\mathcal{H}(\cdot)$. To see this, consider a state-action pair $(s,a)$. The value of $[\bar{F}(Q)](s,a)$ can be interpreted as the expectation of a random variable, which takes $[\mathcal{H}(Q)](s,a)$ w.p. $p(s,a)$, and takes $Q(s,a)$ w.p. $1-p(s,a)$. This precisely captures the asynchronous update in the $Q$-learning algorithm (\ref{Q:motivation1}) in that, at steady-state, $Q_k(s,a)$ is updated w.p. $p(s,a)$, and remains unchanged otherwise. Moreover, since it is well-known that $\mathcal{H}(\cdot)$ is a contraction mapping with respect to $\|\cdot\|_\infty$, we also show that $\bar{F}(\cdot)$ is a contraction mapping with respect to $\|\cdot\|_\infty$, with the optimal $Q$-function being its unique fixed-point.

Thus, by recentering the iteration in (\ref{Q:motivation}) about $\bar{F}(\cdot),$ we have:
\begin{align}
	Q_{k+1}=Q_k+\underbrace{\alpha_k \left(\bar{F}(Q_k)-Q_k\right)}_{\text{expected update}}+\underbrace{\alpha_k(F(Q_k,S_k,A_k,S_{k+1})-\bar{F}(Q_k))}_{\text{Markovian noise}}.\label{Q:motivation2}
\end{align}
In summary, we have recast asynchronous $Q$-learning as an iterative update that decomposes the $Q_k$ update into an expected update (averaged over the stationary distribution of the noise Markov chain) and a ``residual update" due to the Markovian noise. As will see in Section~\ref{sec:sa}, this update equation has the interpretation of solving the fixed-point equation $\bar{F}(Q)=Q$, with Markovian noise in the update.

\subsubsection{Finite-Sample Bounds for Markovian SA} We use a unified {\em Lyapunov approach} for deriving finite-sample bounds on the update in (\ref{Q:motivation2}). Specifically, the Lyapunov approach handles both (1) non-smooth $\|\cdot\|_\infty$-contraction of the averaged operator $\bar{F}(\cdot)$, and (2) Markovian noise that depends on the state-action trajectory. To handle $\|\cdot\|_\infty$-contraction, or more generally arbitrary norm contraction, inspired by \citep{chen2020finite}, we use the Generalized Moreau Envelope as the Lyapunov function. To handle the Markovian noise, we use the conditioning argument along with the geometric mixing of the underlying Markov chain \citep{bertsekas1996neuro,srikant2019finite}. Finally, for recursions beyond $Q$-learning, we deal with additional extraneous martingale difference noise through the tower property of the conditional expectation.

As we later discuss, beyond $Q$-learning, TD-learning variants such as off-policy V-trace, $n$-step TD, and TD$(\lambda)$ can all be modeled by Markovian SA algorithms involving a contraction mapping (possibly with respect to different norm), and Markovian noise. Therefore, our approach \textit{unifies} the finite-sample analysis of value-based RL algorithms.

\subsection{Related Literature}\label{subsec:literature}

 In this section, we discuss related literature on SA algorithms. We defer the discussion on related literature on finite-sample bounds of RL algorithms (such as $Q$-learning, V-trace, $n$-step TD, and TD$(\lambda)$) to the corresponding sections where we introduce these results. 

Stochastic approximation method was first introduced in \citep{robbins1951stochastic} for iteratively solving systems of equations. Since then, SA method is widely used in the context of optimization and machine learning. For example, in optimization, a special case of SA known as stochastic gradient descent (SGD) is a popular approach for iteratively finding the stationary points of some objective function \citep{bottou2018optimization}. In reinforcement learning, SA method is commonly used to solving the Bellman equation \citep{bertsekas1996neuro}, as will be studied in this paper.

Early literature on SA focuses on the asymptotic convergence \citep{benveniste2012adaptive,kushner2010stochastic,kushner2012stochastic}. A popular approach there is to view the SA algorithm as a stochastic and discrete counterpart of an ordinary differential equation (ODE), and show that SA algorithms converges asymptotically as long as the ODE is stable. See \citep{borkar2000ode,borkar2009stochastic} for more details about such ODE approach. Beyond convergence, the asymptotic convergence rate of SA algorithms are studied in \citep{devraj2018zap,chen2020accelerating}.

More recently, finite-sample convergence guarantees of SA algorithms have seen a lot of attention. For SA algorithms with i.i.d. or martingale difference noise, finite-sample analysis was performed in \citep{wainwright2019stochastic} under a cone-contraction assumption, and in \citep{chen2020finite} under a contraction assumption. The Lyapunov function we use in this paper is indeed inspired by \citep{chen2020finite}. However, \citep{chen2020finite} studies SA under martingale difference noise while we have both martingale and Markovian noise.

For SA algorithms with Markovian noise, finite-sample convergence bounds were established in \citep{bhandari2018finite,srikant2019finite} for linear SA. For nonlinear SA with Markovian noise, \citep{chen2019finitesample} established the convergence bounds under a strong monotone assumption. In this paper, the operator we work with is neither linear nor strongly monotone. 

In the context of optimization, convergence rates of SGD algorithms have been studied thoroughly in the literature. See \citep{lan2020first,bottou2018optimization} and the references therein for more details. In SGD algorithm, the update involves the gradient of some objective function, while in our setting, we do not have such gradient. Therefore, the SA algorithm we study in this paper is different from SGD (except when minimizing a smooth and strongly convex function, in which case there is contractive operator with respect to the Euclidean norm \citep{ryu2016primer}).

\section{Markovian Stochastic Approximation}\label{sec:sa}

In this section, we present finite-sample convergence bounds for a general stochastic approximation algorithm, which serves as a universal model for the RL algorithms we are going to study in Section \ref{sec:RL}.
\subsection{Problem Setting}\label{subsec:sa:setting}
Suppose we want to solve for $x^*\in\mathbb{R}^d$ in the equation
\begin{align}\label{eq:sa}
	\mathbb{E}_{Y\sim \mu}[F(x,Y)]=x,
\end{align}
where $Y \in\mathcal{Y}$ is a random variable with distribution $\mu$, and $F:\mathbb{R}^d\times \mathcal{Y}\mapsto\mathbb{R}^d$ 
is a general nonlinear operator. We assume the set $\mathcal{Y}$ is finite, and denote $\bar{F}(x)=\mathbb{E}_{Y\sim \mu}[F(x,Y)]$ as the expected operator. 

In the case where $\bar{F}(\cdot)$ is known, Eq. (\ref{eq:sa}) can be solved using the simple fixed-point iteration $x_{k+1}=\bar{F}(x_k)$, which is guaranteed to convergence when $\bar{F}(\cdot)$ is a contraction operator. When the distribution $\mu$ of the random variable $Y$ is unknown, and hence $\bar{F}(\cdot)$ is unknown, we consider solving Eq. (\ref{eq:sa}) using the stochastic approximation method described in the following. 

Let $\{Y_k\}$ be Markov chain with stationary distribution $\mu$. Then the SA algorithm iteratively updates the estimate $x_k$ by:
\begin{align}\label{algo:sa}
	x_{k+1}=x_k+\alpha_k\left(F(x_k,Y_k)-x_k+w_k\right),
\end{align}
where $\{\alpha_k\}$ is a sequence of stepsizes, and $\{w_k\}$ is a random process representing the additive extraneous noise. To establish finite-sample convergence bounds of Algorithm (\ref{algo:sa}), we next formally state our assumptions. Most of them are naturally satisfied in the RL algorithms we are going to study in Section \ref{sec:RL}. Let $\|\cdot\|_c$ be some arbitrary norm in $\mathbb{R}^d$. 
\begin{assumption}\label{as:F}
	There exist $A_1,B_1>0$ such that 
	\begin{enumerate}[(1)]
		\item $\|F(x_1,y)-F(x_2,y)\|_c\leq A_1\|x_1-x_2\|_c$ for any $x_1,x_2\in\mathbb{R}^d$ and $y\in\mathcal{Y}$.
		\item $\|F(\bm{0},y)\|_c\leq B_1$ for any $y\in\mathcal{Y}$.
	\end{enumerate}
\end{assumption}
\begin{assumption}\label{as:barF}
	The operator $\bar{F}(\cdot)$ is a contraction mapping with respect to $\|\cdot\|_c$, with contraction factor $\beta\in (0,1)$. That is, it holds for any $x_1,x_2\in\mathbb{R}^d$ that $\|\bar{F}(x_1)-\bar{F}(x_2)\|_c\leq \beta\|x_1-x_2\|_c$.
\end{assumption}
\begin{remark}
	By applying Banach fixed-point theorem \citep{banach1922operations}, Assumption \ref{as:barF} guarantees that the target equation (\ref{eq:sa}) has a unique solution, which we have denoted by $x^*$. 
\end{remark}
\begin{assumption}\label{as:MC}
	The Markov chain $\mathcal{M}=\{Y_k\}$ has a unique stationary distribution $\mu\in \Delta^{|\mathcal{Y}|}$, and there exist constants $C>0$ and $\sigma\in (0,1)$ such that $\max_{y\in\mathcal{Y}}\|P^k(y,\cdot)-\mu(\cdot)\|_{\text{TV}}\leq C\sigma^k$
	for all $k\geq 0$, where $\|\cdot\|_{\text{TV}}$ stands for the total variantion distance \citep{levin2017markov}.
\end{assumption}
\begin{remark}
	Since the state-space $\mathcal{Y}$ of the Markov chain $\{Y_k\}$ is finite, Assumption \ref{as:MC} is satisfied when $\{Y_k\}$ is irreducible and aperiodic \citep{levin2017markov}. 
\end{remark}

Under Assumption \ref{as:MC}, we next introduce the notion of Markov chain mixing, which will be frequently used in our derivation.

\begin{definition}\label{def:mixing_time}
	For any $\delta>0$, the mixing time $t_\delta(\mathcal{M})$ of the Markov chain $\mathcal{M}=\{Y_k\}$ with precision $\delta$ is defined by $t_\delta(\mathcal{M})=\min\{k\geq 0:\max_{y\in\mathcal{Y}}\|P^k(y,\cdot)-\mu(\cdot)\|_{\text{TV}}\leq \delta\}$.
\end{definition}
\begin{remark}
	Note that under Assumption \ref{as:MC}, we have $t_\delta\leq \frac{\log(C/\sigma)+\log(1/\alpha)}{\log(1/\sigma)}$ for any $\delta>0$, which implies $\lim_{\delta\rightarrow 0}\delta t_\delta=0$. This property is important in our analysis for controlling the Markovian noise $\{Y_k\}$.
\end{remark}
For simplicity of notation, in this section, we will just write $t_\delta$ for $t_\delta(\mathcal{M})$, and further use $t_k$ for $t_{\alpha_k}$, where $\alpha_k$ is the stepsize used in the $k$-th iteration of Algorithm (\ref{algo:sa}). To state our last assumption regarding the additive noise $\{w_k\}$. let $\mathcal{F}_k$ is the Sigma-algebra generated by $\{(x_i,Y_i,w_i)\}_{0\leq i\leq k-1}\cup \{x_k\}$.
\begin{assumption}\label{as:noise_w}
	The random process $\{w_k\}$ satisfies
	\begin{enumerate}[(1)]
		\item $\mathbb{E}[w_k|\mathcal{F}_k]=0$ for all $k\geq 0$.
		\item $\|w_k\|_c\leq A_2\|x_k\|_c+B_2$ for all $k\geq 0$, where $A_2,B_2>0$ are numerical constants.
	\end{enumerate} 
\end{assumption}
\begin{remark}
     Assumption \ref{as:noise_w} states that $\{w_k\}$ is a martingale difference sequence with respect to the filtration $\mathcal{F}_k$, and it can grow at most linear with respect to the iterate $x_k$.
\end{remark}

Finally, we specify the requirements for choosing the stepsize sequence $\{\alpha_k\}$. We will consider using stepsizes of the form $\alpha_k=\frac{\alpha}{(k+h)^\xi}$, where $\alpha,h>0$ and $\xi\in [0,1]$. The constants $\bar{\alpha}$ and $\bar{h}$ used in stating the following condition are specified in Appendix \ref{ap:pf:sa:stepsizes}. 
\begin{condition}\label{con:stepsize}
	\textit{(1) Constant Stepsize.} When $\xi=0$, there exists a threshold $\bar{\alpha}\in (0,1)$ such that the stepsize $\alpha$ is chosen to be in $(0,\bar{\alpha})$. \textit{(2) Linear Stepsize.} When $\xi=1$, for each $\alpha>0$, there exists a threshold $\bar{h}>0$  such that $h$ is chosen to be at least $\bar{h}$.  \textit{(3) Polynomial Stepsize.} For any $\xi\in (0,1)$ and $\alpha>0$, there exists a threshold $\bar{h}>0$ such that $h$ is chosen to be at least $\bar{h}$.
\end{condition}

\subsection{Finite-Sample Convergence Guarantees}\label{subsec:sa:bounds}
Under the assumptions stated above, we next present the finite-sample bounds of Algorithm (\ref{algo:sa}).

For simplicity of notation, let $A=A_1+A_2+1$, which can be viewed as the combined effective Lipschitz constant, and let $B=B_1+B_2$. Let $c_1=(\|x_0-x^*\|_c+\|x_0\|_c+B/A)^2$, and $c_2=(A\|x^*\|_c+B)^2$. 
The constants 
$\{\varphi_i\}_{1\leq i\leq 3}$ we are going to use
are defined explicitly in 
Appendix \ref{pf:thm:sa}, and depend only on the contraction norm $\|\cdot\|_c$ and the contraction factor $\gamma$. We will revisit them later in Lemma \ref{le:constants}. Define $K=\min\{k\geq 0:k\geq t_k\}$, which is well-defined under Assumption \ref{as:MC}. We now state the finite-sample convergence bounds of Algorithm (\ref{algo:sa}) in the following.
\begin{theorem}\label{thm:sa}
	Consider $\{x_k\}$ of Algorithm (\ref{algo:sa}). Suppose that Assumptions \ref{as:F}, \ref{as:barF}, \ref{as:MC} and \ref{as:noise_w} are satisfied. Then we have the following results.
	\begin{enumerate}[(1)]
		\item When $k\in [0,K-1]$, we have $\|x_k-x^*\|_c^2\leq c_1$ almost surely.
		\item When $k\geq K$, we have the following finite-sample convergence bounds.
		\begin{enumerate}[(a)]
			\item Under Condition \ref{con:stepsize} (1), we have:
			\begin{align*}
				\mathbb{E}[\|x_k-x^*\|_c^2]
				\leq  \varphi_1c_1(1-\varphi_2\alpha)^{k-t_\alpha}+\frac{\varphi_3c_2}{\varphi_2}\alpha t_\alpha.
			\end{align*}
			\item Under Condition \ref{con:stepsize} (2), we have:
			\begin{enumerate}[(i)]
				\item when $\alpha<1/\varphi_2$:
				\begin{align*}
					\mathbb{E}[\|x_k-x^*\|_c^2]\leq 
					\varphi_1c_1\left(\frac{K+h}{k+h}\right)^{\varphi_2\alpha}+\frac{8\alpha^2\varphi_3c_2}{1-\varphi_2\alpha}\frac{t_k}{(k+h)^{\varphi_2\alpha}}.
				\end{align*}
				\item when $\alpha=1/\varphi_2$:
				\begin{align*}
					\mathbb{E}[\|x_k-x^*\|_c^2]\leq 
					\varphi_1c_1\frac{K+h}{k+h}+8\alpha^2\varphi_3c_2\frac{t_k\log(k+h)}{k+h}.
				\end{align*}
				\item when $\alpha>1/\varphi_2$:
				\begin{align*}
					\mathbb{E}[\|x_k-x^*\|_c^2]\leq 
					\varphi_1c_1\left(\frac{K+h}{k+h}\right)^{\varphi_2\alpha}+\frac{8e\alpha^2\varphi_3c_2}{\varphi_2\alpha-1}\frac{t_k}{k+h}.
				\end{align*}
			\end{enumerate}
			\item Under Condition \ref{con:stepsize} (3), we have:
			\begin{align*}
				\mathbb{E}[\|x_k-x^*\|_c^2]\leq \varphi_1c_1 e^{-\frac{\varphi_2\alpha}{1-\xi}\left((k+h)^{1-\xi}-(K+h)^{1-\xi}\right)}+\frac{4\varphi_3c_2\alpha}{\varphi_2}\frac{t_k}{(k+h)^\xi}.
			\end{align*}
		\end{enumerate}
	\end{enumerate}
\end{theorem}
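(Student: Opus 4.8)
The plan is to build a smooth Lyapunov function adapted to the (possibly non-smooth) contraction norm $\|\cdot\|_c$, derive a one-step drift inequality, carefully neutralize the Markovian and martingale noise, and thereby reduce everything to a single scalar recursion of the form $\mathbb{E}[W_{k+1}]\le(1-\varphi_2\alpha_k)\mathbb{E}[W_k]+\varphi_3c_2\alpha_k^2 t_k$, which I then unroll separately for each stepsize regime. Following \citep{chen2020finite}, I would take as Lyapunov function the Generalized Moreau Envelope $M(x)=\min_{u}\{\tfrac12\|u\|_c^2+\tfrac1{2\mu}\|x-u\|_s^2\}$, where $\|\cdot\|_s$ is an auxiliary smooth norm and $\mu>0$ is a tunable parameter. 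The three properties I need are: (i) $M$ is $L$-smooth with respect to $\|\cdot\|_s$; (ii) $M$ is sandwiched between multiples of $\|\cdot\|_c^2$, so that $W_k:=M(x_k-x^*)$ controls $\|x_k-x^*\|_c^2$; and (iii) $M$ is $2$-homogeneous, so that $\sqrt{2M(\cdot)}$ is a norm under which $\bar F$ remains a contraction with factor $\hat\beta$ arbitrarily close to $\beta$, at the price of a larger $L$. The constants $\varphi_1,\varphi_2,\varphi_3$ are then explicit functions of $\mu,L,\hat\beta$ and the norm-equivalence constants, which is the content of Lemma~\ref{le:constants}.

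From $L$-smoothness, the basic drift is $W_{k+1}\le W_k+\alpha_k\langle\nabla M(x_k-x^*),F(x_k,Y_k)-x_k+w_k\rangle+\tfrac{L\alpha_k^2}{2}\|F(x_k,Y_k)-x_k+w_k\|_s^2$. I would split the inner product into a \emph{drift} part $\langle\nabla M,\bar F(x_k)-x_k\rangle$, a \emph{Markovian} part $\langle\nabla M,F(x_k,Y_k)-\bar F(x_k)\rangle$, and a \emph{martingale} part $\langle\nabla M,w_k\rangle$. Using $2$-homogeneity (which gives $\langle\nabla M(z),z\rangle=2M(z)$), the generalized Cauchy–Schwarz inequality for the Moreau norm, and Assumption~\ref{as:barF}, the drift part is bounded above by $-2(1-\hat\beta)W_k$, producing the factor $1-\varphi_2\alpha_k$. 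The martingale part vanishes under $\mathbb{E}[\,\cdot\mid\mathcal{F}_k]$, since $\nabla M(x_k-x^*)$ is $\mathcal{F}_k$-measurable while $\mathbb{E}[w_k\mid\mathcal{F}_k]=0$ by Assumption~\ref{as:noise_w}; and the second-order term is bounded by $C\alpha_k^2(W_k+c_2)$ via the Lipschitz and growth bounds of Assumptions~\ref{as:F} and~\ref{as:noise_w} together with the sandwich estimates.

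The hard part is the Markovian term, because $Y_k$ is correlated with $x_k$ whereas $\bar F$ is the average over the \emph{stationary} law $\mu$, so a single conditioning on $\mathcal{F}_k$ does not kill it. Here I would condition $t_k$ steps into the past and telescope, writing the term as $\langle\nabla M(x_k-x^*)-\nabla M(x_{k-t_k}-x^*),\,F(x_k,Y_k)-\bar F(x_k)\rangle$, plus $\langle\nabla M(x_{k-t_k}-x^*),\,[F(x_k,Y_k)-\bar F(x_k)]-[F(x_{k-t_k},Y_k)-\bar F(x_{k-t_k})]\rangle$, plus $\langle\nabla M(x_{k-t_k}-x^*),\,F(x_{k-t_k},Y_k)-\bar F(x_{k-t_k})\rangle$. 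The first two terms are controlled using Lipschitzness of $\nabla M$ and of $F,\bar F$ together with the small-increment bound $\|x_k-x_{k-t_k}\|_c\le O(\alpha_k t_k)(\max_i\|x_i\|_c+1)$. The last term is handled by taking $\mathbb{E}[\,\cdot\mid\mathcal{F}_{k-t_k}]$: since $x_{k-t_k}$ is $\mathcal{F}_{k-t_k}$-measurable and the conditional law of $Y_k$ lies within total-variation distance $\alpha_k$ of $\mu$ by Definition~\ref{def:mixing_time}, its conditional mean is $O(\alpha_k)(\|x_{k-t_k}\|_c+1)$. Together these give a Markovian contribution of order $O(\alpha_k^2 t_k)(W_k+c_2)$. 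The one technical subtlety is that these estimates involve iterates across the window $[k-t_k,k]$; I would close the loop with a short Gronwall bound over the window (legitimate precisely because $\alpha_k t_k$ is small once $k\ge K$) to re-express everything through $W_k$.

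Collecting all contributions yields $\mathbb{E}[W_{k+1}]\le(1-\varphi_2\alpha_k+D_1\alpha_k^2 t_k)\mathbb{E}[W_k]+D_2c_2\alpha_k^2 t_k$, and Condition~\ref{con:stepsize} is exactly what ensures $\alpha_k t_k$ is small enough for $k\ge K$ that the quadratic term is absorbed into half the linear drift, producing the advertised recursion. Part (1), the almost-sure bound for $k<K$, follows from a purely deterministic argument: the update is a convex-type combination, so $\|x_k\|_c+B/A$ grows at most geometrically with ratio $1+O(\alpha_k)$, and over the $K=O(\log(1/\alpha))$ pre-mixing steps this accumulates to at most $c_1$. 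Finally I would unroll the scalar recursion in each regime: the constant-stepsize case gives a geometric term plus a geometric series summing to $\tfrac{\varphi_3c_2}{\varphi_2}\alpha t_\alpha$; the $\alpha/(k+h)$ case splits into the three subcases according to whether $\varphi_2\alpha$ is below, equal to, or above $1$, with the logarithmic factor emerging in the critical case; and the $\alpha/(k+h)^\xi$ case is obtained by comparing $\prod(1-\varphi_2\alpha_i)$ and the noise sum to integrals, giving the stretched-exponential decay. These last manipulations are routine and I would defer them to the stepsize lemma referenced in Condition~\ref{con:stepsize}.
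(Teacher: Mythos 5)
Your proposal follows essentially the same route as the paper's own proof: the Generalized Moreau Envelope Lyapunov function with its smoothness, two-homogeneity, and norm-equivalence properties; the same four-way decomposition into drift, Markovian, martingale, and discretization terms; the same three-way splitting of the Markovian term (gradient difference, operator difference, and a conditioning-plus-mixing term) closed by a Gronwall-type bound over the window $[k-t_k,k]$; the same absorbed one-step recursion; and the same deterministic pre-mixing bound and per-stepsize unrolling. The only differences are cosmetic (e.g., writing the noise coefficient as $\alpha_k^2 t_k$ rather than the paper's $\alpha_k\alpha_{k-t_k,k-1}$, which coincide up to a constant under the stepsize conditions), so the proposal is correct and matches the paper's argument.
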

\begin{remark}
	Recall that $t_\delta\leq \frac{\log(C/\sigma)+\log(1/\alpha)}{\log(1/\sigma)}$ under Assumption \ref{as:MC}. Therefore, we have $t_k\leq \frac{\xi\log(k+h)+\log(C/(\alpha\sigma))}{\log(1/\sigma)}$, which introduces an additional logarithmic factor in the bound.
\end{remark}

In all cases of Theorem \ref{thm:sa}, we state the results as a combination of two terms. The first term is usually viewed as the ``bias", and it involves the error in the initial estimate $x_0$ (through the constant $c_1$), and the geometric decay term (for constant stepsize case). The second term is usually understood as the ``variance", and hence involves the constant $c_2$, which represents the noise variance at $x^*$. This form of convergence bounds is common in related literature. See for example in \citep{bottou2018optimization} about the results for the popular SGD algorithm.

From Theorem \ref{thm:sa}, we see that constant stepsize is very efficient in driving the bias the zero, but cannot eliminate the variance. When using linear stepsize, the convergence bounds crucially depend on the value of $\alpha$. In order to balance the bias and the variance terms to achieve the optimal convergence rate, we need to choose $\alpha>1/\varphi_2$, and the resulting optimal convergence rate is roughly $\mathcal{O}(\log(k)/k)$. When using polynomial stepsize, although the convergence rate is the sub-optimal $\mathcal{O}(\log(k)/k^\xi)$, it is more robust in the sense that it does not depend on $\alpha$. 

Switching focus, we now revisit the constants $\{\varphi_i\}_{1\leq i\leq 3}$ in Theorem \ref{thm:sa}, which as mentioned earlier, depend only on the contraction norm $\|\cdot\|_c$ and the contraction factor $\beta$. In the following lemma, we consider two cases where $\|\cdot\|_c=\|\cdot\|_2$ and $\|\cdot\|_c=\|\cdot\|_\infty$. Both of them will be useful when we study convergence bounds of RL algorithms. The proof of the following result is presented in Appendix \ref{pf:le:constants}.
\begin{lemma}\label{le:constants}
	The following bounds hold regarding the constants $\{\varphi_i\}_{1\leq i\leq 3}$.
	\begin{enumerate}[(1)]
		\item When $\|\cdot\|_c=\|\cdot\|_2$, we have $\varphi_1\leq 1$, $\varphi_2\geq 1-\beta$, and $\varphi_3\leq 228$.
		\item When $\|\cdot\|_c=\|\cdot\|_\infty$, we have $\varphi_1\leq 3$, $\varphi_2\geq \frac{1-\beta}{2}$, and $\varphi_3\leq \frac{456e\log(d)}{1-\beta}$.
	\end{enumerate}
\end{lemma}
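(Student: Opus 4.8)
The plan is to trace the three constants back to their definitions in Appendix~\ref{pf:thm:sa}, where they arise from the Generalized Moreau Envelope that serves as the Lyapunov function (following \citep{chen2020finite}). For a contraction norm $\|\cdot\|_c$ one introduces an auxiliary \emph{smooth} norm $\|\cdot\|_s$ and sets
$$M(x)=\min_{u\in\mathbb{R}^d}\Big\{\tfrac{1}{2}\|u\|_c^2+\tfrac{1}{2\mu}\|x-u\|_s^2\Big\}$$
for a parameter $\mu>0$ to be chosen. The constants $\varphi_1,\varphi_2,\varphi_3$ are expressible through three ingredients: (i) a two-sided approximation $\ell_l\|x\|_c^2\le M(x)\le \ell_u\|x\|_c^2$; (ii) the smoothness constant $L$ of $M$ with respect to $\|\cdot\|_s$; and (iii) the norm-equivalence constant $\ell_{cs}$ with $\|x\|_s\le \ell_{cs}\|x\|_c$. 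Schematically $\varphi_1$ scales like $\ell_u/\ell_l$ (bias amplification), $\varphi_2$ like $(1-\beta)$ modulated by $\ell_l/\ell_u$ (effective negative drift), and $\varphi_3$ like $L\ell_{cs}^2/\ell_l$ (smoothness $\times$ noise). Thus the lemma reduces to instantiating these three ingredients, together with a judicious choice of $\mu$, in each of the two norm cases.

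For $\|\cdot\|_c=\|\cdot\|_2$ I would take $\|\cdot\|_s=\|\cdot\|_2$, so $\ell_{cs}=1$ and $M$ is the classical Moreau envelope of $\tfrac{1}{2}\|\cdot\|_2^2$, which admits the closed form $M(x)=\tfrac{1}{2(1+\mu)}\|x\|_2^2$. Hence $\ell_l=\ell_u=\tfrac{1}{2(1+\mu)}$, their ratio is exactly $1$, and $\varphi_1\le 1$ is immediate. Since $\tfrac{1}{2}\|\cdot\|_2^2$ is $1$-smooth, $M$ is $\tfrac{1}{1+\mu}$-smooth; because there is no approximation gap ($\ell_{cs}=1$), $\mu$ may be fixed at a numerical constant. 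Substituting this fixed $\mu$ into the drift bound yields $\varphi_2\ge 1-\beta$, and substituting $L$, $\ell_{cs}$, $\ell_l$ into the formula for $\varphi_3$ yields the purely numerical bound $\varphi_3\le 228$.

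For $\|\cdot\|_c=\|\cdot\|_\infty$ the obstruction is that $\|\cdot\|_\infty$ is not smooth, so I would smooth it with $\|\cdot\|_s=\|\cdot\|_p$ at $p=2\log(d)$. Two facts then determine the constants. First, $\|x\|_\infty\le\|x\|_p\le d^{1/p}\|x\|_\infty=\sqrt{e}\,\|x\|_\infty$, so $\ell_{cs}\le\sqrt{e}$ and $\ell_{cs}^2\le e$, which is precisely the source of the factor $e$ in $\varphi_3$. Second, $\tfrac{1}{2}\|\cdot\|_p^2$ is $(p-1)$-smooth with respect to $\|\cdot\|_p$, so $M$ is $\tfrac{p-1}{\mu}\le\tfrac{2\log(d)}{\mu}$-smooth, which injects the $\log(d)$ factor. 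Taking $u=x$ gives $\ell_u=\tfrac{1}{2}$, while the lower bound gives $\ell_l=\tfrac{1}{2}(1-\mu\ell_{cs}^2)$. Here, unlike the Euclidean case, the smoothness correction in the drift is $O(\mu)$ and must be dominated by the $(1-\beta)$ contraction gain; this forces the choice $\mu=\Theta(1-\beta)$. This single choice then does triple duty: it keeps $\ell_l\ge\tfrac{1}{6}$ (so $\varphi_1\le\ell_u/\ell_l\le 3$), leaves a net drift of at least $(1-\beta)/2$ (so $\varphi_2\ge(1-\beta)/2$), and makes the smoothness $L\propto 1/(1-\beta)$, which combined with $\ell_{cs}^2=e$ and $\log(d)$ produces $\varphi_3\le 456e\log(d)/(1-\beta)$.

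The main obstacle I anticipate is not any single inequality but the coupled bookkeeping of the numerical constants through the choice of $\mu$ (and $p$). The delicate point is that $\mu$ controls three competing quantities at once: making $\mu$ larger tightens the smoothness $L$ (shrinking $\varphi_3$) but widens the approximation gap $1-\mu\ell_{cs}^2$ (inflating $\varphi_1$ and eroding the drift $\varphi_2$), while making $\mu$ smaller does the reverse. In the $\ell_\infty$ case the binding constraint is that the $O(\mu)$ smoothness term in the drift must not swamp the $(1-\beta)$ contraction gain, which pins $\mu$ to the scale of $1-\beta$ and is exactly what propagates the $1/(1-\beta)$ factor into $\varphi_3$; verifying that this same $\mu$ simultaneously certifies $\varphi_1\le 3$ and $\varphi_2\ge(1-\beta)/2$, and that the bounds $p-1\le 2\log(d)$ and $d^{1/p}\le\sqrt{e}$ are tight enough to reach the stated constants $3$, $(1-\beta)/2$, and $456e$, is where the careful computation lies.
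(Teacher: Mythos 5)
Your plan is essentially the paper's own proof. In Appendix \ref{pf:thm:sa} the constants are defined (Eq. (\ref{eq:def:constants})) as $\varphi_1=\frac{1+\theta u_{cs}^2}{1+\theta \ell_{cs}^2}$, $\varphi_2=1-\beta\varphi_1^{1/2}$, $\varphi_3=\frac{114L(1+\theta u_{cs}^2)}{\theta\ell_{cs}^2}$, and the paper instantiates them exactly as you propose: $g(x)=\frac{1}{2}\|x\|_2^2$ with $\theta=1$ in case (1), and $g(x)=\frac{1}{2}\|x\|_p^2$ with $p=2\log(d)$ (so $L=p-1\leq 2\log(d)$, $u_{cs}=1$, $\ell_{cs}=d^{-1/p}=e^{-1/2}$) in case (2), with the envelope parameter tied to $1-\beta$; the paper's exact choice $\theta=\left(\frac{1+\beta}{2\beta}\right)^2-1$ plays the role of your $\mu=\Theta(1-\beta)$ and is engineered so that $\beta\varphi_1^{1/2}\leq\frac{1+\beta}{2}$.

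One slip is worth fixing in case (2): your linearized lower envelope constant $\ell_l=\frac{1}{2}(1-\mu\ell_{cs}^2)=\frac{1}{2}(1-\mu e)$ is vacuous once $\mu>1/e$, which your prescription $\mu=\Theta(1-\beta)$ triggers whenever $\beta$ is not close to $1$; and no rescaling rescues it, since keeping $\ell_l\geq 1/6$ requires $\mu e\leq 2/3$ while $\varphi_3\leq 228e\log(d)\frac{1+\mu}{\mu}\leq\frac{456e\log(d)}{1-\beta}$ requires $\mu\geq\frac{1-\beta}{1+\beta}$, and these conflict as $\beta\to 0$. The repair is to replace the linearization by the exact sandwich that follows from Proposition \ref{prop:Moreau} (3), namely $\frac{\|x\|_c^2}{2(1+\theta u_{cs}^2)}\leq M(x)\leq \frac{\|x\|_c^2}{2(1+\theta\ell_{cs}^2)}$ (which you already invoke, via the closed form, in the $\ell_2$ case). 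With that bound, your choice $\mu=1-\beta$ certifies all three claims simultaneously for every $\beta\in(0,1)$: $\varphi_1=\frac{1+\mu}{1+\mu/e}\leq 1+\mu\leq 2\leq 3$; $\varphi_1\leq 1+\mu\leq\left(\frac{1+\beta}{2\beta}\right)^2$, hence $\varphi_2\geq\frac{1-\beta}{2}$; and $\varphi_3\leq 228e\log(d)\,\frac{2-\beta}{1-\beta}\leq \frac{456e\log(d)}{1-\beta}$, matching the lemma.
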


Note that when compared to $\|\cdot\|_2$-contraction, where the constant $\varphi_3$ is bounded by a numerical constant, the upper bound for $\varphi_3$ has an additional $\frac{\log(d)}{1-\beta}$ factor under the $\|\cdot\|_\infty$-contraction. It was argued in \citep{chen2020finite} that in general such $\log(d)$ factor is unimprovable.

\subsection{Outline of the Proof}\label{subsec:sa:proof}
In this Section, we present the key ideas in proving Theorem \ref{thm:sa}. The detailed proof is presented in Appendix \ref{pf:thm:sa}. At a high level, we use a \textit{Lyapunov} approach. That is, we find a function $M:\mathbb{R}^d\mapsto\mathbb{R}$ such that the following one-step contractive inequality holds: 
\begin{align}\label{eq:one-step-contraction}
	\mathbb{E}[M(x_{k+1}-x^*)]\leq  (1-\mathcal{O}(\alpha_k)+o(\alpha_k))\mathbb{E}[M(x_k-x^*)]+o(\alpha_k),
\end{align}
which then can be repeatedly used to derive finite-sample bounds of the SA algorithm (\ref{algo:sa}).

\subsubsection{Generalized Moreau Envelope as a Lyapunov Function}\label{subsubsec:Lyapunov}
Inspired by \citep{chen2020finite}, we will use $M(x)=\min_{u\in\mathbb{R}^d}\{\frac{1}{2}\|u\|_c^2+\frac{1}{2\theta}\|x-u\|_p^2\}$ as the Lyapunov function, where $\theta>0$ and $p\geq 2$ are tunable parameters. The function $M(\cdot)$ is called the Generalized Moreau Envelope, which is known to be a smooth approximation of the function $\frac{1}{2}\|x\|_c^2$, with smoothness parameter $\frac{p-1}{\theta}$. See \citep{chen2020finite} for more details about using the Generalized Moreau Envelope as a Lyapunov function.

Using the smoothness property of $M(\cdot)$ and the update equation (\ref{algo:sa}), we have for all $k\geq 0$:
\begin{align}\label{eq:composition}
	&\mathbb{E}[M(x_{k+1}-x^*)]\nonumber\\
	\leq \;&\mathbb{E}[M(x_k-x^*)]+\underbrace{\alpha_k\mathbb{E}[\langle \nabla M(x_k-x^*),\bar{F}(x_k)-x_k\rangle]}_{T_1:\text{ Expected update}}+\underbrace{\alpha_k\mathbb{E}[\langle \nabla M(x_k-x^*),F(x_k,Y_k)-\bar{F}(x_k)\rangle]}_{T_2:\text{ Error due to Markovian noise } Y_k}\nonumber\\
	&+\underbrace{\alpha_k\mathbb{E}[ \langle\nabla M(x_k-x^*),w_k\rangle]}_{T_3:\text{ Error due to Martingale difference noise }w_k}+\underbrace{\frac{(p-1)\alpha_k^2}{2\theta}\mathbb{E}[\|F(x_k,Y_k)-x_k+w_k\|_p^2]}_{T_4:\text{ Error due to discretization and noises}}.
\end{align}
What remains to do is to bound the terms $T_1$ to $T_4$. The term $T_1$ represents the expected update. We show that it is negative and is of the order $\mathcal{O}(\alpha_k)$, hence giving the negative drift term in the target one-step contractive inequality (\ref{eq:one-step-contraction}). Using the assumption that $\{w_k\}$ is a martingale difference sequence and the tower property of conditional expectation, we show that the error term $T_3$ is indeed zero. Also, we show that the error term $T_4$ is of the size $\mathcal{O}(\alpha_k^2)=o(\alpha_k)$. The main challenge here is to control the error term $T_2$, which arises due to the Markovian noise $\{Y_k\}$.

\subsubsection{Handling the Markovian Noise}\label{subsubsec:Error}

To control the term $T_2$, we need to carefully use a conditioning argument along with the geometric mixing of $\{Y_k\}$. Specifically, we first show that the error is small when we replace $x_k$ by $x_{k-t_k}$ in the term $T_2$, where we recall that $t_k$ is the mixing time of the Markov chain $\{Y_k\}$ with precision $\alpha_k$. Now, consider the resulting term
\begin{align*}
	\tilde{T}_2=\alpha_k\mathbb{E}[\langle \nabla M(x_{k-t_k}-x^*),F(x_{k-t_k},Y_k)-\bar{F}(x_{k-t_k})\rangle].
\end{align*}
First taking expectation conditioning on $x_{k-t_k}$ and $Y_{k-t_k}$, then we have
\begin{align*}
	\tilde{T}_2=\alpha_k\mathbb{E}[\langle \nabla M(x_{k-t_k}-x^*),\underbrace{\mathbb{E}[F(x_{k-t_k},Y_k)\mid x_{k-t_k},Y_{k-t_k}]-\bar{F}(x_{k-t_k})}_{=o(1)\text{ by geometric mixing}}\rangle].
\end{align*}
Using the mixing time (cf. Definition \ref{def:mixing_time}) of $\{Y_k\}$, we see that the difference between $\mathbb{E}[F(x_{k-t_k},Y_k)\mid x_{k-t_k},Y_{k-t_k}]$ and $\bar{F}(x_{k-t_k})$ (which can written as $\mathbb{E}_{\mu}[F(x,Y)]$ evaluated at $x=x_{k-t_k}$) is of the size $o(1)$, hence concluding that $\tilde{T}_2=o(\alpha_k)$ by the tower property of conditional expectation.

This type of conditioning argument was first introduced in \citep{bertsekas1996neuro} [Section 4.4.1 The Case of Markov Noise] to establish the asymptotic convergence of linear SA with Markovian noise. Later, it was used more explicitly in \citep{srikant2019finite} to study finite-sample bounds of linear SA, and in \citep{chen2019finitesample} to study nonlinear SA under a strong monotone condition. In this paper, we study nonlinear SA under arbitrary norm contraction, which is fundamentally different from \citep{srikant2019finite,chen2019finitesample}.

Using the upper bounds we have for the terms $T_1$ to $T_4$ in Eq. (\ref{eq:composition}), we obtain the desired one-step contractive inequality (\ref{eq:one-step-contraction}). The rest of the proof follows by repeatedly using this inequality and evaluating the final expression for using different stepsize sequence $\{\alpha_k\}$.

In summary, we have stated finite-sample convergence bounds of a general stochastic approximation algorithm, and highlighted the key ideas in the proof. Next, we use Theorem \ref{thm:sa} as a universal tool to study the convergence bounds of reinforcement learning algorithms.

\section{Finite-Sample Guarantees of Reinforcement Learning Algorithms}\label{sec:RL}
We begin by introducing the underlying model for the RL problem. The RL problem is usually modeled by an MDP where the transition dynamics are unknown. In this work we consider an MDP consisting of a finite set of states $\mathcal{S}$, a finite set of actions $\mathcal{A}$, a set of unknown transition probability matrices that are indexed by actions $\{P_a\in\mathbb{R}^{|\mathcal{S}|\times|\mathcal{S}|}\mid a\in\mathcal{A}\}$, a reward function $\mathcal{R}:\mathcal{S}\times\mathcal{A}\mapsto \mathbb{R}$, and a discount factor $\gamma\in (0,1)$. We assume without loss of generality that the range of the reward function is $[0,1]$.

The goal in RL is to find an optimal policy $\pi^*$ so that the cumulative reward received by using $\pi^*$ is maximized. More formally, given a policy $\pi$, define its state-value function $V_\pi:\mathcal{S}\mapsto\mathbb{R}$ by 
\begin{align*}
	V_{\pi}(s)=\mathbb{E}_{\pi}\left[\sum_{k=0}^{\infty}\gamma^k\mathcal{R}(S_k,A_k)\;\middle|\; S_0=s\right]
\end{align*} for all $s$, where $\mathbb{E}_\pi[\;\cdot\;]$ means that the actions are selected according to the policy $\pi$. Then, a policy $\pi^*$ is said to be optimal if $V_{\pi^*}(s)\geq V_{\pi}(s)$ for any state $s$ and policy $\pi$. Under mild conditions, it was shown that such an optimal policy always exists \citep{puterman1995markov}.

In RL, the problem of finding an optimal policy is called the \textit{control} problem, which is solved with popular algorithms such as $Q$-learning \citep{watkins1992q}. A sub-problem is to find the value function of a given policy, which is called the \textit{prediction} problem. This is solved with TD-learning and its variants such as TD$(\lambda)$, $n$-step TD \citep{sutton2018reinforcement}, and the off-policy V-trace \citep{espeholt2018impala}. We next show that our SA results can be used to establish finite-sample convergence bounds of all the RL algorithms listed above, hence unifies the finite-sample analysis of value-based RL algorithms with asynchronous update.

\subsection{Off-Policy Control: Q-Learning}\label{subsec:Qlearning}

We first introduce the $Q$-learning algorithm proposed in \citep{watkins1992q}. Define the $Q$-function associated with a policy $\pi$ by 
\begin{align*}
	Q_\pi(s,a)=\mathbb{E}_\pi\left[\sum_{k=0}^{\infty}\gamma^k\mathcal{R}(S_k,A_k)\;\middle|\;S_0=s,A_0=a\right]
\end{align*}
for all $(s,a)$. Denote $Q^*$ as the $Q$-function associated with an optimal policy $\pi^*$.  (all optimal policies share the same optimal $Q$-function). The motivation of the $Q$-learning algorithm is based on the following result \citep{bertsekas1996neuro,sutton2018reinforcement}: 
\begin{center}
	$\pi^*$ is an optimal policy $\Leftrightarrow$ $\pi^*(a|s)\in \arg\max_{a\in\mathcal{A}}Q^*(s,a)$ for any $(s,a)$.
\end{center}
The above result implies that knowing the optimal $Q$-function alone is enough to compute an optimal policy. 

The $Q$-learning algorithm is an iterative method to estimate the optimal $Q$-function. First, a sample trajectory $\{(S_k,A_k)\}$ is collected using a suitable behavior policy $\pi_b$. Then, initialize $Q_0\in\mathbb{R}^{|\mathcal{S}||\mathcal{A}|}$. For each $k\geq 0$ and state-action pair $(s,a)$, the iterate $Q_k(s,a)$ is updated by
\begin{align}\label{eq:Q-learning}
	Q_{k+1}(s,a)=
	Q_k(s,a)+\alpha_k \Gamma_1(Q_k,S_k,A_k,S_{k+1})
\end{align}
when $(s,a)=(S_k,A_k)$, and $Q_{k+1}(s,a)=Q_k(s,a)$ otherwise. Here $\Gamma_1(Q_k,S_k,A_k,S_{k+1})=\mathcal{R}(S_k,A_k)+\gamma\max_{a'\in\mathcal{A}}Q_k(S_{k+1},a')- Q_k(S_k,A_k)$ is the temporal difference. To establish the finite-sample bounds of the $Q$-learning algorithm, we make the following assumption.
\begin{assumption}\label{as:Q}
	The behavior policy $\pi_b$ satisfies $\pi_b(a|s)>0$ for all $(s,a)$, and the Markov chain $\mathcal{M}_S=\{S_k\}$ induced by $\pi_b$ is irreducible and aperiodic. 
\end{assumption}

The requirement that $\pi_b(a|s)>0$ for all $(s,a)$ is necessary even for the asymptotic convergence of $Q$-learning \citep{tsitsiklis1994asynchronous}. The irreducibility and aperiodicity assumption is also standard in related work \citep{tsitsiklis1997analysis,tsitsiklis1999average}. Since we work with finite-state MDPs, Assumption \ref{as:Q} on $\mathcal{M}_S$ implies that $\mathcal{M}_S$ has a unique stationary distribution, denoted by $\kappa_b\in\Delta^{|\mathcal{S}|}$, and $\mathcal{M}_S$ mixes at a geometric rate \citep{levin2017markov}. 

\subsubsection{Properties of the Q-Learning Algorithm}
To derive finite-sample guarantees of the $Q$-learning algorithm,
we will follow the road map described in Section \ref{subsec:motivation}.
We begin by formally remodeling the $Q$-learning algorithm. Let $Y_k=(S_k,A_k,S_{k+1})$ for all $k\geq 0$. Note that the random process $\mathcal{M}_Y=\{Y_k\}$ is also a Markov chain, whose state-space is denoted by $\mathcal{Y}$, and is finite. Define an operator $F:\mathbb{R}^{|\mathcal{S}||\mathcal{A}|}\times \mathcal{Y}\mapsto\mathbb{R}^{|\mathcal{S}||\mathcal{A}|}$ by 
\begin{align*}
	[F(Q,y)](s,a)=[F(Q,s_0,a_0,s_1)](s,a)
	=\mathbbm{1}_{\{(s_0,a_0)=(s,a)\}}\Gamma_1(Q,s_0,a_0,s_1)+Q(s,a)
\end{align*}
for all $(s,a)$. Then $Q$-learning algorithm (\ref{eq:Q-learning}) can be written by
\begin{align*}
	Q_{k+1}=Q_k+\alpha_k \left(F(Q_k,Y_k)-Q_k\right),
\end{align*}
which is in the same form of the SA algorithm (\ref{algo:sa}) with $w_k$ being identically equal to zero. Next, we establish the properties of the operator $F(\cdot,\cdot)$ and the Markov chain $\{Y_k\}$ in the following proposition, which guarantees that Assumptions \ref{as:F} -- \ref{as:MC} are satisfied in the context of $Q$-learning. 

Let $N\in \mathbb{R}^{|\mathcal{S}||\mathcal{A}|\times |\mathcal{S}||\mathcal{A}|}$ be the diagonal matrix with $\{\kappa_b(s)\pi_b(a|s)\}_{(s,a)\in\mathcal{S}\times\mathcal{A}}$ sitting on its diagonal. Let $N_{\min}=\min_{(s,a)}\kappa_b(s)\pi_b(a|s)$, which is positive under Assumption \ref{as:Q}. The proof of the following proposition is presented in Appendix \ref{pf:prop:Q}. 
\begin{proposition}\label{prop:Q-learning}
	Suppose that Assumption \ref{as:Q} is satisfied, Then we have the following results. 
	\begin{enumerate}[(1)]
		\item The operator $F(\cdot,\cdot)$ satisfies $\|F(Q_1,y)-F(Q_2,y)\|_\infty\leq 2\|Q_1-Q_2\|_\infty$ and $\|F(\bm{0},y)\|_\infty\leq 1$ for any $Q_1,Q_2\in\mathbb{R}^{|\mathcal{S}||\mathcal{A}|}$, and $y\in \mathcal{Y}$.
		\item The Markov chain $\mathcal{M}_Y=\{Y_k\}$ has a unique stationary distribution $\mu$, and there exist $C_1>0$ and $\sigma_1\in (0,1)$ such that $\max_{y\in\mathcal{Y}}\|P^{k+1}(y,\cdot)-\mu(\cdot)\|_{\text{TV}}\leq C_1\sigma_1^k$ for any $k\geq 0$.
		\item Define the expected operator $\bar{F}:\mathbb{R}^{|\mathcal{S}||\mathcal{A}|}\mapsto\mathbb{R}^{|\mathcal{S}||\mathcal{A}|}$ of $F(\cdot,\cdot)$ by $\bar{F}(Q)=\mathbb{E}_{Y\sim \mu}[F(Q,Y)]$. Then
		\begin{enumerate}[(a)]
			\item $\bar{F}(\cdot)$ is explicitly given by $\bar{F}(Q)=N \mathcal{H}(Q)+(I-N)Q$, where $\mathcal{H}(\cdot)$ is the Bellman operator for the $Q$-function.
			\item $\bar{F}(\cdot)$ is a contraction mapping with respect to $\|\cdot\|_\infty$, with contraction factor $\beta_1:=1-N_{\min}(1-\gamma)$.
			\item $\bar{F}(\cdot)$ has a unique fixed-point $Q^*$.
		\end{enumerate}
	\end{enumerate}
\end{proposition}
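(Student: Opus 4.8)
The plan is to verify the four claims in the order stated, since items (1) and (3a) are direct computations from the definition of $F$, items (3b)--(3c) follow quickly from the known contraction of the Bellman optimality operator, and item (2) is the only non-computational, Markov-chain-theoretic step. For item (1), I would expand $[F(Q_1,y)-F(Q_2,y)](s,a)$ componentwise. For $(s,a)\neq(s_0,a_0)$ the difference is simply $Q_1(s,a)-Q_2(s,a)$, bounded by $\|Q_1-Q_2\|_\infty$. For the active component $(s,a)=(s_0,a_0)$, the key observation is that the $-Q(s_0,a_0)$ inside $\Gamma_1$ cancels exactly against the additive $+Q(s,a)$ in the definition of $F$, leaving $[F(Q_1,y)-F(Q_2,y)](s_0,a_0)=\gamma(\max_{a'}Q_1(s_1,a')-\max_{a'}Q_2(s_1,a'))$; by nonexpansiveness of the max this is at most $\gamma\|Q_1-Q_2\|_\infty$. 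Taking the max over components gives Lipschitz constant $\max\{1,\gamma\}=1$, comfortably within the claimed bound of $2$. The bound $\|F(\bm{0},y)\|_\infty\leq 1$ is immediate: at $Q=\bm{0}$ only the reward $\mathcal{R}(s_0,a_0)\in[0,1]$ survives in the active component and all others vanish.

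For item (3a), I would compute $\bar{F}(Q)=\mathbb{E}_{Y\sim\mu}[F(Q,Y)]$ componentwise using $\mu(s,a,s')=\kappa_b(s)\pi_b(a|s)P_a(s,s')$. The indicator $\mathbbm{1}_{\{(S_0,A_0)=(s,a)\}}$ contributes the marginal weight $\kappa_b(s)\pi_b(a|s)$, and conditioned on $(S_0,A_0)=(s,a)$ the next-state expectation reconstructs exactly $[\mathcal{H}(Q)](s,a)=\mathcal{R}(s,a)+\gamma\sum_{s'}P_a(s,s')\max_{a'}Q(s',a')$; collecting terms yields $[\bar{F}(Q)](s,a)=\kappa_b(s)\pi_b(a|s)[\mathcal{H}(Q)](s,a)+(1-\kappa_b(s)\pi_b(a|s))Q(s,a)$, i.e. $\bar{F}(Q)=N\mathcal{H}(Q)+(I-N)Q$. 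Given this, (3b) is a short convexity argument: writing $p_{sa}=\kappa_b(s)\pi_b(a|s)\in[N_{\min},1]$ and using that $\mathcal{H}$ is a $\gamma$-contraction in $\|\cdot\|_\infty$, each component of $\bar{F}(Q_1)-\bar{F}(Q_2)$ is bounded by $(1-p_{sa}(1-\gamma))\|Q_1-Q_2\|_\infty\leq(1-N_{\min}(1-\gamma))\|Q_1-Q_2\|_\infty$, since $1-p(1-\gamma)$ is decreasing in $p$; this gives $\beta_1=1-N_{\min}(1-\gamma)\in(0,1)$. For (3c), since $\mathcal{H}(Q^*)=Q^*$ we immediately get $\bar{F}(Q^*)=NQ^*+(I-N)Q^*=Q^*$, and Banach's theorem (applicable by (3b)) identifies $Q^*$ as the unique fixed point.

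The step requiring the most care is item (2), transferring the geometric mixing of $\{S_k\}$ (Assumption \ref{as:Q}) to the triple-valued process $\{Y_k\}=\{(S_k,A_k,S_{k+1})\}$. First, $\{Y_k\}$ is a Markov chain because $Y_{k+1}$ depends on the past only through the third coordinate $S_{k+1}$ of $Y_k$, and its stationary law is $\mu(s,a,s')=\kappa_b(s)\pi_b(a|s)P_a(s,s')$, built from $\kappa_b$. The key simplification I would exploit is that the time-$k$ law of $Y_k$ factorizes as the law of $S_k$ composed with the fixed conditional kernel $(s,a,s')\mapsto\pi_b(a|s)P_a(s,s')$; since this conditional kernel is common to both the time-$k$ law and $\mu$, the total-variation distance collapses exactly to $\|\text{law}(S_k)-\kappa_b\|_{\text{TV}}$. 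Finally, conditioning on $Y_0=(s_0,a_0,s_1)$ pins down $S_1=s_1$, so the first coordinate of $Y_k$ is the $\{S_k\}$-chain run $k-1$ steps from $s_1$; the geometric mixing in Assumption \ref{as:Q} then gives $\max_y\|P^k(y,\cdot)-\mu(\cdot)\|_{\text{TV}}\leq C\sigma^{k-1}$, which is exactly the stated bound $\max_y\|P^{k+1}(y,\cdot)-\mu(\cdot)\|_{\text{TV}}\leq C_1\sigma_1^k$ with $C_1=C$ and $\sigma_1=\sigma$ (the one-step offset accounting precisely for the transition consumed in forming each $Y_k$). This confirms Assumption \ref{as:MC} in the $Q$-learning setting and completes the proposition.
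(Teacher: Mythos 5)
Your proposal is correct and follows essentially the same route as the paper's proof: componentwise computation with the $+Q(s,a)$ cancellation for (1), the kernel factorization $P^{k+1}\bigl((s_0,a_0,s_1),(s,a,s')\bigr)=P_{\pi_b}^{k}(s_1,s)\pi_b(a|s)P_a(s,s')$ so that the conditional part cancels in total variation for (2), and the convexity argument with the $\gamma$-contraction of $\mathcal{H}(\cdot)$ plus Banach's theorem for (3). The only (harmless) difference is in item (1), where your observation that the two indicator cases are disjoint yields the sharper Lipschitz constant $\max(\gamma,1)=1$, whereas the paper bounds the two cases by their sum and settles for the stated constant $2$.
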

As we see, the $(s,a)$-th entry of $\bar{F}(Q)$ is given by 
\begin{align*}
	\kappa_b(s)\pi_b(a|s)[\mathcal{H}(Q)](s,a)+(1-\kappa_b(s)\pi_b(a|s))Q(s,a),
\end{align*}
which captures the nature of performing asynchronous update as illustrated in Section \ref{subsec:motivation}. We shall refer to $\bar{F}(\cdot)$ as the asynchronous Bellman operator in the following.

\subsubsection{Finite-Sample Bounds of Q-Learning}
Proposition \ref{prop:Q-learning} enables us to apply Theorem \ref{thm:sa} and Lemma \ref{le:constants} (2) to the $Q$-learning algorithm. For ease of exposition, we only present the result of using constant stepsize, whose proof and the result for using diminishing stepsizes are presented in Appendix \ref{pf:thm:Q}.
\begin{theorem}\label{thm:Q}
	Consider $\{Q_k\}$ of Algorithm (\ref{eq:Q-learning}). Suppose that Assumption \ref{as:Q} is satisfied, and $\alpha_k=\alpha$ for all $k\geq 0$, where $\alpha$ is chosen such that $\alpha t_\alpha(\mathcal{M}_Y)\leq c_{Q,0} \frac{(1-\beta_1)^2}{\log(|\mathcal{S}||\mathcal{A}|)}$ ($c_{Q,0}$ is a numerical constant). Then we have for all $k\geq t_\alpha(\mathcal{M}_Y)$:
	\begin{align*}
		\mathbb{E}[\|Q_k-Q^*\|_\infty^2]\leq c_{Q,1}\left(1-\frac{(1-\beta_1)\alpha}{2}\right)^{k-t_\alpha(\mathcal{M}_Y)}+c_{Q,2}\frac{\log(|\mathcal{S}||\mathcal{A}|)}{(1-\beta_1)^2}\alpha t_\alpha(\mathcal{M}_Y),
	\end{align*}
	where $c_{Q,1}=3(\|Q_0-Q^*\|_\infty+\|Q_0\|_\infty+1)^2$ and $c_{Q,2}=912e(3\|Q^*\|_\infty+1)^2$.
\end{theorem}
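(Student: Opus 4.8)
The plan is to obtain Theorem \ref{thm:Q} as a direct specialization of the general Markovian SA result in Theorem \ref{thm:sa}, using Proposition \ref{prop:Q-learning} to verify the hypotheses and Lemma \ref{le:constants} (2) to control the norm-dependent constants $\{\varphi_i\}_{1\leq i\leq 3}$. Beyond bookkeeping, the only genuine content is to check that the stated stepsize requirement implies the abstract Condition \ref{con:stepsize} (1), and to track the constants carefully so that they collapse into the clean expressions $c_{Q,1}$ and $c_{Q,2}$.

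First I would recall that $Q$-learning is exactly the recursion (\ref{algo:sa}) with $x_k=Q_k$, $Y_k=(S_k,A_k,S_{k+1})$, the operator $F(\cdot,\cdot)$ defined in this subsection, and additive noise $w_k\equiv 0$. Proposition \ref{prop:Q-learning} (1)--(2) then gives Assumptions \ref{as:F} and \ref{as:MC} with $A_1=2$ and $B_1=1$ (after absorbing the index shift in the mixing bound into the constants $C,\sigma$), while Proposition \ref{prop:Q-learning} (3) gives Assumption \ref{as:barF} in the $\|\cdot\|_\infty$ norm with contraction factor $\beta=\beta_1=1-N_{\min}(1-\gamma)$ and fixed point $Q^*$. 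Since $w_k\equiv 0$, Assumption \ref{as:noise_w} holds trivially with $A_2=B_2=0$. Hence the combined constants defined before Theorem \ref{thm:sa} become $A=A_1+A_2+1=3$ and $B=B_1+B_2=1$, so that $c_1=(\|Q_0-Q^*\|_\infty+\|Q_0\|_\infty+1/3)^2$ and $c_2=(3\|Q^*\|_\infty+1)^2$. Note also that with a constant stepsize $t_k\equiv t_\alpha$, so the cutoff $K=\min\{k:k\geq t_k\}$ equals $t_\alpha(\mathcal{M}_Y)$, which is why the bound is stated for $k\geq t_\alpha(\mathcal{M}_Y)$.

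Next I would invoke Theorem \ref{thm:sa} (2)(a), the constant-stepsize case, which yields
\begin{align*}
	\mathbb{E}[\|Q_k-Q^*\|_\infty^2] \leq \varphi_1 c_1 (1-\varphi_2\alpha)^{k-t_\alpha} + \frac{\varphi_3 c_2}{\varphi_2}\,\alpha t_\alpha,
\end{align*}
and then substitute the $\|\cdot\|_\infty$ bounds from Lemma \ref{le:constants} (2), namely $\varphi_1\leq 3$, $\varphi_2\geq (1-\beta_1)/2$, and $\varphi_3\leq 456e\log(d)/(1-\beta_1)$ with $d=|\mathcal{S}||\mathcal{A}|$. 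For the bias term, $\varphi_1 c_1\leq 3(\|Q_0-Q^*\|_\infty+\|Q_0\|_\infty+1/3)^2\leq c_{Q,1}$, and since $0<1-\varphi_2\alpha<1$ the monotonicity of $t\mapsto t^{k-t_\alpha}$ lets me replace $\varphi_2$ by its lower bound $(1-\beta_1)/2$ inside the base. For the variance term, $\varphi_3/\varphi_2\leq \frac{456e\log(d)}{1-\beta_1}\cdot\frac{2}{1-\beta_1}=\frac{912e\log(d)}{(1-\beta_1)^2}$, so multiplying by $c_2\leq(3\|Q^*\|_\infty+1)^2$ reproduces exactly $c_{Q,2}\,\frac{\log(|\mathcal{S}||\mathcal{A}|)}{(1-\beta_1)^2}\,\alpha t_\alpha$.

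The step I expect to be the main obstacle is verifying that the hypothesis $\alpha t_\alpha(\mathcal{M}_Y)\leq c_{Q,0}\frac{(1-\beta_1)^2}{\log(|\mathcal{S}||\mathcal{A}|)}$ implies the abstract requirement $\alpha<\bar\alpha$ of Condition \ref{con:stepsize} (1). The threshold $\bar\alpha$ from Appendix \ref{ap:pf:sa:stepsizes} is expressed through the generic constants $A$, $\varphi_2$, and the smoothness parameter $(p-1)/\theta$ of the Generalized Moreau Envelope; tracing these through the $\|\cdot\|_\infty$ instantiation in Lemma \ref{le:constants} (2), the binding constraint scales like $\alpha t_\alpha=\mathcal{O}\big((1-\beta_1)^2/\log(|\mathcal{S}||\mathcal{A}|)\big)$, and I would choose the numerical constant $c_{Q,0}$ small enough to absorb every such factor so that the hypothesis guarantees $\alpha<\bar\alpha$. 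Once this is done, the remaining estimates are the routine substitutions described above.
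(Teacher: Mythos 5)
Your proposal is correct and follows essentially the same route as the paper's own proof in Appendix \ref{pf:thm:Q}: cast $Q$-learning as the Markovian SA recursion with $w_k\equiv 0$, use Proposition \ref{prop:Q-learning} to get $A_1=2$, $B_1=1$, the $\|\cdot\|_\infty$-contraction with factor $\beta_1$, and the geometric mixing, then apply Theorem \ref{thm:sa} (2)(a) together with Lemma \ref{le:constants} (2), exactly as you do, with the stepsize hypothesis absorbed into $\alpha t_\alpha(\mathcal{M}_Y)\leq \varphi_2/(\varphi_3 A^2)$, which is at least $(1-\beta_1)^2/(8208e\log(|\mathcal{S}||\mathcal{A}|))$ (i.e., $c_{Q,0}=1/(8208e)$ works). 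Your constant-tracking (including the exact $c_1$ with $B/A=1/3$ versus the paper's cruder bound with $1$, and the monotonicity step replacing $\varphi_2$ by $(1-\beta_1)/2$ in the geometric base) matches the paper's computation.
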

\begin{remark}
	Recall that $t_\alpha(\mathcal{M}_Y)$ is the mixing time of the Markov chain $\{Y_k\}$ with precision $\alpha$. Using Proposition \ref{prop:Q-learning} (2), we see that $t_\alpha(\mathcal{M}_Y)$ produces an additional $\log(1/\alpha)$ factor in the bound.
\end{remark}

Similar to Theorem \ref{thm:sa}, we view the first term on the RHS of the convergence bound as the the bias, and the second term as the variance. Since we are using constant stepsize, the bias term goes to zero geometrically fast while the variance is of the size $\mathcal{O}(\alpha\log(1/\alpha))$. 

Based on Theorem \ref{thm:Q}, we next derive the sample complexity of $Q$-learning. The proof of the following result is presented in Appendix \ref{pf:co:Q}.
\begin{corollary}\label{co:Q}
	In order to make $\mathbb{E}[\|Q_k-Q^*\|_\infty]\leq \epsilon$, where $\epsilon>0$ is a given accuracy, the total number of samples required is of the size
	\begin{align*}
		\underbrace{\mathcal{O}\left(\frac{\log^2(1/\epsilon)}{\epsilon^2}\right)}_{\text{Accuracy}}\underbrace{\tilde{\mathcal{O}}\left(\frac{1}{(1-\gamma)^5}\right)}_{\text{Effective horizon}}\underbrace{\tilde{\mathcal{O}}(N_{\min}^{-3})}_{\text{Quality of exploration}}.
	\end{align*}
\end{corollary}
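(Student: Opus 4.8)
The plan is to invert the mean-square bound of Theorem~\ref{thm:Q} into an iteration (equivalently, sample) count. First I would apply Jensen's inequality to reduce the target $\mathbb{E}[\|Q_k-Q^*\|_\infty]\le \epsilon$ to the mean-square requirement $\mathbb{E}[\|Q_k-Q^*\|_\infty^2]\le \epsilon^2$, and then split the error budget by forcing the two terms in Theorem~\ref{thm:Q} — the geometrically decaying ``bias'' and the $\mathcal{O}(\alpha t_\alpha)$ ``variance'' — to each be at most $\epsilon^2/2$. Throughout I would lean on the two structural facts that drive every factor in the final bound: $1-\beta_1 = N_{\min}(1-\gamma)$ from Proposition~\ref{prop:Q-learning}, and $c_{Q,2}=\mathcal{O}(1/(1-\gamma)^2)$, the latter because rewards in $[0,1]$ force $\|Q^*\|_\infty\le 1/(1-\gamma)$.

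For the variance term I would solve
\begin{align*}
c_{Q,2}\frac{\log(|\mathcal{S}||\mathcal{A}|)}{(1-\beta_1)^2}\,\alpha\, t_\alpha(\mathcal{M}_Y)\le \frac{\epsilon^2}{2}
\end{align*}
for $\alpha$. Substituting $(1-\beta_1)^2=N_{\min}^2(1-\gamma)^2$ and $c_{Q,2}=\mathcal{O}(1/(1-\gamma)^2)$ shows it suffices to take $\alpha\, t_\alpha = \tilde{\Theta}\!\big(\epsilon^2 N_{\min}^2 (1-\gamma)^4\big)$, hence $\alpha = \tilde{\Theta}\!\big(\epsilon^2 N_{\min}^2(1-\gamma)^4\big)$ after folding the mixing-time factor into $\tilde{\mathcal{O}}$. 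I would also verify that this choice respects the admissibility constraint $\alpha t_\alpha\le c_{Q,0}(1-\beta_1)^2/\log(|\mathcal{S}||\mathcal{A}|)$ of Theorem~\ref{thm:Q}: the variance choice carries an extra $\epsilon^2/c_{Q,2}$ factor relative to the constraint's right-hand side, so it holds automatically once $\epsilon$ is small.

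For the bias term I would apply $\log(1-x)\le -x$ to the inequality $(1-\tfrac{(1-\beta_1)\alpha}{2})^{k-t_\alpha}\le \epsilon^2/(2c_{Q,1})$, obtaining the iteration requirement $k \ge t_\alpha + \frac{2}{(1-\beta_1)\alpha}\log(2c_{Q,1}/\epsilon^2)$. The additive $t_\alpha$ is lower order, so the count is governed by $\frac{1}{(1-\beta_1)\alpha}\log(1/\epsilon)$. Plugging in $1-\beta_1=N_{\min}(1-\gamma)$ and the value of $\alpha$ from the previous step gives
\begin{align*}
\frac{1}{(1-\beta_1)\alpha}=\tilde{\Theta}\!\left(\frac{1}{N_{\min}(1-\gamma)\cdot \epsilon^2 N_{\min}^2(1-\gamma)^4}\right)=\tilde{\Theta}\!\left(\frac{1}{\epsilon^2 N_{\min}^3(1-\gamma)^5}\right).
\end{align*}

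Since each iteration consumes exactly one transition sample, the sample complexity equals this $k$, namely $\tilde{\mathcal{O}}\big(\log^2(1/\epsilon)/(\epsilon^2 N_{\min}^3(1-\gamma)^5)\big)$, which is the claimed factorization into accuracy, effective horizon $(1-\gamma)^{-5}$, and exploration quality $N_{\min}^{-3}$. The second $\log(1/\epsilon)$ factor is the delicate point: one factor is the explicit $\log(2c_{Q,1}/\epsilon^2)$ from the geometric bias decay, while the other is hidden in $t_\alpha(\mathcal{M}_Y)=\mathcal{O}(\log(1/\alpha))=\mathcal{O}(\log(1/\epsilon))$, which enters because driving the variance down forces $\alpha$ to scale with $\epsilon^2$. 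I expect the main obstacle to be precisely this bookkeeping, since $t_\alpha$ depends implicitly on the very $\alpha$ being chosen; the cleanest route is to bound $t_\alpha=\mathcal{O}(\log(1/\epsilon))$ a priori via Proposition~\ref{prop:Q-learning}(2), fold all remaining poly-logarithmic and $\log(|\mathcal{S}||\mathcal{A}|)$ terms into $\tilde{\mathcal{O}}(\cdot)$, and track the $(1-\gamma)$ powers carefully so that the fifth power emerges from the product of $(1-\beta_1)^{-1}$, the $(1-\beta_1)^{-2}$ in the variance coefficient, and the $c_{Q,2}=\mathcal{O}((1-\gamma)^{-2})$ contribution.
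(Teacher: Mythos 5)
Your proposal is correct and follows essentially the same route as the paper's proof: Jensen's inequality to pass to the mean-square bound, choosing $\alpha$ (up to the $t_\alpha=\mathcal{O}(\log(1/\epsilon))$ factor) so the variance term is $\mathcal{O}(\epsilon^2)$, then choosing $k$ so the geometric bias term is $\mathcal{O}(\epsilon^2)$, with the fifth power of $(1-\gamma)^{-1}$ and the cube of $N_{\min}^{-1}$ emerging from $1-\beta_1=N_{\min}(1-\gamma)$, the $(1-\beta_1)^{-2}$ variance coefficient, and $\|Q^*\|_\infty\le 1/(1-\gamma)$. The only cosmetic difference is that the paper packages the computation as a generic statement about sequences $z_k\le(1-\tau_1\alpha)^k z_0+\tau_2\alpha t_\alpha$ before specializing, while you work directly with Theorem~\ref{thm:Q} and additionally verify the stepsize admissibility constraint, which the paper leaves implicit.
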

\begin{remark}
	In the $\tilde{\mathcal{O}}(\cdot)$ notation, we ignore all the polylogarithmic terms. Moreover, we upper bound $\|Q^*\|_\infty$ by $1/(1-\gamma)$ in deriving the sample complexity result.
\end{remark}

From Corollary \ref{co:Q}, we see that the dependence on the accuracy $\epsilon$ is $\mathcal{O}(\epsilon^{-2}\log^2(1/\epsilon))$, and the dependence on the effective horizon is $\tilde{\mathcal{O}}((1-\gamma)^{-5})$. These two results match with known results in the literature \citep{beck2013improved}. The parameter $N_{\min}$ is defined to be $\min_{s,a}\kappa_b(s)\pi_b(a|s)$, hence captures the quality of exploration of the behavior policy $\pi_b$. Since $N_{\min}\geq 1/|\mathcal{S}||\mathcal{A}|$, we see that the best possible dependence on the size of the state-action space is $\tilde{\mathcal{O}}(|\mathcal{S}|^3|\mathcal{A}|^3)$.

\subsubsection{Related Literature on Q-learning} \label{subsubsec:Qliterature}

The $Q$-learning algorithm \citep{watkins1992q} is perhaps one of the most well-known algorithms in the RL literature. The asymptotic convergence of $Q$-learning was established in \cite{tsitsiklis1994asynchronous,jaakkola1994convergence,borkar2000ode}, and the asymptotic convergence rate in \cite{szepesvari1997asymptotic,devraj2017zap}. Beyond asymptotic behavior, finite-sample analysis of $Q$-learning was also thoroughly studied in the literature \citep{even2003learning,li2020sample,beck2013improved,qu2020finite,jin2018q}. The state-of-the-art sample complexity for asynchronous $Q$-learning goes to \cite{li2020sample}\footnote{While the results in \cite{li2020sample} are stated in terms of high probability bounds, due to the boundedness of $Q$-learning, their concentration bounds can be translated into a mean-square bound with the same sample complexity. See \cite{li2023q} for a proof.}, 
which has a better dependence on the size of the state-action space compared to this work. In addition to being a contractive SA, $Q$-learning has many other properties, such as the update equation being asynchronous, the iterates being uniformly bounded by a constant \citep{gosavi2006boundedness}, which are used in \citep{li2020sample} for their analysis. While our SA framework did not exploit these properties of $Q$-learning (which results in a sub-optimal sample complexity), it is a more general framework that enables us to study a wide variety of algorithms beyond $Q$-learning. A typical example is the V-trace algorithm studied in the previous section. Due to off-policy sampling, the iterates of V-trace do not admit a uniform upper bound.

\subsection{Off-Policy Prediction: V-Trace}\label{subsec:Vtrace}
We next switch our focus to solving the prediction problem using TD-learning variants. Specifically, we first consider the V-trace algorithm for off-policy TD-learning \citep{espeholt2018impala}. Let $\pi_b$ be a behavior policy used to collect samples, $\pi$ be the target policy (i.e., we want to evaluate $V_\pi$), and $n$ be a positive integer. Let  
\begin{align*}
    c(s,a)=\min\left(\bar{c},\frac{\pi(a|s)}{\pi_b(a|s)}\right),\quad \text{and}\quad \rho(s,a)=\min\left(\bar{\rho},\frac{\pi(a|s)}{\pi_b(a|s)}\right)
\end{align*}
be the truncated importance sampling ratios at $(s,a)$, where $\bar{\rho}\geq \bar{c}\geq 1$ are the two truncation levels. Suppose a sequence of state-action pairs $\{(S_k,A_k)\}$ is collected under the behavior policy $\pi_b$. Then, with initialization $V_0\in\mathbb{R}^{|\mathcal{S}|}$, for each $k\geq 0$ and $s\in\mathcal{S}$, the V-trace algorithm updates the estimate $V_k(s)$ by
\begin{align}\label{algo:V-trace}
	V_{k+1}(s)=V_k(s)+\alpha_k\sum_{i=k}^{k+n-1}\gamma^{i-k}\left(\prod_{j=k}^{i-1}c(S_j,A_j)\right)\rho(S_i,A_i)\Gamma_2(V_k,S_i,A_i,S_{i+1})
\end{align}
when $s=S_k$, and $V_{k+1}(s)=V_k(s)$ otherwise. Here  $\Gamma_2(V_k,S_i,A_i,S_{i+1})=\mathcal{R}(S_i,A_i)+\gamma V_k(S_{i+1})-V_k(S_i)$ is the temporal difference. Note that when $\pi_b=\pi$, and $\bar{c}=\bar{\rho}=1$, Eq. (\ref{algo:V-trace}) reduces to the update equation for the on-policy $n$-step TD \citep{sutton2018reinforcement}. To establish finite-sample convergence bounds of Algorithm (\ref{algo:V-trace}), we make the following assumption.
\begin{assumption}\label{as:Vtrace}
	The behavior policy $\pi_b$ satisfies for all $s\in\mathcal{S}$: $\{a\in\mathcal{A}\mid \pi(a|s)>0\}\subseteq \{a\in\mathcal{A}\mid \pi_b(a|s)>0\}$, and the Markov chain $\mathcal{M}_\mathcal{S}=\{S_k\}$ induced by $\pi_b$ is irreducible and aperiodic.
\end{assumption}

The first part of Assumption \ref{as:Vtrace} is call the \textit{coverage} assumption, which states that, for any state, if it is possible to explore a specific action under the target policy $\pi$, then it is also possible to explore such an action under the behavior policy $\pi_b$. This requirement is necessary for off-policy RL. The second part of Assumption \ref{as:Vtrace} implies that $\{S_k\}$ has a unique stationary distribution, denoted by $\kappa_b\in\Delta^{|\mathcal{S}|}$. Moreover, the Markov chain $\{S_k\}$ mixes at a geometric rate \citep{levin2017markov}. 

\subsubsection{Properties of the V-Trace Algorithm}
To establish the convergence bounds of the V-trace algorithm, similar to $Q$-learning, we first model the V-trace algorithm in the form of SA algorithm (\ref{algo:sa}). For any $k\geq 0$, let $Y_k=(S_k,A_k,...,S_{k+n-1},A_{k+n-1},S_{k+n})$. It is clear that $\{Y_k\}$ is also a Markov chain, whose state space is denoted by $\mathcal{Y}$. Define an operator $F:\mathbb{R}^{|\mathcal{S}|}\times\mathcal{Y}\mapsto\mathbb{R}^{|\mathcal{S}|}$ by
\begin{align*}
	[F(V,y)](s)=\mathbbm{1}_{\{s_0=s\}}\sum_{i=0}^{n-1}\gamma^{i}\left(\prod_{j=0}^{i-1}c(s_j,a_j)\right)\rho(s_i,a_i)\Gamma_2(V,s_i,a_i,s_{i+1})+V(s)
\end{align*}
for all $s\in\mathcal{S}$. Then the V-trace update equation (\ref{algo:V-trace}) can be equivalently written by
\begin{align}\label{algo:Vtrace_new}
	V_{k+1}=V_k+\alpha_k(F(V_k,Y_k)-V_k).
\end{align}
Under Assumptions \ref{as:Vtrace}, we next establish the properties of the operator $F(\cdot)$ and the Markov chain $\{Y_k\}$, which allow us to call for our main results in Section \ref{sec:sa}. Before that, we need to introduce more notation in the following.

\begin{notation}
For any policy $\pi$, let $P_\pi\in\mathbb{R}^{|\mathcal{S}|\times|\mathcal{S}|}$ be the transition probability matrix under policy $\pi$, i.e., $P_\pi(s,s')=\sum_{a\in\mathcal{A}}\pi(a|s)P_a(s,s')$. Also, we let $R_\pi\in\mathbb{R}^{|\mathcal{S}|}$ be such that $R_\pi(s)=\sum_{a\in\mathcal{A}}\pi(a|s)\mathcal{R}(s,a)$. Let $C,D\in\mathbb{R}^{|\mathcal{S}|\times|\mathcal{S}|}$ be diagonal matrices such that 
\begin{align*}
	C(s)=\sum_{a\in\mathcal{A}}\min(\bar{c}\pi_b(a|s),\pi(a|s)), \quad \text{and} \quad D(s)=\sum_{a\in\mathcal{A}}\min(\bar{\rho}\pi_b(a|s),\pi(a|s)),\;\forall\; s\in\mathcal{S}.
\end{align*}
Let $C_{\min}=\min_{s\in\mathcal{S}}C(s)$ and $D_{\min}=\min_{s\in\mathcal{S}}D(s)$. Note that we have $0<C_{\min}\leq D_{\min}\leq 1$ under Assumption \ref{as:Vtrace}. Let $\mathcal{K}\in\mathbb{R}^{|\mathcal{S}|\times|\mathcal{S}|}$ be a diagonal matrix with diagonal entries $\{\kappa_b(s)\}_{s\in\mathcal{S}}$, and let $\mathcal{K}_{\min}=\min_{s\in\mathcal{S}}\kappa_b(s)$. Define two policies $\pi_{\bar{c}}$ and $\pi_{\bar{\rho}}$ by
\begin{align*}
	\pi_{\bar{c}}(a|s)=\frac{\min(\bar{c}\pi_b(a|s),\pi(a|s))}{C(s)},\quad \text{and}\quad \pi_{\bar{\rho}}(a|s)=\frac{\min(\bar{\rho}\pi_b(a|s),\pi(a|s))}{D(s)},\quad \forall\;(s,a).
\end{align*}
\end{notation}

\begin{proposition}\label{prop:Vtrace}
	Under Assumptions \ref{as:Vtrace}, the V-trace algorithm (\ref{algo:Vtrace_new}) has the following properties:
	\begin{enumerate}[(1)]
		\item The operator $F(\cdot)$ satisfies:
		\begin{enumerate}[(a)]
			\item $\|F(V_1,y)-F(V_2,y)\|_\infty\leq (2\bar{\rho}+1) \eta(\gamma,\bar{c})
			\|V_1-V_2\|_\infty$ for all $V_1,V_2\in\mathbb{R}^{|\mathcal{S}|}$ and $y\in\mathcal{Y}$, where $\eta(\gamma,\bar{c})=\frac{1-(\gamma\bar{c})^n}{1-\gamma\bar{c}}$ when $\gamma\bar{c}\neq 1$, and $\eta(\gamma,\bar{c})=n$ when $\gamma\bar{c}=1$.
			\item $\|F(\bm{0},y)\|_\infty\leq\bar{\rho} \eta(\gamma,\bar{c})$ for all $y\in\mathcal{Y}$.
		\end{enumerate}
		\item The Markov chain $\{Y_k\}$ has a unique stationary distribution, denoted by $\mu$. Moreover, there exists $C_2>0$ and $\sigma_2\in (0,1)$ such that $\max_{y\in\mathcal{Y}}\|P^{k+n}(y,\cdot)-\mu(\cdot)\|_{\text{TV}}\leq C_2\sigma_2^k$ for all $k\geq 0$.
		\item Define the expected operator $\bar{F}:\mathbb{R}^{|\mathcal{S}|}\mapsto\mathbb{R}^{|\mathcal{S}|}$ of $F(\cdot)$ by $\bar{F}(V)=\mathbb{E}_{Y\sim \mu}[F(V,Y)]$ for all $V\in\mathbb{R}^{|\mathcal{S}|}$. Then
		\begin{enumerate}
			\item $\bar{F}(\cdot)$ is explicitly given by 
			\begin{align*}
				\bar{F}(V)=\left[I-\mathcal{K}\sum_{i=0}^{n-1}(\gamma CP_{\pi_{\bar{c}}})^iD(I-\gamma P_{\pi_{\bar{\rho}}})\right]V+\mathcal{K}\sum_{i=0}^{n-1}(\gamma CP_{\pi_{\bar{c}}})^iDR_{\pi_{\bar{\rho}}}.
			\end{align*}
			\item $\bar{F}(\cdot)$ is a contraction mapping with respect to $\|\cdot\|_\infty$, with contraction factor 
			\begin{align*}
				\beta_2:=1-\mathcal{K}_{\min}\frac{(1-\gamma)(1-(\gamma C_{\min})^n)D_{\min}}{1-\gamma C_{\min}}.
			\end{align*}
			\item $\bar{F}(\cdot)$ has a unique fixed-point $V_{\pi_{\bar{\rho}}}$, which is the value function of the policy $\pi_{\bar{\rho}}$.
		\end{enumerate} 
	\end{enumerate}
\end{proposition}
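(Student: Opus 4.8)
The plan is to verify the four groups of claims in turn, treating parts (1), (2), and (3a) as direct computations and reserving the real work for the contraction claim (3b). Throughout I abbreviate $c_j=c(S_j,A_j)$, $\rho_i=\rho(S_i,A_i)$, and write $\Delta=V_1-V_2$.

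For part (1) I would argue entrywise. Since $0\le c_j\le \bar c$ and $0\le \rho_i\le\bar\rho$, and since the temporal-difference increment obeys $|\Gamma_2(V_1,s_i,a_i,s_{i+1})-\Gamma_2(V_2,s_i,a_i,s_{i+1})|=|\gamma\Delta(s_{i+1})-\Delta(s_i)|\le(1+\gamma)\|\Delta\|_\infty$, summing the weights $\prod_{j=0}^{i-1}c_j\cdot\rho_i\le\bar c^i\bar\rho$ against $\gamma^i$ and adding the trailing $V(s)$ term gives the bound $\bar\rho(1+\gamma)\eta(\gamma,\bar c)\|\Delta\|_\infty+\|\Delta\|_\infty$; because $\gamma<1$ and $\eta(\gamma,\bar c)=\sum_{i=0}^{n-1}(\gamma\bar c)^i\ge 1$, this is at most $(2\bar\rho+1)\eta(\gamma,\bar c)\|\Delta\|_\infty$, which is (1a), and setting $V=\bm 0$ with $|\mathcal R|\le 1$ gives (1b). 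For part (2) I would note that $\{Y_k\}$ is a sliding window of the base chain $\{(S_k,A_k)\}$ and is therefore Markov (given $Y_k$, the only fresh randomness needed for $Y_{k+1}$ is one more action and transition, generated from the known last coordinate $S_{k+n}$), and that its geometric mixing is inherited from that of $\{S_k\}$ under Assumption \ref{as:Vtrace}, with the extra lag of $n$ in the exponent accounting for filling the window; this yields the stated $C_2\sigma_2^k$ bound and a unique stationary distribution $\mu$.

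For part (3a) I would compute $\bar F(V)=\mathbb E_{Y\sim\mu}[F(V,Y)]$ by iterated conditioning along the stationary trajectory. The key identities are $\mathbb E[c_j\mid S_j=s']=C(s')$ and, for any $g$, $\mathbb E[c_j\,g(S_{j+1})\mid S_j=s']=[CP_{\pi_{\bar c}}g](s')$, whence $\mathbb E[\prod_{j=0}^{i-1}c_j\,g(S_i)\mid S_0=s]=[(CP_{\pi_{\bar c}})^i g](s)$; likewise the inner expectation of $\rho_i\Gamma_2$ at $S_i=s'$ equals $[DR_{\pi_{\bar\rho}}-D(I-\gamma P_{\pi_{\bar\rho}})V](s')$. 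Summing over $i=0,\dots,n-1$, multiplying by the factor $\kappa_b(s)=\mathbb P_\mu(S_0=s)$ coming from the indicator $\mathbbm 1_{\{S_0=s\}}$ (i.e.\ left multiplication by $\mathcal K$), and adding the $V$ term produces exactly the claimed affine operator. I would then record the convenient rewriting $\bar F(V)=(I-\mathcal K)V+\mathcal K\mathcal T(V)$, where $\mathcal T$ is the synchronous V-trace operator with linear part $B=I-\sum_{i=0}^{n-1}(\gamma CP_{\pi_{\bar c}})^i D(I-\gamma P_{\pi_{\bar\rho}})$.

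The crux is (3b). I would show the linear part $A=(I-\mathcal K)+\mathcal K B$ of $\bar F$ is entrywise non-negative with maximum row sum at most $\beta_2$, since this gives $\|A\Delta\|_\infty\le(\max_s[A\mathbf 1](s))\|\Delta\|_\infty$. Non-negativity of $B$ is the delicate point, which I would prove by induction on the partial operators $W_m=I-\sum_{i=0}^{m-1}(\gamma CP_{\pi_{\bar c}})^i D(I-\gamma P_{\pi_{\bar\rho}})$ using the recursion $W_m=(I-D)+\gamma(DP_{\pi_{\bar\rho}}-CP_{\pi_{\bar c}})+\gamma CP_{\pi_{\bar c}}W_{m-1}$ with $W_0=I$; the essential inequality is $DP_{\pi_{\bar\rho}}\ge CP_{\pi_{\bar c}}$ entrywise, which follows from $\bar\rho\ge\bar c$ because $[DP_{\pi_{\bar\rho}}](s,s')=\sum_a\min(\bar\rho\pi_b(a|s),\pi(a|s))P_a(s,s')$ dominates the same expression with $\bar c$, and together with $I-D\ge 0$ this makes every term of the recursion non-negative, so $B=W_n\ge 0$. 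For the row sums I would use $(I-\gamma P_{\pi_{\bar\rho}})\mathbf 1=(1-\gamma)\mathbf 1$ and the entrywise bound $(\gamma CP_{\pi_{\bar c}})^i D\mathbf 1\ge D_{\min}(\gamma C_{\min})^i\mathbf 1$ to obtain $B\mathbf 1\le\beta'\mathbf 1$ with $\beta'=1-(1-\gamma)D_{\min}(1-(\gamma C_{\min})^n)/(1-\gamma C_{\min})$, so that $[A\mathbf 1](s)\le 1-\kappa_b(s)(1-\beta')\le 1-\mathcal K_{\min}(1-\beta')=\beta_2$. Finally, (3c) follows from Banach's theorem together with a one-line check that $V_{\pi_{\bar\rho}}$, characterized by $(I-\gamma P_{\pi_{\bar\rho}})V_{\pi_{\bar\rho}}=R_{\pi_{\bar\rho}}$, is fixed by $\bar F$, since substituting this relation makes the two summation terms in the affine form cancel. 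I expect the \emph{non-negativity induction} in (3b) to be the main obstacle, as it is precisely what upgrades a crude absolute-row-sum estimate to the clean and tight contraction factor $\beta_2$.
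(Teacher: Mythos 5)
Your proposal is correct and follows essentially the same route as the paper's proof: entrywise bounds for part (1), mixing inherited from the base chain $\{S_k\}$ for part (2), iterated conditioning for (3a), non-negativity of the linear part plus an exact row-sum computation for (3b) (resting on the same key facts $D\le I$ and $DP_{\pi_{\bar{\rho}}}\ge CP_{\pi_{\bar{c}}}$ entrywise from $\bar{\rho}\ge\bar{c}$), and substitution of the Bellman equation $V_{\pi_{\bar{\rho}}}=R_{\pi_{\bar{\rho}}}+\gamma P_{\pi_{\bar{\rho}}}V_{\pi_{\bar{\rho}}}$ for (3c). The only cosmetic difference is that you establish non-negativity by induction on the partial operators $W_m$, whereas the paper does the equivalent one-shot telescoping rewrite of $G$ into manifestly non-negative terms.
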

The proof of Proposition \ref{prop:Vtrace} is presented in Appendix \ref{ap:V-trace}. Observe from Proposition \ref{prop:Vtrace} (3) that the asynchronous Bellman operator $\bar{F}(\cdot)$ associated with the V-trace algorithm is a $\beta_2$-contraction with respect to $\|\cdot\|_\infty$. A similar contraction property for synchronous V-trace was shown in \citep{espeholt2018impala} and \citep{chen2020finite}, but with a different contraction factor.

\subsubsection{Finite-Sample Bounds of V-Trace}
We here present the convergence bounds of V-trace for using constant stepsize, whose proof and the result for using diminishing stepsize are presented in Appendix \ref{ap:V-trace}. 
\begin{theorem}\label{thm:Vtrace}
	Consider $\{V_k\}$ of Algorithm (\ref{algo:V-trace}). Suppose that Assumption \ref{as:Vtrace} is satisfied, and $\alpha_k=\alpha$ for all $k\geq 0$, where $\alpha$ is chosen such that $\alpha (t_\alpha(\mathcal{M}_S)+n)\leq c_{V,0} \frac{(1-\beta_2)^2}{(\bar{\rho}+1)^2\eta^2(\gamma,\bar{c})\log(|\mathcal{S}|)}$ ($c_{V,0}$ is a numerical constant). Then we have for all $k\geq t_\alpha(\mathcal{M}_S)+n$:
	\begin{align*}
		\mathbb{E}[\|V_k-V_{\pi_{\bar{\rho}}}\|_\infty^2]\leq c_{V,1}\left(1-\frac{1-\beta_2}{2}\alpha\right)^{k-(t_\alpha(\mathcal{M}_S)+n)}+ c_{V,2}\frac{\log(|\mathcal{S}|) (\bar{\rho}+1)^2\eta(\gamma,\bar{c})^2}{(1-\beta_2)^2} \alpha (t_\alpha(\mathcal{M}_S)+n),
	\end{align*}
	where $c_{V,1}=3(\|V_0-V_{\pi_{\bar{\rho}}}\|_\infty+\|V_0\|_\infty+1)^2$, and $c_{V,2}=3648e(\|V_{\pi_{\bar{\rho}}}\|_\infty+1)^2$.
\end{theorem}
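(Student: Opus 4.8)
The plan is to prove Theorem \ref{thm:Vtrace} by instantiating the general Markovian SA bound of Theorem \ref{thm:sa} in the $\|\cdot\|_\infty$ setting, using Proposition \ref{prop:Vtrace} to verify its hypotheses and Lemma \ref{le:constants} (2) to control the universal constants. Since the V-trace recursion (\ref{algo:Vtrace_new}) is exactly of the form (\ref{algo:sa}) with the extraneous noise $w_k \equiv 0$, Assumption \ref{as:noise_w} holds trivially with $A_2 = B_2 = 0$. Proposition \ref{prop:Vtrace} (1) supplies Assumption \ref{as:F} with $A_1 = (2\bar{\rho}+1)\eta(\gamma,\bar{c})$ and $B_1 = \bar{\rho}\eta(\gamma,\bar{c})$; part (3b) supplies Assumption \ref{as:barF} with contraction norm $\|\cdot\|_c = \|\cdot\|_\infty$, contraction factor $\beta = \beta_2$, and fixed point $x^* = V_{\pi_{\bar{\rho}}}$; and part (2) supplies the geometric mixing of $\mathcal{M}_Y = \{Y_k\}$ required by Assumption \ref{as:MC}. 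Thus all four assumptions are in force with $d = |\mathcal{S}|$, and I may quote the constant-stepsize conclusion Theorem \ref{thm:sa} (2a) directly.

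With $A = A_1 + 1 = (2\bar{\rho}+1)\eta(\gamma,\bar{c}) + 1$ and $B = B_1 = \bar{\rho}\eta(\gamma,\bar{c})$, the next step is to translate the abstract constants $c_1$, $c_2$, $\varphi_1$, $\varphi_2$, $\varphi_3$ into the V-trace quantities appearing in the statement. For the bias term I would use $\varphi_1 \le 3$ together with the elementary inequality $B/A < 1$ (valid because $\bar{\rho}\eta(\gamma,\bar{c}) < (2\bar{\rho}+1)\eta(\gamma,\bar{c}) + 1$), which gives $c_1 \le (\|V_0 - V_{\pi_{\bar{\rho}}}\|_\infty + \|V_0\|_\infty + 1)^2$ and hence $\varphi_1 c_1 \le c_{V,1}$; since $\varphi_2 \ge (1-\beta_2)/2$ by Lemma \ref{le:constants} (2), the factor $(1-\varphi_2\alpha)^{k - t_\alpha}$ is dominated by $(1 - \tfrac{1-\beta_2}{2}\alpha)^{k-t_\alpha}$, producing the stated geometric rate.

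For the variance term I would combine $\varphi_3 \le 456 e \log(|\mathcal{S}|)/(1-\beta_2)$ with $1/\varphi_2 \le 2/(1-\beta_2)$ to get $\varphi_3/\varphi_2 \le 912 e \log(|\mathcal{S}|)/(1-\beta_2)^2$, and then bound $c_2 = (A\|V_{\pi_{\bar{\rho}}}\|_\infty + B)^2$. Using $\eta(\gamma,\bar{c}) \ge 1$ and $\bar{\rho} \ge 1$ one has $A \le 2(\bar{\rho}+1)\eta(\gamma,\bar{c})$ and $B \le (\bar{\rho}+1)\eta(\gamma,\bar{c})$, whence
\begin{align*}
	c_2 \le \bigl(2(\bar{\rho}+1)\eta(\gamma,\bar{c})(\|V_{\pi_{\bar{\rho}}}\|_\infty + 1)\bigr)^2 = 4(\bar{\rho}+1)^2\eta(\gamma,\bar{c})^2(\|V_{\pi_{\bar{\rho}}}\|_\infty + 1)^2.
\end{align*}
Multiplying these estimates reproduces the coefficient $c_{V,2} = 3648 e (\|V_{\pi_{\bar{\rho}}}\|_\infty + 1)^2$ and the displayed dependence on $\log(|\mathcal{S}|)$, $(\bar{\rho}+1)^2$, $\eta(\gamma,\bar{c})^2$, and $(1-\beta_2)^{-2}$.

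The step I expect to require the most care is the handling of the mixing time and the stepsize threshold. Theorem \ref{thm:sa} is stated in terms of $t_\alpha(\mathcal{M}_Y)$, the mixing time of the \emph{augmented} chain, whereas the statement is phrased using the more natural $t_\alpha(\mathcal{M}_S) + n$. I would close this gap with Proposition \ref{prop:Vtrace} (2): the bound $\max_{y}\|P^{k+n}(y,\cdot) - \mu(\cdot)\|_{\text{TV}} \le C_2 \sigma_2^k$ shows that after at most $t_\alpha(\mathcal{M}_S) + n$ steps the law of $Y_k$ is within $\alpha$ of $\mu$, so $t_\alpha(\mathcal{M}_Y) \le t_\alpha(\mathcal{M}_S) + n$, and monotonicity then lets me replace $t_\alpha(\mathcal{M}_Y)$ by this larger quantity throughout the bound. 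Finally, the admissibility constraint ``$\alpha \in (0,\bar{\alpha})$'' of Condition \ref{con:stepsize} (1) must be made explicit: tracing the threshold $\bar{\alpha}$ back through the one-step contraction argument (where the effective Lipschitz constant $A$ and the smoothness constant $\varphi_3$ enter multiplicatively) yields precisely the stated requirement $\alpha(t_\alpha(\mathcal{M}_S)+n) \le c_{V,0}(1-\beta_2)^2/[(\bar{\rho}+1)^2\eta(\gamma,\bar{c})^2\log(|\mathcal{S}|)]$, which is exactly what guarantees that the negative-drift term dominates and the geometric contraction remains valid.
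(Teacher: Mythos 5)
Your proposal is correct and follows essentially the same route as the paper's own proof: instantiate Theorem \ref{thm:sa} (2)(a) with the constants supplied by Proposition \ref{prop:Vtrace} and Lemma \ref{le:constants} (2), namely $A\leq 2(\bar{\rho}+1)\eta(\gamma,\bar{c})$, $B=\bar{\rho}\eta(\gamma,\bar{c})$, $\varphi_1\leq 3$, $\varphi_2\geq (1-\beta_2)/2$, $\varphi_3\leq 456e\log(|\mathcal{S}|)/(1-\beta_2)$, $c_2\leq 4(\bar{\rho}+1)^2\eta(\gamma,\bar{c})^2(\|V_{\pi_{\bar{\rho}}}\|_\infty+1)^2$, yielding the identical coefficients $c_{V,1}$ and $c_{V,2}=912e\cdot 4=3648e(\|V_{\pi_{\bar{\rho}}}\|_\infty+1)^2$. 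Your explicit justification that $t_\alpha(\mathcal{M}_Y)\leq t_\alpha(\mathcal{M}_S)+n$ via Proposition \ref{prop:Vtrace} (2) is in fact slightly more careful than the paper, which makes this substitution silently.
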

\begin{remark}
	Similarly as in $Q$-learning, under Assumption \ref{as:Vtrace}, we can further bound the mixing time $t_\alpha(\mathcal{M}_S)$ by $L(\log(1/\alpha)+1)$, where $L>0$ is a constant that solely depends on the underlying Markov chain $\{(S_k,A_k)\}$.
\end{remark}

The rate of convergence (geometric convergence with accuracy $\mathcal{O}(\alpha\log(1/\alpha))$) is similar to that of $Q$-learning. The truncation level $\bar{\rho}$ determines the limit point $V_{\pi_{\bar{\rho}}}$. The truncation level $\bar{c}$ mainly controls the variance term. These observations agree with results in \citep{chen2020finite}, where synchronous V-trace is studied. To formally characterize how the parameters of V-trace impact the convergence rate, we next derive the sample complexity bound.

When $\bar{\rho}= 1/\min_{s,a}\pi_b(a|s)\geq  \max_{s,a}\pi(a|s)/\pi_b(a|s)$, the bias due to introducing the truncation level $\bar{\rho}$ is eliminated and hence we have $V_{\pi_{\bar{\rho}}}=V_\pi$ and also $D_{\min}=1$. In this case, based on Theorem \ref{thm:Vtrace}, we have the following sample complexity bound, whose proof is identical to that of Corollary \ref{co:Q} and is omitted.
\begin{corollary}\label{co:sc:Vtrace}
	When $\bar{\rho}=1/\min_{s,a}\pi_b(a|s)$, in order to make $\mathbb{E}[\|V_k-V_\pi\|_\infty]\leq \epsilon$, the number of samples required for the V-trace algorithm (\ref{algo:V-trace}) is of the size
	\begin{align*}
		\underbrace{\mathcal{O}\left(\frac{\log^2(1/\epsilon)}{\epsilon^2}\right)}_{\text{Accuracy}}\underbrace{\tilde{\mathcal{O}}\left(\frac{1}{(1-\gamma)^5}\right)}_{\text{Effective horizon}}\underbrace{\tilde{\mathcal{O}}\left(\frac{n\bar{\rho}^2\eta(\gamma,\bar{c})^2(1-\gamma C_{\min})^3}{(1-(\gamma C_{\min})^n)^3}\right)}_{\text{Off-policy $n$-step TD}}\underbrace{\tilde{\mathcal{O}}\left(\mathcal{K}_{\min}^{-3}\right)}_{\text{Quality of exploration}}.
	\end{align*}
\end{corollary}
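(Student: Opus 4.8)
The plan is to reproduce verbatim the argument behind Corollary~\ref{co:Q}, now starting from the V-trace mean-square bound of Theorem~\ref{thm:Vtrace} and specializing to $\bar{\rho}=1/\min_{s,a}\pi_b(a|s)$, in which case $V_{\pi_{\bar{\rho}}}=V_\pi$ and $D_{\min}=1$. First I would apply Jensen's inequality, $\mathbb{E}[\|V_k-V_\pi\|_\infty]\le(\mathbb{E}[\|V_k-V_\pi\|_\infty^2])^{1/2}$, so that it suffices to drive the mean-square error below $\epsilon^2$. I would then split the budget, requiring the ``bias'' and ``variance'' terms of Theorem~\ref{thm:Vtrace} to each be at most $\epsilon^2/2$.

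The variance requirement pins down the stepsize. Imposing
\[
c_{V,2}\,\frac{\log(|\mathcal{S}|)(\bar{\rho}+1)^2\eta(\gamma,\bar{c})^2}{(1-\beta_2)^2}\,\alpha\,(t_\alpha(\mathcal{M}_S)+n)\le\frac{\epsilon^2}{2}
\]
and solving for $\alpha$ yields a stepsize of order
\[
\alpha=\tilde{\mathcal{O}}\!\left(\frac{\epsilon^2(1-\beta_2)^2(1-\gamma)^2}{(\bar{\rho}+1)^2\eta(\gamma,\bar{c})^2\,(t_\alpha(\mathcal{M}_S)+n)}\right),
\]
where the explicit $(1-\gamma)^2$ comes from $c_{V,2}=3648e(\|V_{\pi_{\bar{\rho}}}\|_\infty+1)^2$ after using $\|V_\pi\|_\infty\le 1/(1-\gamma)$; one checks that this $\alpha$ also meets the admissibility constraint on the stepsize in Theorem~\ref{thm:Vtrace} for small $\epsilon$. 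The bias requirement $c_{V,1}(1-\frac{1-\beta_2}{2}\alpha)^{k-(t_\alpha(\mathcal{M}_S)+n)}\le\epsilon^2/2$, combined with $1-x\le e^{-x}$, then forces
\[
k\ge\frac{2}{(1-\beta_2)\alpha}\log\!\left(\frac{2c_{V,1}}{\epsilon^2}\right)+t_\alpha(\mathcal{M}_S)+n.
\]
Substituting the chosen $\alpha$ and using $1-\beta_2=\mathcal{K}_{\min}\frac{(1-\gamma)(1-(\gamma C_{\min})^n)}{1-\gamma C_{\min}}$ (with $D_{\min}=1$), the factor $(1-\beta_2)^{-3}$ contributes $\mathcal{K}_{\min}^{-3}(1-\gamma)^{-3}(1-\gamma C_{\min})^3(1-(\gamma C_{\min})^n)^{-3}$, which together with the explicit $(1-\gamma)^{-2}$ gives the claimed $(1-\gamma)^{-5}$. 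Finally, since running $k$ iterations of~(\ref{algo:V-trace}) along a single trajectory consumes $k+n$ samples, the iteration count and the sample count agree up to the additive $n$, which is already present.

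The one delicate point, and the main obstacle, is the implicit self-dependence of $\alpha$ through the mixing time $t_\alpha(\mathcal{M}_S)$, which appears on both sides of the variance inequality. I would handle this exactly as in Corollary~\ref{co:Q}, via the bound $t_\alpha(\mathcal{M}_S)\le L(\log(1/\alpha)+1)$ from the remark after Theorem~\ref{thm:Vtrace}: because $t_\alpha$ grows only polylogarithmically in $1/\alpha$, it contributes only $\mathrm{polylog}(1/\epsilon)$ factors that are absorbed into $\tilde{\mathcal{O}}(\cdot)$, while the additive $n$ in $t_\alpha(\mathcal{M}_S)+n$ is a genuine parameter that survives and multiplies the final bound --- this is the origin of the $n$ in the ``Off-policy $n$-step TD'' factor. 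The remaining bookkeeping is to collect the $\log(1/\epsilon)$ from the bias logarithm with the $\log(1/\alpha)=\tilde{\mathcal{O}}(\log(1/\epsilon))$ from $t_\alpha$ into the stated $\mathcal{O}(\log^2(1/\epsilon)/\epsilon^2)$ accuracy factor, and to package the $(\bar{\rho}+1)^2\eta(\gamma,\bar{c})^2$ dependence (with $(\bar{\rho}+1)^2=\mathcal{O}(\bar{\rho}^2)$ since $\bar{\rho}\ge1$) into the off-policy factor.
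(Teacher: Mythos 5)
Your proposal is correct and follows essentially the same route as the paper: the paper omits the proof, stating it is identical to that of Corollary \ref{co:Q}, which is exactly the recipe you execute — Jensen's inequality to reduce to the mean-square bound of Theorem \ref{thm:Vtrace}, choosing $\alpha$ from the variance term (absorbing the self-referential $t_\alpha(\mathcal{M}_S)\leq L(\log(1/\alpha)+1)$ into polylog factors while keeping the additive $n$), and then solving the bias term for $k$, with the bookkeeping $(1-\beta_2)^{-3}$ (using $D_{\min}=1$) and the $(1-\gamma)^{-2}$ from $c_{V,2}$ combining to give the stated factors. No gaps.
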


We use the upper bound $1/(1-\gamma)$ for $\|V_{\pi_{\bar{\rho}}}\|_\infty$ when deriving the sample complexity result. From Corollary \ref{co:sc:Vtrace}, we see that the dependence on the accuracy $\epsilon$, the effective horizon $1/(1-\gamma)$, and the parameter $\mathcal{K}_{\min}$ that captures the quality of exploration are the same as $Q$-learning. 

Another term that arises in the sample complexity of V-trace is $\tilde{\mathcal{O}}\left(\frac{n\bar{\rho}^2\eta(\gamma,\bar{c})^2(1-\gamma C_{\min})^3}{D_{\min}^3(1-(\gamma C_{\min})^n)^3}\right)$, which is a consequence of performing $n$-step off-policy TD with truncated importance sampling ratios. The impact of the parameter $n$ will be analyzed in detail in Section \ref{subsec:n-step TD}, where we study on-policy $n$-step TD and the efficiency of bootstrapping. We here focus on the two truncation levels $\bar{c}$ and $\bar{\rho}$. Note that $\bar{\rho}=1/\min_{s,a}\pi_b(a|s)\geq 1/|\mathcal{A}|$, hence the side effect of ensuring $V_{\pi_{\bar{\rho}}}=V_\pi$ by choosing large enough $\bar{\rho}$ is to introduce at least a factor of $|\mathcal{A}|^{-2}$ in the sample complexity. This can also be viewed as a measure of the quality of exploration through the behavior policy. Therefore, the total dependence on the size of the state-action space is at least $\tilde{\mathcal{O}}(|\mathcal{S}|^3|\mathcal{A}|^2)$. We want to point out that this lowest possible value may not be achievable since  $\mathcal{K}_{\min}=1/|\mathcal{S}|$ and $\min_{s,a}\pi_b(a|s)= 1/|\mathcal{A}|$ may not hold simultaneously.

The dependence of the sample complexity on the truncation level $\bar{c}$ is through the term $\eta(\gamma,\bar{c})$. In view of the expression of the function $\eta(\gamma,\bar{c})$ given in Proposition \ref{prop:Vtrace} (1), we see that to avoid an exponential factor of $n$ we need to aggressively truncate the importance sampling ratios by choosing $\bar{c}< 1/\gamma$. 

\subsubsection{Related Literature on V-trace}
The V-trace algorithm was first proposed in \citep{espeholt2018impala} as an off-policy variant of $n$-step TD-learning. The key novelty in V-trace is that the two truncation levels $\bar{c}$ and $\bar{\rho}$ are introduced in the importance sampling ratios to separately control the bias and the variance. The asymptotic convergence of V-trace in the case where $n=\infty$ was established in \citep{espeholt2018impala}. As for finite-sample guarantees, \citep{chen2020finite} studies $n$-step V-trace with synchronous update. The main difference between the sample complexity of the asynchronous V-trace studied in this paper and the synchronous V-trace studied in \citep{chen2020finite} is that there is an additional factor of $\mathcal{K}_{\min}^{-3}$ in our bound (cf. Corollary \ref{co:sc:Vtrace}), which captures the quality of exploration and is the key feature of asynchronous RL algorithms. Other algorithms that are closely related to V-trace are the off-Policy $Q^\pi(\lambda)$ \citep{harutyunyan2016q},  Tree-backup TB($\lambda$) \citep{precup2000eligibility}, Retrace$(\lambda)$ \citep{munos2016safe}, and $Q$-trace \citep{khodadadian2021finite}.

\subsection{On-Policy Prediction: $n$-Step TD}\label{subsec:n-step TD}
In this section, we study the convergence bounds of the on-policy $n$-step TD-learning algorithm, which can be viewed as a special case of the V-trace algorithm with $\pi_b=\pi$ and $\bar{c}=\bar{\rho}=1$. Therefore, one can directly apply Theorem \ref{thm:Vtrace} to this setting and obtain finite-sample bounds for $n$-step TD. However, we will show that due to on-policy sampling there are better properties (i.e., $\|\cdot\|_2$-contraction) of the $n$-step TD algorithm we can exploit, which enables us to obtain tighter bounds. Observe that in the case of on-policy $n$-step TD, the update equation (\ref{algo:V-trace}) simplifies to:
\begin{align}\label{algo:TDn}
	V_{k+1}(s)=V_k(s)+\alpha_k\Gamma_3(V_k,S_k,A_k,...,S_{k+n})
\end{align}
when $s=S_k$, and $V_{k+1}(s)=V_k(s)$ otherwise, where 
\begin{align*}
	\Gamma_3(V_k,S_k,A_k,...,S_{k+n})=\sum_{i=0}^{n-1}\gamma^{i}\mathcal{R}(S_{k+i},A_{k+i})+\gamma^n V_k(S_{k+n})-V_k(S_k)
\end{align*} 
is the $n$-step temporal difference. 

An important idea in the $n$-step TD is to use the parameter $n$ to adjust the bootstrapping effect. When $n=0$, Eq. (\ref{algo:TDn}) is the standard TD$(0)$ update, which corresponds to extreme bootstrapping. When $n=\infty$, Eq. (\ref{algo:TDn}) is the Monte Carlo method for estimating $V_\pi$, which corresponds to no bootstrapping. A long-standing question in RL is about the efficiency of bootstrapping, i.e., the choice of $n$ that leads to the optimal performance of the algorithm \citep{sutton2018reinforcement}.

In the following sections, we will establish finite-sample convergence bounds of the $n$-step TD-learning algorithm. By evaluating the resulting sample complexity bound as a function of $n$, we provide theoretical insight into the bias-variance trade-off in terms of $n$, as well as an estimate of the optimal value of $n$. Similarly as in the previous sections, we make the following assumption.
\begin{assumption}\label{as:TDn}
	The Markov chain $\mathcal{M}_\mathcal{S}=\{S_k\}$ induced by the target policy $\pi$ is irreducible and aperiodic.
\end{assumption}

Since we are using on-policy sampling in $n$-step TD, the target policy must be explorative. Assumption \ref{as:TDn} ensures this property, and also implies that $\{S_k\}$ has a unique stationary distribution (denoted by $\kappa\in\Delta^{|\mathcal{S}|}$), and the geometric mixing property \citep{levin2017markov}. 

\subsubsection{Properties of the $n$-Step TD-Learning Algorithm}
To apply Theorem \ref{thm:sa}, we begin by rewriting the update equation (\ref{algo:TDn}) in the form of the SA algorithm studied in Section \ref{sec:sa}. Let a sequence $\{Y_k\}$ be defined by $Y_k=(S_k,A_k,...,S_{k+n-1},A_{k+n-1},S_{k+n})$ for all $k\geq 0$. It is clear that $\{Y_k\}$ is a Markov chain, whose state-space is denoted by $\mathcal{Y}$ and is finite. Define an operator $F:\mathbb{R}^{|\mathcal{S}|}\times\mathcal{Y}\mapsto\mathbb{R}^{|\mathcal{S}|}$ by
\begin{align*}
	[F(V,y)](s)=[F(V,s_0,a_0,...,s_n)](s)=\mathbbm{1}_{\{s_0=s\}}\Gamma_3(V,s_0,a_0,...,s_n)+V(s),\quad \forall\;s\in\mathcal{S}.
\end{align*}
Then the $n$-step TD algorithm (\ref{algo:TDn}) can be equivalently written by 
\begin{align*}
	V_{k+1}=V_k+\alpha_k(F(V_k,Y_k)-V_k).
\end{align*}
We next establish the properties of the $n$-step TD algorithm in the following proposition, whose proof is presented in Appendix \ref{ap:TDn}. Let $\mathcal{K}\in\mathbb{R}^{|\mathcal{S}|\times|\mathcal{S}|}$ be a diagonal matrix with diagonal entries $\{\kappa(s)\}_{s\in\mathcal{S}}$, and let $\mathcal{K}_{\min}=\min_{s\in\mathcal{S}}\kappa(s)$.
\begin{proposition}\label{prop:TDn}
	Under Assumption \ref{as:TDn}, the $n$-step TD-learning algorithm (\ref{algo:TDn}) has the following properties.
	\begin{enumerate}[(1)]
		\item The operator $F(\cdot)$ satisfies for all $V_1,V_2\in\mathbb{R}^{|\mathcal{S}|}$ and $y\in\mathcal{Y}$:
		\begin{enumerate}[(a)]
			\item $\|F(V_1,y)-F(V_2,y)\|_2\leq 2 \|V_1-V_2\|_2$.
			\item $\|F(\bm{0},y)\|_2\leq\frac{1}{1-\gamma}$.
		\end{enumerate}
		\item The Markov chain $\{Y_k\}$ has a unique stationary distribution, denoted by $\mu$. Moreover, there exists $C_3>0$ and $\sigma_3\in (0,1)$ such that $\max_{y\in\mathcal{Y}}\|P^{k+n}(y,\cdot)-\mu(\cdot)\|_{\text{TV}}\leq C_3\sigma_3^k$ for all $k\geq 0$.
		\item Define the expected operator $\bar{F}:\mathbb{R}^{|\mathcal{S}|}\mapsto\mathbb{R}^{|\mathcal{S}|}$ of $F(\cdot)$ by $\bar{F}(V)=\mathbb{E}_{Y\sim \mu}[F(V,Y)]$ for all $V\in\mathbb{R}^{|\mathcal{S}|}$. Then
		\begin{enumerate}[(a)]
			\item $\bar{F}(\cdot)$ is explicitly given by 
			\begin{align*}
				\bar{F}(V)=\left[I-\mathcal{K}\sum_{i=0}^{n-1}(\gamma P_{\pi})^i(I-\gamma P_{\pi})\right]V+\mathcal{K}\sum_{i=0}^{n-1}(\gamma P_{\pi})^iR_{\pi}.
			\end{align*}
			\item $\bar{F}(\cdot)$ is a contraction mapping with respect to the $\ell_p$-norm $\|\cdot\|_p$ for any $p\in [1,\infty]$, with a common contraction factor 
			\begin{align*}
				\beta_3:=1-\mathcal{K}_{\min}(1-\gamma ^n).
			\end{align*}
			\item $\bar{F}(\cdot)$ has a unique fixed-point $V_{\pi}$.
		\end{enumerate} 
	\end{enumerate}
\end{proposition}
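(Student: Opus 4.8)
The plan is to verify each of the three claims by direct computation, the only genuinely delicate point being the uniform $\ell_p$-contraction in (3b).

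\textbf{Parts (1a)--(1b).} First I would compute the difference vector coordinatewise. Writing $\Delta = V_1 - V_2$, the indicator structure of $[F(V,y)](s) = \mathbbm{1}_{\{s_0=s\}}\Gamma_3(V,s_0,\ldots,s_n) + V(s)$ shows that $F(V_1,y)-F(V_2,y)$ equals $\Delta(s)$ on every coordinate $s \neq s_0$, while on coordinate $s_0$ the $-\Delta(s_0)$ inside $\Gamma_3$ cancels the trailing $+\Delta(s_0)$, leaving $\gamma^n\Delta(s_n)$. Hence $F(V_1,y)-F(V_2,y) = \Delta - \Delta(s_0)e_{s_0} + \gamma^n\Delta(s_n)e_{s_0}$, where $e_{s_0}$ is the standard basis vector, and the triangle inequality gives $\|F(V_1,y)-F(V_2,y)\|_2 \leq \|\Delta\|_2 + \gamma^n|\Delta(s_n)| \leq 2\|\Delta\|_2$, which is (1a). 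For (1b), $F(\bm{0},y)$ has a single nonzero coordinate $s_0$ equal to $\sum_{i=0}^{n-1}\gamma^i\mathcal{R}(s_i,a_i)$; since rewards lie in $[0,1]$, this is at most $\sum_{i=0}^{n-1}\gamma^i \leq 1/(1-\gamma)$, giving (1b).

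\textbf{Part (2).} I would observe that $\{Y_k\}$ is a deterministic sliding-window function of the state-action chain $\{(S_k,A_k)\}$, which is itself Markov and, since $\{S_k\}$ is irreducible and aperiodic under Assumption \ref{as:TDn} with actions drawn from $\pi$, inherits a unique stationary distribution and geometric mixing from $\{S_k\}$ by standard results \citep{levin2017markov}. Because $Y_k$ depends on the base chain only through the finite window $(S_k,\ldots,S_{k+n})$, its law converges to the pushforward $\mu$ of the stationary base law once the window has advanced past the initial condition; this need to clear the window is precisely the source of the $P^{k+n}$ offset in the stated geometric bound.

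\textbf{Part (3a).} Using that under $\mu$ the first state is $\kappa$-distributed and subsequent transitions follow $P_\pi$, I would write $[\bar{F}(V)](s) = \kappa(s)\,\mathbb{E}[\Gamma_3 \mid S_0 = s] + V(s)$, and evaluate the conditional expectation as $\sum_{i=0}^{n-1}\gamma^i[P_\pi^i R_\pi](s) + \gamma^n[P_\pi^n V](s) - V(s)$. The key simplification is the telescoping identity $\sum_{i=0}^{n-1}(\gamma P_\pi)^i(I-\gamma P_\pi) = I - (\gamma P_\pi)^n$, which rewrites $\gamma^n P_\pi^n V - V = -\sum_{i=0}^{n-1}(\gamma P_\pi)^i(I-\gamma P_\pi)V$; substituting and collecting terms in matrix form yields the claimed expression.

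\textbf{Parts (3b)--(3c), the main obstacle.} From (3a) and the telescoping identity, $\bar F(V_1)-\bar F(V_2) = M(V_1-V_2)$ with $M = I - \mathcal{K} + \mathcal{K}(\gamma P_\pi)^n$, an entrywise nonnegative matrix. Its row sums are $1-\kappa(s)(1-\gamma^n) \leq \beta_3$, which gives $\|M\|_\infty \leq \beta_3$ immediately; the difficulty is that a row-sum ($\ell_\infty$) bound does not control the other $\ell_p$ operator norms. The crucial observation is that the \emph{column} sums are \emph{also} bounded by $\beta_3$: the column sum at $s'$ equals $1-\kappa(s')+\gamma^n\sum_s\kappa(s)[P_\pi^n]_{ss'} = 1 - \kappa(s')(1-\gamma^n)$, where I use the stationarity $\kappa^\top P_\pi^n = \kappa^\top$. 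With both $\|M\|_1 \leq \beta_3$ and $\|M\|_\infty \leq \beta_3$, Riesz--Thorin interpolation gives $\|M\|_p \leq \|M\|_1^{1/p}\|M\|_\infty^{1-1/p} \leq \beta_3$ for every $p\in[1,\infty]$ (equivalently, a Jensen estimate on each row followed by Hölder bounds $\|MV\|_p^p \leq \beta_3^{p-1}\sum_{s'}|V(s')|^p(\text{col sum of }s') \leq \beta_3^p\|V\|_p^p$), establishing (3b). Finally (3c) follows from Banach's fixed-point theorem once I check $\bar F(V_\pi)=V_\pi$ directly: substituting the Bellman identity $(I-\gamma P_\pi)V_\pi = R_\pi$ into the formula from (3a) makes the two $\mathcal{K}\sum_{i=0}^{n-1}(\gamma P_\pi)^i$ terms cancel, leaving $V_\pi$.
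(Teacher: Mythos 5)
Your proposal is correct and follows essentially the same route as the paper: direct coordinatewise verification of the Lipschitz bounds, reduction of the mixing of $\{Y_k\}$ to that of the base chain $\{S_k\}$ (with the window causing the $k+n$ offset), explicit computation of $\bar{F}$, and---for the key part (3b)---entrywise non-negativity of the matrix $G = I-\mathcal{K}\sum_{i=0}^{n-1}(\gamma P_\pi)^i(I-\gamma P_\pi)$, row sums and (via stationarity of $\kappa$) column sums bounded by $\beta_3$, followed by the $\ell_1$--$\ell_\infty$ interpolation bound $\|G\|_p\leq \|G\|_1^{1/p}\|G\|_\infty^{1-1/p}$, which is exactly the paper's Lemma \ref{le:matrix}. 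One genuine difference worth noting: in part (1a) you exploit the cancellation of the $-\Delta(s_0)$ term inside $\Gamma_3$ against the $+\Delta(s_0)$ from $V(s)$, so that the $s_0$ coordinate of $F(V_1,y)-F(V_2,y)$ is just $\gamma^n\Delta(s_n)$, yielding the constant $2$ claimed in the proposition; the paper's own appendix proof bounds $|\gamma^n\Delta(s_n)-\Delta(s_0)|\leq(\gamma^n+1)\|\Delta\|_2$ without this cancellation and only obtains the constant $3$ (which is the constant it actually carries into the proof of Theorem \ref{thm:TDn} via $A=4$). So your argument is sharper on this point and is the one that matches the stated inequality. Your telescoped form $G=I-\mathcal{K}+\mathcal{K}(\gamma P_\pi)^n$ also makes the non-negativity and the row/column sum computations more transparent than the paper's reference back to the V-trace proof, though the content is the same.
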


From Proposition \ref{prop:TDn}, we see that the asynchronous Bellman operator $\bar{F}(\cdot)$ associated with the on-policy $n$-step TD-learning algorithm is a $\beta_3$-contraction with respect to $\|\cdot\|_p$ for any $p\in [1,\infty]$, which is a major difference compared to its off-policy variant V-trace. In particular, this implies that $\bar{F}(\cdot)$ is a contraction with respect to the standard Euclidean norm $\|\cdot\|_2$. This is the property we are going to exploit in establishing finite-sample bounds of $n$-step TD in the next section. 

To intuitively understand the $\|\cdot\|_2$-contraction property, recall a ``less known" property from \citep{tsitsiklis1997analysis,bertsekas1996neuro} that the $n$-step Bellman operator $\mathcal{T}_\pi^n(\cdot)$ is a contraction operator  with respect to the weighted $\ell_2$-norm $\|\cdot\|_\kappa$, with weights being the stationary distribution $\kappa$. Similar to $Q$-learning, the asynchronous Bellman operator $\bar{F}(\cdot)$ is a convex combination of the identity operator $I$ and the $n$-step Bellman operator $\mathcal{T}_\pi^n(\cdot)$, using the stationary distribution $\kappa$ as weights. Therefore, due to this ``normalization", the asynchronous Bellman operator is a contraction mapping with respect to the unweighted $\ell_2$-norm.

\subsubsection{Finite-Sample Bounds of $n$-Step TD}
In this section, we use the $\|\cdot\|_2$-contraction property from Proposition \ref{prop:TDn} to derive finite-sample convergence bounds of Algorithm (\ref{algo:TDn}). Note that Lemma \ref{le:constants} (1) is applicable in this case. The proof of the following result is presented in Appendix \ref{ap:TDn}.

\begin{theorem}\label{thm:TDn}
	Consider $\{V_k\}$ of Algorithm (\ref{algo:TDn}). Suppose that Assumption \ref{as:TDn} is satisfied, and $\alpha_k\equiv \alpha$ with $\alpha$ chosen such that $\alpha (t_\alpha(M_S)+n)\leq \hat{c}_0(1-\beta_3)$ ($\hat{c}_0$ is a numerical constant). Then we have for all $k\geq t_\alpha(\mathcal{M}_S)+n$: 
	\begin{align*}
		\mathbb{E}[\|V_k-V_\pi\|_2^2]\leq \hat{c}_1\left(1-(1-\beta_3)\alpha\right)^{k-(t_\alpha(\mathcal{M}_S)+n)}+\hat{c}_2\frac{\alpha(t_\alpha(\mathcal{M}_S)+n)}{(1-\gamma)^2(1-\beta_3)},
	\end{align*}
	where $\hat{c}_1=(\|V_0-V_\pi\|_2+\|V_0\|_2+4)^2$ and $\hat{c}_2=228(4(1-\gamma)\|V_\pi\|_2+1)^2$.
\end{theorem}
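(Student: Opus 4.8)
The plan is to obtain Theorem \ref{thm:TDn} as a direct corollary of the generic Markovian SA bound in Theorem \ref{thm:sa}: I would verify that the $n$-step TD recursion is an instance of the template (\ref{algo:sa}) satisfying Assumptions \ref{as:F}--\ref{as:noise_w} with contraction norm $\|\cdot\|_c=\|\cdot\|_2$, and then specialize the constant-stepsize bound in Theorem \ref{thm:sa}~(2)(a) using the sharp $\ell_2$ constants from Lemma \ref{le:constants}~(1). For the reduction, the update has already been rewritten in the form (\ref{algo:sa}) with additive noise $w_k\equiv 0$ (on-policy sampling introduces no extraneous martingale term) and with $Y_k=(S_k,A_k,\dots,S_{k+n})$ the $n$-window Markov chain. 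Proposition \ref{prop:TDn}~(1) supplies Assumption \ref{as:F} with $A_1=2$ and $B_1=1/(1-\gamma)$; part~(3b) supplies Assumption \ref{as:barF} with norm $\|\cdot\|_2$ and contraction factor $\beta=\beta_3=1-\mathcal{K}_{\min}(1-\gamma^n)$; and part~(2) supplies the geometric-mixing Assumption \ref{as:MC}. Since $w_k\equiv 0$, Assumption \ref{as:noise_w} holds with $A_2=B_2=0$ (the term $T_3$ in the Lyapunov decomposition (\ref{eq:composition}) vanishes identically), so the effective constants are $A=A_1+A_2+1=3$ and $B=B_1+B_2=1/(1-\gamma)$.

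Next I would instantiate Theorem \ref{thm:sa}~(2)(a) and plug in Lemma \ref{le:constants}~(1), namely $\varphi_1\le 1$, $\varphi_2\ge 1-\beta_3$, and $\varphi_3\le 228$. For the bias term, $(1-\varphi_2\alpha)^{k-t_\alpha}\le(1-(1-\beta_3)\alpha)^{k-t_\alpha}$ and $\varphi_1 c_1\le c_1=(\|V_0-V_\pi\|_2+\|V_0\|_2+B/A)^2$. For the variance term, $c_2=(A\|V_\pi\|_2+B)^2=(3\|V_\pi\|_2+\tfrac{1}{1-\gamma})^2$; factoring out $1/(1-\gamma)^2$ and using $\varphi_3/\varphi_2\le 228/(1-\beta_3)$ turns $\tfrac{\varphi_3 c_2}{\varphi_2}\alpha t_\alpha$ into $\tfrac{\hat c_2}{(1-\gamma)^2(1-\beta_3)}\alpha t_\alpha$ after the crude bound $3(1-\gamma)\|V_\pi\|_2+1\le 4(1-\gamma)\|V_\pi\|_2+1$. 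Absorbing $B/A$ and the remaining $1/(1-\gamma)$ factors into a clean numerical constant produces $\hat c_1$, giving exactly the stated form.

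It then remains to handle the mixing-time bookkeeping. Proposition \ref{prop:TDn}~(2) shows that $\{Y_k\}$ mixes like the base chain $\{S_k\}$ delayed by $n$ steps, so $t_\alpha(\mathcal{M}_Y)\le t_\alpha(\mathcal{M}_S)+n$; with constant stepsize $K=t_\alpha(\mathcal{M}_Y)$, and replacing both $K$ and $t_\alpha(\mathcal{M}_Y)$ by the larger $t_\alpha(\mathcal{M}_S)+n$ yields the valid range $k\ge t_\alpha(\mathcal{M}_S)+n$ and, via $\varphi_2\ge 1-\beta_3$, the stepsize requirement $\alpha(t_\alpha(\mathcal{M}_S)+n)\le\hat c_0(1-\beta_3)$ that Condition \ref{con:stepsize}~(1) reduces to in this setting.

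The point worth stressing is that, unlike in $Q$-learning and V-trace, there is no genuinely hard step left once Proposition \ref{prop:TDn} is in hand; the only real care is (i) tracking the $n$-step delay in the mixing time and (ii) exploiting the $\ell_2$-contraction rather than the $\ell_\infty$-contraction. Because $\varphi_3\le 228$ is a pure constant for $\|\cdot\|_2$---whereas for $\|\cdot\|_\infty$ it carries an extra $\log(d)/(1-\beta)$ factor (cf. Lemma \ref{le:constants}~(2))---both the variance prefactor and the admissible stepsize threshold scale only linearly in $1-\beta_3$, with no $\log(|\mathcal{S}|)$ factor. This is precisely the improvement over the $\|\cdot\|_\infty$-based V-trace bound of Theorem \ref{thm:Vtrace}, and recognizing that on-policy sampling buys this sharper norm is the one insight the proof genuinely relies on.
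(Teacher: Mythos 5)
Your proposal is correct and follows essentially the same route as the paper's own proof: both verify Assumptions \ref{as:F}--\ref{as:noise_w} through Proposition \ref{prop:TDn} (with $w_k\equiv 0$), instantiate Theorem \ref{thm:sa} (2)(a) with the $\ell_2$ constants of Lemma \ref{le:constants} (1), and account for the $n$-step window by replacing $t_\alpha(\mathcal{M}_Y)$ with $t_\alpha(\mathcal{M}_S)+n$. The only discrepancy is constant bookkeeping: the paper uses $A=4$ (matching the Lipschitz constant $3$ actually derived in the proof of Proposition \ref{prop:TDn} (1)(a), rather than the $2$ in its statement, which you used to get $A=3$), and both you and the paper gloss over the fact that $B/A=\Theta(1/(1-\gamma))$ when writing $\hat{c}_1$ with the additive term $4$ --- neither of which changes the approach or the form of the stated bound.
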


To analyze the impact of the parameter $n$, we begin by rewriting the convergence bounds in Theorem \ref{thm:TDn} focusing only on $n$-dependent terms. Using the explicit expression of the contraction factor $\beta_3$, in the $k$-th iteration, the bias term is of the size $(1-\Theta(1-\gamma^n))^k$. Since the mixing time $t_\alpha(\mathcal{M}_S)$ of the original Markov chain $\{S_k\}$ does not depend on $n$, the variance term is of the size $\mathcal{O}(n/(1-\gamma^n))$. Now we can clearly see that as $n$ increases to infinity, the bias goes down while the variance goes up, thereby demonstrating a bias-variance trade-off in the $n$-step TD-learning algorithm.

To formally characterize how the parameters of the $n$-step TD algorithm impact its convergence rate and computing an estimate of the optimal choice of $n$, we next derive the sample complexity of $n$-step TD based on Theorem \ref{thm:TDn}. The proof of the following result is identical to that of Corollary \ref{co:Q} and is omitted.

\begin{corollary}
	In order to make $\mathbb{E}][\|V_k-V_\pi\|_2]\leq \epsilon$, the number of samples required for the $n$-step TD-learning algorithm (\ref{algo:TDn}) is of the size
	\begin{align*}
		\underbrace{\mathcal{O}\left(\frac{\log^2(1/\epsilon)}{\epsilon^2}\right)}_{\text{Accuracy}}\underbrace{\tilde{\mathcal{O}}\left(\frac{1}{(1-\gamma)^2}\right)}_{\text{Effective horizon}}\underbrace{\tilde{\mathcal{O}}\left(\frac{n}{(1-\gamma^n)^2}\right)}_{\text{Parameter $n$}}\underbrace{\tilde{\mathcal{O}}(\mathcal{K}_{\min}^{-2})}_{\text{Quality of exploration}}\tilde{\mathcal{O}}(|\mathcal{S}|^{1/2})
	\end{align*}
\end{corollary}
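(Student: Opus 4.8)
The plan is to follow verbatim the template used to prove Corollary \ref{co:Q}, now feeding in the mean-square bound of Theorem \ref{thm:TDn} together with the explicit contraction factor $\beta_3 = 1-\mathcal{K}_{\min}(1-\gamma^n)$ from Proposition \ref{prop:TDn} (3). First I would reduce the target quantity to the one the theorem controls: by Jensen's inequality $\mathbb{E}[\|V_k-V_\pi\|_2] \le (\mathbb{E}[\|V_k-V_\pi\|_2^2])^{1/2}$, so it suffices to drive the right-hand side of Theorem \ref{thm:TDn} below $\epsilon^2$. I would then split this budget, requiring the ``bias'' term $\hat{c}_1(1-(1-\beta_3)\alpha)^{k-(t_\alpha(\mathcal{M}_S)+n)}$ and the ``variance'' term $\hat{c}_2\,\alpha(t_\alpha(\mathcal{M}_S)+n)/[(1-\gamma)^2(1-\beta_3)]$ to each be at most $\epsilon^2/2$.

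The variance constraint fixes the stepsize, giving $\alpha = \Theta\!\big(\epsilon^2(1-\gamma)^2(1-\beta_3)/[\hat{c}_2(t_\alpha(\mathcal{M}_S)+n)]\big)$, and the bias constraint then fixes the iteration count: using $\log(1-x)\le -x$, forcing the bias below $\epsilon^2/2$ gives $k = (t_\alpha(\mathcal{M}_S)+n) + \Theta(\log(\hat{c}_1/\epsilon^2)/[(1-\beta_3)\alpha])$, whose second summand dominates for small $\epsilon$. Substituting the expression for $\alpha$ yields
\begin{align*}
  k = \tilde{\mathcal{O}}\!\left(\frac{\hat{c}_2\,(t_\alpha(\mathcal{M}_S)+n)\,\log(1/\epsilon)}{\epsilon^2\,(1-\gamma)^2\,(1-\beta_3)^2}\right).
\end{align*}
Because the updates are run along a single trajectory, performing $k$ updates consumes only about $k+n = \tilde{\mathcal{O}}(k)$ transitions, so the sample count equals $k$ up to the additive $n$ offset.

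It then remains to read off the advertised factors. Substituting $1-\beta_3 = \mathcal{K}_{\min}(1-\gamma^n)$ produces the quality-of-exploration factor $\mathcal{K}_{\min}^{-2}$ and the factor $n/(1-\gamma^n)^2$, once the additive ``$+n$'' inside $t_\alpha(\mathcal{M}_S)+n$ is isolated from the mixing time itself, which is only polylogarithmic (under Assumption \ref{as:TDn} one has $t_\alpha(\mathcal{M}_S) = \mathcal{O}(\log(1/\alpha))$ and is absorbed into $\tilde{\mathcal{O}}$). The constant $\hat{c}_2 = 228(4(1-\gamma)\|V_\pi\|_2+1)^2$ carries the remaining dependence; bounding $\|V_\pi\|_2 \le \sqrt{|\mathcal{S}|}\,\|V_\pi\|_\infty \le \sqrt{|\mathcal{S}|}/(1-\gamma)$ (rewards lie in $[0,1]$) is the source of both the state-space factor (entering as $|\mathcal{S}|^{1/2}$ through $\|V_\pi\|_2$) and the effective-horizon factor $(1-\gamma)^{-2}$, while the two logarithms (one from $\log(1/\epsilon^2)$, one from the mixing time) assemble into the $\log^2(1/\epsilon)$ accuracy term.

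The step I expect to be most delicate is the stepsize selection, since $t_\alpha(\mathcal{M}_S)$ itself depends on $\alpha$ (and hence on $\epsilon$), making the equation $\alpha \propto \epsilon^2/(t_\alpha(\mathcal{M}_S)+n)$ implicit. Resolving it cleanly requires the bound $t_\alpha(\mathcal{M}_S) = \mathcal{O}(\log(1/\alpha))$ and a short fixed-point argument to exhibit a consistent choice of $\alpha$ that \emph{also} satisfies the stepsize restriction $\alpha(t_\alpha(\mathcal{M}_S)+n)\le \hat{c}_0(1-\beta_3)$ demanded by Theorem \ref{thm:TDn}; this is exactly the bookkeeping that was handled for $Q$-learning and, since it is identical here, is omitted. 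The only other care needed is to confirm that the $+n$ offset and the mixing-time offset in the exponent of the bias term do not disturb the leading-order sample complexity, which follows because both are dominated by the $\Theta(1/[(1-\beta_3)\alpha])$ term that sets $k$.
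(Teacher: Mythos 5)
Your overall pipeline is exactly the one the paper intends (the paper omits this proof as ``identical to that of Corollary \ref{co:Q}''): Jensen's inequality, split the mean-square bound of Theorem \ref{thm:TDn} into bias and variance halves, let the variance term fix $\alpha$, let the bias term fix $k$, absorb $t_\alpha(\mathcal{M}_S)=\mathcal{O}(\log(1/\alpha))$ into $\tilde{\mathcal{O}}$, and read off factors using $1-\beta_3=\mathcal{K}_{\min}(1-\gamma^n)$. Your displayed formula $k=\tilde{\mathcal{O}}\bigl(\hat{c}_2(t_\alpha(\mathcal{M}_S)+n)\log(1/\epsilon)/[\epsilon^2(1-\gamma)^2(1-\beta_3)^2]\bigr)$ is correct, as is your treatment of the implicit dependence of $t_\alpha$ on $\alpha$.

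The gap is in the final ``read off the factors'' step, and it is a genuine one: you assert that $\hat{c}_2=228(4(1-\gamma)\|V_\pi\|_2+1)^2$ contributes the factor $|\mathcal{S}|^{1/2}$ \emph{and} the factor $(1-\gamma)^{-2}$. Neither attribution is right. Since $\|V_\pi\|_2$ enters $\hat{c}_2$ \emph{squared}, the bound $\|V_\pi\|_2\le |\mathcal{S}|^{1/2}/(1-\gamma)$ gives $(1-\gamma)\|V_\pi\|_2\le |\mathcal{S}|^{1/2}$ and hence $\hat{c}_2\le 228(4|\mathcal{S}|^{1/2}+1)^2=\Theta(|\mathcal{S}|)$: the internal $(1-\gamma)$ factors cancel, so $\hat{c}_2$ carries a full factor $|\mathcal{S}|$ and \emph{no} horizon dependence; the effective-horizon factor $(1-\gamma)^{-2}$ is instead the explicit one already sitting in your display (it comes from the theorem's variance term). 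Consequently, if your two attributions are taken literally on top of that explicit factor, your bound becomes $(1-\gamma)^{-4}|\mathcal{S}|^{1/2}$, and if the arithmetic is done correctly, your own pipeline yields
\begin{align*}
	k=\frac{\log^2(1/\epsilon)}{\epsilon^2}\cdot\tilde{\mathcal{O}}\left(\frac{1}{(1-\gamma)^2}\right)\cdot\tilde{\mathcal{O}}\left(\frac{n}{(1-\gamma^n)^2}\right)\cdot\tilde{\mathcal{O}}(\mathcal{K}_{\min}^{-2})\cdot\tilde{\mathcal{O}}(|\mathcal{S}|),
\end{align*}
i.e., a factor $|\mathcal{S}|$ rather than the stated $|\mathcal{S}|^{1/2}$; in the worst case (all rewards equal to one, so $\|V_\pi\|_2=\Theta(|\mathcal{S}|^{1/2}/(1-\gamma))$) this $\Theta(|\mathcal{S}|)$ cannot be improved within this argument. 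In other words, the stated corollary is not reachable by the route you describe (nor, for that matter, by the paper's own omitted template, which produces the same $\Theta(|\mathcal{S}|)$ constant); the $|\mathcal{S}|^{1/2}$ appears to originate from precisely the dropped square that your write-up reproduces, so you should either report the factor as $\tilde{\mathcal{O}}(|\mathcal{S}|)$ or flag the discrepancy rather than paper over it.
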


Note that we use $\|V_\pi\|_2\leq |\mathcal{S}|^{1/2}/(1-\gamma)$ in deriving the sample complexity. Although the norms we used in the mean square distance are different for $n$-step TD and V-trace, since $\|x\|_\infty\leq \|x\|_2$ for any $x$, we clearly see that on-policy $n$-step TD has a better sample complexity over off-policy V-trace. First of all, it enjoys a better dependency on the effective horizon (set $n=1$ to see such dependence), which is $\tilde{\mathcal{O}}((1-\gamma)^{-4})$. In addition, since $\mathcal{K}_{\min}\leq 1/|\mathcal{S}|$, the dependency on $\mathcal{K}_{\min}$ is at most $\mathcal{K}_{\min}^{-2.5}$ for $n$-step TD while V-trace has $\mathcal{K}_{\min}^{-3}$ (cf. Corollary \ref{co:sc:Vtrace}). The main reason for such an improvement in sample complexity is that we are able to exploit the $\|\cdot\|_2$-contraction of the corresponding asynchronous Bellman operator $\bar{F}(\cdot)$ in $n$-step TD. 

In light of the dependence on the parameter $n$, $\tilde{\mathcal{O}}(n(1-\gamma^n)^{-2})$, the optimal choice of $n$ can be estimated by minimizing the function $n(1-\gamma^n)^{-2}$ over all positive integers. By doing that, we obtain the following estimate:
\begin{align*}
	n_{\text{optimal}}\sim \min\left(1,\lfloor1/\log(1/\gamma)\rceil\right),
\end{align*}
where $\lfloor x\rceil$ stands for the integer closest to $x$.
This result implies that when the discount factor $\gamma$ is small (specifically $\gamma\leq 1/e$), there is not much improvement in using multi-step TD-learning over using single step TD-learning, and when the discount factor is large, using $n$-step TD-learning with $n\sim \lfloor1/\log(1/\gamma)\rceil$ has provable improvement. 

\subsubsection{Related Literature on $n$-Step TD}

The notion of using multi-step returns instead of only one-step return was introduced in \citep{watkins1989learning}. See \citep{sutton2018reinforcement} [Chapter 7] for more details about $n$-step TD. The asymptotic convergence of $n$-step TD can be established using the general stochastic approximation algorithm under contraction assumption \citep{bertsekas1996neuro}. Regarding the choice of $n$, it was observed in empirical experiments that $n$-step TD (with a suitable choice of $n$) usually outperforms TD$(0)$ and Monte Carlo method \citep{singh1996reinforcement,sutton2018reinforcement}. However, theoretical understanding to this phenomenon is not well established in the literature. We derive finite-sample convergence bounds of the $n$-step TD-learning algorithm as an explicit function of $n$. This requires us to compute the exact expression of the contraction factor $\beta_3$ of the asynchronous Bellman operator (Proposition \ref{prop:TDn} (3)), and the mixing time (Proposition  \ref{prop:TDn} (2)).

\subsection{On-Policy Prediction: TD$(\lambda)$}\label{subsec:TDlambda}

We next consider the on-policy TD$(\lambda)$ algorithm, which effectively uses a convex combination of all the multi-step temporal differences at each update. We begin by describing the TD$(\lambda)$ algorithm for estimating the value function $V_\pi$ of a policy $\pi$. Suppose that we have collected a sample trajectory $\{(S_k,A_k)\}$ using the policy $\pi$. Then, with initialization $V_0\in\mathbb{R}^{|\mathcal{S}|}$, for any $\lambda\in (0,1)$, the estimate $V_k$ is iteratively updated according to
\begin{align}\label{algo:TDlambda}
	V_{k+1}(s)=V_k(s)+\alpha_k z_k(s)\Gamma_4(V_k,S_k,A_k,S_{k+1})
\end{align}
for all $s\in\mathcal{S}$, where $\Gamma_4(V_k,S_k,A_k,S_{k+1})=\mathcal{R}(S_k,A_k)+\gamma V_k(S_{k+1})-V_k(S_k)$ is the temporal difference, and $z_k(s)=\sum_{i=0}^{k}(\gamma \lambda)^{k-i}\mathbbm{1}_{\{S_i=s\}}$ is the eligibility trace \citep{bertsekas1996neuro,sutton2018reinforcement}.

A key idea in the TD$(\lambda)$ algorithm is to use the parameter $\lambda$ to adjust the bootstrapping effect. When $\lambda=0$, Algorithm (\ref{algo:TDlambda}) becomes the standard TD$(0)$ update, which is pure bootstrapping. Another extreme case is when $\lambda=1$. This corresponds to using pure Monte Carlo method. Theoretical understanding of the efficiency of bootstrapping is a long-standing open problem in RL \citep{sutton1999open}. 

In the following Section, we establish finite-sample convergence bounds of the TD$(\lambda)$ algorithm. By evaluating the resulting bound as a function of $\lambda$, we provide theoretical insight into the bias-variance trade-off in choosing $\lambda$. Similar to $n$-step TD, we make the following assumption. 

\begin{assumption}\label{as:TDlambda}
	The Markov chain $\mathcal{M}_\mathcal{S}=\{S_k\}$ induced by the target policy $\pi$ is irreducible and aperiodic.
\end{assumption}

As a result of Assumption \ref{as:TDlambda}, the Markov chain $\{S_k\}$ has a unique stationary distribution, denoted by $\kappa \in\Delta^{|\mathcal{S}|}$, and the geometric mixing property \citep{levin2017markov}. 

\subsubsection{Properties of the TD$(\lambda)$ Algorithm}
Unlike the previous algorithms we studied, the TD$(\lambda)$ algorithm
cannot be viewed as a direct variant of the SA algorithm (\ref{algo:sa}). This is because of the geometric averaging induced by the eligibility trace in TD($\lambda$), which creates dependencies over the \textit{entire} past trajectory. We overcome this difficulty by using an additional truncation argument, and separately handle the residual error due to truncation. For ease of exposition, we consider only using constant stepsize in the TD$(\lambda)$ algorithm, i.e., $\alpha_k=\alpha$ for all $k\geq 0$. 

For any $k\geq 0$, let $Y_k=(S_0,...,S_k,A_k,S_{k+1})$ (which takes value in $\mathcal{Y}_k:=\mathcal{S}^{k+2}\times\mathcal{A}$), and define a time-varying operator $F_k:\mathbb{R}^{|\mathcal{S}|}\times\mathcal{Y}_k \mapsto\mathbb{R}^{|\mathcal{S}|}$ by 
\begin{align*}
	[F_k(V,y)](s)=[F_k(V,s_0,...,s_k,a_k,s_{k+1})](s)
	=\Gamma_4(V,s_k,a_k,s_{k+1})\sum_{i=0}^{k}(\gamma \lambda)^{k-i}\mathbbm{1}_{\{s_{i}=s\}}+ V(s)
\end{align*}
for all $s\in\mathcal{S}$. Note that the sequence $\{Y_k\}$ is \textit{not} a Markov chain since it has a time-varying state-space. Using the notations of $\{Y_k\}$ and $F_k(\cdot,\cdot)$, we can rewrite the update equation of the TD$(\lambda)$ algorithm by
\begin{align}\label{algo:TDlambda_update}
	V_{k+1}=V_k+\alpha \left(F_k(V_k,Y_k)-V_k\right).
\end{align}
Although Eq. (\ref{algo:TDlambda_update}) is similar to the update equation for SA algorithm (\ref{algo:sa}), since the sequence $\{Y_k\}$ is not a Markov chain and the operator $F_k(\cdot,\cdot)$ is time-varying, our Theorem \ref{thm:sa} is not directly applicable. 

To overcome this difficulty, let us carefully look at the operator $F_k(\cdot,\cdot)$. Although $F_k(V_k,Y_k)$ depends on the whole trajectory of states visited before (through the term $\sum_{i=0}^{k}(\gamma\lambda)^{k-i}\mathbbm{1}_{\{S_i=s\}}$), due to the geometric factor $(\gamma\lambda)^{k-i}$, the states visited during the early stage of the iteration are not important. Inspired by this observation, we define the truncated sequence $\{Y_k^\tau\}$ of $\{Y_k\}$ by $Y_k^\tau=(S_{k-\tau},...,S_k,A_k,S_{k+1})$ for all $k\geq \tau$, where $\tau$ is a \textit{fixed} non-negative integer. Note that the random process $\mathcal{M}_Y=\{Y_k^\tau\}$ is now a Markov chain, whose state-space is denoted by $\mathcal{Y}_\tau$ and is finite. Similarly, we define the truncated operator $F_k^\tau:\mathbb{R}^{|\mathcal{S}|}\times\mathcal{Y}_\tau\mapsto\mathbb{R}^{|\mathcal{S}|}$ of $F_k(\cdot,\cdot)$ by 
\begin{align*}
	[F_k^\tau(V,s_{k-\tau},...,s_k,a_k,s_{k+1})](s)
	=\Gamma_4(V,s_k,a_k,s_{k+1})\sum_{i=k-\tau}^{k}(\gamma \lambda)^{k-i}\mathbbm{1}_{\{s_{i}=s\}}+ V(s)
\end{align*}
for all $s\in\mathcal{S}$. Using the above notation, we can further rewrite the update equation (\ref{algo:TDlambda_update}) by
\begin{align}\label{algo:TDlambda_new}
	V_{k+1}=\;&V_k+\alpha \left(F_k^\tau(V_k,Y_k^\tau)-V_k\right)+\underbrace{\alpha \left(F_k(V_k,Y_k)-F_k^\tau(V_k,Y_k^\tau)\right)}_{\text{The Error Term}}.
\end{align}
Now, we argue that when the truncation level $\tau$ is large enough, the last term on the RHS of the previous equation is negligible compared to the other two terms. In fact, we have the following result. See Appendix \ref{ap:TDlambda} for its proof.
\begin{lemma}\label{le:truncation}
	For all $k\geq 0$ and $\tau\in [0,k]$, denote $y=(s_0,...,s_k,a_k,s_{k+1})$ and $y_\tau=(s_{k-\tau},...,s_k,a_k,s_{k+1})$. Then the following inequality holds for all $V\in\mathbb{R}^{|\mathcal{S}|}$: $\|F_k^\tau(V,y_\tau)-F_k(V,y)\|_2\leq \frac{(\gamma\lambda)^{\tau+1}}{1-\gamma\lambda}(1+2\|V\|_2)$.
\end{lemma}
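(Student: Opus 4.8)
The plan is to bound the difference $F_k^\tau(V,y_\tau)-F_k(V,y)$ directly from the definitions, exploiting the fact that the two operators differ only in the range of summation of the eligibility-trace term. Writing out both expressions, the term $V(s)$ cancels and the temporal-difference factor $\Gamma_4(V,s_k,a_k,s_{k+1})$ is common to both, so componentwise we obtain
\begin{align*}
	[F_k^\tau(V,y_\tau)](s)-[F_k(V,y)](s)
	=-\,\Gamma_4(V,s_k,a_k,s_{k+1})\sum_{i=0}^{k-\tau-1}(\gamma\lambda)^{k-i}\mathbbm{1}_{\{s_i=s\}}.
\end{align*}
Thus the entire discrepancy is carried by the ``old'' indices $i\in\{0,\dots,k-\tau-1\}$ that the truncated operator drops.

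First I would control the eligibility-trace tail. For a fixed $s$, the inner sum $\sum_{i=0}^{k-\tau-1}(\gamma\lambda)^{k-i}\mathbbm{1}_{\{s_i=s\}}$ is at most $\sum_{i=0}^{k-\tau-1}(\gamma\lambda)^{k-i}$, and reindexing $j=k-i$ this is a geometric tail $\sum_{j=\tau+1}^{k}(\gamma\lambda)^{j}\leq (\gamma\lambda)^{\tau+1}/(1-\gamma\lambda)$, using $\gamma\lambda\in(0,1)$. Since for each fixed $s$ at most this geometric bound applies, and the indicator picks out a subset, the scalar multiplier of $\Gamma_4$ in each coordinate is bounded by $(\gamma\lambda)^{\tau+1}/(1-\gamma\lambda)$. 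Taking the $\|\cdot\|_2$-norm over the coordinates $s$, I would pull out this scalar bound to get
\begin{align*}
	\|F_k^\tau(V,y_\tau)-F_k(V,y)\|_2\leq \frac{(\gamma\lambda)^{\tau+1}}{1-\gamma\lambda}\,|\Gamma_4(V,s_k,a_k,s_{k+1})|,
\end{align*}
where the passage from componentwise to norm works because the vector of eligibility-tail coefficients has $\ell_2$-norm bounded by its $\ell_\infty$-bound times the structure of the indicators; more carefully, since for a single index $i$ the indicator $\mathbbm{1}_{\{s_i=s\}}$ is nonzero for exactly one $s$, the tail-coefficient vector has disjoint support contributions and its $\ell_2$-norm is itself bounded by $(\gamma\lambda)^{\tau+1}/(1-\gamma\lambda)$.

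Next I would bound the temporal difference $|\Gamma_4(V,s_k,a_k,s_{k+1})|=|\mathcal{R}(s_k,a_k)+\gamma V(s_{k+1})-V(s_k)|$. Using that rewards lie in $[0,1]$ and $|V(s_{k+1})|,|V(s_k)|\leq \|V\|_\infty\leq \|V\|_2$, the triangle inequality gives $|\Gamma_4|\leq 1+\gamma\|V\|_2+\|V\|_2\leq 1+2\|V\|_2$, where I absorb $\gamma<1$ into the bound. Combining this with the eligibility-tail estimate yields exactly
\begin{align*}
	\|F_k^\tau(V,y_\tau)-F_k(V,y)\|_2\leq \frac{(\gamma\lambda)^{\tau+1}}{1-\gamma\lambda}(1+2\|V\|_2),
\end{align*}
as claimed.

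The main obstacle I anticipate is the norm-conversion step: justifying cleanly that the componentwise eligibility-tail coefficients can be summarized by a single scalar $(\gamma\lambda)^{\tau+1}/(1-\gamma\lambda)$ inside the $\|\cdot\|_2$. The subtlety is that different states $s$ receive contributions from the same indices $i$ only through mutually exclusive indicators, so the per-coordinate tail sums do not simply add up in a way that blows up the norm. I would handle this by observing that the map $s\mapsto \sum_{i=0}^{k-\tau-1}(\gamma\lambda)^{k-i}\mathbbm{1}_{\{s_i=s\}}$ defines a vector each of whose entries is bounded by the common geometric tail, and whose support structure (each visited state accumulating only a subsequence of the geometric weights) keeps the $\ell_2$-norm controlled by the same scalar. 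If a coordinatewise bound proves awkward, an alternative is to first establish the estimate in $\|\cdot\|_\infty$ — where the argument is transparent — and then reconcile with the $\|\cdot\|_2$ statement; but since the eligibility tail coefficients attached to distinct $s$ come from disjoint index contributions, the $\ell_2$ bound should follow directly without an extra dimensional factor.
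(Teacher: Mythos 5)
Your proof is correct and follows essentially the same route as the paper: the same componentwise decomposition (the difference is the temporal difference $\Gamma_4$ times the dropped eligibility-trace tail), the same bound $|\Gamma_4(V,s_k,a_k,s_{k+1})|\leq 1+2\|V\|_2$, and the same geometric-tail estimate $\sum_{i=0}^{k-\tau-1}(\gamma\lambda)^{k-i}\leq (\gamma\lambda)^{\tau+1}/(1-\gamma\lambda)$. The only cosmetic difference is in how the vector-norm step is justified: the paper invokes its Lemma \ref{le:summation} (proved by expanding the square and using that cross-indicator sums are at most one), whereas you argue via the disjoint support of the indicator contributions, i.e., each weight $(\gamma\lambda)^{k-i}$ lands in exactly one coordinate so the nonnegative tail-coefficient vector satisfies $\|\cdot\|_2\leq\|\cdot\|_1=\sum_i(\gamma\lambda)^{k-i}$ --- the same elementary fact.
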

Lemma \ref{le:truncation} indicates that the error term in Eq. (\ref{algo:TDlambda_new}) is indeed  geometrically small. Suppose we ignore that error term. Then the update equation becomes $V_{k+1}\approx V_k+\alpha_k(F_k^\tau(V_k,Y_k^\tau)-V_k)$. Since the random process $\mathcal{M}_{Y}=\{Y_k^\tau\}$ is a Markov chain, once we establish the required properties for the truncated operator $F_k^\tau(\cdot,\cdot)$, our SA results become applicable. 

From now on, we will choose $\tau=\min\{k\geq 0:(\gamma\lambda)^{k+1}\leq \alpha\}\leq \frac{\log(1/\alpha)}{\log(1/(\gamma\lambda))}$, where $\alpha$ is the constant stepsize we use. This implies that the error term in Eq. (\ref{algo:TDlambda_new}) is of the size $\mathcal{O}(\alpha^2)$. Under this choice of $\tau$, we next investigate the properties of the operator $F_k^\tau(\cdot,\cdot)$ and the random process $\{Y_k^\tau\}$ in the following Proposition (See Appendix \ref{ap:TDlambda} for its proof). Let $\mathcal{K}\in\mathbb{R}^{|\mathcal{S}|\times|\mathcal{S}|}$ be a diagonal matrix with diagonal entries $\{\kappa(s)\}_{s\in\mathcal{S}}$, and let $\mathcal{K}_{\min}=\min_{s\in\mathcal{S}}\kappa(s)$.
\begin{proposition}\label{prop:TDlambda}
	Suppose Assumption \ref{as:TDn} is satisfied. Then we have the following results.
	\begin{enumerate}[(1)]
		\item For any $k\geq \tau$, the operator $F_k^\tau(\cdot,\cdot)$ satisfies  $\|F_k^\tau(V_1,y)-F_k^\tau(V_2,y)\|_2\leq \frac{3}{1-\gamma\lambda}\|V_1-V_2\|_2$, and $\|F_k^\tau(\bm{0},y)\|_2\leq \frac{1}{1-\gamma\lambda}$ for any $V_1,V_2\in\mathbb{R}^{|\mathcal{S}|}$ and $y\in\mathcal{Y}_\tau$.
		\item The Markov chain $\{Y_k^\tau\}_{k\geq \tau}$ has a unique stationary distribution, denoted by $\mu$. Moreover, there exists $C_4>0$ and $\sigma_4\in (0,1)$ such that $\max_{y\in\mathcal{Y}_\tau}\|P^{k+\tau+1}(y,\cdot)-\mu(\cdot)\|_{\text{TV}}\leq C_4\sigma_4^k$ for all $k\geq 0$.
		\item For any $k\geq \tau$, define the expected operator $\bar{F}_k^\tau:\mathbb{R}^{|\mathcal{S}|}\mapsto\mathbb{R}^{|\mathcal{S}|}$  by $\bar{F}_k^\tau(V)=\mathbb{E}_{Y\sim \mu}[F_k^\tau(V,Y)]$. Then
		\begin{enumerate}[(a)]
			\item $\bar{F}_k^\tau(\cdot)$ is explicitly given by 
			\begin{align*}
			    \bar{F}_k^\tau(V)=\left(I-\mathcal{K}\sum_{i=0}^{\tau}(\gamma \lambda P_	\pi)^{i}(I-\gamma P_{\pi})\right) V+\mathcal{K}\sum_{i=0}^{\tau}(\gamma 	\lambda P_{\pi})^{i}R_{\pi}.
			\end{align*}
			\item $\bar{F}_k^\tau(\cdot)$ is a contraction mapping with respect to $\|\cdot\|_p$ for any $p\in [1,\infty]$, with a common contraction factor 
			\begin{align*}
			    \beta_4=1-\mathcal{K}_{\min}\frac{(1-\gamma)(1-(\gamma\lambda)^{\tau+1})}{1-\gamma\lambda}.
			\end{align*}
			\item $\bar{F}_k^\tau(\cdot)$ has a unique fixed-point $V_{\pi}$.
		\end{enumerate}
	\end{enumerate}
\end{proposition}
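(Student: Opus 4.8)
The plan is to establish each of the three parts of Proposition~\ref{prop:TDlambda} by direct computation, leveraging the fact that $\{Y_k^\tau\}_{k\geq\tau}$ is a genuine time-homogeneous Markov chain (unlike the original $\{Y_k\}$) and that the truncated operator $F_k^\tau(\cdot,\cdot)$ has the same functional form for every $k\geq\tau$ (it depends on $k$ only through the fixed-window indexing). For part~(1), the Lipschitz bound follows by writing $F_k^\tau(V_1,y)-F_k^\tau(V_2,y)=[\Gamma_4(V_1,s_k,a_k,s_{k+1})-\Gamma_4(V_2,s_k,a_k,s_{k+1})]\sum_{i=k-\tau}^{k}(\gamma\lambda)^{k-i}\mathbbm{1}_{\{s_i=s\}}+(V_1-V_2)(s)$; since $\Gamma_4$ is $(1+\gamma)$-Lipschitz in $V$ in each coordinate and the eligibility-trace weights sum to at most $\sum_{j=0}^{\infty}(\gamma\lambda)^j=1/(1-\gamma\lambda)$, a coordinatewise bound combined with the triangle inequality yields the constant $3/(1-\gamma\lambda)$. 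The bound on $\|F_k^\tau(\bm 0,y)\|_2$ comes from $\Gamma_4(\bm 0,\cdot)=\mathcal{R}(s_k,a_k)\in[0,1]$ multiplied by the same geometric sum.

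For part~(2), I would first argue that $\{Y_k^\tau\}_{k\geq\tau}$ inherits irreducibility and aperiodicity from $\mathcal{M}_\mathcal{S}=\{S_k\}$: since $Y_k^\tau$ is a deterministic sliding window of a fixed length of the trajectory $(S_{k-\tau},\dots,S_{k+1},A_k)$, the lifted chain is a function of a finite segment of an irreducible aperiodic chain, hence itself irreducible and aperiodic on its (finite) reachable state-space $\mathcal{Y}_\tau$. Existence and uniqueness of $\mu$ and the geometric total-variation bound then follow from standard finite-state Markov chain theory \citep{levin2017markov}; the shift by $\tau+1$ in the exponent $P^{k+\tau+1}$ accounts for the window length needed for the augmented chain to begin mixing.

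For part~(3), the computation of $\bar{F}_k^\tau(\cdot)$ is the crux. I would take the expectation of $[F_k^\tau(V,Y)](s)$ under the stationary distribution $\mu$ of the augmented chain, which (by the structure of $Y_k^\tau$) reduces to expectations under $\kappa$ and the transition kernel $P_\pi$. The key is to recognize that $\mathbb{E}_\mu\big[\Gamma_4(V,S_k,A_k,S_{k+1})\mathbbm{1}_{\{S_i=s\}}\big]$, after summing the eligibility weights and shifting indices, telescopes into the partial-sum form of the $\lambda$-averaged Bellman operator; specifically, using $\mathbb{E}[\Gamma_4(V,S_k,A_k,S_{k+1})\mid S_k]=[R_\pi+\gamma P_\pi V-V](S_k)$ and the stationarity identity $\mathbb{P}(S_i=s,\,S_k=\cdot)$ expressed through $(\kappa, P_\pi)$, one collects the coefficient of $V$ and of $R_\pi$ into $\mathcal{K}\sum_{i=0}^{\tau}(\gamma\lambda P_\pi)^i(I-\gamma P_\pi)$ and $\mathcal{K}\sum_{i=0}^{\tau}(\gamma\lambda P_\pi)^i R_\pi$ respectively. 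The contraction factor $\beta_4$ in~(3b) then follows exactly as in Proposition~\ref{prop:TDn}: $\bar{F}_k^\tau$ is a convex combination of the identity and a normalized truncated $\lambda$-Bellman operator, and bounding the row sums of $\mathcal{K}\sum_{i=0}^{\tau}(\gamma\lambda P_\pi)^i(I-\gamma P_\pi)$ from below by $\mathcal{K}_{\min}(1-\gamma)(1-(\gamma\lambda)^{\tau+1})/(1-\gamma\lambda)$ (using that $P_\pi$ is stochastic, so $\|P_\pi\|_\infty=1$) gives simultaneous $\ell_p$-contraction for all $p$. The fixed point is $V_\pi$ because $(I-\gamma P_\pi)V_\pi=R_\pi$ makes the $V$-dependent and constant terms cancel.

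\textbf{The main obstacle} I anticipate is the stationary-expectation computation in~(3a): carefully justifying the index-shift and telescoping when the stationary measure of the \emph{augmented} chain $\{Y_k^\tau\}$ is pushed forward to joint laws of $(S_{k-\tau},\dots,S_{k+1})$ under $(\kappa,P_\pi)$, so that the eligibility-trace sum $\sum_{i=k-\tau}^{k}(\gamma\lambda)^{k-i}\mathbbm{1}_{\{S_i=s\}}$ correctly produces the geometric power series in $\gamma\lambda P_\pi$ rather than an off-by-one or mismatched-measure expression. Getting the boundary term at $i=\tau$ right (which is exactly what yields the $(1-(\gamma\lambda)^{\tau+1})$ factor in $\beta_4$ and distinguishes this from the untruncated TD$(\lambda)$ operator) requires bookkeeping care, but is otherwise a routine stationary-reversal-type calculation.
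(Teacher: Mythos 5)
Your proposal follows essentially the same route as the paper for most parts: in part (1) you use the coordinatewise Lipschitz bound on $\Gamma_4$ together with bounding the $\ell_2$-norm of the eligibility-trace vector by its weight sum (your triangle-inequality argument on $\sum_{i}(\gamma\lambda)^{k-i}e_{s_i}$ is exactly the content of the paper's Lemma \ref{le:summation}, which the paper proves by expanding the square); in part (3)(a) you condition on $S_k$, use the joint law $\mathbb{P}(S_i=s,S_k=s')=\kappa(s)P_\pi^{k-i}(s,s')$, and resum the geometric series, which is precisely the paper's computation; and part (3)(c) is the same Bellman-equation argument. For part (2) you take a mildly different but valid route: you argue irreducibility and aperiodicity of the sliding-window chain on its reachable states and invoke standard finite-chain theory, whereas the paper directly bounds the total-variation distance of the lifted chain by that of the base chain $\{S_k\}$ (as in Proposition \ref{prop:Q-learning}(2)); both yield the claimed geometric bound, with the $\tau+1$ shift being immaterial in your version since it can be absorbed into the constant $C_4$.

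The one place where your argument, as written, would fail is (3)(b). You assert that bounding the row sums of $\mathcal{K}\sum_{i=0}^{\tau}(\gamma\lambda P_\pi)^i(I-\gamma P_\pi)$, using only that $P_\pi$ is stochastic, "gives simultaneous $\ell_p$-contraction for all $p$". Row sums control only $\|G\|_\infty$, i.e., the $\ell_\infty$-contraction. To get all $p\in[1,\infty]$ — and in particular $p=2$, which is what Theorem \ref{thm:TDlambda} actually uses via Lemma \ref{le:constants}(1) — three further ingredients are needed, all present in the proof of Proposition \ref{prop:TDn}(3)(b) that you cite: (i) the matrix $G=I-\mathcal{K}\sum_{i=0}^{\tau}(\gamma\lambda P_\pi)^{i}(I-\gamma P_\pi)$ must be shown entrywise non-negative, which requires the rewriting $G=I-\mathcal{K}+\mathcal{K}\sum_{i=0}^{\tau-1}(\gamma\lambda P_\pi)^{i}\gamma P_\pi(1-\lambda)+\mathcal{K}(\gamma\lambda P_\pi)^{\tau}\gamma P_\pi$; (ii) the column sums must be bounded as well, $\|G\|_1=\|\bm{1}^\top G\|_\infty\leq\beta_4$, and this step uses the stationarity identity $\kappa^\top P_\pi=\kappa^\top$, not mere stochasticity of $P_\pi$ — this is exactly where the "normalization" by the stationary distribution matters, and with an arbitrary positive diagonal weighting in place of $\mathcal{K}$ the $\ell_1$ and $\ell_2$ contractions would fail in general; (iii) the interpolation inequality $\|G\|_p\leq\|G\|_1^{1/p}\|G\|_\infty^{1-1/p}$ for non-negative matrices (the paper's Lemma \ref{le:matrix}). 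Since you explicitly defer to "exactly as in Proposition \ref{prop:TDn}", the gap is reparable by importing that full argument, but the justification you actually spell out (row sums plus $\|P_\pi\|_\infty=1$) is not sufficient on its own.
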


Similar to $n$-step TD, the truncated asynchronous Bellman operator $\bar{F}(\cdot)$ associated with the TD$(\lambda)$ algorithm is a contraction with respect to the $\ell_p$-norm $\|\cdot\|_p$ for any $1\leq p\leq \infty$, with a common contraction factor $\beta_4$. This enables us to use our SA results along with Lemma \ref{le:constants} (1). 

\subsubsection{Finite-Sample Bounds of TD$(\lambda)$}\label{subsubsec:TDlambda_bounds}
We now present the finite-sample convergence bound of the TD$(\lambda)$ algorithm for using constant stepsize, where we exploit only the $\|\cdot\|_2$-contraction property from Proposition \ref{prop:TDlambda}. The proof is presented in Appendix \ref{pf:thm:TDlambda}.

\begin{theorem}\label{thm:TDlambda}
	Consider $\{V_k\}$ of Algorithm (\ref{algo:TDlambda}). Suppose that Assumption \ref{as:TDlambda} is satisfied and $\alpha_k\equiv \alpha$ with $\alpha$ chosen such that $\alpha (t_\alpha(\mathcal{M}_S)+2\tau+1)\leq \tilde{c}_{0}(1-\beta_4)(1-\gamma\lambda)^2$ ($\tilde{c}_0$ is a numerical constant). Then the following inequality holds for all $k\geq t_\alpha(\mathcal{M}_S)+2\tau+1$:
	\begin{align*}
		\mathbb{E}[\|V_k-V_{\pi}\|_2^2]\leq \tilde{c}_1\left(1-(1-\beta_4)\alpha\right)^{k-(t_\alpha(\mathcal{M}_S)+2\tau+1)}+\tilde{c}_2\frac{\alpha \left(t_\alpha(\mathcal{M}_S)+\tau+1\right)}{(1-\gamma\lambda)^2(1-\beta_4)},
	\end{align*}
	where $\tilde{c}_1=(\|V_0-V_\pi\|_2+\|V_0\|_2+1)^2$ and $\tilde{c}_2=114(4\|V_\pi\|_2+1)^2$.
\end{theorem}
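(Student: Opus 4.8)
The plan is to reduce Theorem~\ref{thm:TDlambda} to an application of the constant-stepsize case of Theorem~\ref{thm:sa} (specifically Theorem~\ref{thm:sa}(2)(a)) together with Lemma~\ref{le:constants}(1), while carefully accounting for the truncation error identified in Lemma~\ref{le:truncation}. The starting point is the decomposition~\eqref{algo:TDlambda_new}, which splits the true TD$(\lambda)$ update into a ``clean'' Markovian SA update driven by $F_k^\tau(V_k,Y_k^\tau)$ plus an additive error term. I would treat this error term as part of the extraneous noise $w_k$ in the SA template~\eqref{algo:sa}. Concretely, setting $w_k = F_k(V_k,Y_k)-F_k^\tau(V_k,Y_k^\tau)$, Lemma~\ref{le:truncation} gives $\|w_k\|_2 \le \frac{(\gamma\lambda)^{\tau+1}}{1-\gamma\lambda}(1+2\|V_k\|_2)$, and since $\tau$ is chosen so that $(\gamma\lambda)^{\tau+1}\le\alpha$, this bound is of size $\mathcal{O}(\alpha)$ times a linear-in-$\|V_k\|_2$ factor. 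The subtlety is that this $w_k$ is \emph{not} mean-zero, so it does not satisfy Assumption~\ref{as:noise_w}(1); hence I cannot invoke Theorem~\ref{thm:sa} verbatim and must instead re-run the one-step Lyapunov analysis with this biased-but-geometrically-small perturbation folded into the drift.

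The key steps, in order, are as follows. First, I verify via Proposition~\ref{prop:TDlambda} that the truncated operator $F_k^\tau$ and the truncated Markov chain $\{Y_k^\tau\}$ satisfy Assumptions~\ref{as:F}--\ref{as:MC} with contraction norm $\|\cdot\|_c=\|\cdot\|_2$, Lipschitz constant $A_1=3/(1-\gamma\lambda)$, offset $B_1=1/(1-\gamma\lambda)$, and contraction factor $\beta_4$; the mixing time of $\{Y_k^\tau\}$ is that of $\{S_k\}$ shifted by $\tau+1$, which explains the $t_\alpha(\mathcal{M}_S)+2\tau+1$ offset once one also accounts for the $\tau$-step lookback window in defining $Y_k^\tau$. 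Second, I carry out the Generalized Moreau Envelope computation~\eqref{eq:composition} for the clean update, obtaining the negative drift of order $(1-\beta_4)\alpha$ from $T_1$ and the $\mathcal{O}(\alpha^2)$ variance from $T_4$, exactly as in the proof of Theorem~\ref{thm:sa}. Third, and this is where the argument departs, I bound the contribution of the biased error term $w_k$ to the Lyapunov drift directly: using smoothness of $M$ and Cauchy--Schwarz, $\alpha\,\mathbb{E}[\langle\nabla M(V_k-V_\pi),w_k\rangle]\le \alpha\,\mathbb{E}[\|\nabla M(V_k-V_\pi)\|_2\|w_k\|_2]$, and since $\|w_k\|_2=\mathcal{O}(\alpha(1+\|V_k\|_2))$ this term is $\mathcal{O}(\alpha^2)$ after absorbing the linear growth into the existing $M(V_k-V_\pi)$ terms (using $\|V_k\|_2\le\|V_k-V_\pi\|_2+\|V_\pi\|_2$). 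Thus the biased noise contributes only at the $o(\alpha)$ level and does not spoil the one-step contraction~\eqref{eq:one-step-contraction}. Fourth, I unroll the recursion exactly as in Theorem~\ref{thm:sa}(2)(a) and substitute the constants from Lemma~\ref{le:constants}(1) ($\varphi_1\le1$, $\varphi_2\ge1-\beta$, $\varphi_3\le228$), tracking the $(1-\gamma\lambda)^{-2}$ factors that enter through $A_1$ and $B_1$ to recover the stated $\hat{c}$-style constants and the $1/[(1-\gamma\lambda)^2(1-\beta_4)]$ scaling in the variance term.

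The main obstacle I expect is the careful bookkeeping in step three: ensuring that the non-mean-zero truncation error is genuinely second order and that its linear growth in $\|V_k\|_2$ can be absorbed without requiring a separate stability (boundedness) argument for the iterates. Because $w_k$ here is deterministic-bias rather than martingale-difference, the clean cancellation of $T_3$ in the proof of Theorem~\ref{thm:sa} is unavailable, so I must instead show that the product of the $\mathcal{O}(\alpha)$ magnitude of $w_k$ with the $\mathcal{O}(1)$ gradient, multiplied by the leading stepsize $\alpha$, yields a genuine $\mathcal{O}(\alpha^2)$ contribution comparable to the discretization error $T_4$. The stepsize restriction $\alpha(t_\alpha(\mathcal{M}_S)+2\tau+1)\le\tilde{c}_0(1-\beta_4)(1-\gamma\lambda)^2$ is precisely what guarantees that all these second-order terms stay dominated by the negative drift, so a secondary task is to confirm that this threshold is consistent with the one demanded by the underlying Theorem~\ref{thm:sa} once the effective Lipschitz constant $A=A_1+A_2+1=\Theta(1/(1-\gamma\lambda))$ is plugged in. The remaining steps are essentially routine specializations of the general SA machinery already established.
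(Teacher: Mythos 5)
Your proposal is correct and is essentially the paper's own argument: the paper likewise keeps the truncated update $F_k^\tau(V_k,Y_k^\tau)$ as the clean Markovian SA part (handled by the analogues of $T_1$, $T_3$, $T_4$ via Proposition \ref{prop:TDlambda}), treats the truncation residual $F_k(V_k,Y_k)-F_k^\tau(V_k,Y_k^\tau)$ as a biased perturbation that cannot be passed through Theorem \ref{thm:sa} as martingale noise, and bounds its three contributions (its square, and its cross terms with $V_k-V_\pi$ and with $F_k^\tau(V_k,Y_k^\tau)-V_k$) by Cauchy--Schwarz, Lemma \ref{le:truncation}, the absorption $\|V_k\|_2\leq\|V_k-V_\pi\|_2+\|V_\pi\|_2$, and the choice $(\gamma\lambda)^{\tau+1}\leq\alpha$, so that they are $\mathcal{O}(\alpha^2)$ and dominated by the drift before unrolling the one-step contraction. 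The only cosmetic difference is that the paper uses $M(x)=\|x\|_2^2$ directly as the Lyapunov function instead of the Generalized Moreau Envelope, which in the $\|\cdot\|_2$-contraction case coincides with it up to a constant factor.
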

\begin{remark}\label{rm:4}
	Under Assumption \ref{as:TDlambda}, the mixing time $t_\alpha(\mathcal{M}_S)$ is at most an affine function of $\log(1/\alpha)$. More importantly, it does not depend on the parameter $\lambda$.
\end{remark}
The convergence rate of TD$(\lambda)$ is similar to that of $n$-step TD. We here focus on the impact of the parameter $\lambda$. We begin by rewriting both the bias term and the variance term in the resulting convergence bound of Theorem \ref{thm:TDlambda} focusing only on $\lambda$-dependent terms. Then, the bias term is of the size $(1-\Theta(1/(1-\gamma\lambda)))^k$
while the variance term is between $ \Theta(1/(1-\gamma\lambda)\log(1/(\gamma\lambda)))$ and $ \Theta(1/(1-\gamma\lambda))$. Now observe that the bias term is in favor of large $\lambda$ (i.e., less bootstrapping, more Monte Carlo) while the variance term is in favor of small $\lambda$ (i.e., more bootstrapping, less Monte Carlo). This observation agrees with empirical results in the literature \citep{sutton2018reinforcement,kearns2000bias}. Therefore, we demonstrate a bias-variance trade-off in choosing $\lambda$, which addresses one of the open problems in \citep{sutton1999open} on the efficiency of bootstrapping in RL. 

\subsubsection{Related Literature on TD$(\lambda)$}

The idea of using $\lambda$-return and eligibility traces was introduced and developed in \citep{watkins1989learning,jaakkola1994convergence}. See \citep{sutton2018reinforcement} [Chapter 12] for more details. The convergence of TD$(\lambda)$ was established in \citep{dayan1994td}. 

Regarding the parameter $\lambda$, empirical observations indicate that a properly chosen intermediate value of $\lambda$ usually outperforms both TD$(0)$ and TD$(1)$ \citep{singh1996reinforcement}. Theoretical justification of this observation is, to some extend, provided in \citep{kearns2000bias}, where they study a variant of the TD$(\lambda)$ algorithm called phased TD. The TD$(\lambda)$ algorithm is often used along with function approximation in practice. The asymptotic convergence of TD$(\lambda)$ with linear function approximation was established in \citep{tsitsiklis1997analysis}. More recently, \citep{bhandari2018finite,srikant2019finite} established the finite-sample bounds of TD$(\lambda)$ with linear function approximation by modeling the algorithm as a linear stochastic approximation with Markovian noise. The result of \citep{bhandari2018finite} indicates that TD$(\lambda)$ in general outperforms TD$(0)$. However, \citep{bhandari2018finite} does not provide explicit trade-offs between the convergence bias and variance in choosing $\lambda$. Similarly, \citep{srikant2019finite} does not have an explicit bound, and thus do not study bias-variance trade-off, which is what we did in this paper. To achieve that, we need to carefully characterize the contraction factor $\beta_4$ of the truncated Bellman operator $\bar{F}_k^\tau(\cdot)$, as well as the mixing time of the truncated Markov chain $\{Y_k^\tau\}$.

\section{Conclusion}\label{sec:conclusion}
In this work, we provide a unified framework for establishing finite-sample convergence bounds of value-based Reinforcement Learning algorithms. The key idea is to first remodel the RL algorithm as a Markovian SA associated with a contractive asynchronous Bellman operator, and then derive the convergence bounds of such SA algorithm using a Lyapunov-drift argument. Based on the universal result on Markovian SA, we derive finite-sample convergence guarantees of $Q$-learning for solving the control problem, and various TD-learning algorithms (e.g. off-policy V-trace, $n$-step TD, and TD$(\lambda)$) for solving the prediction problem, where we also provide theoretical insight into the long-standing question about the efficiency of bootstrapping in RL.

\bibliographystyle{apalike}
\bibliography{references}

\begin{center}
    {\LARGE\bfseries Appendices}
\end{center}

\appendix

\section{Proof of Theorem \ref{thm:sa}}\label{pf:thm:sa}
We will state and prove a more general version of Theorem \ref{thm:sa}. To do that, we need to introduce more notation and explicitly specify the requirement for choosing the stepsize sequence $\{\alpha_k\}$. 

\begin{notation}
	Let $g(x)=\frac{1}{2}\|x\|_s^2$, where the norm $\|\cdot\|_s$ is properly chosen so that the function $g(\cdot)$ is a smooth function with respect to the norm $\|\cdot\|_s$. That is, the function $g(\cdot)$ is convex, differentiable, and there exists $L>0$ such that $g(x_2)\leq g(x_1)+\langle\nabla g(x_1),x_2-x_1\rangle+\frac{L}{2}\|x_1-x_2\|_s^2$ for any $x_1,x_2\in\mathbb{R}^d$. For example, $\ell_p$-norm with $p\in [2,\infty)$ works with $L=p-1$ \citep{beck2017first}. Since we work with finite-dimensional space $\mathbb{R}^d$, there exist $\ell_{cs},u_{cs}>0$ such that $\ell_{cs}\|\cdot\|_c\leq \|\cdot\|_c\leq u_{cs}\|\cdot\|_s$. Let $\theta>0$ be chosen such that $\beta^2< \frac{1+\theta \ell_{cs}^2}{1+\theta u_{cs}^2}$, which is always possible since $\beta\in(0,1)$. Denote 
	\begin{align}\label{eq:def:constants}
		\varphi_1=\frac{1+\theta u_{cs}^2}{1+\theta \ell_{cs}^2},\quad  \varphi_2=1-\beta\varphi_1^{1/2}, \quad \text{and} \quad \varphi_3=\frac{114L (1+\theta u_{cs}^2)}{\theta\ell_{cs}^2},
	\end{align}
	which are the constants we used to state Theorem \ref{thm:sa}. Note that $\varphi_2\in (0,1)$ under our choice of $\theta$. 
\end{notation}

Now we state the requirement in choosing the stepsizes $\{\alpha_k\}$. For simplicity, we use $\alpha_{i,j}$ for $\sum_{k=i}^{j}\alpha_k$.

\begin{condition}\label{con1:stepsize}
	The sequence $\{\alpha_k\}$ is non-increasing and satisfies $\alpha_{k-t_k,k-1}\leq \min(\frac{\varphi_2}{\varphi_3A^2},\frac{1}{4A})$ for all $k\geq t_k$.
\end{condition}
We next state a more general version of Theorem \ref{thm:sa}. Recall that $K=\min\{k:k\geq t_k\}$, which is well-defined under Assumption \ref{as:MC}.

\begin{theorem}\label{thm:main}
	Consider $\{x_k\}$ generated by Algorithm (\ref{algo:sa}). Suppose that Assumptions \ref{as:F}, \ref{as:barF}, \ref{as:MC} and \ref{as:noise_w} are satisfied, and the stepsize sequence $\{\alpha_k\}$ satisfies Condition \ref{con1:stepsize}. Then we have the following results.
	\begin{enumerate}[(1)]
		\item For any $k\in [0,K-1]$, we have: $\|x_k-x^*\|_c^2\leq c_1$ almost surely.
		\item For any $k\geq K$, we have
		\begin{align*}
			\mathbb{E}[\|x_k-x^*\|_c^2]
			\leq \varphi_1c_1\prod_{j=K}^{k-1}(1-\varphi_2\alpha_j)+\varphi_3c_2\sum_{i=K}^{k-1}\alpha_i\alpha_{i-t_i,i-1}\prod_{j=i+1}^{k-1}(1-\varphi_2\alpha_j),
		\end{align*}where $c_1=(\|x_0-x^*\|_c+\|x_0\|_c+B/A)^2$ and $c_2=(A\|x^*\|_c+B)^2$.	
	\end{enumerate}
\end{theorem}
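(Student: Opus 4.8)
The plan is to carry out the Lyapunov-drift program sketched in Section~\ref{subsec:sa:proof}: derive a one-step contractive inequality for the Generalized Moreau Envelope $M(\cdot)$ valid for $k\geq K$, and then unroll it so that the products $\prod_j(1-\varphi_2\alpha_j)$ in the statement appear naturally. First I would dispose of Part~(1) by a crude deterministic growth estimate. Combining Assumptions~\ref{as:F} and \ref{as:noise_w} gives the pointwise bound $\|F(x_k,Y_k)-x_k+w_k\|_c\leq A\|x_k\|_c+B$, so the update yields $\|x_{k+1}\|_c+B/A\leq (1+A\alpha_k)(\|x_k\|_c+B/A)$. Iterating and using $\prod_{j<k}(1+A\alpha_j)\leq\exp(A\,\alpha_{0,k-1})\leq e^{1/4}\leq 2$ (the stepsize mass before index $k<K$ is small under Condition~\ref{con1:stepsize}) controls $\|x_k\|_c$; a parallel telescoping estimate for $\|x_k-x^*\|_c$ then gives $\|x_k-x^*\|_c\leq \|x_0-x^*\|_c+\|x_0\|_c+B/A=c_1^{1/2}$, which is the claim, and the same estimate extends the base value to index $K$.

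For Part~(2) I would invoke the $\tfrac{L}{\theta}$-smoothness of $M$ with respect to $\|\cdot\|_s$ and substitute the update to expand $\mathbb{E}[M(x_{k+1}-x^*)]$ as in \eqref{eq:composition}, producing the drift term $T_1$, the Markovian-noise term $T_2$, the martingale term $T_3$, and the discretization term $T_4$. The term $T_3$ vanishes exactly: $\nabla M(x_k-x^*)$ is $\mathcal{F}_k$-measurable, so the tower property together with $\mathbb{E}[w_k\mid\mathcal{F}_k]=0$ gives $T_3=0$. For $T_1$, writing $\bar F(x_k)-x_k=(\bar F(x_k)-x^*)-(x_k-x^*)$ and combining the $\beta$-contraction of $\bar F$ (Assumption~\ref{as:barF}) with the two-sided norm equivalence encoded in $M$ and the Euler-type identity $\langle\nabla M(z),z\rangle\gtrsim M(z)$ yields a strictly negative drift whose coefficient is governed by $\varphi_2=1-\beta\varphi_1^{1/2}$. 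For $T_4$, the same pointwise bound on $\|F(x_k,Y_k)-x_k+w_k\|_c$ together with $\|x_k\|_c\leq\|x_k-x^*\|_c+\|x^*\|_c$ shows $T_4=\mathcal{O}(\alpha_k^2)\big(\mathbb{E}[M(x_k-x^*)]+c_2\big)$.

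The main obstacle is $T_2$, and here I would use the conditioning-plus-mixing argument. First I replace both occurrences of $x_k$ inside the inner product by $x_{k-t_k}$; the replacement error is controlled by the total movement $\sum_{i=k-t_k}^{k-1}\alpha_i=\alpha_{k-t_k,k-1}$ of the iterate over one mixing window, using the Lipschitzness of $F$, $\bar F$ and of $\nabla M$, and amounts to $\mathcal{O}(\alpha_k\alpha_{k-t_k,k-1})(\mathbb{E}[M(x_{k-t_k}-x^*)]+c_2)$. For the remaining term $\tilde T_2$, conditioning on $\mathcal{F}_{k-t_k}$ freezes $x_{k-t_k}$ and $Y_{k-t_k}$, and the definition $t_k=t_{\alpha_k}$ bounds $\|\mathbb{E}[F(x_{k-t_k},Y_k)\mid\mathcal{F}_{k-t_k}]-\bar F(x_{k-t_k})\|_c$ by $2\|P^{t_k}(Y_{k-t_k},\cdot)-\mu\|_{\mathrm{TV}}\max_y\|F(x_{k-t_k},y)\|_c=\mathcal{O}(\alpha_k)(\|x_{k-t_k}\|_c+1)$, so $\tilde T_2=\mathcal{O}(\alpha_k^2)(\mathbb{E}[M(x_{k-t_k}-x^*)]+c_2)$. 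The delicate point is that every error produced here must be re-expressed \emph{linearly} in $\mathbb{E}[M(x_{k-t_k}-x^*)]$ plus a constant multiple of $c_2$ and then transferred back to index $k$; since $\alpha_k\leq\alpha_{k-t_k,k-1}$ by monotonicity, both pieces collapse to the single weight $\alpha_k\alpha_{k-t_k,k-1}$, and this is exactly what forces the quadratic stepsize weight and the constant $\varphi_3$.

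Finally, collecting $T_1$ through $T_4$ and invoking Condition~\ref{con1:stepsize} (in particular $\alpha_{k-t_k,k-1}\leq\varphi_2/(\varphi_3 A^2)$) to absorb the positive $\mathcal{O}(\alpha_k\alpha_{k-t_k,k-1})$ corrections into the negative drift, I obtain the one-step inequality
\[ \mathbb{E}[M(x_{k+1}-x^*)]\leq (1-\varphi_2\alpha_k)\,\mathbb{E}[M(x_k-x^*)]+\tfrac{1}{2}\varphi_3 c_2\,\alpha_k\alpha_{k-t_k,k-1},\qquad k\geq K. \]
Unrolling this recursion from $k=K$ produces the leading term $\mathbb{E}[M(x_K-x^*)]\prod_{j=K}^{k-1}(1-\varphi_2\alpha_j)$ and a weighted sum of the form $\sum_{i=K}^{k-1}\alpha_i\alpha_{i-t_i,i-1}\prod_{j=i+1}^{k-1}(1-\varphi_2\alpha_j)$. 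Converting between $M$ and $\tfrac12\|\cdot\|_c^2$ using that their equivalence ratio is precisely $\varphi_1$, and bounding the base value $\mathbb{E}[M(x_K-x^*)]$ via Part~(1), yields the factor $\varphi_1 c_1$ in front of the product and the factor $\varphi_3 c_2$ in front of the sum, completing the proof.
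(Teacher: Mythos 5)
Your proposal is correct and follows essentially the same route as the paper's own proof: the Generalized Moreau Envelope as Lyapunov function, the four-term smoothness expansion, vanishing of the martingale term by the tower property, the shift-to-$x_{k-t_k}$ plus conditioning-and-mixing treatment of the Markovian term, absorption of the $\mathcal{O}(\alpha_k\alpha_{k-t_k,k-1})$ corrections into the drift via Condition~\ref{con1:stepsize}, and the deterministic growth estimate for the pre-mixing phase. The only deviations are bookkeeping-level: the paper expresses the replacement errors directly in terms of $M(x_k-x^*)$ (via the backward inequality in Lemma~\ref{le:difference}) rather than transferring from $M(x_{k-t_k}-x^*)$, and its one-step noise coefficient carries the normalization $\varphi_3 c_2/(2u_{cm}^2)$ so that the conversion back to $\|\cdot\|_c^2$ yields exactly $\varphi_3c_2$, a constant-tracking detail your sketch leaves implicit.
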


Once we have Theorem \ref{thm:main}, we can evaluate the bound when $\alpha_k=\frac{\alpha}{(k+h)^\xi}$ to get Theorem \ref{thm:sa}. This is presented in Appendix \ref{ap:pf:sa:stepsizes}. In Appendix \ref{ap:pf:sa:stepsizes}, we also show how Condition \ref{con:stepsize} is obtained from Condition \ref{con1:stepsize} and the explicit requirements on the thresholds $\bar{c}$ and $\bar{h}$. We next present the proof of Theorem \ref{thm:main}.

\subsection{Proof of Theorem \ref{thm:main}}

\subsubsection{Step One: Constructing a Valid Lyapunov Function}
Let $f(x)=\frac{1}{2}\|x\|_c^2$. We will use the Generalized Moreau Envelope of $f(\cdot)$ with respect to $g(\cdot)$: 
\begin{align*}
    M_f^{\theta,g}(x)=\min_{u\in\mathbb{R}^d}\left\{f(u)+\frac{1}{\theta}g(x-u)\right\}
\end{align*}
as the Lyapunov function to study Algorithm (\ref{algo:sa}). We first summarize the properties of $M_f^{\theta,g}(\cdot)$ in the following proposition, which was established in \citep{chen2020finite}. For simplicity, we will just write $M(\cdot)$ for $M_f^{\theta,g}(\cdot)$ in the following unless we want to emphasize the dependence on the choices of $\theta$ and $g(\cdot)$.
\begin{proposition}\label{prop:Moreau}
	The function $M(x)$ has the following properties.
	\begin{enumerate}[(1)]
		\item $M(x)$ is convex, and $\frac{L}{\theta}$-smooth with respect to $\|\cdot\|_s$. That is, $M(y)\leq M(x)+\langle \nabla M(x),y-x\rangle+\frac{L}{2\theta}\|x-y\|_s^2$ for all $x,y\in\mathbb{R}^d$.
		\item There exists a norm, denoted by $\|\cdot\|_m$, such that $M(x)=\frac{1}{2}\|x\|_m^2$.
		\item Let $\ell_{cm}=(1+\theta \ell_{cs}^2)^{1/2}$ and $u_{cm}=(1+\theta u_{cs}^2)^{1/2}$. Then it holds that $\ell_{cm}\|\cdot\|_m\leq \|\cdot\|_c\leq u_{cm}\|\cdot\|_m$. 
	\end{enumerate}	 
\end{proposition}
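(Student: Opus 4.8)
The plan is to treat $M(\cdot)$ as the infimal convolution of $f(x)=\tfrac12\|x\|_c^2$ with the scaled smooth function $\tfrac1\theta g(x)=\tfrac1{2\theta}\|x\|_s^2$, and to extract each of the three claims from a standard tool in convex analysis. For Part (1), convexity is immediate since an infimal convolution of convex functions is convex. For the smoothness claim I would pass to Fenchel conjugates, using the identity $M^\star=f^\star+(\tfrac1\theta g)^\star$. A direct computation gives $(\tfrac1\theta g)^\star(y)=\tfrac\theta2\|y\|_{s,\ast}^2$, which is $\tfrac\theta L$-strongly convex with respect to the dual norm $\|\cdot\|_{s,\ast}$ (the dual statement of the assumed $L$-smoothness of $g$). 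Adding the convex function $f^\star$ preserves this modulus, so $M^\star$ is $\tfrac\theta L$-strongly convex, and by the duality between strong convexity and smoothness this is exactly equivalent to $M$ being $\tfrac L\theta$-smooth with respect to $\|\cdot\|_s$, which is the assertion of Part (1).

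For Part (2) I would first record three elementary symmetries of $M$: it is nonnegative (a sum of nonnegative terms), even (substitute $u\mapsto-u$ and use that $\|\cdot\|_c,\|\cdot\|_s$ are even), and positively homogeneous of degree two (substitute $u=\lambda v$ to obtain $M(\lambda x)=\lambda^2 M(x)$ for $\lambda\ge 0$). It is also positive definite: the objective is coercive in $u$, so the minimum is attained at some $u^\ast$, and $M(x)=0$ forces both $f(u^\ast)=0$ and $\tfrac1\theta g(x-u^\ast)=0$, hence $u^\ast=0$ and $x=0$. Rather than verifying the triangle inequality directly, I would combine convexity (from Part (1)) with these properties: the sublevel set $C=\{x:M(x)\le \tfrac12\}$ is then a symmetric, convex, bounded body containing $0$ in its interior, so its Minkowski gauge $\|\cdot\|_m$ is a genuine norm, and two-homogeneity yields $\|x\|_m=\sqrt{2M(x)}$, that is $M(x)=\tfrac12\|x\|_m^2$.

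For Part (3) I would sandwich $M(x)$ between multiples of $\tfrac12\|x\|_c^2$ via a one-dimensional optimization. For the upper bound, restrict the minimization to the line $u=cx$, bound $\|x\|_s\le \ell_{cs}^{-1}\|x\|_c$, and minimize $c^2+\tfrac{(1-c)^2}{\theta\ell_{cs}^2}$ over $c$; the optimal $c=(1+\theta\ell_{cs}^2)^{-1}$ gives $M(x)\le \tfrac{\|x\|_c^2}{2(1+\theta\ell_{cs}^2)}$, equivalently $\ell_{cm}\|x\|_m\le\|x\|_c$. For the lower bound, use $\|x-u\|_s\ge u_{cs}^{-1}\|x-u\|_c$ together with the reverse triangle inequality $\|x-u\|_c\ge \|x\|_c-\|u\|_c$, and minimize the scalar quadratic $\tfrac12 t^2+\tfrac1{2\theta u_{cs}^2}(\|x\|_c-t)^2$ in $t=\|u\|_c$ (checking separately the trivial case $t>\|x\|_c$); the minimizer $t=\|x\|_c/(1+\theta u_{cs}^2)$ yields $M(x)\ge \tfrac{\|x\|_c^2}{2(1+\theta u_{cs}^2)}$, equivalently $\|x\|_c\le u_{cm}\|x\|_m$, with $\ell_{cm},u_{cm}$ exactly as defined.

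I expect the main obstacle to be Part (2), specifically the temptation to verify the triangle inequality for $\sqrt{2M}$ by hand, which is not transparent from the infimal-convolution formula. The clean resolution is to sidestep it entirely and build the norm from the gauge of the convex sublevel set, so that the triangle inequality is inherited automatically from convexity of $M$; the one other place requiring care is getting the conjugate scalings in Part (1) (the factors of $\theta$) exactly right, since they determine the precise smoothness constant $\tfrac L\theta$.
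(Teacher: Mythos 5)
Your proposal is correct, but note that this paper never proves Proposition \ref{prop:Moreau} itself: it is imported verbatim from \citep{chen2020finite}, so the relevant comparison is with that reference rather than with anything in this document. All three of your arguments check out, and the constants come out exactly right: the conjugacy argument in Part (1) (conjugate of an infimal convolution is the sum of conjugates, $(\tfrac1\theta g)^\star=\tfrac\theta2\|\cdot\|_{s,\ast}^2$ is $\tfrac\theta L$-strongly convex, strong convexity of $M^\star$ dualizes back to $\tfrac L\theta$-smoothness of $M$) is essentially the standard route by which this smoothness fact is established in the literature the reference leans on; and your two scalar reductions in Part (3) give precisely $M(x)\le \frac{\|x\|_c^2}{2(1+\theta \ell_{cs}^2)}$ and $M(x)\ge \frac{\|x\|_c^2}{2(1+\theta u_{cs}^2)}$, which are exactly $\ell_{cm}\|\cdot\|_m\le\|\cdot\|_c\le u_{cm}\|\cdot\|_m$ (you also correctly read the intended norm-equivalence convention $\ell_{cs}\|\cdot\|_s\le\|\cdot\|_c\le u_{cs}\|\cdot\|_s$, which has a typo in the paper's notation paragraph). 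The genuinely different step is Part (2): the usual verification, as in the cited reference, proves the triangle inequality for $\sqrt{2M(\cdot)}$ directly, by taking (near-)optimal decompositions $u_x,u_y$ for $x,y$, using $u_x+u_y$ as a feasible point for $x+y$, and invoking Minkowski's inequality in a weighted Euclidean norm on $\mathbb{R}^2$; you instead get subadditivity for free from convexity, evenness, and two-homogeneity via the Minkowski gauge of the sublevel set $\{x: M(x)\le \tfrac12\}$. Your route is shorter and more conceptual, at the cost of two small points that should be made explicit to be airtight: boundedness of that sublevel set (it follows from positive definiteness, continuity of $M$, compactness of the unit sphere of $\|\cdot\|_c$, and two-homogeneity; without it the gauge need not be finite-valued and definite), and the observation that $M$, being finite and convex on $\mathbb{R}^d$, is continuous, so $0$ indeed lies in the interior of the sublevel set.
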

Using Proposition \ref{prop:Moreau} and the update equation (\ref{algo:sa}), we have for any $k\geq 0$:
\begin{align}\label{eq:composition1}
	&M(x_{k+1}-x^*)\nonumber\\
	\leq \;&M(x_k-x^*)+\langle \nabla M(x_k-x^*),x_{k+1}-x_k\rangle+\frac{L}{2\theta}\|x_{k+1}-x_k\|_s^2\nonumber\\
	=\;&M(x_k-x^*)+\alpha_k\langle \nabla M(x_k-x^*),F(x_k,Y_k)-x_k+w_k\rangle+\frac{L\alpha_k^2}{2\theta}\|F(x_k,Y_k)-x_k+w_k\|_s^2\nonumber\\
	=\;&M(x_k-x^*)+\underbrace{\alpha_k\langle \nabla M(x_k-x^*),\bar{F}(x_k)-x_k\rangle}_{T_1:\text{ Expected update}}+\underbrace{\alpha_k\langle \nabla M(x_k-x^*),w_k\rangle}_{T_2:\text{ Error due to Martingale difference noise }w_k}\nonumber\\
	&+\underbrace{\alpha_k\langle \nabla M(x_k-x^*),F(x_k,Y_k)-\bar{F}(x_k)\rangle}_{T_3:\text{ Error due to Markovian noise } Y_k}+\underbrace{\frac{L\alpha_k^2}{2\theta}\|F(x_k,Y_k)-x_k+w_k\|_s^2}_{T_4:\text{ Error due to discretization and noises}}.
\end{align}
The term $T_1$ represents the expected update of the stochastic iterative algorithm (\ref{algo:sa}), and is bounded in the following lemma, whose proof can be found in \citep{chen2020finite}.
\begin{lemma}\label{le:T1}
	The following inequality holds for all $k\geq 0$: $T_1\leq -2\left(1-\beta \frac{u_{cm}}{\ell_{cm}}\right)\alpha_kM(x_k-x^*)$.
\end{lemma}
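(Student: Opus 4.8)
The plan is to divide through by the positive stepsize $\alpha_k$ and prove the pointwise inequality $\langle \nabla M(x_k-x^*),\bar{F}(x_k)-x_k\rangle \leq -2(1-\beta u_{cm}/\ell_{cm})M(x_k-x^*)$. Writing $z=x_k-x^*$ and using that $x^*$ is the fixed point of $\bar{F}(\cdot)$ (guaranteed by Assumption \ref{as:barF}), I would decompose $\bar{F}(x_k)-x_k = (\bar{F}(x_k)-\bar{F}(x^*)) - z$. Substituting this into the inner product splits the quantity into a \emph{drift} term $\langle \nabla M(z), \bar{F}(x_k)-\bar{F}(x^*)\rangle$ and a \emph{dissipation} term $-\langle \nabla M(z), z\rangle$, which I treat separately.

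The dissipation term is handled via Proposition \ref{prop:Moreau} (2), which gives $M(z)=\frac{1}{2}\|z\|_m^2$. Since the squared norm is $2$-homogeneous, Euler's identity for homogeneous functions yields $\langle \nabla M(z), z\rangle = 2M(z)$, so this term contributes exactly $-2M(z)$.

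For the drift term, I would apply the generalized Cauchy--Schwarz inequality with respect to the dual pair $(\|\cdot\|_m, \|\cdot\|_{m,*})$, giving $\langle \nabla M(z), \bar{F}(x_k)-\bar{F}(x^*)\rangle \leq \|\nabla M(z)\|_{m,*}\,\|\bar{F}(x_k)-\bar{F}(x^*)\|_m$. Because $M(\cdot)=\frac{1}{2}\|\cdot\|_m^2$, the Fenchel-conjugate relation for squared norms gives $\|\nabla M(z)\|_{m,*}=\|z\|_m$. It then remains to bound $\|\bar{F}(x_k)-\bar{F}(x^*)\|_m$: combining the $\|\cdot\|_c$-contraction of $\bar{F}(\cdot)$ (Assumption \ref{as:barF}) with the norm-equivalence bounds $\ell_{cm}\|\cdot\|_m\leq\|\cdot\|_c\leq u_{cm}\|\cdot\|_m$ from Proposition \ref{prop:Moreau} (3) produces the chain $\|\bar{F}(x_k)-\bar{F}(x^*)\|_m \leq \ell_{cm}^{-1}\|\bar{F}(x_k)-\bar{F}(x^*)\|_c \leq \beta\ell_{cm}^{-1}\|z\|_c \leq (\beta u_{cm}/\ell_{cm})\|z\|_m$. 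Multiplying the two factors bounds the drift by $(\beta u_{cm}/\ell_{cm})\|z\|_m^2 = 2(\beta u_{cm}/\ell_{cm})M(z)$.

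Combining the drift and dissipation estimates yields $\langle \nabla M(z), \bar{F}(x_k)-x_k\rangle \leq -2(1-\beta u_{cm}/\ell_{cm})M(z)$, and multiplying back by $\alpha_k$ gives the claim. The only mildly delicate points are the two identities for the squared-norm Lyapunov function, namely $\langle \nabla M(z), z\rangle = 2M(z)$ and $\|\nabla M(z)\|_{m,*}=\|z\|_m$; both follow from standard convex-analytic facts about $\frac{1}{2}\|\cdot\|_m^2$ and its conjugate $\frac{1}{2}\|\cdot\|_{m,*}^2$, and are exactly the properties recorded in \citep{chen2020finite}. Consequently I expect no genuine obstacle here: once those identities are in hand, the argument is just careful bookkeeping of the norm-equivalence constants $\ell_{cm}$ and $u_{cm}$.
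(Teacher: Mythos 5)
Your proposal is correct and follows essentially the same argument as the proof the paper relies on (it cites \citep{chen2020finite} for this lemma): decompose $\bar{F}(x_k)-x_k=(\bar{F}(x_k)-\bar{F}(x^*))-(x_k-x^*)$ using the fixed point, get the exact dissipation $-\langle\nabla M(z),z\rangle=-2M(z)$ from $2$-homogeneity of $M=\frac{1}{2}\|\cdot\|_m^2$, and bound the drift via H\"older's inequality, $\|\nabla M(z)\|_{m,*}=\|z\|_m$, the $\|\cdot\|_c$-contraction, and the equivalence constants $\ell_{cm},u_{cm}$. The bookkeeping of constants matches the claimed factor $-2\left(1-\beta\frac{u_{cm}}{\ell_{cm}}\right)$, so there is no gap.
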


As we have seen in Lemma \ref{le:T1}, the term $T_1$ provides us the desired negative drift, i.e., the $-\mathcal{O}(\alpha_k)$ term in the target one-step contractive inequality (\ref{eq:one-step-contraction}). What remains to do is to control all the error terms $T_2$ to $T_4$ in Eq. (\ref{eq:composition1}).

\subsubsection{Step Two: Bounding the Error Terms}
We begin with the term $T_2$. Since $\{w_k\}$ is a martingale difference sequence with respect to the filtration $\mathcal{F}_k$ (cf. Assumption \ref{as:noise_w}), while $x_k$ is measurable with respect to $\mathcal{F}_k$, we have by the tower property of conditional expectation that
\begin{align*}
	\mathbb{E}[T_2]=\mathbb{E}[\mathbb{E}[T_2\mid \mathcal{F}_k]]=\alpha_k\mathbb{E}[\langle \nabla M(x_k-x^*),\mathbb{E}[w_k\mid\mathcal{F}_k]]\rangle=0.
\end{align*}

Next we analyze the error term $T_3$, which is due to the Markovian noise $\{Y_k\}$. We first decompose $T_3$ in the following way:
\begin{align}\label{eq:4}
	T_3=\;&\alpha_k\langle \nabla M(x_k-x^*),F(x_k,Y_k)-\bar{F}(x_k)\rangle\nonumber\\
	=\;&\alpha_k\underbrace{\langle \nabla M(x_k-x^*)-\nabla M(x_{k-t_k}-x^*),F(x_k,Y_k)-\bar{F}(x_k)\rangle}_{T_{31}}\nonumber\\
	&+\alpha_k\underbrace{\langle \nabla M(x_{k-t_k}-x^*),F(x_k,Y_k)-F(x_{k-t_k},Y_k)+\bar{F}(x_{k-t_k})-\bar{F}(x_k)\rangle}_{T_{32}}\nonumber\\
	&+\alpha_k\underbrace{\langle \nabla M(x_{k-t_k}-x^*),F(x_{k-t_k},Y_k)-\bar{F}(x_{k-t_k})\rangle}_{T_{33}}.
\end{align}

To proceed, we need the following lemma, which allows us to control the difference between $x_{k_1}$ and $x_{k_2}$ when $|k_1-k_2|$ is relatively small. The proof can be found in Appendix \ref{pf:le:difference}. 

\begin{lemma}\label{le:difference}
	Given non-negative integers $k_1\leq k_2$ satisfying $\alpha_{k_1,k_2-1}\leq \frac{1}{4A}$, we have for all $k\in [k_1,k_2]$:
	\begin{align*}
		\|x_k-x_{k_1}\|_c\leq 2\alpha_{k_1,k_2-1}(A\|x_{k_1}\|_c+B),\quad \text{and}\quad 
		\|x_k-x_{k_1}\|_c\leq 4\alpha_{k_1,k_2-1}(A\|x_{k_2}\|_c+B).
	\end{align*}
\end{lemma}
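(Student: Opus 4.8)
The plan is to first establish a one-step increment bound and then run a discrete Grönwall-type argument. From the update rule (\ref{algo:sa}), $x_{k+1}-x_k=\alpha_k(F(x_k,Y_k)-x_k+w_k)$, so I would bound the per-step movement by combining the Lipschitz/boundedness Assumption \ref{as:F} (which gives $\|F(x_k,Y_k)\|_c\le A_1\|x_k\|_c+B_1$ via $\|F(x_k,Y_k)\|_c\le\|F(x_k,Y_k)-F(\bm{0},Y_k)\|_c+\|F(\bm{0},Y_k)\|_c$) with the linear-growth Assumption \ref{as:noise_w} on $w_k$. Together these yield
\begin{align*}
	\|x_{k+1}-x_k\|_c\le \alpha_k\left(A\|x_k\|_c+B\right),
\end{align*}
using exactly the combined constants $A=A_1+A_2+1$ and $B=B_1+B_2$.

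Next, writing $a_k=\|x_k-x_{k_1}\|_c$ and $D=A\|x_{k_1}\|_c+B$, I would bound $\|x_k\|_c\le\|x_{k_1}\|_c+a_k$ and feed this into the increment bound to obtain the recursion $a_{k+1}\le(1+A\alpha_k)a_k+\alpha_k D$ with $a_{k_1}=0$. Unrolling gives $a_k\le D\sum_{i=k_1}^{k-1}\alpha_i\prod_{j=i+1}^{k-1}(1+A\alpha_j)$. The role of the hypothesis $\alpha_{k_1,k_2-1}\le \tfrac{1}{4A}$ is precisely to keep the product controlled: since $\prod_{j=i+1}^{k-1}(1+A\alpha_j)\le \exp(A\alpha_{k_1,k_2-1})\le e^{1/4}\le 2$, I obtain $a_k\le 2\alpha_{k_1,k_2-1}D$ for every $k\in[k_1,k_2]$, which is the first claimed inequality.

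For the second inequality I must trade the reference point $x_{k_1}$ for $x_{k_2}$. Applying the first inequality at $k=k_2$ and the triangle inequality gives $\|x_{k_1}\|_c\le\|x_{k_2}\|_c+2\alpha_{k_1,k_2-1}(A\|x_{k_1}\|_c+B)$; solving for $\|x_{k_1}\|_c$ and using $2A\alpha_{k_1,k_2-1}\le\tfrac12$ (again from the hypothesis, so the coefficient $1-2A\alpha_{k_1,k_2-1}$ stays above $\tfrac12$) yields $\|x_{k_1}\|_c\le 2\|x_{k_2}\|_c+4\alpha_{k_1,k_2-1}B$. Substituting this back into the first inequality and absorbing $4A\alpha_{k_1,k_2-1}B\le B$ produces $\|x_k-x_{k_1}\|_c\le 4\alpha_{k_1,k_2-1}(A\|x_{k_2}\|_c+B)$, as desired.

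I expect the only real subtlety to be the implicit (circular) dependence in the increment bound, where $\|x_k\|_c$ appears on the right-hand side and must itself be controlled through $a_k$; this is exactly what the discrete Grönwall unrolling resolves. Here the stepsize-sum constraint $\alpha_{k_1,k_2-1}\le \tfrac{1}{4A}$ does double duty: it keeps the geometric product below $2$ for the first bound, and it keeps $1-2A\alpha_{k_1,k_2-1}$ bounded away from zero so that the reverse estimate needed for the second bound remains valid.
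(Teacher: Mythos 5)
Your proposal is correct and follows essentially the same route as the paper's proof: the same one-step increment bound $\|x_{k+1}-x_k\|_c\leq\alpha_k(A\|x_k\|_c+B)$ from Assumptions \ref{as:F} and \ref{as:noise_w}, the same Gr\"{o}nwall-type product estimate via $1+x\leq e^x$ with the hypothesis $\alpha_{k_1,k_2-1}\leq\frac{1}{4A}$ taming the product, and the same self-bounding substitution to pass from $x_{k_1}$ to $x_{k_2}$ in the second inequality. The only cosmetic differences are that you run the recursion on the deviation $\|x_k-x_{k_1}\|_c$ rather than on $\|x_k\|_c+B/A$, and in the swap step you solve for $\|x_{k_1}\|_c$ where the paper solves for $\|x_{k_2}-x_{k_1}\|_c$; both yield the identical constants.
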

Using the assumption that $\alpha_{k_1,k_2-1}\leq \frac{1}{4A}$ in the resulting inequality of Lemma \ref{le:difference}, we have the following corollary, which will also be frequently used in the derivation.
\begin{corollary}\label{co:difference}
	Under same conditions given in Lemma \ref{le:difference}, we have for all $k\in [k_1,k_2]$:
	\begin{align*}
		\|x_k-x_{k_1}\|_c\leq
		\max(\|x_{k_1}\|_c,\|x_{k_2}\|_c)+\frac{B}{A}.
	\end{align*}
\end{corollary}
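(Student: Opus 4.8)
The plan is to obtain this directly from Lemma \ref{le:difference} by substituting the stepsize hypothesis $\alpha_{k_1,k_2-1}\le \frac{1}{4A}$ (which is the defining condition inherited from that lemma) into one of its two inequalities and then relaxing the result to the stated $\max$ form; no idea beyond the lemma itself is required.

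First I would invoke the first inequality of Lemma \ref{le:difference}, which holds for every $k\in[k_1,k_2]$:
\begin{align*}
\|x_k-x_{k_1}\|_c \le 2\alpha_{k_1,k_2-1}\bigl(A\|x_{k_1}\|_c+B\bigr).
\end{align*}
Since the shared hypothesis gives $\alpha_{k_1,k_2-1}\le \frac{1}{4A}$, the prefactor obeys $2\alpha_{k_1,k_2-1}\le \frac{1}{2A}$, so
\begin{align*}
\|x_k-x_{k_1}\|_c \le \frac{1}{2A}\bigl(A\|x_{k_1}\|_c+B\bigr)=\frac12\|x_{k_1}\|_c+\frac{B}{2A}.
\end{align*}

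It then remains only to relax the right-hand side. Using $\frac12\|x_{k_1}\|_c\le \|x_{k_1}\|_c\le \max(\|x_{k_1}\|_c,\|x_{k_2}\|_c)$ together with $\frac{B}{2A}\le \frac{B}{A}$ yields
\begin{align*}
\|x_k-x_{k_1}\|_c\le \max(\|x_{k_1}\|_c,\|x_{k_2}\|_c)+\frac{B}{A},
\end{align*}
which is precisely the claim. Equivalently, one could start from the \emph{second} inequality of Lemma \ref{le:difference}, which after the same substitution gives $\|x_k-x_{k_1}\|_c\le \|x_{k_2}\|_c+\frac{B}{A}\le \max(\|x_{k_1}\|_c,\|x_{k_2}\|_c)+\frac{B}{A}$ directly.

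Since the corollary is a one-line weakening of an already-established lemma, there is no genuine obstacle here. The only point worth flagging is that the $\max$ in the statement is deliberately loose: it packages both of the sharper endpoint-specific bounds of Lemma \ref{le:difference} into a single symmetric form, so that in later applications one may bound $\|x_k-x_{k_1}\|_c$ in terms of whichever of $\|x_{k_1}\|_c$ or $\|x_{k_2}\|_c$ is the more convenient quantity to carry through the subsequent drift analysis.
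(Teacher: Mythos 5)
Your proposal is correct and is essentially the paper's own argument: the paper likewise obtains the corollary by substituting the hypothesis $\alpha_{k_1,k_2-1}\leq \frac{1}{4A}$ into the inequalities of Lemma \ref{le:difference} and relaxing the resulting right-hand side (indeed, the second inequality gives $\|x_k-x_{k_1}\|_c\leq \|x_{k_2}\|_c+\frac{B}{A}$ and the first gives $\|x_k-x_{k_1}\|_c\leq \frac{1}{2}\|x_{k_1}\|_c+\frac{B}{2A}$, either of which implies the stated $\max$ bound). Your closing remark is also apt: since both endpoint-specific bounds hold simultaneously, the corollary is a deliberately loose packaging that lets later arguments use whichever of $\|x_{k_1}\|_c$ or $\|x_{k_2}\|_c$ is convenient.
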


Recall that we require $\alpha_{k-t_k,k-1}\leq \frac{1}{4A}$ for all $k\geq t_k$ in Condition \ref{con1:stepsize}. Therefore, Lemma \ref{le:difference} is applicable when $k_1=k-t_k$ and $k_2=k-1$ for any $k\geq t_k$. 

Now we are ready to control the terms $T_{31}$, $T_{32}$, and $T_{33}$ in the following lemma. The terms $T_{31}$ and $T_{32}$ are controlled mainly by constantly applying Lemma \ref{le:difference} and the Lipschitz property of the operator $F(\cdot)$ (cf. Assumptions \ref{as:F}). Bounding the term $T_{33}$ requires using the geometric mixing of the Markov chain $\{Y_k\}$ (cf. Assumption \ref{as:MC}). The proof is presented in Appendix \ref{pf:le:T31T32T33}.
\begin{lemma}\label{le:T31T32T33}
	The following inequalities hold for all $k\geq t_k$:
	\begin{enumerate}[(1)]
		\item $T_{31}\leq \frac{16LA^2u_{cm}^2\alpha_{k-t_k,k-1}}{\theta\ell_{cs}^2}M(x_{k}-x^*)+\frac{8L\alpha_{k-t_k,k-1}}{\theta\ell_{cs}^2}(A\|x^*\|_c+B)^2$,
		\item $T_{32}\leq \frac{64LA^2u_{cm}^2\alpha_{k-t_k,k-1}}{\theta\ell_{cs}^2}M(x_k-x^*)+\frac{32L\alpha_{k-t_k,k-1}}{\theta\ell_{cs}^2}(A\|x^*\|_c+B)^2$,
		\item 
		$\mathbb{E}[T_{33}]\leq \frac{32LA^2u_{cm}^2\alpha_k}{\theta\ell_{cs}^2}\mathbb{E}[M(x_k-x^*)]+\frac{16L\alpha_k}{\theta\ell_{cs}^2}(A\|x^*\|_c+B)^2$.
	\end{enumerate}
\end{lemma}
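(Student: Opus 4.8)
The plan is to treat the three inner products separately, in each case applying the generalized Cauchy--Schwarz (Hölder) inequality that pairs the dual norm $\|\cdot\|_{s,*}$ with $\|\cdot\|_s$, since the Lyapunov function $M(\cdot)$ is $\frac{L}{\theta}$-smooth with respect to $\|\cdot\|_s$ (Proposition \ref{prop:Moreau} (1)). Throughout I will convert freely between the three norms using $\ell_{cs}\|\cdot\|_s\leq\|\cdot\|_c$ and $\|\cdot\|_c\leq u_{cm}\|\cdot\|_m$ together with $M(x)=\tfrac12\|x\|_m^2$, and I will split each resulting cross term into a multiple of $M(x_k-x^*)$ and a multiple of $(A\|x^*\|_c+B)^2$ via the triangle inequality $\|x_k\|_c\leq\|x_k-x^*\|_c+\|x^*\|_c$ and Young's inequality. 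The operator estimates $\|F(x,y)\|_c\leq A_1\|x\|_c+B_1$ and $\|F(x_1,y)-F(x_2,y)\|_c\leq A_1\|x_1-x_2\|_c$ from Assumption \ref{as:F}, together with $\|\bar F(x_1)-\bar F(x_2)\|_c\leq\beta\|x_1-x_2\|_c$ from Assumption \ref{as:barF}, supply everything needed; I absorb constants using $A_1\le A$, $B_1\le B$, and $\beta\le 1$.

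For $T_{31}$, smoothness gives $\|\nabla M(x_k-x^*)-\nabla M(x_{k-t_k}-x^*)\|_{s,*}\leq\frac{L}{\theta}\|x_k-x_{k-t_k}\|_s\leq\frac{L}{\theta\ell_{cs}}\|x_k-x_{k-t_k}\|_c$, while the triangle inequality yields $\|F(x_k,Y_k)-\bar F(x_k)\|_c\leq 2(A_1\|x_k\|_c+B_1)$. Condition \ref{con1:stepsize} guarantees $\alpha_{k-t_k,k-1}\leq\frac{1}{4A}$, so Lemma \ref{le:difference} with $k_1=k-t_k,\,k_2=k$ turns $\|x_k-x_{k-t_k}\|_c$ into $4\alpha_{k-t_k,k-1}(A\|x_k\|_c+B)$; multiplying the two iterate-dependent factors and applying the triangle/Young conversion produces the stated bound. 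The term $T_{32}$ is parallel: its second factor is bounded by $(A_1+\beta)\|x_k-x_{k-t_k}\|_c\le A\|x_k-x_{k-t_k}\|_c$ from the two Lipschitz estimates, and its first factor obeys $\|\nabla M(x_{k-t_k}-x^*)\|_{s,*}\leq\frac{L}{\theta}\|x_{k-t_k}-x^*\|_s$ because $\nabla M(0)=0$. I then write $\|x_{k-t_k}-x^*\|_c\leq\|x_k-x^*\|_c+\|x_k-x_{k-t_k}\|_c$, bounding the latter difference by Corollary \ref{co:difference} so that only a single factor of $\alpha_{k-t_k,k-1}$ (from Lemma \ref{le:difference} on the other factor) survives the product.

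The crux is $T_{33}$, the one term whose smallness comes from geometric mixing rather than from step-size accumulation. Here I condition on $\mathcal{F}_{k-t_k}$: since $x_{k-t_k}$, and hence $\nabla M(x_{k-t_k}-x^*)$, is $\mathcal{F}_{k-t_k}$-measurable, the tower property reduces $\mathbb{E}[T_{33}]$ to the expectation of $\langle\nabla M(x_{k-t_k}-x^*),\,\mathbb{E}[F(x_{k-t_k},Y_k)\mid\mathcal{F}_{k-t_k}]-\bar F(x_{k-t_k})\rangle$, where the residual equals $\sum_{y}\big(P^{t_k}(Y_{k-t_k},y)-\mu(y)\big)F(x_{k-t_k},y)$. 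Because $t_k=t_{\alpha_k}$ is exactly the mixing time with precision $\alpha_k$ (Definition \ref{def:mixing_time}, Assumption \ref{as:MC}), we have $\sum_y|P^{t_k}(Y_{k-t_k},y)-\mu(y)|\leq 2\alpha_k$, so this residual has $\|\cdot\|_c$-norm at most $2\alpha_k\max_y\|F(x_{k-t_k},y)\|_c\leq 2\alpha_k(A_1\|x_{k-t_k}\|_c+B_1)$, which is precisely why $\alpha_k$ (not $\alpha_{k-t_k,k-1}$) appears in the claimed bound. Pairing with $\|\nabla M(x_{k-t_k}-x^*)\|_{s,*}\le\frac{L}{\theta}\|x_{k-t_k}-x^*\|_s$ and pushing the $x_{k-t_k}$-dependence back onto $x_k$ via Corollary \ref{co:difference} then lands on $M(x_k-x^*)$ and $(A\|x^*\|_c+B)^2$. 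I expect this conditioning argument to be \emph{the} main obstacle: one must carefully verify that conditioning on $\mathcal{F}_{k-t_k}$ simultaneously freezes the iterate $x_{k-t_k}$ inside $F(\cdot,Y_k)$ and leaves $Y_k$ distributed as $P^{t_k}(Y_{k-t_k},\cdot)$, so that the finite-state total-variation estimate can be invoked uniformly over the bounded family $\{F(x_{k-t_k},y)\}_{y\in\mathcal{Y}}$.
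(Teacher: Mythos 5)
Your proposal follows the same route as the paper's proof: H\"{o}lder's inequality pairing the dual norm $\|\cdot\|_s^*$ with $\|\cdot\|_s$, the $\frac{L}{\theta}$-smoothness of $M(\cdot)$, Lemma \ref{le:difference} and Corollary \ref{co:difference} for the iterate differences, and the conditioning-plus-geometric-mixing argument for $T_{33}$. Parts (2) and (3) go through essentially as you sketch them; in fact your estimate $(A_1+\beta)\|x_k-x_{k-t_k}\|_c\le A\|x_k-x_{k-t_k}\|_c$ in $T_{32}$ is slightly sharper than the paper's $2A_1\le 2A$, so your bound there implies the stated one.

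There is one quantitative slip in part (1). You bound $\|F(x_k,Y_k)-\bar F(x_k)\|_c\le 2(A_1\|x_k\|_c+B_1)$ by the crude triangle inequality. After converting via $\|x_k\|_c\le\|x_k-x^*\|_c+\|x^*\|_c$, this is in general only bounded by $2\left(A\|x_k-x^*\|_c+A\|x^*\|_c+B\right)$ (note that $2A_1\le A$ and $2B_1\le B$ can both fail, since $A=A_1+A_2+1$ and $B=B_1+B_2$), so the product with the gradient factor yields $\frac{8L\alpha_{k-t_k,k-1}}{\theta\ell_{cs}^2}\left(A\|x_k-x^*\|_c+A\|x^*\|_c+B\right)^2$, i.e. after splitting, $\frac{32LA^2u_{cm}^2}{\theta\ell_{cs}^2}\alpha_{k-t_k,k-1}M(x_k-x^*)+\frac{16L}{\theta\ell_{cs}^2}\alpha_{k-t_k,k-1}(A\|x^*\|_c+B)^2$ --- twice the constants stated in the lemma. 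The paper avoids this by writing $F(x_k,Y_k)-\bar F(x_k)=F(x_k,Y_k)-\bar F(x_k)+\bar F(x^*)-x^*$ and invoking the contraction of $\bar F(\cdot)$ (Assumption \ref{as:barF}), which produces the single factor $A\|x_k-x^*\|_c+A\|x^*\|_c+B$; with that substitution your argument recovers the stated constants exactly. Separately, in $T_{33}$ note that $Y_{k-t_k}$ is \emph{not} $\mathcal{F}_{k-t_k}$-measurable under the paper's definition of the filtration (it contains $Y_i$ only for $i\le k-t_k-1$), so conditioning on $\mathcal{F}_{k-t_k}$ does not by itself let you write the conditional law of $Y_k$ as $P^{t_k}(Y_{k-t_k},\cdot)$: either condition on $\sigma(x_{k-t_k},Y_{k-t_k})$ as the paper does, or keep $\mathcal{F}_{k-t_k}$ and work with $P^{t_k+1}(Y_{k-t_k-1},\cdot)$ together with the monotonicity in $k$ of the total-variation distance to stationarity. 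Both repairs are immediate and do not change the structure of your argument.
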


Now that Lemma \ref{le:T31T32T33} provides upper bounds on the terms $T_{31}$, $T_{32}$, and $T_{33}$, using them in Eq. (\ref{eq:4}) and we have the following result. 

\begin{lemma}\label{le:T3}
	The following inequality holds for all $k\geq t_k$:
	\begin{align*}
		\mathbb{E}[T_3]\leq \frac{112LA^2u_{cm}^2\alpha_k\alpha_{k-t_k,k-1}}{\theta\ell_{cs}^2}\mathbb{E}[M(x_k-x^*)]+\frac{56L\alpha_k\alpha_{k-t_k,k-1}}{\theta\ell_{cs}^2}(A\|x^*\|_c+B)^2.
	\end{align*}
\end{lemma}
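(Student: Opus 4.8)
The plan is to derive Lemma \ref{le:T3} directly by assembling the three estimates of Lemma \ref{le:T31T32T33}; there is no new analytic machinery, only a bookkeeping of stepsize factors and coefficients. Recall from the decomposition in Eq.~(\ref{eq:4}) that the Markovian-noise term factors as $T_3 = \alpha_k\left(T_{31}+T_{32}+T_{33}\right)$, so taking expectations yields $\mathbb{E}[T_3] = \alpha_k\left(\mathbb{E}[T_{31}]+\mathbb{E}[T_{32}]+\mathbb{E}[T_{33}]\right)$. First I would take expectations in the (almost-sure) bounds on $T_{31}$ and $T_{32}$ from Lemma \ref{le:T31T32T33}(1)--(2) and pair them with the expectation bound on $T_{33}$ from part (3). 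Each of the three bounds has the same shape: a multiple of $\mathbb{E}[M(x_k-x^*)]$ with coefficient proportional to $LA^2 u_{cm}^2/(\theta\ell_{cs}^2)$, plus an additive constant proportional to $L(A\|x^*\|_c+B)^2/(\theta\ell_{cs}^2)$.

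The only genuine step is reconciling the stepsize factors: the bounds on $T_{31}$ and $T_{32}$ already carry the factor $\alpha_{k-t_k,k-1}$, while the bound on $T_{33}$ carries a bare $\alpha_k$. To bring everything to the common factor $\alpha_{k-t_k,k-1}$, I would invoke the monotonicity built into Condition \ref{con1:stepsize}: since $\{\alpha_k\}$ is non-increasing, the sum $\alpha_{k-t_k,k-1}=\sum_{j=k-t_k}^{k-1}\alpha_j$ dominates its last term $\alpha_{k-1}$, whence $\alpha_{k-t_k,k-1}\geq \alpha_{k-1}\geq \alpha_k$ whenever $t_k\geq 1$. Replacing $\alpha_k$ by $\alpha_{k-t_k,k-1}$ in the $T_{33}$ bound and adding coefficients then gives $16+64+32=112$ for the $\mathbb{E}[M(x_k-x^*)]$ term and $8+32+16=56$ for the additive constant. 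Multiplying through by the overall prefactor $\alpha_k$ produces exactly
\begin{align*}
\mathbb{E}[T_3]\leq \frac{112LA^2u_{cm}^2\alpha_k\alpha_{k-t_k,k-1}}{\theta\ell_{cs}^2}\mathbb{E}[M(x_k-x^*)]+\frac{56L\alpha_k\alpha_{k-t_k,k-1}}{\theta\ell_{cs}^2}(A\|x^*\|_c+B)^2,
\end{align*}
which is the claimed inequality.

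I do not expect a real obstacle here, since the substance of the estimate is entirely contained in Lemma \ref{le:T31T32T33}, whose proof (deferred to Appendix \ref{pf:le:T31T32T33}) is where the geometric-mixing argument and the perturbation bounds of Lemma \ref{le:difference} do the heavy lifting. The single point demanding care is the comparison $\alpha_k\leq \alpha_{k-t_k,k-1}$, which is precisely why the non-increasing stepsize requirement is imposed in Condition \ref{con1:stepsize}; it also explains why the product $\alpha_k\alpha_{k-t_k,k-1}$ (rather than two separate factors) is the natural object to carry forward when this lemma is substituted back into Eq.~(\ref{eq:composition1}) to assemble the one-step contractive inequality.
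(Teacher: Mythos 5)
Your proposal is correct and matches the paper's own (implicit) proof exactly: the paper also obtains Lemma \ref{le:T3} by taking expectations in the three bounds of Lemma \ref{le:T31T32T33}, substituting them into the decomposition (\ref{eq:4}), and using the non-increasing stepsizes to absorb the bare $\alpha_k$ from the $T_{33}$ bound into $\alpha_{k-t_k,k-1}$, yielding the coefficients $16+64+32=112$ and $8+32+16=56$. Your explicit flagging of the comparison $\alpha_k\leq\alpha_{k-t_k,k-1}$ is precisely the one detail the paper leaves unstated.
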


Lastly, we bound the error term $T_4$ in the following lemma, whose proof is provided in Appendix \ref{pf:le:T4}.

\begin{lemma}\label{le:T4}
	It holds for any $k\geq 0$ that $T_4\leq  \frac{2LA^2u_{cm}^2\alpha_k^2}{\theta\ell_{cs}^2}M(x_k-x^*)+\frac{L\alpha_k^2}{\theta\ell_{cs}^2}(A\|x^*\|_c+B)^2$.
\end{lemma}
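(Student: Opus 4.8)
The plan is to bound the squared $s$-norm $\|F(x_k,Y_k)-x_k+w_k\|_s^2$ appearing in $T_4=\frac{L\alpha_k^2}{2\theta}\|F(x_k,Y_k)-x_k+w_k\|_s^2$, by showing that it grows at most linearly in $M(x_k-x^*)$ plus a constant. First I would pass from the $s$-norm to the $c$-norm using the norm-equivalence $\|\cdot\|_s\leq \frac{1}{\ell_{cs}}\|\cdot\|_c$, which reduces the task to controlling $\|F(x_k,Y_k)-x_k+w_k\|_c$ and accounts for the factor $\ell_{cs}^{-2}$ in the target bound.

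Next I would apply the triangle inequality to split this into $\|F(x_k,Y_k)\|_c+\|x_k\|_c+\|w_k\|_c$ and bound each summand separately. For the first term, writing $F(x_k,Y_k)=[F(x_k,Y_k)-F(\bm{0},Y_k)]+F(\bm{0},Y_k)$ and invoking both parts of Assumption \ref{as:F} gives $\|F(x_k,Y_k)\|_c\leq A_1\|x_k\|_c+B_1$; the third term is handled directly by Assumption \ref{as:noise_w}(2), which gives $\|w_k\|_c\leq A_2\|x_k\|_c+B_2$. Collecting these and using the definitions $A=A_1+A_2+1$ and $B=B_1+B_2$ yields the clean linear-growth bound $\|F(x_k,Y_k)-x_k+w_k\|_c\leq A\|x_k\|_c+B$.

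To bring the Lyapunov function into play, I would recenter about the fixed point via $\|x_k\|_c\leq \|x_k-x^*\|_c+\|x^*\|_c$, so that $A\|x_k\|_c+B\leq A\|x_k-x^*\|_c+(A\|x^*\|_c+B)$. Squaring and applying $(a+b)^2\leq 2a^2+2b^2$ separates the iterate-dependent term from the constant term, and then the comparison $\|\cdot\|_c\leq u_{cm}\|\cdot\|_m$ from Proposition \ref{prop:Moreau}(3) together with $M(x)=\frac{1}{2}\|x\|_m^2$ converts $\|x_k-x^*\|_c^2$ into $2u_{cm}^2 M(x_k-x^*)$. Substituting everything back through the prefactor $\frac{L\alpha_k^2}{2\theta\ell_{cs}^2}$ and simplifying the numerical constants should reproduce exactly the stated bound.

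There is no substantial obstacle here: the argument is a deterministic, pointwise chain of triangle inequalities and norm comparisons, valid for every $k\geq 0$ and every sample path (no expectation, conditioning, or mixing is needed, in contrast to the analysis of $T_3$). The only thing requiring care is the bookkeeping, namely applying the norm-equivalence constants $\ell_{cs}$ and $u_{cm}$ in the correct direction and tracking the factors of $2$ introduced by the $(a+b)^2$ splitting, so that the final constants collapse to exactly $\frac{2LA^2u_{cm}^2}{\theta\ell_{cs}^2}$ and $\frac{L}{\theta\ell_{cs}^2}$.
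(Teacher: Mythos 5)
Your proposal is correct and follows essentially the same route as the paper's proof: pass from $\|\cdot\|_s$ to $\|\cdot\|_c$ via $\ell_{cs}\|\cdot\|_s\leq\|\cdot\|_c$, use the triangle inequality with Assumptions \ref{as:F} and \ref{as:noise_w} to get the linear-growth bound $A\|x_k\|_c+B$, recenter at $x^*$, apply $(a+b)^2\leq 2(a^2+b^2)$, and convert $\|x_k-x^*\|_c^2$ to $2u_{cm}^2M(x_k-x^*)$ via Proposition \ref{prop:Moreau}. The constants collapse exactly as you describe, so the argument is complete.
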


Now we have control on all the error terms $T_1$ to $T_4$. Using them in Eq. (\ref{eq:composition1}), and we obtain the following result. The proof is presented in Appendix \ref{pf:le:recursion}
\begin{lemma}\label{le:recursion}
	The following inequality holds for all $k\geq t_k$:
	\begin{align*}
		\mathbb{E}[M(x_{k+1}-x^*)]\leq\;&
		\left(1-2\left(1-\beta \frac{u_{cm}}{\ell_{cm}}\right)\alpha_k+\frac{114LA^2u_{cm}^2\alpha_k\alpha_{k-t_k,k-1}}{\theta\ell_{cs}^2}\right)\mathbb{E}[M(x_k-x^*)]\\
		&+\frac{57L\alpha_k\alpha_{k-t_k,k-1}}{\theta\ell_{cs}^2}(A\|x^*\|_c+B)^2.
	\end{align*}
\end{lemma}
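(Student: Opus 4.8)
The plan is to assemble Lemma \ref{le:recursion} directly from the one-step decomposition (\ref{eq:composition1}), substituting the four bounds already derived for $T_1$ through $T_4$ and then collecting coefficients. First I would take expectations on both sides of (\ref{eq:composition1}), giving $\mathbb{E}[M(x_{k+1}-x^*)] \leq \mathbb{E}[M(x_k-x^*)] + \mathbb{E}[T_1] + \mathbb{E}[T_2] + \mathbb{E}[T_3] + \mathbb{E}[T_4]$, and then insert each piece. For $T_1$ I invoke Lemma \ref{le:T1} to supply the negative drift $-2(1-\beta u_{cm}/\ell_{cm})\alpha_k M(x_k-x^*)$; for $T_2$ I use the fact established in Step Two that $\mathbb{E}[T_2]=0$ by the martingale-difference property of $\{w_k\}$ (Assumption \ref{as:noise_w}) and the tower rule; for $\mathbb{E}[T_3]$ and $T_4$ I substitute the bounds of Lemma \ref{le:T3} and Lemma \ref{le:T4} respectively.

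The next step is pure bookkeeping: I gather the terms that multiply $\mathbb{E}[M(x_k-x^*)]$ and separately the additive constants. Collecting the $M(x_k-x^*)$ coefficients yields $1 - 2(1-\beta u_{cm}/\ell_{cm})\alpha_k + \frac{112LA^2u_{cm}^2\alpha_k\alpha_{k-t_k,k-1}}{\theta\ell_{cs}^2} + \frac{2LA^2u_{cm}^2\alpha_k^2}{\theta\ell_{cs}^2}$, where the factor $112$ comes from $T_3$ and the factor $2$ from $T_4$. Likewise, the additive constants sum to $\frac{56L\alpha_k\alpha_{k-t_k,k-1}}{\theta\ell_{cs}^2}(A\|x^*\|_c+B)^2 + \frac{L\alpha_k^2}{\theta\ell_{cs}^2}(A\|x^*\|_c+B)^2$.

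The only nontrivial observation is how to fold the stray $\alpha_k^2$ factors coming from $T_4$ into the $\alpha_k\alpha_{k-t_k,k-1}$ form used for the other terms, so as to obtain the clean coefficients $114$ and $57$. Since Condition \ref{con1:stepsize} requires $\{\alpha_k\}$ to be non-increasing, for $k \geq t_k$ (with $t_k \geq 1$, which is the regime of interest as $t_k \to \infty$ when $\alpha_k \to 0$) I have $\alpha_k \leq \alpha_{k-1} \leq \sum_{j=k-t_k}^{k-1}\alpha_j = \alpha_{k-t_k,k-1}$, hence $\alpha_k^2 \leq \alpha_k\alpha_{k-t_k,k-1}$. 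Applying this bound upgrades the $T_4$ contributions so that the coefficient of $\mathbb{E}[M(x_k-x^*)]$ is at most $1 - 2(1-\beta u_{cm}/\ell_{cm})\alpha_k + \frac{114LA^2u_{cm}^2\alpha_k\alpha_{k-t_k,k-1}}{\theta\ell_{cs}^2}$ (because $112+2=114$), and the additive constant is at most $\frac{57L\alpha_k\alpha_{k-t_k,k-1}}{\theta\ell_{cs}^2}(A\|x^*\|_c+B)^2$ (because $56+1=57$), which is exactly the claimed inequality.

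I do not expect a genuine obstacle: all the analytical work is front-loaded into Lemmas \ref{le:T1}, \ref{le:T3}, and \ref{le:T4}, so this step is essentially arithmetic. The one point demanding care is the monotonicity estimate $\alpha_k^2 \leq \alpha_k\alpha_{k-t_k,k-1}$, which is what licenses treating the discretization error $T_4$ as being of the same order $\alpha_k\alpha_{k-t_k,k-1}$ as the Markovian-noise error $T_3$; without it the final bound would awkwardly mix $\alpha_k^2$ and $\alpha_k\alpha_{k-t_k,k-1}$ terms and obstruct the recursion used later to prove Theorem \ref{thm:main}.
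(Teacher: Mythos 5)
Your proposal is correct and follows essentially the same route as the paper: substitute Lemma \ref{le:T1}, the martingale identity $\mathbb{E}[T_2]=0$, Lemma \ref{le:T3}, and Lemma \ref{le:T4} into Eq. (\ref{eq:composition1}) and collect coefficients. The only step the paper leaves implicit---absorbing the $\alpha_k^2$ terms from $T_4$ via $\alpha_k^2\leq \alpha_k\alpha_{k-t_k,k-1}$ (valid since $\{\alpha_k\}$ is non-increasing and $t_k\geq 1$) to turn $112+2$ into $114$ and $56+1$ into $57$---is exactly the one you spell out, so your write-up matches the paper's argument with that detail made explicit.
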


Note that Lemma \ref{le:recursion} provides the desired one-step contractive inequality. We next repeatedly use Lemma \ref{le:recursion} to derive finite-sample convergence bounds of Algorithm (\ref{algo:sa}). Using the constants $\{\varphi_i\}_{1\leq i\leq 3}$ defined in Eq. (\ref{eq:def:constants}), then Lemma \ref{le:recursion} reads:
\begin{align*}
	\mathbb{E}[M(x_{k+1}-x^*)]\leq
	\left(1-2\varphi_2\alpha_k+\varphi_3 A^2\alpha_k\alpha_{k-t_k,k-1}\right)\mathbb{E}[M(x_k-x^*)]+\frac{\varphi_3\alpha_k\alpha_{k-t_k,k-1}}{2u_{cm}^2}(A\|x^*\|_c+B)^2.
\end{align*}
Since $\alpha_{k-t_k,k-1}\leq \varphi_2/(\varphi_3A^2)$ for all $k\geq K$ (cf. Condition \ref{con1:stepsize}), we have by the previous inequality that
\begin{align*}
	\mathbb{E}[M(x_{k+1}-x^*)]\leq \left(1-\varphi_2\alpha_k\right)\mathbb{E}[M(x_{k}-x^*)]+\frac{\varphi_3\alpha_k\alpha_{k-t_k,k-1}}{2u_{cm}^2}(A\|x^*\|_c+B)^2
\end{align*}
for all $k\geq K$. Recursively using the previous inequality and we have for any $k\geq K$:
\begin{align*}
	&\mathbb{E}[\|x_k-x^*\|_c^2]\\
	\leq \;&2u_{cm}^2\mathbb{E}[M(x_k-x^*)]\tag{Proposition \ref{prop:Moreau}}\nonumber\\
	\leq\;& 2u_{cm}^2\mathbb{E}[M(x_K-x^*)]\prod_{j=K}^{k-1}(1-\varphi_2\alpha_j)+\varphi_3(A\|x^*\|_c+B)^2\sum_{i=K}^{k-1}\alpha_i\alpha_{i-t_i,i-1}\prod_{j=i+1}^{k-1}(1-\varphi_2\alpha_j)\nonumber\\
	\leq\;& \frac{u_{cm}^2}{\ell_{cm}^2}\mathbb{E}[\|x_K-x^*\|_c^2]\prod_{j=K}^{k-1}(1-\varphi_2\alpha_j)+\varphi_3(A\|x^*\|_c+B)^2\sum_{i=K}^{k-1}\alpha_i\alpha_{i-t_i,i-1}\prod_{j=i+1}^{k-1}(1-\varphi_2\alpha_j)\tag{Proposition \ref{prop:Moreau}}\\
	=\;& \varphi_1\mathbb{E}[\|x_K-x^*\|_c^2]\prod_{j=K}^{k-1}(1-\varphi_2\alpha_j)+\varphi_3c_2\sum_{i=K}^{k-1}\alpha_i\alpha_{i-t_i,i-1}\prod_{j=i+1}^{k-1}(1-\varphi_2\alpha_j).
\end{align*}
According to Condition \ref{con1:stepsize}, we also have $\alpha_{0,k-1}\leq 1/(4A)$ for any $k\in [0,K]$. Using Corollary \ref{co:difference} one more time and we have for any $k\in [0,K]$:
\begin{align*}
	\mathbb{E}[\|x_k-x^*\|_c^2]\leq \mathbb{E}[(\|x_k-x_0\|_c+\|x_0-x^*\|_c)^2]\leq  \left(\|x_0-x^*\|_c+\|x_0\|_c+\frac{B}{A}\right)^2=c_1.
\end{align*}
This proves Theorem \ref{thm:main} (1). Since the previous inequality implies $\mathbb{E}[\|x_K-x^*\|_c^2]\leq c_1$, we obtain for all $k\geq K$:
\begin{align}\label{eq:final}
	\mathbb{E}[\|x_k-x^*\|_c^2]\leq   \varphi_1c_1\prod_{j=K}^{k-1}(1-\varphi_2\alpha_j)+\varphi_3c_2\sum_{i=K}^{k-1}\alpha_i\alpha_{i-t_i,i-1}\prod_{j=i+1}^{k-1}(1-\varphi_2\alpha_j).
\end{align}
This proves Theorem \ref{thm:main} (2). 

\subsection{Finite-Sample Convergence Bounds for Using Various Stepsizes}\label{ap:pf:sa:stepsizes}

We next proceed to prove Theorem \ref{thm:sa} by evaluating the convergence bounds in Theorem \ref{thm:main} when the stepsize sequence is chosen by $\alpha_k=\frac{\alpha}{(k+h)^\xi}$, where $\alpha,h>0$ and $\xi\in (0,1)$. We begin by restating Theorem \ref{thm:sa} in full details.

\begin{theorem}\label{thm:sa1}
	Consider $\{x_k\}$ of Algorithm (\ref{algo:sa}). Suppose that Assumptions \ref{as:F}, \ref{as:barF}, \ref{as:MC} and \ref{as:noise_w} are satisfied. Then we have the following results.
	\begin{enumerate}[(1)]
		\item When $k\in [0,K-1]$, we have $\|x_k-x^*\|_c^2\leq c_1$ almost surely.
		\item When $k\geq K$, we have the following finite-sample convergence bounds.
		\begin{enumerate}[(a)]
			\item 
			Let $\bar{\alpha}\in (0,1)$ be chosen such that $\alpha t_\alpha \leq \min(\frac{\varphi_2}{\varphi_3A^2},\frac{1}{4A})$ for all $\alpha \in (0,\bar{\alpha})$. Then when $\alpha_k\equiv \alpha\in (0,\bar{\alpha})$, we have for all $k\geq t_\alpha$:
			\begin{align*}
				\mathbb{E}[\|x_k-x^*\|_c^2]
				\leq  \varphi_1c_1(1-\varphi_2\alpha)^{k-t_\alpha}+\frac{\varphi_3c_2}{\varphi_2}\alpha t_\alpha.
			\end{align*}
			\item When $\alpha_k=\frac{\alpha}{k+h}$, for any $\alpha>0$, let $\bar{h}$ be chosen such that $\alpha_{0,K-1}\leq\min(\frac{\varphi_2}{\varphi_3A^2},\frac{1}{4A}) $ for all $h\geq \bar{h}$. Then
			\begin{enumerate}[(i)]
				\item When $\alpha<1/\varphi_2$, we have for all $k\geq K$:
				\begin{align*}
					\mathbb{E}[\|x_k-x^*\|_c^2]\leq 
					\varphi_1c_1\left(\frac{K+h}{k+h}\right)^{\varphi_2\alpha}+\frac{8\alpha^2\varphi_3c_2}{1-\varphi_2\alpha}\frac{t_k}{(k+h)^{\varphi_2\alpha}}.
				\end{align*}
				\item When $\alpha=1/\varphi_2$, we have for all $k\geq K$:
				\begin{align*}
					\mathbb{E}[\|x_k-x^*\|_c^2]\leq 
					\varphi_1c_1\frac{K+h}{k+h}+8\alpha^2\varphi_3c_2\frac{t_k\log(k+h)}{k+h}.
				\end{align*}
				\item When $\alpha>1/\varphi_2$, we have for all $k\geq K$:
				\begin{align*}
					\mathbb{E}[\|x_k-x^*\|_c^2]\leq 
					\varphi_1c_1\left(\frac{K+h}{k+h}\right)^{\varphi_2\alpha}+\frac{8e\alpha^2\varphi_3c_2}{\varphi_2\alpha-1}\frac{t_k}{k+h}.
				\end{align*}
			\end{enumerate}
			\item When $\alpha_k=\frac{\alpha}{(k+h)^\xi}$, for any $\xi\in (0,1)$ and $\alpha>0$, let $\bar{h}$ be chosen such that $\bar{h}\geq  \left[2\xi/(\varphi_2\alpha)\right]^{1/(1-\xi)}$ and $\alpha_{0,K-1}\leq\min(\frac{\varphi_2}{\varphi_3A^2},\frac{1}{4A}) $ for any $h\geq \bar{h}$. Then we have for all $k\geq K$:
			\begin{align*}
				\mathbb{E}[\|x_k-x^*\|_c^2]\leq \varphi_1c_1 e^{-\frac{\varphi_2\alpha}{1-\xi}\left((k+h)^{1-\xi}-(K+h)^{1-\xi}\right)}+\frac{4\varphi_3c_2\alpha}{\varphi_2}\frac{t_k}{(k+h)^\xi}.
			\end{align*}
		\end{enumerate}
	\end{enumerate}
\end{theorem}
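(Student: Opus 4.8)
The plan is to obtain Theorem \ref{thm:sa1} as a direct specialization of the general bound in Theorem \ref{thm:main} to the stepsize family $\alpha_k=\alpha/(k+h)^\xi$, so the work is almost entirely the evaluation of the two sums appearing there. Part (1) needs nothing new: it is literally Theorem \ref{thm:main} (1). For part (2), I would first check that each threshold in Condition \ref{con:stepsize} forces Condition \ref{con1:stepsize}, namely $\alpha_{k-t_k,k-1}\leq\min(\varphi_2/(\varphi_3A^2),1/(4A))$. For constant stepsize this reads $\alpha t_\alpha\leq\min(\cdots)$ and defines $\bar\alpha$ (here $t_k\equiv t_\alpha$, so $K=t_\alpha$), while for the diminishing cases it follows once $h\geq\bar h$, since then $\alpha_{k-t_k,k-1}\leq t_k\alpha_{k-t_k}$ is small because $t_k$ is only logarithmic and $\alpha_{k-t_k}$ is already tiny.

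For the bias term I would bound the product $\prod_{j=K}^{k-1}(1-\varphi_2\alpha_j)$ using $1-x\leq e^{-x}$, reducing it to $\exp(-\varphi_2\sum_{j=K}^{k-1}\alpha_j)$, and then evaluate the exponent per regime. Constant stepsize gives $\sum_j\alpha_j=\alpha(k-t_\alpha)$ and hence the geometric factor $(1-\varphi_2\alpha)^{k-t_\alpha}$; the linear case uses $\sum_{j=K}^{k-1}\alpha/(j+h)\geq\alpha\log((k+h)/(K+h))$, yielding $((K+h)/(k+h))^{\varphi_2\alpha}$; and the polynomial case uses the integral estimate $\sum_j\alpha/(j+h)^\xi\geq\frac{\alpha}{1-\xi}((k+h)^{1-\xi}-(K+h)^{1-\xi})$, producing the stretched-exponential factor. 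These are routine once the right comparison inequalities are set up.

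The variance term $\sum_{i=K}^{k-1}\alpha_i\,\alpha_{i-t_i,i-1}\prod_{j=i+1}^{k-1}(1-\varphi_2\alpha_j)$ is the technical heart. I would first write $\alpha_{i-t_i,i-1}\leq t_i\alpha_{i-t_i}$ by monotonicity, then use the lower bound on $h$ to show $\alpha_{i-t_i}\leq 2\alpha_i$ (the stepsize cannot double across a window of length $t_i$ once $i+h$ dominates $t_i$), collapsing the summand to $2t_i\alpha_i^2\prod_{j=i+1}^{k-1}(1-\varphi_2\alpha_j)$. Since $t_i=t_{\alpha_i}$ is nondecreasing and only logarithmic, it can be extracted as $t_k$ over the effective summation range, leaving $\sum_i\alpha_i^2\prod_{j>i}(1-\varphi_2\alpha_j)$ to estimate. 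For constant stepsize this is a geometric series bounded by $1/(\varphi_2\alpha)$, giving the clean $\frac{\varphi_3c_2}{\varphi_2}\alpha t_\alpha$. For the diminishing regimes I would compare the sum with an integral; in the linear case the decay exponent $\varphi_2\alpha$ competes against the $\alpha_i^2\sim(i+h)^{-2}$ factor, and whether the integral is endpoint-dominated or spread out is exactly what triggers the three-way split $\alpha\lessgtr 1/\varphi_2$.

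I expect the main obstacle to be these variance-sum estimates in the diminishing-stepsize regimes — specifically, justifying the extraction of the slowly varying $t_k$ factor, making the integral comparisons tight enough to recover the advertised $\mathcal{O}(\log(k)/k^\xi)$-type rates, and pinning down the explicit constants in each of the linear sub-cases while using the threshold $\bar h\geq[2\xi/(\varphi_2\alpha)]^{1/(1-\xi)}$ at the right place. The bias estimates and the constant-stepsize calculation are mechanical; the delicate part is balancing a polynomially decaying stepsize against a stretched-exponential weight without losing the correct order.
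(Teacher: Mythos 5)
Your proposal follows essentially the same route as the paper's proof: Theorem \ref{thm:main} gives part (1) outright, Condition \ref{con1:stepsize} is verified exactly as you describe (including the key fact $\alpha_{k-t_k}\leq 2\alpha_k$ for $h$ large), and the bias and variance sums are then evaluated per regime with the same $1-x\leq e^{-x}$ and sum-integral comparisons, with the three-way split in the linear case arising exactly as you predict from whether $2-\varphi_2\alpha$ is above, at, or below $1$. The only technical deviation is the polynomial-case variance sum, where the paper sidesteps your Laplace-type endpoint estimate by an induction showing $\sum_{i}\alpha_i^2\prod_{j>i}(1-\varphi_2\alpha_j)\leq \frac{2\alpha}{\varphi_2}(k+h)^{-\xi}$ directly — and this induction is precisely where the threshold $\bar h\geq\left[2\xi/(\varphi_2\alpha)\right]^{1/(1-\xi)}$ enters, as you anticipated it would.
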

\begin{proof}[Proof of Theorem \ref{thm:sa1}]

\begin{enumerate}[(1)]
    \item Theorem \ref{thm:sa1} (1) directly follows from Theorem \ref{thm:main} (1).
    \item 
    \begin{enumerate}[(a)]
        \item When using constant stepsize $\alpha$, it is clear that Condition \ref{con1:stepsize} is satisfied when $\alpha t_\alpha\leq \min(\frac{\varphi_2}{\varphi_3A^2},\frac{1}{4A})$. We next verify the existence of such threshold $\bar{\alpha}$. Note that we have by definition of $t_\alpha$ and Assumption \ref{as:MC} that
	\begin{align*}
		t_\alpha\leq \min\left\{k\geq 0\;:\; C\sigma^k\leq \alpha\right\}=\min\left\{ k\geq 0\;:\; k\geq  \frac{\log(1/\alpha)+\log(C)}{\log(1/\sigma)}\right\}\leq \frac{\log(1/\alpha)+\log(C/\sigma)}{\log(1/\sigma)}.
	\end{align*}
	It follows that $\lim_{\alpha\rightarrow 0}\alpha t_\alpha=0$. Hence there exists $\bar{\alpha}\in (0,1)$ such that Condition \ref{con1:stepsize} is satisfied for all $\alpha\in (0,\bar{\alpha})$, which is stated in Condition \ref{con:stepsize} (1). We next evaluate Eq. (\ref{eq:final}).
	When $\alpha_k\equiv \alpha$, we have for all $k\geq t_\alpha$:
	\begin{align*}
		\mathbb{E}[\|x_k-x^*\|_c^2]&\leq   \varphi_1c_1\prod_{j=t_\alpha}^{k-1}(1-\varphi_2\alpha_j)+\varphi_3c_2\sum_{i=t_\alpha}^{k-1}\alpha_i\alpha_{i-t_i,i-1}\prod_{j=i+1}^{k-1}(1-\varphi_2\alpha_j)\\
		&=\varphi_1c_1(1-\varphi_2\alpha)^{k-t_\alpha}+\varphi_3c_2\sum_{i=t_\alpha}^{k-1}\alpha^2t_\alpha(1-\varphi_2\alpha)^{k-i-1}\\
		&\leq \varphi_1c_1(1-\varphi_2\alpha)^{k-t_\alpha}+\frac{\varphi_3c_2}{\varphi_2}\alpha t_\alpha.
	\end{align*}
	This proves Theorem \ref{thm:sa1} (2) (a).
	\item Consider the case where $\alpha_k=\frac{\alpha}{k+h}$. We first verify the existence of the threshold $\bar{h}$. We begin by comparing $\alpha_{k-t_k}$ with $\alpha_k$. Using Assumption \ref{as:MC} and we have 
	\begin{align*}
		t_k\leq  \frac{\log(k+h)+\log(C/(\sigma\alpha))}{\log(1/\sigma)}.
	\end{align*}
	It follows that 
	\begin{align*}
		\frac{\alpha_k}{\alpha_{k-t_k}}=1-\frac{t_k}{k+h}\rightarrow 1 \text{ as }(k+h)\rightarrow\infty.
	\end{align*}
	Therefore, there exists $\bar{h}_1>0$ such that $\alpha_{k-t_k}\leq 2\alpha_k$ holds for any $k\geq t_k$ when $h\geq \bar{h}_1$. Now consider the requirement stated in Condition \ref{con1:stepsize}. Using the fact that $\{\alpha_k\}$ is non-increasing, we have
	\begin{align*}
		\alpha_{k-t_k,k-1}\leq t_k\alpha_{k-t_k}\leq 2\alpha_kt_k\rightarrow 0\text{ as }(k+h)\rightarrow \infty.
	\end{align*}
	Hence there exists $\bar{h}_2>0$ such that $\alpha_{k-t_k,k-1}\leq \min(\frac{\varphi_2}{\varphi_3A^2},\frac{1}{4A})$ holds for any $k\geq t_k$ when $h\geq \bar{h}_2$. Now choosing $\bar{h}=\max(\bar{h}_1,\bar{h}_2)$, Condition \ref{con1:stepsize} is satisfied. This is stated in Condition \ref{con:stepsize} (2). Furthermore, by construction we have $\alpha_{k-t_k}\leq 2\alpha_k$ for any $k\geq t_k$. We next evaluate the RHS of Eq. (\ref{eq:final}) in the following lemma, whose proof is presented in Appendix \ref{pf:le:rate_linear_diminishing}.
	\begin{lemma}\label{le:rate_linear_diminishing}
		The following inequality hold for all $k\geq K$:
		\begin{align*}
			\mathbb{E}[\|x_k-x^*\|_c^2]\leq\begin{dcases}
				\varphi_1c_1\left(\frac{K+h}{k+h}\right)^{\varphi_2\alpha}+\frac{8\varphi_3c_2\alpha^2}{1-\varphi_2\alpha}\frac{t_k}{(k+h)^{\varphi_2\alpha}},&\alpha<\frac{1}{\varphi_2},\\
				\varphi_1c_1\frac{K+h}{k+h}+8\varphi_3c_2\alpha^2\frac{t_k\log(k+h)}{k+h},&\alpha=\frac{1}{\varphi_2},\\
				\varphi_1c_1\left(\frac{K+h}{k+h}\right)^{\varphi_2\alpha}+\frac{8e\varphi_3c_2\alpha^2}{\varphi_2\alpha-1}\frac{t_k}{k+h},&\alpha>\frac{1}{\varphi_2}.
			\end{dcases}
		\end{align*}
	\end{lemma}
	This proves Theorem \ref{thm:sa1} (2) (b).
	\item Now we consider using $\alpha_k=\frac{\alpha}{(k+h)^\xi}$, where $\xi\in (0,1)$ and $\alpha,h>0$. Using the same line of proof as in the previous section, one can show that for any $\xi\in (0,1)$ and $\alpha>0$, there exists $\bar{h}>0$ such that Condition \ref{con1:stepsize} is satisfied for all $h\geq \bar{h}$. Furthermore, we assume without loss of generality that $\alpha_{k-t_k}\leq 2\alpha_k$ for all $k\geq t_k$ and $\bar{h}\geq  \left[2\xi/(\varphi_2\alpha)\right]^{1/(1-\xi)}$. We next evaluate the RHS of Eq. (\ref{eq:final}) in the following lemma, whose proof is presented in Appendix \ref{pf:le:rate_polynomial_diminishing}.
	\begin{lemma}\label{le:rate_polynomial_diminishing}
		The following inequality hold for all $k\geq K$:
		\begin{align*}
			\mathbb{E}[\|x_k-x^*\|_c^2]\leq
			\varphi_1c_1 \exp\left[-\frac{\varphi_2\alpha}{1-\xi}\left((k+h)^{1-\xi}-(K+h)^{1-\xi}\right)\right]+\frac{4\varphi_3c_2\alpha}{\varphi_2}\frac{t_k}{(k+h)^\xi}.
		\end{align*}
	\end{lemma}
	This proves Theorem \ref{thm:sa1} (2) (c).
    \end{enumerate}
\end{enumerate}
\end{proof}

\subsection{Proof of Technical Lemmas}

\subsubsection{Proof of Lemma \ref{le:constants}}\label{pf:le:constants}
\begin{enumerate}[(1)]
	\item When $\|\cdot\|_c=\|\cdot\|_2$, we choose $\theta=1$ and $g(x)=\frac{1}{2}\|x\|_2^2$. It follows that $L=1$ and $u_{cs}=\ell_{cs}=1$. Therefore, we have by definition (\ref{eq:def:constants}) that $\varphi_1=1$, $\varphi_2=1-\beta$, and $\varphi_3=228$.
	\item Recall the definition of $\{\varphi_i\}_{1\leq  i\leq 3}$ in Eq. (\ref{eq:def:constants}). When $\|\cdot\|_c=\|\cdot\|_\infty$, we choose $\theta=\left(\frac{1+\beta}{2\beta}\right)^2-1$ and $g(x)=\frac{1}{2}\|x\|_p^2$ with $p=2\log(d)$, where $d$ is the dimension of the iterates $x_k$. It follows that $L=p-1\leq 2\log(d)$ \citep{beck2017first}, $u_{cs}=1$, and $\ell_{cs}=1/d^{1/p}=1/\sqrt{e}$. Therefore, we have
	\begin{align*}
		\varphi_1&=\frac{1+\theta u_{cs}^2}{1+\theta \ell_{cs}^2}=\frac{1+\theta}{1+\theta/\sqrt{e}}\leq \sqrt{e}\leq3,\\
		\varphi_2&=1-\beta\varphi_1^{1/2}\geq  1-\beta\frac{1+\beta}{2\beta}=\frac{1-\beta}{2},\\
		\varphi_3&=\frac{114L(1+\theta u_{cs}^2)}{\theta \ell_{cs}^2}\leq \frac{228e\log(d)(1+\theta)}{\theta}\leq \frac{456e\log(d)}{1-\beta}.
	\end{align*}
\end{enumerate}

\subsubsection{Proof of Lemma \ref{le:difference}}\label{pf:le:difference}
We first show that under Assumption \ref{as:F}, the size of $\|F(x,y)\|_c$ and $\|\bar{F}(x)\|_c$ can grow at most affinely in terms of $\|x\|_c$. Using Triangle inequality, we have 
\begin{align*}
	\|F(x,y)\|_c-\|F(\bm{0},y)\|_c\leq \|F(x,y)-F(\bm{0},y)\|_c\leq A_1\|x\|_c,\quad\forall\;x\in\mathbb{R}^d,y\in\mathcal{Y},
\end{align*}
where the last inequality follows from Assumption \ref{as:F}. 
It follows that
\begin{align*}
	\|F(x,y)\|_c\leq A_1\|x\|_c+\|F(\bm{0},y)\|_c\leq A_1\|x\|_c+B_1.
\end{align*}
Furthermore, we have by Jensen's inequality and the convexity of norms that 
\begin{align*}
	\|\bar{F}(x)\|_c=\|\mathbb{E}_{Y\sim \mu}[F(x,Y)]\|_c\leq \mathbb{E}_{Y\sim \mu}[\|F(x,Y)\|_c]\leq A_1\|x\|_c+B_1.
\end{align*}
The previous two inequalities will be frequently used in the derivation here after. Now we proceed to prove Lemma \ref{le:difference}. For any $k\in [k_1,k_2-1]$, using Triangle inequality, we have
\begin{align}\label{eq:5}
	\|x_{k+1}\|_c-\|x_k\|_c
	\leq\;& 	
	\|x_{k+1}-x_k\|_c\nonumber\\
	=\;&\alpha_k \|F(x_k,Y_k)-x_k+w_k\|_c\nonumber\\
	\leq \;&\alpha_k( \|F(x_k,Y_k)\|_c+\|x_k\|_c+\|w_k\|_c)\nonumber\\
	\leq \;&\alpha_k( A_1\|x_k\|_c+B_1+\|x_k\|_c+A_2\|x_k\|_c+B_2)\tag{Assumptions \ref{as:F}, \ref{as:noise_w}}.\\
	\leq \;&\alpha_k( (A_1+A_2+1)\|x_k\|_c+B_1+B_2)\nonumber\\
	= \;&\alpha_k(A\|x_k\|_c+B)\label{eq:10}.
\end{align}
Note that the previous inequality is equivalent to
\begin{align*}
	\|x_{k+1}\|_c+\frac{B}{A}\leq (1+A\alpha_k)\left(\|x_{k}\|_c+\frac{B}{A}\right),
\end{align*}
which implies for all $k\in [k_1,k_2]$:
\begin{align*}
	\|x_k\|_c\leq \prod_{j=k_1}^{k-1}(1+A\alpha_j)\left(\|x_{k_1}\|_c+\frac{B}{A}\right)-\frac{B}{A}.
\end{align*}
Using the fact that $1+x\leq  e^x\leq 1+2x$ for all $x\in [0,1/2]$, we have when $\alpha_{k_1,k_2-1}\leq \frac{1}{4A}$:
\begin{align*}
	\prod_{j=k_1}^{k-1}(1+A\alpha_j)\leq \exp\left(A\alpha_{k_1,k-1}\right)\leq 1+2A\alpha_{k_1,k-1}.
\end{align*}
It follows that for all $k\in [k_1,k_2]$:
\begin{align*}
	\|x_k\|_c\leq (1+2A\alpha_{k_1,k-1})\|x_{k_1}\|_c+2B\alpha_{k_1,k-1}.
\end{align*}
Using the previous inequality in Eq. (\ref{eq:10}) and we have for any $k\in [k_1,k_2-1]$:
\begin{align*}
	\|x_{k+1}-x_k\|_c&\leq \alpha_k(A\|x_k\|_c+B)\\
	&\leq \alpha_k A(1+2A\alpha_{k_1,k-1})\|x_{k_1}\|_c+2\alpha_kAB\alpha_{k_1,k-1}\\
	&\leq 2\alpha_k (A\|x_{k_1}\|_c+B).\tag{$\alpha_{k_1,k-1}\leq\frac{1}{4A}$}
\end{align*}
Hence, we have for any $k\in [k_1,k_2]$:
\begin{align*}
	\|x_k-x_{k_1}\|_c\leq \sum_{j=k_1}^{k-1}\|x_{j+1}-x_j\|_c
	\leq 2\sum_{j=k_1}^{k-1}\alpha_j(A\|x_{k_1}\|_c+B)
	=2\alpha_{k_1,k-1}(A\|x_{k_1}\|_c+B).
\end{align*}
Since $\alpha_{k_1,k-1}\leq \alpha_{k_1,k_2-1}$ when $k\in [k_1,k_2]$, we obtain the first claimed inequality:
\begin{align*}
	\|x_k-x_{k_1}\|_c\leq 2\alpha_{k_1,k_2-1}(A\|x_{k_1}\|_c+B),\quad \forall \;k\in [k_1,k_2].
\end{align*}
Now for the second claimed inequality, since
\begin{align*}
	\|x_{k_2}-x_{k_1}\|_c&\leq 2\alpha_{k_1,k_2-1}(A\|x_{k_1}\|_c+B)\\
	&\leq 2\alpha_{k_1,k_2-1}(A\|x_{k_1}-x_{k_2}\|_c+A\|x_{k_2}\|_c+B)\\
	&\leq \frac{1}{2}\|x_{k_2}-x_{k_1}\|_c+ 2\alpha_{k_1,k_2-1}(A\|x_{k_2}\|_c+B),
\end{align*}
we have $\|x_{k_2}-x_{k_1}\|_c\leq   4\alpha_{k_1,k_2-1}(A\|x_{k_2}\|_c+B)$. Therefore, we have for any $k\in [k_1,k_2]$:
\begin{align*}
	\|x_k-x_{k_1}\|_c&\leq 2\alpha_{k_1,k_2-1}(A\|x_{k_1}\|_c+B)\\
	&\leq 2\alpha_{k_1,k_2-1}(A\|x_{k_1}-x_{k_2}\|_c+A\|x_{k_2}\|_c+B)\\
	&\leq 2\alpha_{k_1,k_2-1}(4A\alpha_{k_1,k_2-1}(A\|x_{k_2}\|_c+B)+A\|x_{k_2}\|_c+B)\\\
	&\leq 4\alpha_{k_1,k_2-1}(A\|x_{k_2}\|_c+B)\tag{$\alpha_{k_1,k_2-1}\leq \frac{1}{4A}$},
\end{align*}
which is the second claimed inequality.

\subsubsection{Proof of Lemma \ref{le:T31T32T33}}\label{pf:le:T31T32T33}
\begin{enumerate}[(1)]
	\item For the term $T_{31}$, using H\"{o}lder's inequality and we have
	\begin{align*}
		T_{31}&=\langle \nabla M(x_k-x^*)-\nabla M(x_{k-t_k}-x^*),F(x_k,Y_k)-\bar{F}(x_k)\rangle\\
		&\leq \|\nabla M(x_k-x^*)-\nabla M(x_{k-t_k}-x^*)\|_s^*\|F(x_k,Y_k)-\bar{F}(x_k)\|_s\\
		&\leq \frac{1}{\ell_{cs}}\|\nabla M(x_k-x^*)-\nabla M(x_{k-t_k}-x^*)\|_s^*\|F(x_k,Y_k)-\bar{F}(x_k)\|_c,\tag{$\ell_{cs}\|\cdot\|_s\leq \|\cdot\|_c$}
	\end{align*}
	where $\|\cdot\|_s^*$ denotes the dual norm of $\|\cdot\|_s$. We first control the term $\|\nabla M(x_k-x^*)-\nabla M(x_{k-t_k}-x^*)\|_s^*$. Recall that an equivalent definition of a convex function $h(x)$ been $L$ -- smooth with respect to norm $\|\cdot\|$ is that
	\begin{align*}
		\|\nabla h(x_1)-\nabla h(x_2)\|_*\leq L\|x_1-x_2\|,\quad \forall\;x_1,x_2,
	\end{align*}
	where $\|\cdot\|_*$ is the dual norm of $\|\cdot\|$ \citep{beck2017first}. Therefore, since $M(x)$ is $\frac{L}{\theta}$-smooth with respect to $\|\cdot\|_s$, we have
	\begin{align}\label{eq:1}
		\|\nabla M(x_k-x^*)-\nabla M(x_{k-t_k}-x^*)\|_s^*
		\leq\;& \frac{L}{\theta}\|x_k-x_{k-t_k}\|_s\nonumber\\
		\leq\;& \frac{L}{\theta\ell_{cs}}\|x_k-x_{k-t_k}\|_c\nonumber\\
		\leq\;& \frac{4L\alpha_{k-t_k,k-1}}{\theta\ell_{cs}}(A\|x_{k}-x^*\|_c+A\|x^*\|_c+B),
	\end{align}
	where the last line follows from Lemma \ref{le:difference} and Triangle inequality.
	
	We next control the term $\|F(x_k,Y_k)-\bar{F}(x_k)\|_c$. Using Assumptions \ref{as:F}, \ref{as:barF}, and the fact that $\bar{F}(x^*)=x^*$, we  have
	\begin{align*}
		\|F(x_k,Y_k)-\bar{F}(x_k)\|_c&=\|F(x_k,Y_k)-\bar{F}(x_k)+\bar{F}(x^*)-x^*\|_c\\
		&\leq \|F(x_k,Y_k)\|_c+\|\bar{F}(x_k)-\bar{F}(x^*)\|_c+\|x^*\|_c\\
		&\leq A_1\|x_k\|_c+B_1+\|x_k-x^*\|_c+\|x^*\|_c\\
		&\leq (A_1+1)\|x_k-x^*\|_c+(A_1+1)\|x^*\|_c+B_1\\
		&\leq A\|x_k-x^*\|_c+A\|x^*\|_c+B.
	\end{align*}
	It follows that
	\begin{align*}
		T_{31}&\leq \frac{1}{\ell_{cs}}\|\nabla M(x_k-x^*)-\nabla M(x_{k-t_k}-x^*)\|_s^*\|F(x_k,Y_k)-\bar{F}(x_k)\|_c\\
		&\leq \frac{4L\alpha_{k-t_k,k-1}}{\theta\ell_{cs}^2}(A\|x_{k}-x^*\|_c+A\|x^*\|_c+B)^2\\
		&\leq \frac{8L\alpha_{k-t_k,k-1}}{\theta\ell_{cs}^2}A^2\|x_{k}-x^*\|_c^2+\frac{8L\alpha_{k-t_k,k-1}}{\theta\ell_{cs}^2}(A\|x^*\|_c+B)^2\tag*{$(a+b)^2\leq 2(a^2+b^2)$}\\
		&\leq \frac{16LA^2u_{cm}^2\alpha_{k-t_k,k-1}}{\theta\ell_{cs}^2}M(x_{k}-x^*)+\frac{8L\alpha_{k-t_k,k-1}}{\theta\ell_{cs}^2}(A\|x^*\|_c+B)^2.
	\end{align*}
	\item Consider the term $T_{32}$. Using H\"{o}lder's inequality and we have
	\begin{align*}
		T_{32}&=\langle \nabla M(x_{k-t_k}-x^*),F(x_k,Y_k)-F(x_{k-t_k},Y_k)+\bar{F}(x_{k-t_k})-\bar{F}(x_k)\rangle\\
		&\leq \|\nabla M(x_{k-t_k}-x^*)\|_s^* \|F(x_k,Y_k)-F(x_{k-t_k},Y_k)+\bar{F}(x_{k-t_k})-\bar{F}(x_k)\|_s\tag{H\"{o}lder's inequality}\\
		&\leq \frac{1}{\ell_{cs}}\|\nabla M(x_{k-t_k}-x^*)\|_s^* \|F(x_k,Y_k)-F(x_{k-t_k},Y_k)+\bar{F}(x_{k-t_k})-\bar{F}(x_k)\|_c.
	\end{align*}
	For the term $\|\nabla M(x_{k-t_k}-x^*)\|_s^*$, we have
	\begin{align}
		\|\nabla M(x_{k-t_k}-x^*)\|_s^*&=\|\nabla M(x_{k-t_k}-x^*)-\nabla M(x^*-x^*)\|_s^*\nonumber\\
		&\leq \frac{L}{\theta}\|x_{k-t_k}-x^*\|_s\tag{equivalent definition of smoothness}\nonumber\\
		&\leq \frac{L}{\theta\ell_{cs}}\|x_{k-t_k}-x^*\|_c\nonumber\\
		&\leq \frac{L}{\theta\ell_{cs}}(\|x_{k-t_k}-x_k\|_c+\|x_k-x^*\|_c)\nonumber\\
		&\leq \frac{2L}{\theta\ell_{cs}}\left(\|x_k-x^*\|_c+\|x^*\|_c+\frac{B}{A}\right)\label{eq:3},
	\end{align}
	where the last line follow from Corollary \ref{co:difference}. For the term $\|F(x_k,Y_k)-F(x_{k-t_k},Y_k)+\bar{F}(x_{k-t_k})-\bar{F}(x_k)\|_c$, using Assumptions \ref{as:F} and \ref{as:barF} and we obtain
	\begin{align*}
		&\|F(x_k,Y_k)-F(x_{k-t_k},Y_k)+\bar{F}(x_{k-t_k})-\bar{F}(x_k)\|_c\\
		\leq \;&\|F(x_k,Y_k)-F(x_{k-t_k},Y_k)\|_c+\|\bar{F}(x_{k-t_k})-\bar{F}(x_k)\|_c\\
		\leq \;&2A_1\|x_k-x_{k-t_k}\|_c\\
		\leq \;&2A\|x_k-x_{k-t_k}\|_c\\
		\leq \;&8A\alpha_{k-t_k,k-1}(A\|x_{k}-x^*\|_c+A\|x^*\|_c+B),
	\end{align*}
	where in the last line we used Lemma \ref{le:difference}. It follows that
	\begin{align*}
		T_{32}&\leq \frac{1}{\ell_{cs}}\|\nabla M(x_{k-t_k}-x^*)\|_s^* \|F(x_k,Y_k)-F(x_{k-t_k},Y_k)+\bar{F}(x_{k-t_k})-\bar{F}(x_k)\|_c\\
		&\leq \frac{16L\alpha_{k-t_k,k-1}}{\theta\ell_{cs}^2}(A\|x_k-x^*\|_c+A\|x^*\|_c+B)^2\\
		&\leq \frac{32LA^2\alpha_{k-t_k,k-1}}{\theta\ell_{cs}^2}\|x_k-x^*\|_c^2+\frac{32L\alpha_{k-t_k,k-1}}{\theta\ell_{cs}^2}(A\|x^*\|_c+B)^2\\
		&\leq \frac{64LA^2u_{cm}^2\alpha_{k-t_k,k-1}}{\theta\ell_{cs}^2}M(x_k-x^*)+\frac{32L\alpha_{k-t_k,k-1}}{\theta\ell_{cs}^2}(A\|x^*\|_c+B)^2.
	\end{align*}
	\item Consider the term $T_{33}$. We first take expectation conditioning on $x_{k-t_k}$ and $Y_{k-t_k}$ to obtain
	\begin{align*}
		&\mathbb{E}[T_{33}\mid x_{k-t_k},Y_{k-t_k}]\\
		=\;&\langle \nabla M(x_{k-t_k}-x^*),\mathbb{E}[F(x_{k-t_k},Y_k)\mid x_{k-t_k},Y_{k-t_k}]-\bar{F}(x_{k-t_k})\rangle\\
		\leq\;& \|\nabla M(x_{k-t_k}-x^*)\|_s^*\|\mathbb{E}[F(x_{k-t_k},Y_k)\mid x_{k-t_k},Y_{k-t_k}]-\bar{F}(x_{k-t_k})\|_s\\
		\leq\;& \frac{1}{\ell_{cs}}\|\nabla M(x_{k-t_k}-x^*)\|_s^*\|\mathbb{E}[F(x_{k-t_k},Y_k)\mid x_{k-t_k},Y_{k-t_k}]-\bar{F}(x_{k-t_k})\|_c.
	\end{align*}
	For the term $\|\nabla M(x_{k-t_k}-x^*)\|_s^*$, we have from Eq. (\ref{eq:3}) that
	\begin{align*}
		\|\nabla M(x_{k-t_k}-x^*)\|_s^*\leq \frac{2L}{\theta\ell_{cs}}\left(\|x_k-x^*\|_c+\|x^*\|_c+\frac{B}{A}\right).
	\end{align*}
	For the term $\|\mathbb{E}[F(x_{k-t_k},Y_k)\mid x_{k-t_k},Y_{k-t_k}]-\bar{F}(x_{k-t_k})\|_c$, using the geometric mixing of the Markov chain $\{Y_k\}$ (cf. Assumption \ref{as:MC}), we have
	\begin{align*}
		&\|\mathbb{E}[F(x_{k-t_k},Y_k)\mid x_{k-t_k},Y_{k-t_k}]-\bar{F}(x_{k-t_k})\|_c\\
		=\;&\left\|\mathbb{E}[F(x_{k-t_k},Y_k)\mid x_{k-t_k},Y_{k-t_k}]-\mathbb{E}_{Y\sim\mu}[F(x_{k-t_k},Y)]\right\|_c\\
		=\;&\left\|\sum_{y\in\mathcal{Y}}\left(P^{t_k}(Y_{k-t_k},y)-\mu(y)\right)F(x_{k-t_k},y)\right\|_c\\
		\leq \;&\sum_{y\in\mathcal{Y}}\left|P^{t_k}(Y_{k-t_k},y)-\mu(y)\right|\left\|F(x_{k-t_k},y)\right\|_c\\
		\leq \;&2\max_{y_0\in\mathcal{Y}}\|P^{t_k}(y_0,\cdot)-\mu(\cdot)\|_{\text{TV}}(A_1\|x_{k-t_k}\|_c+B_1)\\
		\leq \;&2C\sigma^{t_k}(A_1\|x_k-x_{k-t_k}\|_c+A_1\|x_k\|_c+B_1)\tag{Assumption \ref{as:MC}}\\
		\leq \;&2\alpha_k(A_1(\|x_{k}\|_c+B/A)+A_1\|x_k\|_c+B_1)\tag{Definition of $t_k$ and Corollary \ref{co:difference}}\\
		\leq \;&4\alpha_k(A\|x_k-x^*\|_c+A\|x^*\|_c+B).
	\end{align*}
	It follows that
	\begin{align*}
		\mathbb{E}[T_{33}\mid x_{k-t_k},Y_{k-t_k}]&\leq \frac{1}{\ell_{cs}}\|\nabla M(x_{k-t_k}-x^*)\|_s^*\|\mathbb{E}[F(x_{k-t_k},Y_k)\mid x_{k-t_k},Y_{k-t_k}]-\bar{F}(x_{k-t_k})\|_c\\
		&\leq \frac{8L\alpha_k}{\theta\ell_{cs}^2}(A\|x_k-x^*\|_c+A\|x^*\|_c+B)^2\\
		&\leq \frac{16L\alpha_k}{\theta\ell_{cs}^2}A^2\|x_k-x^*\|_c^2+\frac{16L\alpha_k}{\theta\ell_{cs}^2}(A\|x^*\|_c+B)^2\\
		&\leq \frac{32LA^2u_{cm}^2\alpha_k}{\theta\ell_{cs}^2}M(x_k-x^*)+\frac{16L\alpha_k}{\theta\ell_{cs}^2}(A\|x^*\|_c+B)^2.
	\end{align*}
	Taking the total expectation on both sides of the previous inequality yields the desired result.
\end{enumerate}

\subsubsection{Proof of Lemma \ref{le:T4}}\label{pf:le:T4}
Using Proposition \ref{prop:Moreau} (2), Assumption \ref{as:F}, and Assumption \ref{as:noise_w} (2), we have
\begin{align*}
	T_4&=\frac{L\alpha_k^2}{2\theta}\|F(x_k,Y_k)-x_k+w_k\|_s^2\\
	&\leq \frac{L\alpha_k^2}{2\theta\ell_{cs}^2}\|F(x_k,Y_k)-x_k+w_k\|_c^2\tag{Proposition \ref{prop:Moreau} (3)}\\
	&\leq \frac{L\alpha_k^2}{2\theta\ell_{cs}^2}(\|F(x_k,Y_k)\|_c+\|x_k\|_c+\|w_k\|_c)^2\\
	&\leq \frac{L\alpha_k^2}{2\theta\ell_{cs}^2}(A\|x_k\|_c+B)^2\tag{Assumptions \ref{as:F} and \ref{as:noise_w}}\\
	&\leq \frac{L\alpha_k^2}{2\theta\ell_{cs}^2}(A\|x_k-x^*\|_c+A\|x^*\|_c+B)^2\\
	&\leq \frac{L\alpha_k^2}{\theta\ell_{cs}^2}A^2\|x_k-x^*\|_c^2+\frac{L\alpha_k^2}{\theta\ell_{cs}^2}(A\|x^*\|_c+B)^2\\
	&\leq \frac{2LA^2u_{cm}^2\alpha_k^2}{\theta\ell_{cs}^2}M(x_k-x^*)+\frac{L\alpha_k^2}{\theta\ell_{cs}^2}(A\|x^*\|_c+B)^2.
\end{align*}

\subsubsection{Proof of Lemma \ref{le:recursion}}\label{pf:le:recursion}
Using the constants $\{\varphi_i\}_{1\leq i\leq }$ (cf. Eq. (\ref{eq:def:constants})) and Lemmas \ref{le:T1}, \ref{le:T3}, and \ref{le:T4} in Eq. (\ref{eq:composition1}) and we have for all $k\geq t_k$:
\begin{align*}
	\mathbb{E}[M(x_{k+1}-x^*)]\leq\;& \left(1-2\varphi_2\alpha_k+\frac{114LA^2u_{cm}^2\alpha_k\alpha_{k-t_k,k-1}}{\theta\ell_{cs}^2}\right)\mathbb{E}[M(x_k-x^*)]\\
	&+\frac{57L\alpha_k\alpha_{k-t_k,k-1}}{\theta\ell_{cs}^2}(A\|x^*\|_c+B)^2\\
	=\;&\left(1-2\varphi_2\alpha_k+\varphi_3A^2\alpha_k\alpha_{k-t_k,k-1}\right)\mathbb{E}[M(x_k-x^*)]+\frac{\varphi_3c_2\alpha_k\alpha_{k-t_k,k-1}}{2u_{cm}^2}.
\end{align*}

\subsubsection{Proof of Lemma \ref{le:rate_linear_diminishing}}\label{pf:le:rate_linear_diminishing}

We first simplify the RHS of Eq. (\ref{eq:final}) using $\alpha_k=\frac{\alpha}{k+h}$. Since we have chosen $h$ such that $\alpha_{k-t_k,k-1}\leq 2\alpha_k$ for any $k\geq t_k$, Eq. (\ref{eq:final}) implies
\begin{align}
	\mathbb{E}[\|x_k-x^*\|_c^2]
	\leq\;& \varphi_1c_1\prod_{j=K}^{k-1}(1-\varphi_2\alpha_j)+\varphi_3c_2\sum_{i=K}^{k-1}\alpha_i\alpha_{i-t_i,i-1}\prod_{j=i+1}^{k-1}(1-\varphi_2\alpha_j)\nonumber\\
	\leq\;& \varphi_1c_1\prod_{j=K}^{k-1}(1-\varphi_2\alpha_j)+2\varphi_3c_2\sum_{i=K}^{k-1}\alpha_i^2t_i\prod_{j=i+1}^{k-1}(1-\varphi_2\alpha_j)\nonumber\\
	=\;& \varphi_1c_1\underbrace{\prod_{j=K}^{k-1}\left(1-\frac{\varphi_2\alpha}{j+h}\right)}_{E_1}+2\varphi_3c_2t_k\underbrace{\sum_{i=K}^{k-1}\frac{\alpha^2}{(i+h)^2}\prod_{j=i+1}^{k-1}\left(1-\frac{\varphi_2\alpha}{j+h}\right)}_{E_2}\label{eq:mother}
\end{align}
For the term $E_1$, we have
\begin{align*}
	E_1\leq \exp\left(-\varphi_2\alpha\sum_{j=K}^{k-1}\frac{1}{j+h}\right)\leq \exp\left(-\varphi_2\alpha\int_{K}^{k}\frac{1}{x+h}dx\right)=\left(\frac{K+h}{k+h}\right)^{\varphi_2\alpha}.
\end{align*}
Now consider the term $E_2$. Similarly we have
\begin{align*}
	E_2&=\sum_{i=K}^{k-1}\frac{\alpha^2}{(i+h)^2}\prod_{j=i+1}^{k-1}\left(1-\frac{\varphi_2\alpha}{j+h}\right)\\
	&\leq \sum_{i=K}^{k-1}\frac{\alpha^2}{(i+h)^2}\left(\frac{i+1+h}{k+h}\right)^{\varphi_2\alpha}\\
	&\leq \frac{4\alpha^2}{(k+h)^{\varphi_2\alpha}}\sum_{i=K}^{k-1}\frac{1}{(i+1+h)^{2-\varphi_2\alpha}}\\
	&\leq \begin{dcases}
		\frac{4\alpha^2}{1-\varphi_2\alpha}\frac{1}{(k+h)^{\varphi_2\alpha}},&\varphi_2\alpha\in (0,1),\\
		\frac{4\alpha^2\log(k+h)}{k+h},&\varphi_2\alpha=1,\\
		\frac{4e\alpha^2}{\varphi_2\alpha-1}\frac{1}{k+h},&\varphi_2\alpha\in (1,\infty).
	\end{dcases}
\end{align*}
The result then follows from using the upper bounds we obtained for the terms $E_1$ and $E_2$ in inequality (\ref{eq:mother}).

\subsubsection{Proof of Lemma \ref{le:rate_polynomial_diminishing}}\label{pf:le:rate_polynomial_diminishing}
When $\alpha_k=\frac{\alpha}{(k+h)^\xi}$, similarly we have from Eq. (\ref{eq:final}) that
\begin{align}
	\mathbb{E}[\|x_k-x^*\|_c^2]
	\leq \varphi_1c_1\underbrace{\prod_{j=K}^{k-1}\left(1-\frac{\varphi_2\alpha}{(j+h)^\xi}\right)}_{E_1}+2\varphi_3c_2t_k\underbrace{\sum_{i=K}^{k-1}\frac{\alpha^2}{(i+h)^{2\xi}}\prod_{j=i+1}^{k-1}\left(1-\frac{\varphi_2\alpha}{(j+h)^\xi}\right)}_{E_2}\label{eq:mother1}
\end{align}
The term $E_1$ can be controlled in the following way:
\begin{align*}
	E_1&=\prod_{j=K}^{k-1}\left(1-\frac{\varphi_2\alpha}{(j+h)^\xi}\right)\\
	&\leq \exp\left(-\varphi_2\alpha\sum_{j=K}^{k-1}\frac{1}{(j+h)^\xi}\right)\\
	&\leq \exp\left(-\varphi_2\alpha\int_{K}^{k}\frac{1}{(x+h)^\xi}dx\right)\\
	&=\exp\left[-\frac{\varphi_2\alpha}{1-\xi}\left((k+h)^{1-\xi}-(K+h)^{1-\xi}\right)\right].
\end{align*}

As for the term $E_2$, we will show by induction that $E_2\leq \frac{2\alpha}{\varphi_2}\frac{1}{(k+h)^\xi}$ for all $k\geq 0$. Consider a sequence $\{u_k\}_{k\geq 0}$ (with $u_0=0$) defined by
\begin{align*}
	u_{k+1}=\left(1-\varphi_2\frac{\alpha}{(k+h)^\xi}\right)u_k+\frac{\alpha^2}{(k+h)^{2\xi}},\quad \forall \;k\geq 0.
\end{align*}
It can be easily verified that $u_k=E_2$. Since $u_{0}=0\leq \frac{2\alpha}{\varphi_2}\frac{1}{h^\xi}$, we have the base case. Now suppose $u_k\leq \frac{2\alpha}{\varphi_2}\frac{1}{(k+h)^\xi}$ for some $k> 0$. Consider $u_{k+1}$, and we have
\begin{align*}
	\frac{2\alpha}{\varphi_2}\frac{1}{(k+1+h)^\xi}-u_{k+1}
	=&\frac{2\alpha}{\varphi_2}\frac{1}{(k+1+h)^\xi}-\left(1-\varphi_2\frac{\alpha}{(k+h)^\xi}\right)u_k+\frac{\alpha^2}{(k+h)^{2\xi}}\\
	\geq& \frac{2\alpha}{\varphi_2}\frac{1}{(k+1+h)^\xi}-\left(1-\frac{\varphi_2\alpha}{(k+h)^\xi}\right)\frac{2\alpha}{\varphi_2}\frac{1}{(k+h)^\xi}-\frac{\alpha^2}{(k+h)^{2\xi}}\\
	=&\frac{2\alpha}{\varphi_2}\left[\frac{1}{(k+1+h)^\xi}-\frac{1}{(k+h)^\xi}+\frac{\varphi_2\alpha}{2}\frac{1}{(k+h)^{2\xi}}\right]\\
	=&\frac{2\alpha}{\varphi_2}\frac{1}{(k+h)^{2\xi}}\left[\frac{\varphi_2\alpha}{2}-(k+h)^\xi\left(1-\left(\frac{k+h}{k+1+h}\right)^{\xi}\right)\right].
\end{align*}
Note that
\begin{align*}
	\left(\frac{k+h}{k+1+h}\right)^{\xi}=\left[\left(1+\frac{1}{k+h}\right)^{k+h}\right]^{-\frac{\xi}{k+h}}
	\geq \exp\left(-\frac{\xi}{k+h}\right)\geq 1-\frac{\xi}{k+h},
\end{align*}
where we used $(1+\frac{1}{x})^x<e$ for all $x>0$ and
$e^x\geq 1+x$ for all $x\in\mathbb{R}$. Therefore, we obtain
\begin{align*}
	\frac{2\alpha}{\varphi_2}\frac{1}{(k+1+h)^\xi}-u_{k+1}&\geq \frac{2\alpha}{\varphi_2}\frac{1}{(k+h)^{2\xi}}\left[\frac{\varphi_2\alpha}{2}-(k+h)^\xi\left(1-\left(\frac{k+h}{k+1+h}\right)^{\xi}\right)\right]\\
	&\geq \frac{2\alpha}{\varphi_2}\frac{1}{(k+h)^{2\xi}}\left[\frac{\varphi_2\alpha}{2}-\frac{\xi}{(k+h)^{1-\xi}}\right]\\
	&\geq 0,
\end{align*}
where the last line follows from $h\geq\bar{h}\geq  \left[2\xi/(\varphi_2\alpha)\right]^{1/(1-\xi)} $. The induction is now complete, and we have $E_2\leq \frac{2\alpha}{\varphi_2}\frac{1}{(k+h)^{\xi}}$ for all $k\geq 0$. Using the upper bounds we obtained for the terms $E_1$ and $E_2$ in inequality (\ref{eq:mother1}) and we have the desired result.

\section{Q-Learning}\label{ap:Q}
\subsection{Proof of Proposition \ref{prop:Q-learning}}\label{pf:prop:Q}
\begin{enumerate}[(1)]
	\item For any $Q_1,Q_2\in\mathbb{R}^{|\mathcal{S}||\mathcal{A}|}$ and $y\in\mathcal{Y}$, we have 
	\begin{align*}
		\|F(Q_1,y)-F(Q_2,y)\|_\infty
		\leq \;&\max_{(s,a)}\left|\gamma\mathbbm{1}_{\{(s_0,a_0)=(s,a)\}}(\max_{a_1\in\mathcal{A}}Q_1(s_1,a_1)-\max_{a_2\in\mathcal{A}}Q_2(s_1,a_2))\right|\\
		&+\max_{s,a}\left|\mathbbm{1}_{\{(s_0,a_0)\neq (s,a)\}}(Q_1(s_0,a_0)-Q_2(s_0,a_0))\right|\\
		\leq \;&2\|Q_1-Q_2\|_\infty.
	\end{align*}
	Similarly, for any $y\in\mathcal{Y}$, we have
	\begin{align*}
		\|F(\bm{0},y)\|_\infty=\max_{(s,a)}\left|\mathbbm{1}_{\{(s_0,a_0)=(s,a)\}}\mathcal{R}(s_0,a_0)\right|\leq 1.
	\end{align*}
	\item It is clear from Assumption \ref{as:Q} that $\{Y_k\}$ has a unique stationary distribution, denoted by $\mu$. Moreover, we have $\mu(s,a,s')=\kappa_b(s)\pi_b(a|s)P_a(s,s')$ for any $(s,a,s')\in\mathcal{Y}$. Consider the second claim. Using the definition of total variation distance, we have for all $k\geq 0$:
	\begin{align*}
		\max_{y\in\mathcal{Y}}\|P^{k+1}(y,\cdot)-\mu(\cdot)\|_{\text{TV}}
		=\;&\frac{1}{2}\max_{(s_0,a_0,s_1)\in\mathcal{Y}}\sum_{s,a,s'}|P^{k+1}_{\pi_b}((s_0,a_0,s_1),(s,a,s'))-\kappa_b(s)\pi_b(a|s)P_a(s,s')|\\
		=\;&\frac{1}{2}\max_{s_1\in\mathcal{S}}\sum_{s,a,s'}|P_{\pi_b}^{k}(s_1,s)\pi_b(a|s)P_a(s,s')-\kappa_b(s)\pi_b(a|s)P_a(s,s')|\\
		=\;&\frac{1}{2}\max_{s_1\in\mathcal{S}}\sum_{s}|P_{\pi_b}^{k}(s_1,s)-\kappa_b(s)|\\
		=\;&\max_{s\in\mathcal{S}}\|P^{k}_{\pi_b}(s,\cdot)-\kappa_b(\cdot)\|_{\text{TV}}\\
		\leq \;& C_1\sigma_1^k,
	\end{align*}
	where $C_1>0$ and $\sigma_1\in(0,1)$ are constants. Note that the last line of the previous inequality follows from Assumption \ref{as:Q}.
	\item 
	\begin{enumerate}
		\item Using the Markov property, we have for any $Q\in\mathbb{R}^{|\mathcal{S}||\mathcal{A}|}$ and $(s,a)$:
		\begin{align*}
			&\mathbb{E}_{S_k\sim \kappa_b}\left[[F(Q,S_k,A_k,S_{k+1})](s,a)\right]\\
			=\;&\mathbb{E}_{S_k\sim \kappa_b}\left[\mathbbm{1}_{\{(S_k,A_k)=(s,a)\}}\left(\mathcal{R}(S_k,A_k)+\gamma\max_{a'\in\mathcal{A}}Q(S_{k+1},a')-Q(S_k,A_k)\right)+Q(s,a)\right]\\
			=\;&\mathbb{E}_{S_k\sim \kappa_b}\left[\mathbbm{1}_{\{(S_k,A_k)=(s,a)\}}\left(\mathcal{R}(S_k,A_k)+\gamma\max_{a'\in\mathcal{A}}Q(S_{k+1},a')\right)+(1-\mathbbm{1}_{\{(S_k,A_k)=(s,a)\}})Q(S_k,A_k)\right]\\
			=\;&\kappa_b(s)\pi_b(a|s)[\mathcal{H}(Q)](s,a)+(1-\kappa_b(s)\pi_b(a|s))Q(s,a),
		\end{align*}
		where $\mathcal{H}:\mathbb{R}^{|\mathcal{S}||\mathcal{A}|}\mapsto\mathbb{R}^{|\mathcal{S}||\mathcal{A}|}$ is the Bellman's optimality operator defined by 
		\begin{align*}
			[\mathcal{H}(Q)](s,a)=\mathbb{E}[\mathcal{R}(S_k,A_k)+\gamma\max_{a'\in\mathcal{A}}Q(S_{k+1},a')\mid S_k=s,A_k=a]
		\end{align*}
		for any $(s,a)$. Now use the definition of the matrix $N$ and we have $\bar{F}(Q)=N\mathcal{H}(Q)+(I-N)Q$.
		\item Since it is well-known that the Bellman's optimality operator $\mathcal{H}(\cdot)$ is a $\gamma$-contraction with respect to $\|\cdot\|_\infty$, we have for any $Q_1,Q_2\in\mathbb{R}^{|\mathcal{S}||\mathcal{A}|}$:
		\begin{align*}
			&\|\bar{F}(Q_1)-\bar{F}(Q_2)\|_\infty\\
			=\;&\|N(\mathcal{H}(Q_1)-\mathcal{H}(Q_2))+(I-N)(Q_1-Q_2)\|_\infty\\
			=\;&\max_{(s,a)}|\kappa_b(s)\pi_b(a|s)([\mathcal{H}(Q_1)](s,a)-[\mathcal{H}(Q_2)](s,a))+(1-\kappa_b(s)\pi_b(a|s))(Q_1(s,a)-Q_2(s,a))|\\
			\leq\;& \max_{(s,a)}\left[\kappa_b(s)\pi_b(a|s)|[\mathcal{H}(Q_1)](s,a)-[\mathcal{H}(Q_2)](s,a)|+(1-\kappa_b(s)\pi_b(a|s))|Q_1(s,a)-Q_2(s,a)|\right]\\
			\leq\;& \max_{(s,a)}\left[\kappa_b(s)\pi_b(a|s)\|\mathcal{H}(Q_1)-\mathcal{H}(Q_2)\|_\infty+(1-\kappa_b(s)\pi_b(a|s))\|Q_1-Q_2\|_\infty\right]\\
			\leq \;&\max_{(s,a)}\left[\kappa_b(s)\pi_b(a|s)\gamma\|Q_1-Q_2\|_\infty+(1-\kappa_b(s)\pi_b(a|s))\|Q_1-Q_2\|_\infty\right]\\
			=\;&\|Q_1-Q_2\|_\infty\max_{(s,a)}(1-(1-\gamma)\kappa_b(s)\pi_b(a|s))\\
			=\;&(1-N_{\min}(1-\gamma))\|Q_1-Q_2\|_\infty.
		\end{align*}
		Therefore, the operator $\bar{F}(\cdot)$ is a contraction mapping with respect to $\|\cdot\|_\infty$, with contraction factor $\beta_1=1-N_{\min}(1-\gamma)$.
		\item It is enough to show that $Q^*$ is a fixed-point of $\bar{F}(\cdot)$, the uniqueness part follows from $\bar{F}(\cdot)$ being a contraction \citep{banach1922operations}. Using the fact that $\mathcal{H}(Q^*)=Q^*$, we have 
		\begin{align*}
			\bar{F}(Q^*)=N\mathcal{H}(Q^*)+(I-N)Q^*=NQ^*+(I-N)Q^*=Q^*.
		\end{align*}
	\end{enumerate}
\end{enumerate}

\subsection{Proof of Theorem \ref{thm:Q}}\label{pf:thm:Q}
Since the contraction norm is $\|\cdot\|_\infty$, Lemma \ref{le:constants} (2) is applicable. To apply Theorem \ref{thm:sa}, we first identify the corresponding constants using Proposition \ref{prop:Q-learning} in the following:
\begin{align*}
	A&=A_1+A_2+1=3,\;B=B_1+B_2=1,\;\varphi_1\leq 3,\;\varphi_2\geq \frac{1-\beta_1}{2},\;\varphi_3\leq \frac{456e\log(|\mathcal{S}||\mathcal{A}|)}{1-\beta_1},\\
	c_1&\leq  (\|Q_0-Q^*\|_\infty+\|Q_0\|_\infty+1)^2,\;c_2=(3\|Q^*\|_\infty+1)^2.
\end{align*}
Now we apply Theorem \ref{thm:sa} (2) (a). When $\alpha_k\equiv\alpha$ with $\alpha$ chosen such that 
\begin{align*}
	\alpha t_\alpha(\mathcal{M}_Y)\leq \frac{\varphi_2}{\varphi_3A^2} \frac{(1-\beta_1)^2}{8208e\log(|\mathcal{S}||\mathcal{A}|)}.
\end{align*}
we have for all $k\geq t_\alpha(\mathcal{M}_Y)$:
\begin{align*}
	\mathbb{E}[\|Q_k-Q^*\|_\infty^2]
	\leq\;& \varphi_1c_1\left(1-\frac{1-\beta_1}{2}\alpha\right)^{k-t_\alpha(\mathcal{M}_Y)}+\frac{\varphi_3c_2}{\varphi_2}\alpha t_{\alpha}(\mathcal{M}_Y)\\
	\leq \;&3(\|Q_0-Q^*\|_\infty+\|Q_0\|_\infty+1)^2\left(1-\frac{1-\beta_1}{2}\alpha\right)^{k-t_\alpha(\mathcal{M}_Y)}\\
	&+\frac{912e\log(|\mathcal{S}||\mathcal{A}|)}{(1-\beta_1)^2}(3\|Q^*\|_\infty+1)^2\alpha t_\alpha(\mathcal{M}_Y)\\
	=\;&c_{Q,1}\left(1-\frac{1-\beta_1}{2}\alpha\right)^{k-t_\alpha(\mathcal{M}_Y)}+c_{Q,2}\frac{\log(|\mathcal{S}||\mathcal{A}|)}{(1-\beta_1)^2}\alpha t_\alpha(\mathcal{M}_Y),
\end{align*}
where $c_{Q,1}=3(\|Q_0-Q^*\|_\infty+\|Q_0\|_\infty+1)^2$ and $c_{Q,2}=912e(3\|Q^*\|_\infty+1)^2$.

\subsection{Q-Learning with Diminishing Stepsizes}
We next present the finite-sample bounds for $Q$-learning with diminishing stepsizes, whose proof follows by directly applying Theorem \ref{thm:sa} (2) (b) and (2) (c) and hence is omitted.

\begin{theorem}\label{thm:Q1}
	Consider $\{Q_k\}$ of Algorithm (\ref{eq:Q-learning}). Suppose that Assumption \ref{as:Q} is satisfied, then we have the following results.
	\begin{enumerate}[(1)]
		\item 
		\begin{enumerate}[(a)]
			\item When $\alpha_k=\frac{\alpha}{k+h}$ with $\alpha=\frac{1}{1-\beta_1}$ and properly chosen $h$, there exists $K_1'>0$ such that the following inequality holds for all $k\geq K_1'$:
			\begin{align*}
				\mathbb{E}[\|Q_k-Q^*\|_\infty^2]\leq c_{Q,1}'\left(\frac{K_1'+h}{k+h}\right)^{1/2}+2c_{Q,2}'\frac{\log(|\mathcal{S}||\mathcal{A})}{(1-\beta_1)^3}\frac{t_k(\mathcal{M}_Y)}{k+h},
			\end{align*}
			where $c_{Q,1}'=3(\|Q_0-Q^*\|_\infty+\|Q_0\|_\infty+1)^2$ and $c_{Q,2}'=3648e(3\|Q^*\|_\infty+1)^2$
			\item When $\alpha_k=\frac{\alpha}{k+h}$ with $\alpha=\frac{2}{1-\beta_1}$ and properly chosen $h$, there exists $K_1'>0$ such that the following inequality holds for all $k\geq K_1'$:
			\begin{align*}
				\mathbb{E}[\|Q_k-Q^*\|_\infty^2]\leq c_{Q,1}'\frac{K_1'+h}{k+h}+4c_{Q,2}'\frac{\log(|\mathcal{S}||\mathcal{A})}{(1-\beta_1)^3}\frac{t_k(\mathcal{M}_Y)\log(k+h)}{k+h}.
			\end{align*}
			\item When $\alpha_k=\frac{\alpha}{k+h}$ with $\alpha=\frac{4}{1-\beta_1}$ and properly chosen $h$, there exists $K_1'>0$ such that the following inequality holds for all $k\geq K_1'$:
			\begin{align*}
				\mathbb{E}[\|Q_k-Q^*\|_\infty^2]\leq c_{Q,1}'\left(\frac{K_1'+h}{k+h}\right)^2+16c_{Q,2}'\frac{\log(|\mathcal{S}||\mathcal{A})}{(1-\beta_1)^3}\frac{t_k(\mathcal{M}_Y)}{k+h}.
			\end{align*}
		\end{enumerate}
		\item When $\alpha_k=\frac{\alpha}{(k+h)^\xi}$ with $\xi\in (0,1)$, $\alpha>0$, and properly chosen $h$, there exists $K_1'>0$ such that the following inequality holds for all $k\geq K_1'$:
		\begin{align*}
			\mathbb{E}[\|Q_k-Q^*\|_\infty^2]\leq c_{Q,1}'\exp\left(-\frac{(1-\beta_1)\alpha}{2(1-\xi)}((k+h)^{1-\xi}-(K_1'+h)^{1-\xi})\right)+c_{Q,2}'\frac{\log(|\mathcal{S}||\mathcal{A})}{(1-\beta_1)^2}\frac{t_k(\mathcal{M}_Y)}{k+h}.
		\end{align*}
	\end{enumerate}
\end{theorem}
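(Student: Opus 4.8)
The plan is to obtain Theorem~\ref{thm:Q1} as a direct specialization of the general bounds in Theorem~\ref{thm:sa}(2)(b) and (2)(c), exactly as was done for the constant-stepsize case in the proof of Theorem~\ref{thm:Q}. The first step is therefore to instantiate every constant appearing in Theorem~\ref{thm:sa} in the $Q$-learning setting. From Proposition~\ref{prop:Q-learning}(1) the operator $F(\cdot,\cdot)$ satisfies Assumption~\ref{as:F} with $A_1=2$ and $B_1=1$, and since the $Q$-learning recursion has $w_k\equiv 0$ we may take $A_2=B_2=0$; hence $A=A_1+A_2+1=3$ and $B=B_1+B_2=1$. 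The relevant contraction is with respect to $\|\cdot\|_\infty$ with factor $\beta_1=1-N_{\min}(1-\gamma)$ (Proposition~\ref{prop:Q-learning}(3)), so Lemma~\ref{le:constants}(2) supplies $\varphi_1\le 3$, $\varphi_2\ge (1-\beta_1)/2$, and $\varphi_3\le 456e\log(|\mathcal{S}||\mathcal{A}|)/(1-\beta_1)$. Finally $c_1\le(\|Q_0-Q^*\|_\infty+\|Q_0\|_\infty+1)^2$ and $c_2=(3\|Q^*\|_\infty+1)^2$, which are precisely the quantities defining $c_{Q,1}'$ and $c_{Q,2}'$.

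With these constants fixed, the linear-stepsize bounds in part~(1) follow by reading off the appropriate sub-case of Theorem~\ref{thm:sa}(2)(b). The three prescribed stepsize scales $\alpha=\tfrac{1}{1-\beta_1},\tfrac{2}{1-\beta_1},\tfrac{4}{1-\beta_1}$ are calibrated, via the bound $\varphi_2\ge(1-\beta_1)/2$, so that $\varphi_2\alpha$ sits in the three regimes $\varphi_2\alpha<1$, $\varphi_2\alpha=1$, and $\varphi_2\alpha>1$ that trigger sub-cases (i), (ii), and (iii); the bias term then decays as $((K_1'+h)/(k+h))^{\varphi_2\alpha}$ with effective exponents $1/2$, $1$, and $2$ (using $\varphi_2\ge(1-\beta_1)/2$ together with $(K_1'+h)/(k+h)\le 1$ to bound the exponent from below in the safe direction). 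Plugging $\varphi_1\le3$, $\varphi_3\le 456e\log(|\mathcal{S}||\mathcal{A}|)/(1-\beta_1)$, the chosen $\alpha$, and $c_2=(3\|Q^*\|_\infty+1)^2$ into each variance term and collecting the numerical prefactors ($8$, $8$, $8e$) with $\alpha^2=\Theta(1/(1-\beta_1)^2)$ and the extra $1/(1-\beta_1)$ coming from $\varphi_3$ produces the displayed cubic dependence $1/(1-\beta_1)^3$ and the constant $c_{Q,2}'=3648e(3\|Q^*\|_\infty+1)^2$. Part~(2) is immediate: Theorem~\ref{thm:sa}(2)(c) holds for every $\xi\in(0,1)$ and $\alpha>0$, and the same substitution yields the exponential bias factor $\exp(-\tfrac{(1-\beta_1)\alpha}{2(1-\xi)}((k+h)^{1-\xi}-(K_1'+h)^{1-\xi}))$ and the variance of order $t_k(\mathcal{M}_Y)/(k+h)^\xi$.

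The only genuine work is administrative rather than conceptual. First, the threshold $K_1'$ is identified with $K=\min\{k\ge 0:k\ge t_k\}$ from Theorem~\ref{thm:sa}, and one must verify that Condition~\ref{con:stepsize} (equivalently Condition~\ref{con1:stepsize}) is met: for the linear and polynomial stepsizes this requires choosing $h$ large enough that $\alpha_{0,K-1}\le\min(\varphi_2/(\varphi_3A^2),1/(4A))$, and in the polynomial case additionally $h\ge[2\xi/(\varphi_2\alpha)]^{1/(1-\xi)}$, all of which hold for sufficiently large $h$ since $t_k$ grows only logarithmically. Second, the mixing time $t_k(\mathcal{M}_Y)$ must be controlled; by Proposition~\ref{prop:Q-learning}(2) it is bounded by an affine function of $\log(1/\alpha_k)=\mathcal{O}(\log(k+h))$, which is the source of the polylogarithmic factor suppressed in the sample-complexity statements. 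The one place demanding a little care—and the main obstacle—is the bookkeeping in the previous paragraph: because Lemma~\ref{le:constants}(2) provides only a \emph{lower} bound on $\varphi_2$, one must use $\varphi_2\ge(1-\beta_1)/2$ to lower-bound the bias exponent (weakening the rate in the correct direction) while simultaneously tracking the factor $1/(1-\varphi_2\alpha)$ or $1/(\varphi_2\alpha-1)$ in the variance, so that the displayed numerical constants remain valid upper bounds uniformly over the admissible range of $\varphi_2$.
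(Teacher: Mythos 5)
Your overall strategy is the same as the paper's: the paper omits this proof entirely, saying only that Theorem \ref{thm:Q1} follows ``by directly applying Theorem \ref{thm:sa} (2) (b) and (2) (c)'' with the constants $A=3$, $B=1$, $\varphi_1\le 3$, $\varphi_2\ge (1-\beta_1)/2$, $\varphi_3\le 456e\log(|\mathcal{S}||\mathcal{A}|)/(1-\beta_1)$, $c_1$, $c_2$ already identified in the proof of Theorem \ref{thm:Q}, and that is exactly what you set up. The gap is in how you resolve what you yourself call the main obstacle. Lemma \ref{le:constants}(2) pins $\varphi_2$ down only to the interval $[(1-\beta_1)/2,\,1-\beta_1]$ (the upper end because $\varphi_1\ge 1$ forces $\varphi_2=1-\beta_1\varphi_1^{1/2}\le 1-\beta_1$). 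Hence for $\alpha=\tfrac{1}{1-\beta_1}$ one only knows $\varphi_2\alpha\in[1/2,1]$, and for $\alpha=\tfrac{2}{1-\beta_1}$ one only knows $\varphi_2\alpha\in[1,2]$: your claim that the three stepsizes land in regimes (i), (ii), (iii) of Theorem \ref{thm:sa}(2)(b) is not implied by the bound on $\varphi_2$. Worse, your fallback claim that the displayed numerical constants ``remain valid upper bounds uniformly over the admissible range of $\varphi_2$'' is false: in regime (i) the variance prefactor is $8\alpha^2\varphi_3c_2/(1-\varphi_2\alpha)$, and for $\alpha=\tfrac{1}{1-\beta_1}$ one has $1-\varphi_2\alpha=\beta_1(\varphi_1^{1/2}-1)/(1-\beta_1)$, which is \emph{not} bounded below by any universal constant (with the paper's choice of $\theta$ it tends to $0$ as $\beta_1\to 0$), so $1/(1-\varphi_2\alpha)$ is unbounded over the admissible range; the same happens with $1/(\varphi_2\alpha-1)$ for $\alpha=\tfrac{2}{1-\beta_1}$. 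Only the case $\alpha=\tfrac{4}{1-\beta_1}$ (where $\varphi_2\alpha\ge 2$) can be handled the way you describe. In short, invoking Theorem \ref{thm:sa}(2)(b) with the \emph{true} $\varphi_2$ cannot produce the stated constants.

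The repair, and the route that actually reproduces the prefactors $2$, $4$, $16$ in the theorem, is to substitute the conservative constants one level deeper, in Theorem \ref{thm:main}(2) (equivalently in the one-step recursion of Lemma \ref{le:recursion}): that bound, and Condition \ref{con1:stepsize}, are monotone in the constants, so replacing $\varphi_2$ by its lower bound $(1-\beta_1)/2$ and $\varphi_1,\varphi_3$ by their upper bounds only weakens them, and the resulting expression has a now \emph{exact} effective contraction parameter $\varphi_2'=(1-\beta_1)/2$. Re-running the stepsize evaluations (Lemmas \ref{le:rate_linear_diminishing} and \ref{le:rate_polynomial_diminishing}) with $\varphi_2'$ places the three choices of $\alpha$ deterministically at $\varphi_2'\alpha=1/2,\,1,\,2$, i.e.\ exactly in regimes (i), (ii), (iii), with bounded prefactors. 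Two residual discrepancies with the printed statement remain even after this fix, and your write-up silently passes over both: in case (1)(a) the derivation yields a variance term proportional to $t_k/(k+h)^{1/2}$ (the exponent is $\varphi_2'\alpha=1/2$), not $t_k/(k+h)$ as displayed, and in case (1)(c) the derivation produces $8e\cdot 16\cdot 456 e = 58368e^2$ where the stated constant $16c_{Q,2}'$ corresponds to $58368e$; these appear to be typos in the theorem statement rather than anything your argument (or any argument along these lines) can produce.
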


\subsection{Proof of Corollary \ref{co:Q}}\label{pf:co:Q}
We will derive a more general result, which implies Corollary \ref{co:Q}. Suppose we have a non-negative sequence $\{z_k\}$ and the following bound:
\begin{align*}
	z_k\leq (1-\tau_1 \alpha)^kz_0+\tau_2\alpha t_\alpha,
\end{align*}
where $\tau_1\in (0,1)$, $\tau_2>0$, and $t_\alpha\leq L(\log(1/\alpha)+1)$ for some $L>0$. Then, in order for $z_k\leq \epsilon^2$, in view of the term $\tau_2\alpha t_\alpha$, we need
\begin{align*}
	\alpha=\mathcal{O}\left(\frac{\epsilon^2}{\log(\frac{1}{\epsilon})}\right)\tilde{\mathcal{O}}\left(\frac{1}{\tau_2}\right).
\end{align*}
Using the bound of $\alpha$ in the term $(1-\tau_1 \alpha)^kz_0$, we have
\begin{align*}
	k=\mathcal{O}\left(\frac{\log^2(\frac{1}{\epsilon})}{\epsilon^2}\right)\tilde{\mathcal{O}}\left(\frac{\tau_1}{\tau_2}\right).
\end{align*}
Using this result along with Jensen's inequality in the finite-sample bound of $Q$-learning proves Corollary \ref{co:Q}.

\section{V-Trace}\label{ap:V-trace}
\subsection{Proof of Proposition \ref{prop:Vtrace}}
\begin{enumerate}[(1)]
	\item 
	Using the definition of $F(V,y)$, we have for any $V_1,V_2\in\mathbb{R}^{|\mathcal{S}|}$, $y\in\mathcal{Y}$, and $s\in\mathcal{S}$:
	\begin{align*}
		&|[F(V_1,y)](s)-[F(V_2,y)](s)|\\
		=\;&\left|\mathbbm{1}_{\{s_0=s\}}\sum_{i=0}^{n-1}\gamma^{i}\left(\prod_{j=0}^{i-1}c(s_j,a_j)\right)\rho(s_i,a_i)\left(\gamma (V_1(s_{i+1})-V_2(s_{i+1}))-(V_1(s_i)-V_2(s_i))\right)+V_1(s)-V_2(s)\right|\\
		\leq \;&2\|V_1-V_2\|_\infty\sum_{i=0}^{n-1}\gamma^{i}\bar{c}^i\bar{\rho}+\|V_1-V_2\|_\infty\\
		\leq \;&\begin{dcases}
			\frac{(2\bar{\rho}+1)(1-(\gamma\bar{c})^n)}{1-\gamma\bar{c}}\|V_1-V_2\|_\infty,&\gamma\bar{c}\neq 1,\\
			(2\bar{\rho}+1)n\|V_1-V_2\|_\infty,&\gamma\bar{c}=1.
		\end{dcases}
	\end{align*}
	It follows that $\|F(V_1,y)-F(V_2,y)\|_\infty\leq (2\bar{\rho}+1)\eta(\gamma,\bar{c})\|V_1-V_2\|_\infty$.
	
	For any $y\in\mathcal{Y}$ and $s\in\mathcal{S}$, we have
	\begin{align*}
		|[F(\bm{0},y)](s)|&=\left|\mathbbm{1}_{\{s_0=s\}}\sum_{i=0}^{n-1}\gamma^{i}\left(\prod_{j=0}^{i-1}c(s_j,a_j)\right)\rho(s_i,a_i)\mathcal{R}(s_i,a_i)\right|\\
		&\leq \sum_{i=0}^{n-1}\gamma^{i}\bar{c}^i\bar{\rho}\\
		&= \begin{dcases}
			\frac{\bar{\rho}(1-(\gamma\bar{c})^n)}{1-\gamma\bar{c}},&\gamma\bar{c}\neq 1,\\
			n\bar{\rho},&\gamma\bar{c}=1.
		\end{dcases}
	\end{align*}
	It follows that $\|F(\bm{0},y)\|_\infty\leq \eta(\gamma,\bar{c})$.
	\item The proof is identical to that of Proposition \ref{prop:Q-learning} (2).
	\item 
	\begin{enumerate}
		\item Using the definition of $\bar{F}(\cdot)$, we have for any $V\in\mathbb{R}^{|\mathcal{S}|}$ and $s\in\mathcal{S}$:
		\begin{align*}
			&\mathbb{E}_{Y \sim \mu}[[F(V,Y)](s)]\\
			=\;&\mathbb{E}_{S_0\sim \kappa}\left[\mathbbm{1}_{\{S_0=s\}}\sum_{i=0}^{n-1}\gamma^{i}\left(\prod_{j=0}^{i-1}c(S_j,A_j)\right)\rho(S_i,A_i)\left(\mathcal{R}(S_i,A_i)+\gamma V(S_{i+1})-V(S_i)\right)\right]+V(s).
		\end{align*}
		For any $0\leq i\leq n-1$, we have by the Markov property and the tower property of conditional expectation that
		\begin{align*}
			&\mathbb{E}_{S_0\sim \kappa}\left[\mathbbm{1}_{\{S_0=s\}}\gamma^{i}\left(\prod_{j=0}^{i-1}c(S_j,A_j)\right)\rho(S_i,A_i)\left(\mathcal{R}(S_i,A_i)+\gamma V(S_{i+1})-V(S_i)\right)\right]\\
			=\;&\mathbb{E}_{S_0\sim \kappa}\left[\mathbbm{1}_{\{S_0=s\}}\gamma^{i}\left(\prod_{j=0}^{i-1}c(S_j,A_j)\right)\mathbb{E}[\rho(S_i,A_i)\left(\mathcal{R}(S_i,A_i)+\gamma V(S_{i+1})-V(S_i)\right)\mid S_i]\right]\\
			=\;&\mathbb{E}_{S_0\sim \kappa}\left[\mathbbm{1}_{\{S_0=s\}}\gamma^{i}\left(\prod_{j=0}^{i-1}c(S_j,A_j)\right)[D(R_{\pi_{\bar{\rho}}}+\gamma P_{\pi_{\bar{\rho}}}V-V)](S_i)\right]\\
			=\;&\mathbb{E}_{S_0\sim \kappa}\left[\mathbbm{1}_{\{S_0=s\}}\gamma^{i}\left(\prod_{j=0}^{i-2}c(S_j,A_j)\right)\mathbb{E}[c(S_{i-1},A_{i-1})[D(R_{\pi_{\bar{\rho}}}+\gamma P_{\pi_{\bar{\rho}}}V-V)](S_i)\mid S_{i-1}]\right]\\
			=\;&\mathbb{E}_{S_0\sim \kappa}\left[\mathbbm{1}_{\{S_0=s\}}\gamma^{i}\left(\prod_{j=0}^{i-2}c(S_j,A_j)\right)[(CP_{\pi_{\bar{c}}})D(R_{\pi_{\bar{\rho}}}+\gamma P_{\pi_{\bar{\rho}}}V-V)](S_{i-1})\right]\\
			=\;&\cdots\\
			=\;&\mathbb{E}_{S_0\sim \kappa}\left[\mathbbm{1}_{\{S_0=s\}}\gamma^{i}[(CP_{\pi_{\bar{c}}})^iD(R_{\pi_{\bar{\rho}}}+\gamma P_{\pi_{\bar{\rho}}}V-V)](S_0)\right]\\
			=\;&[\mathcal{K}(\gamma CP_{\pi_{\bar{c}}})^iD(R_{\pi_{\bar{\rho}}}+\gamma P_{\pi_{\bar{\rho}}}V-V)](s).
		\end{align*}
		Therefore, we have
		\begin{align}
			\bar{F}(V)&=\sum_{i=0}^{n-1}\mathcal{K}(\gamma CP_{\pi_{\bar{c}}})^iD(R_{\pi_{\bar{\rho}}}+\gamma P_{\pi_{\bar{\rho}}}V-V)+V\label{eq:11}\\
			&=\left[I-\mathcal{K}\sum_{i=0}^{n-1}(\gamma CP_{\pi_{\bar{c}}})^iD(I-\gamma P_{\pi_{\bar{\rho}}})\right]V+\sum_{i=0}^{n-1}\mathcal{K}(\gamma CP_{\pi_{\bar{c}}})^iDR_{\pi_{\bar{\rho}}}.\nonumber
		\end{align}
		\item For any $V_1,V_2\in\mathbb{R}^{|\mathcal{S}|}$, we have 
		\begin{align*}	\left\|\bar{F}(V_1)-\bar{F}(V_2)\right\|_\infty&=\left\|\left[I-\mathcal{K}\sum_{i=0}^{n-1}(\gamma CP_{\pi_{\bar{c}}})^iD(I-\gamma P_{\pi_{\bar{\rho}}})\right](V_1-V_2)\right\|_\infty\\
			&\leq \left\|I-\mathcal{K}\sum_{i=0}^{n-1}(\gamma CP_{\pi_{\bar{c}}})^iD(I-\gamma P_{\pi_{\bar{\rho}}})\right\|_\infty\|V_1-V_2\|_\infty
		\end{align*}
		For simplicity of notation, denote $G=I-\mathcal{K}\sum_{i=0}^{n-1}(\gamma CP_{\pi_{\bar{c}}})^iD(I-\gamma P_{\pi_{\bar{\rho}}})$. To evaluate the $\ell_\infty$-norm of $G$, we first show that $G$ has non-negative entries. Note that $G$ can be equivalently written by
		\begin{align}
			G&=I-\mathcal{K}\sum_{i=0}^{n-1}(\gamma CP_{\pi_{\bar{c}}})^iD(I-\gamma P_{\pi_{\bar{\rho}}})\nonumber\\
			&=I-\mathcal{K}D-\mathcal{K}\sum_{i=0}^{n-2}(\gamma CP_{\pi_{\bar{c}}})^{i+1}D+\mathcal{K}\sum_{i=0}^{n-2}(\gamma CP_{\pi_{\bar{c}}})^i\gamma D P_{\pi_{\bar{\rho}}}+\mathcal{K}(\gamma CP_{\pi_{\bar{c}}})^{n-1}\gamma D P_{\pi_{\bar{\rho}}}\nonumber\\
			&=I-\mathcal{K}D+\mathcal{K}\sum_{i=0}^{n-2}(\gamma CP_{\pi_{\bar{c}}})^i\gamma( D P_{\pi_{\bar{\rho}}}-C P_{\pi_{\bar{c}}} D)+\mathcal{K}(\gamma CP_{\pi_{\bar{c}}})^{n-1}\gamma D P_{\pi_{\bar{\rho}}}.\label{eq:vtrace1}
		\end{align}
		In view of Eq. (\ref{eq:vtrace1}), it remains to show that the matrix $D P_{\pi_{\bar{\rho}}}-C P_{\pi_{\bar{c}}} D$ has non-negative entries. For any $s,s'\in\mathcal{S}$, we have
		\begin{align}
			[D P_{\pi_{\bar{\rho}}}-C P_{\pi_{\bar{c}}} D](s,s')&=D(s)P_{\pi_{\bar{\rho}}}(s,s')-C(s)P_{\pi_{\bar{c}}}(s,s')D(s')\nonumber\\
			&\geq D(s)P_{\pi_{\bar{\rho}}}(s,s')-C(s)P_{\pi_{\bar{c}}}(s,s')\tag{$D(s')\leq 1$}\nonumber\\
			&=D(s)\sum_{a\in\mathcal{A}}\pi_{\bar{\rho}}(a|s)P_a(s,s')-C(s)\sum_{a\in\mathcal{A}}\pi_{\bar{c}}(a|s)P_a(s,s')\nonumber\\
			&=\sum_{a\in\mathcal{A}}\left[\min\left(\bar{\rho}\pi_b(a|s),\pi(a|s)\right)-\min\left(\bar{c}\pi_b(a|s),\pi(a|s)\right)\right]P_a(s,s')\nonumber\\
			&\geq 0,\label{eq:vtrace2}
		\end{align}
		where the last line follows from $\bar{c}\leq \bar{\rho}$.
		
		Now since the matrix $G$ has non-negative entries, we have
		\begin{align*}
			\|G\|_\infty&=\|G\bm{1}\|_\infty\tag{$\bm{1}=(1,1,\cdots,1)^\top$}\\
			&=\left\|\left[I-\mathcal{K}\sum_{i=0}^{n-1}(\gamma CP_{\pi_{\bar{c}}})^iD(I-\gamma P_{\pi_{\bar{\rho}}})\right]\bm{1}\right\|_\infty\\
			&=\left\|\bm{1}-(1-\gamma)\mathcal{K}\sum_{i=0}^{n-1}(\gamma CP_{\pi_{\bar{c}}})^iD\bm{1}\right\|_\infty\\
			&\leq 1-\mathcal{K}_{\min}(1-\gamma)\sum_{i=0}^{n-1}(\gamma C_{\min})^iD_{\min}\tag{$0<C(s)\leq D(s)\leq 1$ for all $s$}\\
			&= 1-\mathcal{K}_{\min}\frac{(1-\gamma)(1-(\gamma C_{\min})^n)D_{\min}}{1-\gamma C_{\min}}\\
			&=\beta_2.
		\end{align*}
		It follows that $\left\|\bar{F}(V_1)-\bar{F}(V_2)\right\|_\infty\leq \|G\|_\infty\|V_1-V_2\|_\infty\leq \beta_2\|V_1-V_2\|_\infty$.
		\item It is enough to show that $V_{\pi_{\bar{\rho}}}$ is a fixed-point of $\bar{F}$, the uniqueness follows from $\bar{F}$ being a contraction operator. Using the Bellman equation $V_{\pi_{\bar{\rho}}}=R_{\pi_{\bar{\rho}}}+\gamma P_{\pi_{\bar{\rho}}}V_{\pi_{\bar{\rho}}}$, we have by Eq. (\ref{eq:11}) that
		\begin{align*}
			\bar{F}(V_{\pi_{\bar{\rho}}})
			=V_{\pi_{\bar{\rho}}}+\mathcal{K}\sum_{i=0}^{n-1}(\gamma CP_{\pi_{\bar{c}}})^iD(R_{\pi_{\bar{\rho}}}+\gamma P_{\pi_{\bar{\rho}}}V_{\pi_{\bar{\rho}}}-V_{\pi_{\bar{\rho}}})=V_{\pi_{\bar{\rho}}}.
		\end{align*}
	\end{enumerate}	
\end{enumerate}

\subsection{Proof of Theorem \ref{thm:Vtrace}}
We will apply Theorem \ref{thm:sa} and Lemma \ref{le:constants} (2) to the V-trace algorithm. We begin by identifying the constants:
\begin{align*}
	A&=A_1+A_2+1\leq 2(\bar{\rho}+1)\eta(\gamma,\bar{c}), \;B=B_1+B_2=\bar{\rho} \eta(\gamma,\bar{c}),\;\varphi_1\leq 3,\;\varphi_2\geq \frac{1-\beta_2}{2},\;\varphi_3\leq \frac{456e\log(|\mathcal{S}|)}{1-\beta_2}\\
	c_1&\leq (\|V_0-V_{\pi_{\bar{\rho}}}\|_\infty+\|V_0\|_\infty+1)^2,\;c_2=4(\bar{\rho}+1)^2\eta(\gamma,\bar{c})^2(\|V_{\pi_{\bar{\rho}}}\|_\infty+1)^2.
\end{align*}
Now we apply Theorem \ref{thm:sa} (2) (a). When $\alpha_k=\alpha$ for all $k\geq 0$, where $\alpha$ is chosen such that
\begin{align*}
    \alpha (t_\alpha(\mathcal{M}_S)+n)\leq\frac{\varphi_2}{\varphi_3A^2}= \frac{(1-\beta_2)^2}{7296e(\bar{\rho}+1)^2\eta^2(\gamma,\bar{c})\log(|\mathcal{S}|)},
\end{align*}
we have for all $k\geq t_\alpha+n$:
\begin{align*}
	\mathbb{E}[\|V_k-V_{\pi_{\bar{\rho}}}\|_\infty^2]
	\leq\;& \varphi_1c_1(1-\varphi_2\alpha)^{k-K_2}+\frac{\varphi_3c_2}{\varphi_2}\alpha t_\alpha(\mathcal{M}_Y)\\
	\leq\;& 3(\|V_0-V_{\pi_{\bar{\rho}}}\|_\infty+\|V_0\|_\infty+1)^2\left(1-\frac{1-\beta_2}{2}\alpha\right)^{k-K_2}\\
	&+\frac{3648e\log(|\mathcal{S}|)}{(1-\beta_2)^2}(\bar{\rho}+1)^2\eta(\gamma,\bar{c})^2(\|V_{\pi_{\bar{\rho}}}\|_\infty+1)^2\alpha(t_\alpha(\mathcal{M}_S)+n)\\
	=\;&c_{V,1}\left(1-\frac{1-\beta_2}{2}\alpha\right)^{k-K_2}+c_{V,2}\frac{\log(|\mathcal{S}|)}{(1-\beta_2)^2}(\bar{\rho}+1)^2\eta(\gamma,\bar{c})^2\alpha(t_\alpha(\mathcal{M}_S)+n),
\end{align*}
where $c_{V,1}=3(\|V_0-V_{\pi_{\bar{\rho}}}\|_\infty+\|V_0\|_\infty+1)^2$ and $c_{V,2}=3648e(\|V_{\pi_{\bar{\rho}}}\|_\infty+1)^2$.

\subsection{V-trace with Diminishing Stepsizes}
We here only present using linear stepsize that achieves the optimal convergence rate (Theorem \ref{thm:sa} (2) (b) (iii)).
\begin{theorem}\label{thm:V-trace-diminishing}
	Consider $\{V_k\}$ of Algorithm (\ref{algo:V-trace}). Suppose Assumption \ref{as:Vtrace} is satisfied and $\alpha_k=\frac{\alpha}{k+h}$ with $\alpha=\frac{4}{1-\beta_2}$ and properly chosen $h$. Then there exists $K_2'>0$ such that the following inequality holds for all $k\geq K_2'$:
	\begin{align*}
		\mathbb{E}[\|V_k-V_{\pi_{\bar{\rho}}}\|_\infty^2]\leq c_{V,1}'\frac{K_2'+h}{k+h}+c_{V,2}'\frac{\log(|\mathcal{S}|)}{(1-\beta_2)^3}(\bar{\rho}+1)^2\eta(\gamma,\bar{c})^2\frac{t_k(\mathcal{M}_S)+n}{k+h},
	\end{align*}
	where $c_{V,1}'=3\|V_0-V_{\pi_{\bar{\rho}}}\|_\infty+\|V_0\|_\infty+1)^2$ and $c_{V,2}'=233472e^2(\|V_{\pi_{\bar{\rho}}}\|_\infty+1)^2$.
\end{theorem}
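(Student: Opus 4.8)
The plan is to mirror the proof of Theorem~\ref{thm:Vtrace} almost verbatim, changing only which branch of the master bound is invoked: where the constant-stepsize theorem used Theorem~\ref{thm:sa}~(2)(a), here I would invoke the optimal linear-stepsize branch Theorem~\ref{thm:sa}~(2)(b)(iii). The structural work is already done by Proposition~\ref{prop:Vtrace}, which shows that the V-trace recursion~(\ref{algo:Vtrace_new}) is an instance of the Markovian SA algorithm~(\ref{algo:sa}) with $w_k\equiv 0$, contraction norm $\|\cdot\|_\infty$, and contraction factor $\beta_2$; hence Assumptions~\ref{as:F}--\ref{as:noise_w} hold and Lemma~\ref{le:constants}~(2) supplies the Lyapunov constants. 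First I would record the effective constants exactly as in the proof of Theorem~\ref{thm:Vtrace}: using $A_2=B_2=0$ (no martingale noise), Proposition~\ref{prop:Vtrace}~(1) gives $A=A_1+A_2+1\leq 2(\bar{\rho}+1)\eta(\gamma,\bar{c})$ and $B=\bar{\rho}\eta(\gamma,\bar{c})$; Lemma~\ref{le:constants}~(2) with $\beta=\beta_2$ gives $\varphi_1\leq 3$, $\varphi_2\geq (1-\beta_2)/2$, and $\varphi_3\leq 456e\log(|\mathcal{S}|)/(1-\beta_2)$; and, as before, $c_1\leq (\|V_0-V_{\pi_{\bar{\rho}}}\|_\infty+\|V_0\|_\infty+1)^2$ together with $c_2\leq 4(\bar{\rho}+1)^2\eta(\gamma,\bar{c})^2(\|V_{\pi_{\bar{\rho}}}\|_\infty+1)^2$.

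The next step is to verify the hypothesis of the chosen branch. With $\alpha=4/(1-\beta_2)$ and $\varphi_2\geq (1-\beta_2)/2$ one gets $\varphi_2\alpha\geq 2$, so in particular $\alpha>1/\varphi_2$, placing us in case (2)(b)(iii); picking $h$ large enough to satisfy Condition~\ref{con:stepsize}~(2) (equivalently Condition~\ref{con1:stepsize}) produces the threshold $K_2':=K$ beyond which the bound is valid. Applying Theorem~\ref{thm:sa}~(2)(b)(iii) then yields, for all $k\geq K_2'$,
\[
\mathbb{E}[\|V_k-V_{\pi_{\bar{\rho}}}\|_\infty^2]\leq \varphi_1 c_1\left(\frac{K_2'+h}{k+h}\right)^{\varphi_2\alpha}+\frac{8e\alpha^2\varphi_3 c_2}{\varphi_2\alpha-1}\frac{t_k}{k+h},
\]
where $t_k$ denotes the mixing time of $\{Y_k\}$ with precision $\alpha_k$.

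Finally I would simplify both terms into the stated form. For the bias term, since $\varphi_2\alpha\geq 2\geq 1$ and $(K_2'+h)/(k+h)\leq 1$, I can lower the exponent to obtain $\left(\frac{K_2'+h}{k+h}\right)^{\varphi_2\alpha}\leq \frac{K_2'+h}{k+h}$, and with $\varphi_1\leq 3$ this gives the first term with $c_{V,1}'=3(\|V_0-V_{\pi_{\bar{\rho}}}\|_\infty+\|V_0\|_\infty+1)^2$. For the variance term, $\varphi_2\alpha\geq 2$ gives $\varphi_2\alpha-1\geq \varphi_2\alpha/2$, hence $\frac{\alpha^2}{\varphi_2\alpha-1}\leq \frac{2\alpha}{\varphi_2}$; substituting the explicit values of $\alpha$, $\varphi_2$, $\varphi_3$, and $c_2$ and collecting the numerical factors (which multiply out to $233472$) produces $c_{V,2}'=233472e^2(\|V_{\pi_{\bar{\rho}}}\|_\infty+1)^2$ together with the $\frac{\log(|\mathcal{S}|)}{(1-\beta_2)^3}(\bar{\rho}+1)^2\eta(\gamma,\bar{c})^2$ dependence. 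The last cosmetic move replaces $t_k=t_k(\mathcal{M}_Y)$ by $t_k(\mathcal{M}_S)+n$, justified by the geometric-mixing estimate in Proposition~\ref{prop:Vtrace}~(2), which shows $\{Y_k\}$ mixes within $n$ extra steps of $\{S_k\}$. I expect the only real difficulty to be bookkeeping: keeping the numerous numerical constants aligned through these substitutions so the final coefficient lands exactly on $233472e^2$. The genuinely nontrivial inputs — the $\beta_2$-contraction and the affine growth bounds on $F$ — are already furnished by Proposition~\ref{prop:Vtrace}, so beyond careful arithmetic no new idea is needed.
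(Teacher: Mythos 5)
Your proposal is correct and follows essentially the same route as the paper: reuse the constants $A$, $B$, $\varphi_1$, $\varphi_2$, $\varphi_3$, $c_1$, $c_2$ already identified in the proof of Theorem \ref{thm:Vtrace}, invoke the linear-stepsize branch of Theorem \ref{thm:sa} with $\varphi_2\alpha\geq 2>1$, weaken the bias exponent to $1$, and collect the numerical factors $8\cdot 16\cdot 456\cdot 4=233472$ to land on $c_{V,2}'=233472e^2(\|V_{\pi_{\bar{\rho}}}\|_\infty+1)^2$. (Minor note: you correctly identify the applicable branch as Theorem \ref{thm:sa} (2)(b)(iii); the paper's proof text cites ``(2)(c)'' but actually applies the (2)(b)(iii) formula, so your labeling is the accurate one.)
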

\begin{proof}[Proof of Theorem \ref{thm:V-trace-diminishing}]
	The corresponding constants have been identified in the proof of Theorem \ref{thm:Vtrace}. Now apply Theorem \ref{thm:sa} (2) (c). When $\alpha_k=\frac{\alpha}{k+h}$ with $\alpha=\frac{4}{1-\beta_2}$ and properly chosen $h$, there exists $K_2'>0$ such that we have for all $k\geq K_2'$:
	\begin{align*}
		\mathbb{E}[\|V_k-V_{\pi_{\bar{\rho}}}\|_\infty^2]\leq\;& \varphi_1c_1\left(\frac{K_2'+h}{k+h}\right)^{\varphi_2\alpha}+\frac{8e\alpha^2\varphi_3c_2}{\varphi_2\alpha-1}\frac{t_k(\mathcal{M}_Y)}{k+h}\\
		\leq\;& 3\|V_0-V_{\pi_{\bar{\rho}}}\|_\infty+\|V_0\|_\infty+1)^2\frac{K_2'+h}{k+h}\\
		&+233472e^2 \frac{\log(|\mathcal{S}|)}{(1-\beta_2)^3}(\bar{\rho}+1)^2\eta(\gamma,\bar{c})^2(\|V_{\pi_{\bar{\rho}}}\|_\infty+1)^2\frac{t_k(\mathcal{M}_S)+n}{k+h}\\
		=\;&c_{V,1}'\frac{K_2'+h}{k+h}+c_{V,2}'\frac{\log(|\mathcal{S}|)}{(1-\beta_2)^3}(\bar{\rho}+1)^2\eta(\gamma,\bar{c})^2\frac{t_k(\mathcal{M}_S)+n}{k+h},
	\end{align*}
	where $c_{V,1}'=3\|V_0-V_{\pi_{\bar{\rho}}}\|_\infty+\|V_0\|_\infty+1)^2$ and $c_{V,2}'=233472e^2(\|V_{\pi_{\bar{\rho}}}\|_\infty+1)^2$.
\end{proof}

\section{$n$-step TD}\label{ap:TDn}

\subsection{Proof of Proposition \ref{prop:TDn}}
\begin{enumerate}[(1)]
	\item 
	\begin{enumerate}[(a)]
		\item For any $V_1,V_2\in\mathbb{R}^{|\mathcal{S}|}$ and $y\in\mathcal{Y}$, we have
		\begin{align*}
			&\|F(V_1,y)-F(V_2,y)\|_2\\
			=\;&\left(\sum_{s\in\mathcal{S}}\left[\mathbbm{1}_{\{s_0=s\}}\left(\gamma^n (V_1(s_n)-V_2(s_n))-(V_1(s_0)-V_2(s_0))\right)+V_1(s)-V_2(s)\right]^2\right)^{1/2}\\
			\leq \;&\left(\sum_{s\in\mathcal{S}}[\mathbbm{1}_{\{s_0=s\}}(\gamma^n+1)\|V_1-V_2\|_2]^2\right)^{1/2} +\|V_1-V_2\|_2\tag{Triangle inequality}\\
			\leq\;&  3\|V_1-V_2\|_2.
		\end{align*}
		\item For any $y\in\mathcal{Y}$, we have
		\begin{align*}
			\|F(\bm{0},y)\|_2^2=\sum_{s\in\mathcal{S}}\left(\mathbbm{1}_{\{s_0=s\}}\sum_{i=0}^{n-1}\gamma^{i}\mathcal{R}(s_i,a_i)\right)^2\leq \sum_{s\in\mathcal{S}}\mathbbm{1}_{\{s_0=s\}}\left(\sum_{i=0}^{n-1}\gamma^{i}\right)^2\leq \frac{1}{(1-\gamma)^2}.
		\end{align*}
		It follows that $\|F(\bm{0},y)\|_2\leq \frac{1}{1-\gamma}$.
	\end{enumerate}
	\item The proof is identical to that of Proposition \ref{prop:Q-learning} (2).
	\item 
	\begin{enumerate}
		\item Since $n$-step TD is a special case of V-trace, we can directly apply Proposition \ref{prop:Vtrace} (3) (a) here. Observe that when $\pi=\pi_b$ and $\bar{c}=\bar{\rho}=1$, we have $C=D=I$ and $P_{\pi_{\bar{c}}}=P_{\pi_{\bar{\rho}}}=P_{\pi}$. Hence we have
		\begin{align*}
			\bar{F}(V)=\left[I-\mathcal{K}\sum_{i=0}^{n-1}(\gamma P_{\pi})^i(I-\gamma P_{\pi})\right]V+\mathcal{K}\sum_{i=0}^{n-1}(\gamma P_{\pi})^iR_{\pi}.
		\end{align*}
		\item For any $V_1,V_2\in\mathbb{R}^{|\mathcal{S}|}$ and $p\geq 1$, we have
		\begin{align*}
			\|\bar{F}(V_1)-\bar{F}(V_2)\|_p&=\left\|\left[I-\mathcal{K}\sum_{i=0}^{n-1}(\gamma P_{\pi})^i(I-\gamma P_{\pi})\right](V_1-V_2)\right\|_p\\
			&\leq \left\|I-\mathcal{K}\sum_{i=0}^{n-1}(\gamma P_{\pi})^i(I-\gamma P_{\pi})\right\|_p\|V_1-V_2\|_p.
		\end{align*}
		For simplicity of notation, we denote $G=I-\mathcal{K}\sum_{i=0}^{n-1}(\gamma P_{\pi})^i(I-\gamma P_{\pi})$. Since $G$ has non-negative entries (established in the proof of Proposition \ref{prop:Vtrace} (3) (b)), we have
		\begin{align*}
			\|G\|_\infty=\|G\bm{1}\|_\infty=\left\|\bm{1}-\kappa\sum_{i=0}^{n-1}\gamma^i(1-\gamma)\right\|_\infty=1-\mathcal{K}_{\min}(1-\gamma^n).
		\end{align*}
		Moreover, using the fact that $\kappa$ is the stationary distribution of $P_\pi$ (i.e., $\kappa^\top P_\pi=\kappa^\top$), we have
		\begin{align*}
			\|G\|_1=\|\bm{1}^\top G\|_\infty=\left\|\bm{1}^\top-\kappa^\top\sum_{i=0}^{n-1}\gamma^i(1-\gamma)\right\|_\infty=1-\mathcal{K}_{\min}(1-\gamma^n).
		\end{align*}
		To proceed, we need the following lemma.
		\begin{lemma}\label{le:matrix}
			Let $G\in\mathbb{R}^{d\times d}$ be a matrix with non-negative entries. Then we have for all $p\in [1,\infty]$:
			\begin{align*}
				\|G\|_p\leq \|G\|_1^{1/p}\|G\|_\infty^{1-1/p}.
			\end{align*}
		\end{lemma}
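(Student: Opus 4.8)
The plan is to prove the bound directly from the variational characterization of the induced operator norm, $\|G\|_p = \sup_{x\neq 0}\|Gx\|_p/\|x\|_p$, using H\"older's inequality; this is exactly the Riesz--Thorin interpolation inequality specialized to the endpoints $p_0=1$ and $p_1=\infty$, but for a finite matrix it admits a short self-contained argument. First I would record the two facts that make everything work: since $G$ has non-negative entries, $\|G\|_\infty=\max_i\sum_j G_{ij}$ is the largest row sum and $\|G\|_1=\max_j\sum_i G_{ij}$ is the largest column sum. The boundary cases $p=1$ and $p=\infty$ then hold with equality (both sides reduce to the respective norm), so it suffices to treat $p\in(1,\infty)$.

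For $p\in(1,\infty)$, fix an arbitrary vector $x$ and let $p'=p/(p-1)$ be the conjugate exponent. The key step is to split each matrix entry within a single row as $G_{ij}=G_{ij}^{1/p'}G_{ij}^{1/p}$ and apply H\"older's inequality over the index $j$:
\[
\sum_j G_{ij}\,|x_j| \;\le\; \Bigl(\sum_j G_{ij}\Bigr)^{1/p'}\Bigl(\sum_j G_{ij}\,|x_j|^p\Bigr)^{1/p}.
\]
Raising to the $p$-th power and using $p/p'=p-1$ together with $\sum_j G_{ij}\le\|G\|_\infty$ gives, for each row $i$, the pointwise bound $\bigl(\sum_j G_{ij}|x_j|\bigr)^p\le\|G\|_\infty^{p-1}\sum_j G_{ij}|x_j|^p$.

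Summing over $i$ and interchanging the order of summation, I would then invoke $\sum_i G_{ij}\le\|G\|_1$ to obtain
\[
\|Gx\|_p^p \;\le\; \sum_i\Bigl(\sum_j G_{ij}|x_j|\Bigr)^p \;\le\; \|G\|_\infty^{p-1}\|G\|_1\sum_j |x_j|^p \;=\; \|G\|_\infty^{p-1}\|G\|_1\,\|x\|_p^p,
\]
where the first inequality uses $|(Gx)_i|\le\sum_j G_{ij}|x_j|$ (again by non-negativity of the entries). Taking $p$-th roots and then the supremum over $x\neq 0$ yields $\|G\|_p\le\|G\|_1^{1/p}\|G\|_\infty^{1-1/p}$, as claimed. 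The only delicate point is choosing the H\"older split so that the row sums aggregate precisely into the factor $\|G\|_\infty^{p-1}$ while the residual mass $G_{ij}|x_j|^p$ reassembles, after summation over $i$, into the clean column-sum bound $\|G\|_1$; the remainder is bookkeeping. I would close by noting that the lemma is applied with the two equalities $\|G\|_1=\|G\|_\infty=1-\mathcal{K}_{\min}(1-\gamma^n)$ established just above, which forces $\|G\|_p\le 1-\mathcal{K}_{\min}(1-\gamma^n)=\beta_3$ for every $p\in[1,\infty]$, thereby delivering the common-norm contraction asserted in Proposition \ref{prop:TDn} (3)(b).
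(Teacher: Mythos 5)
Your proof is correct and is essentially the paper's own argument: both establish the pointwise row bound $\bigl(\sum_j G_{ij}|x_j|\bigr)^p \leq \bigl(\sum_j G_{ij}\bigr)^{p-1}\sum_j G_{ij}|x_j|^p$, then sum over $i$, interchange the sums, and invoke the column-sum bound $\|G\|_1$; the only cosmetic difference is that you obtain the row bound via H\"older's inequality with the split $G_{ij}=G_{ij}^{1/p'}G_{ij}^{1/p}$, whereas the paper obtains the identical inequality via Jensen's inequality applied to the probability weights $G_{ij}/[G\bm{1}]_i$. If anything, your variant is marginally more robust since it avoids the paper's implicit division by $[G\bm{1}]_i$ (which could vanish) and keeps absolute values on $x$ throughout.
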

		\begin{proof}[Proof of Lemma \ref{le:matrix}]
			The result clearly holds when $p=1$ or $p=\infty$. Now consider $p\in (1,\infty)$. Using the definition of induced matrix norm, we have for any $x\neq 0$:
			\begin{align*}
				\|Gx\|_p^p&=\sum_{i=1}^d\left(\sum_{j=1}^dG_{ij}x_j\right)^p\\
				&=\sum_{i=1}^d[G\bm{1}]_i^p\left(\sum_{j=1}^d\frac{G_{ij}}{[G\bm{1}]_i}x_j\right)^p\\
				&\leq \sum_{i=1}^d[G\bm{1}]_i^{p-1}\sum_{j=1}^dG_{ij}x_j^p\tag{Jensen's inequality}\\
				&\leq \|G\|_\infty^{p-1}\sum_{j=1}^dx_j^p\sum_{i=1}^dG_{ij}\\
				&= \|G\|_\infty^{p-1}\sum_{j=1}^dx_j^p[\bm{1}^\top G]_j\\
				&\leq  \|G\|_\infty^{p-1}\|G\|_1\|x\|^p_p.
			\end{align*}
			It follows that $\|G\|_p\leq \|G\|_1^{1/p}\|G\|_\infty^{1-1/p}$. 
		\end{proof}
		Using Lemma \ref{le:matrix} and we have
		\begin{align*}
			\|G\|_p\leq \|G\|_1^{1/p}\|G\|_\infty^{1-1/p}\leq 1-\mathcal{K}_{\min}(1-\gamma^n)=\beta_3.
		\end{align*}
		Therefore, we have $\|\bar{F}(V_1)-\bar{F}(V_2)\|_2\leq \beta_3\|V_1-V_2\|_2$. Hence the operator $\bar{F}(\cdot)$ is a contraction mapping with respect to $\|\cdot\|_2$, with contraction factor $\beta_3$.
		\item The proof is identical to that of Proposition \ref{prop:Vtrace} (3) (c).
	\end{enumerate}
\end{enumerate}

\subsection{Proof of Theorem \ref{thm:TDn}}

We will apply Theorem and Lemma \ref{le:constants} (1) to the $n$-step TD algorithm. We begin by identifying the constants:
\begin{align*}
	A&=A_1+A_2+1=4, \;B=B_1+B_2=\frac{1}{1-\gamma},\;\varphi_1\leq 1,\;\varphi_2\geq 1-\beta_3,\;\varphi_3\leq228\\
	c_1&\leq (\|V_0-V_\pi\|_2+\|V_0\|_2+4)^2,\;c_2=\frac{1}{(1-\gamma)^2}(4(1-\gamma)\|V_\pi\|_2+1)^2.
\end{align*}
Now apply Theorem \ref{thm:sa} (2) (a). When $\alpha_k=\alpha$ for all $k\geq 0$, where $\alpha$ is chosen such that
\begin{align*}
    \alpha (t_\alpha(\mathcal{M}_S)+n)\leq \frac{\varphi_2}{\varphi_3A^2}= \frac{1-\beta_3}{3648},
\end{align*}
we have for all $k\geq t_\alpha(\mathcal{M}_S)+n$:
\begin{align*}
	\mathbb{E}[\|V_k-V_\pi\|_2^2]
	\leq\;& \varphi_1c_1(1-\varphi_2\alpha)^{k-(\alpha (t_\alpha(\mathcal{M}_S)+n))}+\frac{\varphi_3c_2}{\varphi_2}\alpha t_\alpha(\mathcal{M}_Y)\\
	\leq \;&(\|V_0-V_\pi\|_2+\|V_0\|_2+4)^2(1-(1-\beta_3)\alpha)^{k-(\alpha (t_\alpha(\mathcal{M}_S)+n))}\\
	&+\frac{228}{1-\beta_3}\frac{1}{(1-\gamma)^2}(4(1-\gamma)\|V_\pi\|_2+1)^2\alpha(t_\alpha(\mathcal{M}_S)+n)\\
	=\;&\hat{c}_1(1-(1-\beta_3)\alpha)^{k-(\alpha (t_\alpha(\mathcal{M}_S)+n))}+\hat{c}_2\frac{\alpha(t_\alpha(\mathcal{M}_S)+n)}{(1-\beta_3)(1-\gamma)^2},
\end{align*}
where $\hat{c}_1=(\|V_0-V_\pi\|_2+\|V_0\|_2+4)^2$ and $\hat{c}_2=228(4(1-\gamma)\|V_\pi\|_2+1)^2$.

\subsection{$n$-Step TD with Diminishing Stepsizes}
For $n$-step TD with diminishing stepsize, we here only present the result for using linear diminishing stepsize that achieves the optimal convergence rate (Theorem \ref{thm:sa} (2) (b) (iii)).

\begin{theorem}\label{thm:TDn-diminishing}
	Consider $\{V_k\}$ of Algorithm (\ref{algo:TDn}). Suppose that Assumption \ref{as:TDn} is satisfied and $\alpha_k=\frac{\alpha}{k+h}$ with $\alpha=\frac{2}{1-\beta_3}$ and properly chosen $h$. Then there exists $K_3'>0$ such that the following inequality holds for all $k\geq K_3'$:
	\begin{align*}
		\mathbb{E}[\|V_k-V_\pi\|_c^2]\leq \hat{c}_1'\frac{K_3'+h}{k+h}+\hat{c}_2'\frac{t_k(\mathcal{M}_S)+n}{(1-\beta_3)^2(1-\gamma)^2(k+h)},
	\end{align*}
	where $\hat{c}_1'=(\|V_0-V_\pi\|_2+\|V_0\|_2+4)^2$ and $\hat{c}_2'=7296e(4(1-\gamma)\|V_\pi\|_2+1)^2$.
\end{theorem}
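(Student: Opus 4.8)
The plan is to follow the template already used for the constant-stepsize result in Theorem~\ref{thm:TDn} and for the diminishing-stepsize V-trace result in Theorem~\ref{thm:V-trace-diminishing}, but now invoking the linear-stepsize branch of the master bound. First I would import the constants isolated in the proof of Theorem~\ref{thm:TDn}. Since $w_k\equiv 0$ in $n$-step TD, Assumption~\ref{as:noise_w} holds with $A_2=B_2=0$, and the Lipschitz estimate behind Proposition~\ref{prop:TDn}~(1) gives effective constants $A=4$ and $B=\tfrac{1}{1-\gamma}$; consequently $c_1\le(\|V_0-V_\pi\|_2+\|V_0\|_2+4)^2$ and $c_2=(A\|V_\pi\|_2+B)^2=\tfrac{1}{(1-\gamma)^2}(4(1-\gamma)\|V_\pi\|_2+1)^2$. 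Because Proposition~\ref{prop:TDn}~(3) establishes $\|\cdot\|_2$-contraction with factor $\beta_3$, Lemma~\ref{le:constants}~(1) applies and yields $\varphi_1\le 1$, $\varphi_2\ge 1-\beta_3$, and $\varphi_3\le 228$.

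Next I would verify that the choice $\alpha=\tfrac{2}{1-\beta_3}$ puts us in the regime $\alpha>1/\varphi_2$ of Theorem~\ref{thm:sa}~(2)(b): indeed $\varphi_2\alpha\ge(1-\beta_3)\cdot\tfrac{2}{1-\beta_3}=2>1$. The existence of a threshold $\bar h$, and hence of the index $K_3'$, ensuring that Condition~\ref{con1:stepsize} holds is precisely the argument carried out in the proof of Theorem~\ref{thm:sa1}~(2)(b), which I would cite rather than repeat. Applying Theorem~\ref{thm:sa}~(2)(b)(iii) then gives, for all $k\ge K_3'$,
\begin{align*}
\mathbb{E}[\|V_k-V_\pi\|_2^2]\le \varphi_1 c_1\left(\frac{K_3'+h}{k+h}\right)^{\varphi_2\alpha}+\frac{8e\alpha^2\varphi_3 c_2}{\varphi_2\alpha-1}\,\frac{t_k}{k+h}.
\end{align*}

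It then remains to simplify the two terms and to translate the mixing time. For the bias, since $\tfrac{K_3'+h}{k+h}\le 1$ and $\varphi_2\alpha\ge 1$, raising to the power $\varphi_2\alpha$ only shrinks the quantity, so $\varphi_1 c_1(\tfrac{K_3'+h}{k+h})^{\varphi_2\alpha}\le c_1\tfrac{K_3'+h}{k+h}=\hat c_1'\tfrac{K_3'+h}{k+h}$. For the variance, $\alpha^2=\tfrac{4}{(1-\beta_3)^2}$, $\varphi_2\alpha-1\ge 1$, and $\varphi_3\le 228$ combine to bound the prefactor by $8e\cdot\tfrac{4}{(1-\beta_3)^2}\cdot 228\cdot c_2=\tfrac{7296e\,c_2}{(1-\beta_3)^2}$, and substituting $c_2$ produces exactly $\hat c_2'\,\tfrac{t_k}{(1-\beta_3)^2(1-\gamma)^2(k+h)}$ with $\hat c_2'=7296e(4(1-\gamma)\|V_\pi\|_2+1)^2$. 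The one genuinely $n$-step-specific point—and the only place I expect to spend care rather than routine algebra—is that the mixing time in the abstract bound is $t_{\alpha_k}(\mathcal{M}_Y)$ for the lifted chain $\{Y_k\}$; using Proposition~\ref{prop:TDn}~(2), whose geometric bound on $\max_y\|P^{k+n}(y,\cdot)-\mu\|_{\text{TV}}$ is shifted by $n$, this is dominated by $t_{\alpha_k}(\mathcal{M}_S)+n$, which is what yields the $t_k(\mathcal{M}_S)+n$ numerator stated in the theorem. No deeper obstacle arises: the entire argument is constant-matching plus this mixing-time translation.
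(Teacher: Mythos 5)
Your proposal is correct and follows essentially the same route as the paper's proof: importing the constants $A$, $B$, $c_1$, $c_2$, $\varphi_1$, $\varphi_2$, $\varphi_3$ already identified for Theorem~\ref{thm:TDn}, applying Theorem~\ref{thm:sa}~(2)(b)(iii) in the regime $\varphi_2\alpha\geq 2>1$, and simplifying the two terms exactly as the paper does, including the translation $t_k(\mathcal{M}_Y)\leq t_k(\mathcal{M}_S)+n$ via Proposition~\ref{prop:TDn}~(2). The only difference is cosmetic: you make explicit the regime check and the mixing-time translation, which the paper performs implicitly.
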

\begin{proof}[Proof of Theorem \ref{thm:TDn-diminishing}]
	The constants are already identified in the proof of Theorem \ref{thm:TDn}. Apply Theorem \ref{thm:sa}) (2) (b) (iii), when $\alpha_k=\frac{\alpha}{k+h}$ with $\alpha=\frac{2}{1-\beta_3}$ and properly chosen $h$, there exists $K_3'>0$ such that we have for all $k\geq K_3'$:
	\begin{align*}
		&\mathbb{E}[\|V_k-V_\pi\|_c^2]\\
		\leq\;& 
		\varphi_1c_1\left(\frac{K_3'+h}{k+h}\right)^{\varphi_2\alpha}+\frac{8e\alpha^2\varphi_3c_2}{\varphi_2\alpha-1}\frac{t_k(\mathcal{M}_Y)}{k+h}\\
		\leq \;&(\|V_0-V_\pi\|_2+\|V_0\|_2+4)^2\frac{K_3'+h}{k+h}+\frac{7296e}{(1-\beta_3)^2}\frac{1}{(1-\gamma)^2}(4(1-\gamma)\|V_\pi\|_2+1)^2\frac{t_k(\mathcal{M}_Y)}{k+h}\\
		=\;&\hat{c}_1'\frac{K_3'+h}{k+h}+\hat{c}_2'\frac{t_k(\mathcal{M}_S)+n}{(1-\beta_3)^2(1-\gamma)^2(k+h)},
	\end{align*}
	where $\hat{c}_1'=(\|V_0-V_\pi\|_2+\|V_0\|_2+4)^2$ and $\hat{c}_2'=7296e(4(1-\gamma)\|V_\pi\|_2+1)^2$.
\end{proof}

\section{TD$(\lambda)$}\label{ap:TDlambda}

The following lemma is useful when proving Lemma \ref{le:truncation} and Proposition \ref{prop:TDlambda}.
 
\begin{lemma}\label{le:summation}
    Let $\mathcal{I}$ be a finite set. For any $k\geq 0$, define two sequences $\{i_t\}_{0\leq t\leq k}$ and $\{a_t\}_{0\leq t\leq k}$ be such that $i_t\in\mathcal{I}$ and $a_t\geq 0$ for all $t=0,1,...,k$. Let $x\in\mathbb{R}^{|\mathcal{I}|}$ be defined by $x_i=\sum_{t=0}^ka_t\mathbbm{1}_{\{i_t=i\}}$ for all $i\in\mathcal{I}$. Then we have
    \begin{align*}
        \|x\|_2\leq \sum_{t=0}^ka_t.
    \end{align*}
\end{lemma}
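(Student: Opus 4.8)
The plan is to recognize that $x$ is a non-negative linear combination of standard basis vectors of $\mathbb{R}^{|\mathcal{I}|}$, and then to invoke the triangle inequality. For each $i\in\mathcal{I}$ let $e_i$ denote the $i$-th standard basis vector. From the definition $x_i=\sum_{t=0}^k a_t\mathbbm{1}_{\{i_t=i\}}$, the contribution of each time index $t$ lands entirely in the single coordinate $i_t$, so $x=\sum_{t=0}^k a_t e_{i_t}$. First I would establish this representation, and then apply the triangle inequality for $\|\cdot\|_2$ together with $a_t\ge 0$ and $\|e_{i_t}\|_2=1$ to obtain $\|x\|_2\le \sum_{t=0}^k a_t\|e_{i_t}\|_2=\sum_{t=0}^k a_t$, which is exactly the claim.

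An equivalent route I would keep in mind is to bound $\|x\|_2\le\|x\|_1$ (valid for any vector in finite dimensions) and then evaluate the $\ell_1$-norm directly. Since $x_i\ge 0$, we have $\|x\|_1=\sum_{i\in\mathcal{I}}x_i=\sum_{i\in\mathcal{I}}\sum_{t=0}^k a_t\mathbbm{1}_{\{i_t=i\}}$; interchanging the order of summation gives $\sum_{t=0}^k a_t\sum_{i\in\mathcal{I}}\mathbbm{1}_{\{i_t=i\}}=\sum_{t=0}^k a_t$, where the inner sum collapses to $1$ because each $i_t$ equals exactly one index of $\mathcal{I}$. This produces the same bound.

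There is no genuine obstacle here: the result is a direct consequence of the triangle inequality (or of the monotonicity $\|\cdot\|_2\le\|\cdot\|_1$), and the non-negativity of the $a_t$ is precisely what allows the triangle inequality to be summed without cancellation. The only point requiring mild care is the bookkeeping that each time index $t$ contributes to exactly one coordinate $i_t$, which is what validates the collapse $\sum_{i\in\mathcal{I}}\mathbbm{1}_{\{i_t=i\}}=1$ and hence the clean telescoping of the total mass into $\sum_{t=0}^k a_t$.
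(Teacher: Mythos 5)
Your proof is correct, but it takes a different route from the paper's. The paper works with the squared norm: it expands $\|x\|_2^2=\sum_{i\in\mathcal{I}}\bigl(\sum_{t=0}^k a_t\mathbbm{1}_{\{i_t=i\}}\bigr)^2$ into the double sum $\sum_{t=0}^k\sum_{\ell=0}^k a_ta_\ell\sum_{i\in\mathcal{I}}\mathbbm{1}_{\{i_t=i,\,i_\ell=i\}}$, notes that the inner indicator sum is at most $1$ (it equals $1$ precisely when $i_t=i_\ell$), bounds the whole expression by $\bigl(\sum_{t=0}^k a_t\bigr)^2$, and takes square roots. You instead write $x=\sum_{t=0}^k a_t e_{i_t}$ and invoke the triangle inequality (equivalently, your second route: $\|x\|_2\le\|x\|_1$ together with an exact evaluation of $\|x\|_1$ using non-negativity). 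Both arguments are elementary and complete; yours is shorter and works verbatim for any norm under which the standard basis vectors have norm at most one, in particular every $\ell_p$ norm with $p\ge 1$, whereas the paper's expansion is tied to $\ell_2$. What the paper's computation buys in exchange is an explicit picture of the slack: the gap in the squared inequality is exactly $\sum_{t,\ell}a_ta_\ell\mathbbm{1}_{\{i_t\neq i_\ell\}}$, i.e., it quantifies how much is lost when different time indices land in distinct coordinates, with equality when all the $i_t$ coincide. Either proof is perfectly acceptable for the role this lemma plays in the TD$(\lambda)$ analysis.
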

\begin{proof}[Proof of Lemma \ref{le:summation}]
    Using the definition of $\|\cdot\|_2$, we have
    \begin{align*}
        \|x\|_2^2&=\sum_{i\in\mathcal{I}}\left(\sum_{t=0}^ka_t\mathbbm{1}_{\{i_t=i\}}\right)^2\\
        &=\sum_{i\in\mathcal{I}}\sum_{t=0}^k\sum_{\ell=0}^k a_ta_\ell\mathbbm{1}_{\{i_t=i,i_\ell=i\}}\\
        &=\sum_{t=0}^k\sum_{\ell=0}^k a_ta_\ell\sum_{i\in\mathcal{I}}\mathbbm{1}_{\{i_t=i,i_\ell=i\}}\\
        &\leq \sum_{t=0}^k\sum_{\ell=0}^k a_ta_\ell\\
        &=\left(\sum_{t=0}^ka_t\right)^2.
    \end{align*}
    The result follows by taking square root on both sides of the previous inequality.
\end{proof}

\subsection{Proof of Lemma \ref{le:truncation}}
For any $V\in\mathbb{R}^{|\mathcal{S}|}$ and $(s_0,...,s_k,a_k,s_{k+1})$, we have by definition of the operators $F_k^\tau(\cdot,\cdot)$ and $F_k(\cdot,\cdot)$ that
\begin{align*}
	&\|F_k^\tau(V,s_{k-\tau},...,s_k,a_k,s_{k+1})-F_k(V,s_0,...,s_k,a_k,s_{k+1})\|_2^2\\
	=\;&\sum_{s\in\mathcal{S}}\left[\left(\mathcal{R}(s_k,a_k)+\gamma V(s_{k+1})-V(s_k)\right)\sum_{i=0}^{k-\tau-1}(\gamma \lambda)^{k-i}\mathbbm{1}_{\{s_i=s\}}\right]^2\\
	\leq \;&(1+2\|V\|_2)^2\sum_{s\in\mathcal{S}}\left[\sum_{i=0}^{k-\tau-1}(\gamma \lambda)^{k-i}\mathbbm{1}_{\{s_i=s\}}\right]^2\\
	=\;&\frac{(\gamma\lambda)^{2(\tau+1)}}{(1-\gamma\lambda)^2}(1+2\|V\|_2)^2.\tag{Lemma \ref{le:summation}}
\end{align*}
The result follows by taking the square root on both sides of the previous inequality.

\subsection{Proof of Proposition \ref{prop:TDlambda}}
\begin{enumerate}[(1)]
	\item
	For any $V_1,V_2\in\mathbb{R}^{|\mathcal{S}|}$ and $y\in \mathcal{Y}_\tau$, we have by Triangle inequality that
	\begin{align*}
	    &\|F_k^\tau(V_1,y)-F_k^\tau(V_2,y)\|_2\\
	    \leq\;& \|V_1-V_2\|_2+\left(\sum_{s\in\mathcal{S}}\left[\left(\gamma (V_1(s_{k+1})-V_2(s_{k+1}))-(V_1(s_k)-V_2(s_k))\right)\sum_{i=k-\tau}^{k}(\gamma \lambda)^{k-i}\mathbbm{1}_{\{s_i=s\}}\right]^2\right)^{1/2}\\
	    \leq\;& \|V_1-V_2\|_2+2\|V_1-V_2\|_2\left(\sum_{s\in\mathcal{S}}\left[\sum_{i=k-\tau}^{k}(\gamma \lambda)^{k-i}\mathbbm{1}_{\{s_i=s\}}\right]^2\right)^{1/2}\\
	    \leq \;&\|V_1-V_2\|_2+\frac{2}{1-\gamma\lambda}\|V_1-V_2\|_2\tag{Lemma \ref{le:summation}}\\
	    \leq \;&\frac{3}{1-\gamma\lambda}\|V_1-V_2\|_2.
	\end{align*}
	
	Similarly, for any $y\in\mathcal{Y}_\tau$, we have
	\begin{align*}
		\|F_k^\tau(\bm{0},y)\|_2^2&=\sum_{s\in\mathcal{S}}\left[\mathcal{R}(s_k,a_k)\sum_{i=k-\tau}^{k}(\gamma \lambda)^{k-i}\mathbbm{1}_{\{s_i=s\}}\right]^2\\
		&\leq \sum_{s\in\mathcal{S}}\left[\sum_{i=k-\tau}^{k}(\gamma \lambda)^{k-i}\mathbbm{1}_{\{s_i=s\}}\right]^2\tag{$\mathcal{R}(s,a)\in [0,1]$ for all $(s,a)$}\\
		&\leq \frac{1}{(1-\gamma\lambda)^2}.\tag{Lemma \ref{le:summation}}
	\end{align*}
	It follows that $\|F_k^\tau(\bm{0},y)\|_2\leq \frac{1}{1-\gamma\lambda}$.
	\item The proof is identical to that of Propositon \ref{prop:Q-learning} (2).
	\item 
	\begin{enumerate}
		\item For any $V\in\mathbb{R}^{|\mathcal{S}|}$ and $s\in\mathcal{S}$, we have
		\begin{align*}
			&\mathbb{E}_{Y\sim \mu}\left[[F_k^\tau(V,Y)](s)\right]\\
			=\;&\mathbb{E}_{Y\sim \mu}\left[\left(\mathcal{R}(S_k,A_k)+\gamma V(S_{k+1})-V(S_k)\right)\sum_{i=k-\tau}^{k}(\gamma \lambda)^{k-i}\mathbbm{1}_{\{S_i=s\}}\right]+V(s)\\
			=\;&\mathbb{E}_{Y\sim \mu}\left[\sum_{i=k-\tau}^{k}(\gamma \lambda)^{k-i}\mathbbm{1}_{\{S_i=s\}}\mathbb{E}\left[\left(\mathcal{R}(S_k,A_k)+\gamma V(S_{k+1})-V(S_k)\right)\;\middle|\;S_k,S_{k-1},...,S_0\right]\right]+V(s)\\
			=\;&\mathbb{E}_{Y\sim \mu}\left[\sum_{i=k-\tau}^{k}(\gamma \lambda)^{k-i}\mathbbm{1}_{\{S_i=s\}}(R_{\pi}(S_k)+\gamma [P_{\pi} V](S_k)-V(S_k))\right]+V(s)\\
			=\;&\sum_{i=k-\tau}^{k}(\gamma \lambda)^{k-i}\sum_{s_0\in\mathcal{S}}\kappa(s_0)P_{\pi}^i(s_0,s)\sum_{s'\in\mathcal{S}}P_{\pi}^{k-i}(s,s')(R_{\pi}(s')+\gamma [P_{\pi} V](s')-V(s')) +V(s)\\
			=\;&\kappa(s)\sum_{i=k-\tau}^{k}(\gamma \lambda)^{k-i}\sum_{s'\in\mathcal{S}}P_{\pi}^{k-i}(s,s')(R_{\pi}(s')+\gamma [P_{\pi} V](s')-V(s')) +V(s)\\
			=\;&\kappa(s)\sum_{i=k-\tau}^{k}(\gamma \lambda)^{k-i}[P_{\pi}^{k-i}(R_{\pi}+\gamma P_{\pi} V-V)](s)+V(s).
		\end{align*}
		It follows that
		\begin{align*}
			\bar{F}_k^\tau(V)&=\mathcal{K}\sum_{i=k-\tau}^{k}(\gamma \lambda P_{\pi})^{k-i}(R_{\pi}+\gamma P_{\pi} V-V)+V\\
			&=\mathcal{K}\sum_{i=0}^{\tau}(\gamma \lambda P_{\pi})^{i}(R_{\pi}+\gamma P_{\pi} V-V)+V\\
			&=\left[I-\mathcal{K}\sum_{i=0}^{\tau}(\gamma \lambda P_{\pi})^{i}( I-\gamma P_{\pi})\right]V+\mathcal{K}\sum_{i=0}^{\tau}(\gamma \lambda P_{\pi})^{i}R_{\pi}.
		\end{align*}
		\item For any $V_1,V_2\in\mathbb{R}^{|\mathcal{S}|}$ and $p\in [1,\infty]$, we have
		\begin{align*}
			\|\bar{F}_k^\tau(V_1)-\bar{F}_k^\tau(V_2)\|_p&=\left\|\left[I-\mathcal{K}\sum_{i=0}^{\tau}(\gamma \lambda P_{\pi})^{i}( I-\gamma P_{\pi})\right](V_1-V_2)\right\|_p\\
			&\leq \left\|I-\mathcal{K}\sum_{i=0}^{\tau}(\gamma \lambda P_{\pi})^{i}( I-\gamma P_{\pi})\right\|_p\|V_1-V_2\|_p.
		\end{align*}
		Denote $G=I-\mathcal{K}\sum_{i=0}^{\tau}(\gamma \lambda P_{\pi})^{i}( I-\gamma P_{\pi})$. It remains to provide an upper bound on $\|G\|_p$. Since
		\begin{align*}
			G&=I-\mathcal{K}\sum_{i=0}^{\tau}(\gamma \lambda P_{\pi})^{i}+\mathcal{K}\sum_{i=0}^{\tau}(\gamma \lambda P_{\pi})^{i}\gamma P_{\pi}\\
			&=I-\mathcal{K}-\mathcal{K}\sum_{i=1}^{\tau}(\gamma \lambda P_{\pi})^{i}+\mathcal{K}\sum_{i=0}^{\tau}(\gamma \lambda P_{\pi})^{i}\gamma P_{\pi}\\
			&=I-\mathcal{K}-\mathcal{K}\sum_{i=0}^{\tau-1}(\gamma \lambda P_{\pi})^{i+1}+\mathcal{K}\sum_{i=0}^{\tau}(\gamma \lambda P_{\pi})^{i}\gamma P_{\pi}\\
			&=I-\mathcal{K}+\mathcal{K}\sum_{i=0}^{\tau-1}(\gamma \lambda P_{\pi})^{i}\gamma P_{\pi}(1-\lambda)+\mathcal{K}(\gamma \lambda P_{\pi})^{\tau}\gamma P_{\pi},
		\end{align*}
		the matrix $G_{\lambda,\tau}$ has non-negative entries. Therefore, we have
		\begin{align*}
			\|G_{\lambda,\tau}\|_\infty=\|G_{\lambda,\tau}\bm{1}\|_\infty=\left\|\bm{1}-\kappa\frac{(1-\gamma)(1-(\gamma\lambda)^{\tau+1})}{1-\gamma\lambda}\right\|_\infty=1-\mathcal{K}_{\min}\frac{(1-\gamma)(1-(\gamma\lambda)^{\tau+1})}{1-\gamma\lambda}
		\end{align*}
		and
		\begin{align*}
			\|G_{\lambda,\tau}\|_1=\|\bm{1}^\top G_{\lambda,\tau}\|_\infty=\left\|\bm{1}^\top-\kappa^\top\frac{(1-\gamma)(1-(\gamma\lambda)^{\tau+1})}{1-\gamma\lambda}\right\|_\infty=1-\mathcal{K}_{\min}\frac{(1-\gamma)(1-(\gamma\lambda)^{\tau+1})}{1-\gamma\lambda}.
		\end{align*}
		It then follows from Lemma \ref{le:matrix} that
		\begin{align*}
			\|G_{\lambda,\tau}\|_p\leq \|G_{\lambda,\tau}\|_1^{1/p}\|G_{\lambda,\tau}\|_\infty^{1-1/p}\leq 1-\mathcal{K}_{\min}\frac{(1-\gamma)(1-(\gamma\lambda)^{\tau+1})}{1-\gamma\lambda}.
		\end{align*}
		Hence the operator $F_k^\tau(\cdot,\cdot)$ is a contraction with respect to $\|\cdot\|_p$, with a common contraction factor $\beta_4= 1-\mathcal{K}_{\min}\frac{(1-\gamma)(1-(\gamma\lambda)^{\tau+1})}{1-\gamma\lambda}$.
		\item It is enough to show that $V_{\pi}$ is a fixed-point of $\bar{F}_k^\tau(\cdot)$, the uniqueness follows from $\bar{F}_k^\tau(\cdot)$ being a contraction. Using the Bellman equation $R_{\pi}+\gamma P_{\pi} V_{\pi}-V_{\pi}=0$, we have
		\begin{align*}
			\bar{F}_k^\tau(V_{\pi})=\mathcal{K}\sum_{i=0}^{\tau}(\gamma \lambda P_{\pi})^{i}(R_{\pi}+\gamma P_{\pi} V_{\pi}-V_{\pi})+V_{\pi}=V_{\pi}.
		\end{align*}
	\end{enumerate}
\end{enumerate}

\subsection{Proof of Theorem \ref{thm:TDlambda}}\label{pf:thm:TDlambda}
We will exploit the $\|\cdot\|_2$-contraction property of the operator $\bar{F}_k^\tau(\cdot)$ provided in Proposition \ref{prop:TDlambda}. Let $M(x)=\|x\|_2^2$ be our Lyapunov function. Using the update equation (\ref{algo:TDlambda_new}), and we have for all $k\geq 0$:
\begin{align}
	&\|V_{k+1}-V_{\pi}\|_2^2\nonumber\\
	=\;&\|V_k-V_\pi\|_2^2+\underbrace{2\alpha(V_k-V_\pi)^\top \left(\bar{F}_k^\tau(V_k)-V_k\right)}_{\circled{1}}+\underbrace{2\alpha(V_k-V_\pi)^\top \left(F_k^\tau(V_k,Y_k^\tau)-\bar{F}_k^\tau(V_k)\right)}_{\circled{2}}\nonumber\\
	&+\underbrace{\alpha^2\|F_k^\tau(V_k,Y_k^\tau)-V_k\|_2^2}_{\circled{3}}+\underbrace{\alpha^2\|F_k(V_k,Y_k)-F_k^\tau(V_k,Y_k^\tau)\|_2^2}_{\circled{4}}\nonumber\\
	&+\underbrace{2\alpha(V_k-V_\pi)^\top \left(F_k(V_k,Y_k)-F_k^\tau(V_k,Y_k^\tau)\right)}_{\circled{5}}+ \underbrace{2\alpha\left(F_k^\tau(V_k,Y_k^\tau)-V_k\right)^\top\left(F_k(V_k,Y_k)-F_k^\tau(V_k,Y_k^\tau)\right)}_{\circled{6}}.\label{eq:TDlambda:composition}
\end{align}
The terms $\circled{1}$, $\circled{2}$, and $\circled{3}$ correspond to the terms  $T_1$, $T_3$, and $T_4$ in Eq. (\ref{eq:composition1}), and hence can be controlled in the exact same way as provided in Lemmas \ref{le:T1}, \ref{le:T3}, and \ref{le:T4}. The upper bounds of $\circled{1}$, $\circled{2}$, and $\circled{3}$ are summarized in the following lemma, whose proof is omitted. 
\begin{lemma}\label{le:TDlambda1}
    The following inequalities hold:
    \begin{enumerate}[(1)]
        \item $\circled{1}\leq -2\alpha(1-\beta_4)\|V_k-V_\pi\|_2^2$ for any $k\geq \tau$.
        \item $\mathbb{E}[\circled{2}]\leq \frac{662\alpha^2(t_\alpha(\mathcal{M}_S)+\tau)}{(1-\gamma\lambda)^2}\|V_k-V_\pi\|_2^2+\frac{102\alpha^2(t_\alpha(\mathcal{M}_S)+\tau)}{(1-\gamma\lambda)^2}(4\|V_\pi\|_2+1)^2$ for all $k\geq 2\tau+t_\alpha(\mathcal{M}_S)$.
        \item $\circled{3}\leq \frac{32\alpha^2}{(1-\gamma\lambda)^2}\|V_k-V_\pi\|_2^2+\frac{2\alpha^2}{(1-\gamma\lambda)^2}(4\|V_\pi\|_2+1)^2$ for all $k\geq \tau$.
    \end{enumerate}
\end{lemma}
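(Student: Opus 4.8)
The plan is to treat $\circled{1}$, $\circled{2}$, and $\circled{3}$ as exact analogs of the terms $T_1$, $T_3$, and $T_4$ from the general Markovian SA analysis in Eq.~(\ref{eq:composition1}), now specialized to the Lyapunov function $M(x)=\|x\|_2^2$ (so $\nabla M(x)=2x$) and to the noise-free ($w_k=0$) setting, which is why the martingale term $T_2$ has no counterpart here. The three bounds are independent of one another, so I would prove them separately in increasing order of difficulty.

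For $\circled{1}$ I would exploit the $\|\cdot\|_2$-contraction of $\bar{F}_k^\tau(\cdot)$ with fixed point $V_\pi$ established in Proposition~\ref{prop:TDlambda}~(3). Writing $\circled{1}=2\alpha(V_k-V_\pi)^\top(\bar{F}_k^\tau(V_k)-V_k)$ and inserting $\bar{F}_k^\tau(V_\pi)=V_\pi$, I would split this into $2\alpha(V_k-V_\pi)^\top(\bar{F}_k^\tau(V_k)-\bar{F}_k^\tau(V_\pi))-2\alpha\|V_k-V_\pi\|_2^2$, then apply Cauchy--Schwarz and the contraction factor $\beta_4$ to the first inner product, yielding $\circled{1}\leq -2\alpha(1-\beta_4)\|V_k-V_\pi\|_2^2$, exactly as in Lemma~\ref{le:T1}. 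For $\circled{3}$ I would use the Lipschitz constant $3/(1-\gamma\lambda)$ and the bound $\|F_k^\tau(\bm{0},y)\|_2\leq 1/(1-\gamma\lambda)$ from Proposition~\ref{prop:TDlambda}~(1) to obtain the affine growth $\|F_k^\tau(V,y)\|_2\leq A\|V\|_2+B$ with $A=3/(1-\gamma\lambda)+1\leq 4/(1-\gamma\lambda)$ and $B=1/(1-\gamma\lambda)$, mirroring the opening of Lemma~\ref{le:difference}. Then $\|F_k^\tau(V_k,Y_k^\tau)-V_k\|_2\leq A\|V_k\|_2+B$, and after substituting $\|V_k\|_2\leq\|V_k-V_\pi\|_2+\|V_\pi\|_2$ and applying $(a+b)^2\leq 2a^2+2b^2$, the stated constants $32/(1-\gamma\lambda)^2$ and $2/(1-\gamma\lambda)^2$ fall out as in Lemma~\ref{le:T4}.

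The main obstacle is $\circled{2}$, the Markovian-noise term, which I would handle by the conditioning argument of Lemma~\ref{le:T31T32T33}~(3) and Lemma~\ref{le:T3}. The one genuine difference is the choice of conditioning lag: because the truncated chain $\{Y_k^\tau\}$ satisfies the \emph{shifted} mixing bound $\max_y\|P^{k+\tau+1}(y,\cdot)-\mu(\cdot)\|_{\mathrm{TV}}\leq C_4\sigma_4^k$ from Proposition~\ref{prop:TDlambda}~(2), its effective mixing time is $t_\alpha(\mathcal{M}_S)+\tau$, so I would condition on $V_{k-(t_\alpha(\mathcal{M}_S)+\tau)}$ and $Y^\tau_{k-(t_\alpha(\mathcal{M}_S)+\tau)}$ rather than on a single-step lag. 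Decomposing $\circled{2}$ into the three pieces analogous to $T_{31},T_{32},T_{33}$, the first two are controlled by Lemma~\ref{le:difference} together with the Lipschitz property of $F_k^\tau$ and $\bar F_k^\tau$, while the third vanishes in conditional expectation up to an $\mathcal{O}(\alpha)$ error supplied by geometric mixing. With constant stepsize one has $\alpha_{k-(t_\alpha+\tau),k-1}=(t_\alpha+\tau)\alpha$, producing the $\alpha^2(t_\alpha(\mathcal{M}_S)+\tau)$ factor, and collecting the three sub-bounds (each carrying a $1/(1-\gamma\lambda)^2$ from $A^2$) gives the constants $662/(1-\gamma\lambda)^2$ and $102/(1-\gamma\lambda)^2$. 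The bookkeeping point to watch is the index constraint: the conditioned index $k-(t_\alpha+\tau)$ must be at least $\tau$ for $F_k^\tau$ and $\bar{F}_k^\tau$ to be defined, which forces $k\geq 2\tau+t_\alpha(\mathcal{M}_S)$---precisely the validity range stated in the lemma.
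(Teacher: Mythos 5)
Your proposal is correct and follows essentially the same route as the paper: the paper omits the proof of this lemma entirely, noting only that the three terms correspond to $T_1$, $T_3$, and $T_4$ in Eq.~(\ref{eq:composition1}) and are bounded exactly as in Lemmas \ref{le:T1}, \ref{le:T3}, and \ref{le:T4}, which is precisely the specialization you carry out (with $M(x)=\|x\|_2^2$, $w_k=0$, and the operator bounds from Proposition \ref{prop:TDlambda}). Your additional bookkeeping---using the shifted mixing bound of the truncated chain to set the conditioning lag to $t_\alpha(\mathcal{M}_S)+\tau$, and noting that the conditioned index must itself exceed $\tau$, which yields the validity range $k\geq 2\tau+t_\alpha(\mathcal{M}_S)$---matches the intended argument, and your computations for the first and third bounds reproduce the stated constants exactly.
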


As for the terms $\circled{3}$, $\circled{4}$, and $\circled{5}$, we can easily use Lemma \ref{le:TDlambda} along with the Cauchy-Schwarz inequality to bound them, which gives the following result. 
\begin{lemma}\label{le:TDlambda}
	The following inequalities hold:
	\begin{enumerate}[(1)]
		\item $\circled{4}\leq \frac{8\alpha^2}{(1-\gamma\lambda)^2}\|V_k-V_\pi\|_2^2+\frac{2\alpha^2}{(1-\gamma\lambda)^2}(4\|V_\pi\|_2+1)^2$ for all $k\geq \tau$.
		\item $\circled{5}\leq \frac{16\alpha ^2}{(1-\gamma\lambda)}\|V_k-V_\pi\|_2^2+\frac{4\alpha ^2}{(1-\gamma\lambda)}(4\|V_\pi\|_2+1)^2$ for all $k\geq \tau$.
		\item $\circled{6}\leq \frac{64\alpha^2}{(1-\gamma\lambda)^2}\|V_k-V_\pi\|_2^2+\frac{4\alpha^2}{(1-\gamma\lambda)^2}(4\|V_\pi\|_2+1)^2$ for all $k\geq \tau$.
	\end{enumerate}
\end{lemma}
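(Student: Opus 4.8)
The plan is to treat all three terms as consequences of the single truncation estimate in Lemma \ref{le:truncation}. By the choice $\tau=\min\{k\geq 0:(\gamma\lambda)^{k+1}\leq \alpha\}$, that lemma specializes to $\|F_k(V_k,Y_k)-F_k^\tau(V_k,Y_k^\tau)\|_2\leq \frac{\alpha}{1-\gamma\lambda}(1+2\|V_k\|_2)$, so the truncation error is itself of order $\alpha$; every occurrence of it in $\circled{4}$, $\circled{5}$, $\circled{6}$ will therefore produce an extra factor of $\alpha$, making all three terms $\mathcal{O}(\alpha^2)$ as claimed. Throughout I will convert the iterate norm $\|V_k\|_2$ into $\|V_k-V_\pi\|_2$ via $\|V_k\|_2\leq \|V_k-V_\pi\|_2+\|V_\pi\|_2$, and split the resulting cross terms using $(a+b)^2\leq 2a^2+2b^2$ together with Young's inequality $ab\leq \frac{1}{2}(a^2+b^2)$, which is exactly what generates the decomposition into a $\|V_k-V_\pi\|_2^2$ part and a $(4\|V_\pi\|_2+1)^2$ constant part.

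For $\circled{4}=\alpha^2\|F_k-F_k^\tau\|_2^2$, I would square the specialized truncation bound to get $\circled{4}\leq \frac{\alpha^4}{(1-\gamma\lambda)^2}(1+2\|V_k\|_2)^2$, then use $\alpha\leq 1$ to drop two powers of $\alpha$; bounding $(1+2\|V_k\|_2)^2$ via the triangle inequality and $(a+b)^2\leq 2a^2+2b^2$ yields exactly the stated $\frac{8\alpha^2}{(1-\gamma\lambda)^2}\|V_k-V_\pi\|_2^2+\frac{2\alpha^2}{(1-\gamma\lambda)^2}(4\|V_\pi\|_2+1)^2$. For $\circled{5}$, I apply Cauchy--Schwarz to obtain $\circled{5}\leq \frac{2\alpha^2}{1-\gamma\lambda}\|V_k-V_\pi\|_2(1+2\|V_k\|_2)$, substitute $\|V_k\|_2\leq\|V_k-V_\pi\|_2+\|V_\pi\|_2$, and separate the remaining cross term with Young's inequality; note that only one factor of $(1-\gamma\lambda)^{-1}$ appears here since there is no squaring, which matches the stated denominator.

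The term $\circled{6}=2\alpha(F_k^\tau(V_k,Y_k^\tau)-V_k)^\top(F_k-F_k^\tau)$ is the only one requiring an extra ingredient, and controlling it is where I expect the main (though minor) obstacle to lie: after Cauchy--Schwarz one is left with the factor $\|F_k^\tau(V_k,Y_k^\tau)-V_k\|_2$, which is not directly a truncation error. I would bound it using the affine-growth consequence of Proposition \ref{prop:TDlambda} (1): from the Lipschitz constant $\frac{3}{1-\gamma\lambda}$ and $\|F_k^\tau(\bm{0},y)\|_2\leq \frac{1}{1-\gamma\lambda}$ one gets $\|F_k^\tau(V_k,Y_k^\tau)\|_2\leq \frac{3\|V_k\|_2+1}{1-\gamma\lambda}$, hence $\|F_k^\tau(V_k,Y_k^\tau)-V_k\|_2\leq \frac{4\|V_k\|_2+1}{1-\gamma\lambda}$. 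Multiplying this against the specialized truncation bound gives $\circled{6}\leq \frac{2\alpha^2}{(1-\gamma\lambda)^2}(4\|V_k\|_2+1)(1+2\|V_k\|_2)$, and expanding the product in $\|V_k-V_\pi\|_2$ (again isolating cross terms via Young's inequality) produces the claimed $\frac{64\alpha^2}{(1-\gamma\lambda)^2}\|V_k-V_\pi\|_2^2+\frac{4\alpha^2}{(1-\gamma\lambda)^2}(4\|V_\pi\|_2+1)^2$. None of these steps is deep; the only genuine care required is in tracking constants so that they absorb into the deliberately loose numerical coefficients in the statement.
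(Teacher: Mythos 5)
Your proposal is correct and follows essentially the same route as the paper's own proof: each term is handled by Cauchy--Schwarz plus the truncation estimate of Lemma \ref{le:truncation} specialized via $(\gamma\lambda)^{\tau+1}\leq\alpha$ (with $\alpha^4\leq\alpha^2$ for the squared term), and for the last term the affine-growth bound $\|F_k^\tau(V_k,Y_k^\tau)-V_k\|_2\leq \frac{4\|V_k\|_2+1}{1-\gamma\lambda}$ obtained from the Lipschitz constant and the bound at $\bm{0}$ in Proposition \ref{prop:TDlambda} (1), exactly as the paper does. The only cosmetic difference is that you isolate cross terms via Young's inequality where the paper bounds the whole product by a common square; both land within the stated (deliberately loose) constants.
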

\begin{proof}[Proof of Lemma \ref{le:TDlambda}]
	\begin{enumerate}[(1)]
		\item For all $k\geq \tau$, we have
		\begin{align*}
			\circled{4}&=\alpha^2\|F_k(V_k,Y_k)-F_k^\tau(V_k,Y_k^\tau)\|_2^2\\
			&\leq \frac{\alpha^2(\gamma\lambda)^{2(\tau+1)}}{(1-\gamma\lambda)^2}(2\|V_k\|_2+1)^2\tag{Lemma \ref{le:truncation}}\\
			&\leq \frac{\alpha^4}{(1-\gamma\lambda)^2}(2\|V_k-V_\pi\|_2+2\|V_\pi\|_2+1)^2\\
			&\leq \frac{8\alpha^2}{(1-\gamma\lambda)^2}\|V_k-V_\pi\|_2^2+\frac{2\alpha^2}{(1-\gamma\lambda)^2}(4\|V_\pi\|_2+1)^2.
		\end{align*}
		\item For all $k\geq \tau$, we have
		\begin{align*}
			\circled{5}&=2\alpha(V_k-V_\pi)^\top \left(F_k(V_k,Y_k)-F_k^\tau(V_k,Y_k^\tau)\right)\\
			&\leq 2\alpha \|V_k-V_\pi\|_2\|F_k(V_k,Y_k)-F_k^\tau(V_k,Y_k^\tau)\|_2\\
			&\leq \frac{2\alpha (\gamma\lambda)^{\tau+1}}{(1-\gamma\lambda)}\|V_k-V_\pi\|_2(2\|V_k\|_2+1)\tag{Proposition \ref{prop:TDlambda} (1)}\\
			&\leq \frac{2\alpha (\gamma\lambda)^{\tau+1}}{(1-\gamma\lambda)}(2\|V_k-V_\pi\|_2+2\|V_\pi\|_2+1)^2\\
			&\leq \frac{16\alpha (\gamma\lambda)^{\tau+1}}{(1-\gamma\lambda)}\|V_k-V_\pi\|_2^2+\frac{4\alpha (\gamma\lambda)^{\tau+1}}{(1-\gamma\lambda)}(4\|V_\pi\|_2+1)^2\\
			&\leq \frac{16\alpha ^2}{(1-\gamma\lambda)}\|V_k-V_\pi\|_2^2+\frac{4\alpha ^2}{(1-\gamma\lambda)}(4\|V_\pi\|_2+1)^2,\tag{The choice of $\tau$}.
		\end{align*}
		\item For all $k\geq \tau$, we have
		\begin{align*}
			\circled{6}&=2\alpha\left(F_k^\tau(V_k,Y_k^\tau)-V_k\right)^\top\left(F_k(V_k,Y_k)-F_k^\tau(V_k,Y_k^\tau)\right)\\
			&\leq 2\alpha\|F_k^\tau(V_k,Y_k^\tau)-V_k\|_2\|F_k(V_k,Y_k)-F_k^\tau(V_k,Y_k^\tau)\|_2\\
			&\leq \frac{2\alpha(\gamma\lambda)^{\tau+1}}{1-\gamma\lambda}\left(\frac{3}{1-\gamma\lambda}\|V_k\|_2+\frac{1}{1-\gamma\lambda}+\|V_k\|_2\right)\left(2\|V_k\|_2+1\right)\\
			&\leq \frac{2\alpha(\gamma\lambda)^{\tau+1}}{(1-\gamma\lambda)^2}(4\|V_k\|_2+1)(2\|V_k\|_2+1)\\
			&\leq \frac{2\alpha(\gamma\lambda)^{\tau+1}}{(1-\gamma\lambda)^2}(4\|V_k-V_\pi\|_2+4\|V_\pi\|_2+1)^2\\
			&\leq \frac{64\alpha(\gamma\lambda)^{\tau+1}}{(1-\gamma\lambda)^2}\|V_k-V_\pi\|_2^2+\frac{4\alpha(\gamma\lambda)^{\tau+1}}{(1-\gamma\lambda)^2}(4\|V_\pi\|_2+1)^2\\
			&\leq \frac{64\alpha^2}{(1-\gamma\lambda)^2}\|V_k-V_\pi\|_2^2+\frac{4\alpha^2}{(1-\gamma\lambda)^2}(4\|V_\pi\|_2+1)^2\tag{The choice of $\tau$}.
		\end{align*}
	\end{enumerate}
\end{proof}

The rest of the proof is to use the upper bounds we derived for the terms $\circled{1}$ to $\circled{6}$ in Eq. (\ref{eq:TDlambda:composition}) to obtain the one-step contractive inequality. Repeatedly using such one-step inequality and we get the finite-sample bounds stated in Theorem \ref{thm:TDlambda}.
\end{document}